\documentclass[11pt]{article}
\usepackage[utf8]{inputenc}
\usepackage{float}
\usepackage{stfloats}
\usepackage{titling}
\usepackage[flushleft]{threeparttable}
\usepackage{amssymb,amsthm,amsmath,amsfonts,romannum, dsfont}
\usepackage{setspace}
\usepackage[ruled,vlined]{algorithm2e}
\usepackage{mathrsfs,graphicx}

\usepackage{xcolor}
\usepackage{standalone}
\usepackage{tikz}
\usetikzlibrary{fit,positioning,shadows.blur}
\usepackage{enumerate,enumitem,booktabs,multirow}  
\usepackage{easyReview}
\usepackage{subcaption}
\usepackage{authblk}
\usepackage[left=1.25in, right=1.25in, top=1.25in, bottom=1.25in]{geometry}
\usepackage[authoryear,sort]{natbib}
\usepackage{tcolorbox}
\usepackage{longtable}
\usepackage{appendix}
\allowdisplaybreaks

\usepackage{mathtools}
\usepackage{hyperref}
\hypersetup{
	colorlinks=true,
	linkcolor=black,
	urlcolor=blue,
	citecolor=blue
}

\newcommand{\RR}{\mathbb{R}}
\newcommand{\EE}{\mathbb{E}}
\newcommand{\PP}{\mathbb{P}}
\newcommand{\one}{\mathds{1}}

\newcommand{\cX}{\mathcal{X}}
\newcommand{\cA}{\mathcal{A}}
\newcommand{\cB}{\mathcal{B}}
\newcommand{\cC}{\mathcal{C}}
\newcommand{\cD}{\mathcal{D}}
\newcommand{\cE}{\mathcal{E}}
\newcommand{\cF}{\mathcal{F}}
\newcommand{\cG}{\mathcal{G}}
\newcommand{\cH}{\mathcal{H}}
\newcommand{\cL}{\mathcal{L}}
\newcommand{\cM}{\mathcal{M}}
\newcommand{\cS}{\mathcal{S}}
\newcommand{\cR}{\mathcal{R}}
\newcommand{\cT}{\mathcal{T}}
\newcommand{\cV}{\mathcal{V}}

\newcommand{\fA}{\mathfrak{A}}
\newcommand{\fL}{\mathfrak{L}}

\newcommand{\simplex}[1]{\Delta^{#1-1}}

\newcommand{\MR}{M_R}
\newcommand{\MS}{M_S}
\newcommand{\Mzero}{M_0}

\newcommand{\soft}{\mathrm{soft}}
\newcommand{\lin}{\mathrm{lin}}
\newcommand{\qdr}{\mathrm{quad}}
\newcommand{\loc}{\mathrm{local}}
\newcommand{\glob}{\mathrm{global}}
\newcommand{\TopK}{\mathrm{Top}\text{-}K}
\newcommand{\sparse}{\mathrm{sparse}}
\newcommand{\MSE}{\mathrm{MSE}}

\DeclareMathOperator*{\argmin}{arg\,min}

\DeclareMathOperator{\Unif}{Unif}

\numberwithin{equation}{section}

\theoremstyle{plain}

\newtheorem{theorem}{Theorem}[section]
\newtheorem{proposition}[theorem]{Proposition}

\newtheorem{lemma}[theorem]{Lemma}
\newtheorem{corollary}[theorem]{Corollary}

\theoremstyle{definition}

\newtheorem{definition}[theorem]{Definition}
\newtheorem{assumption}[theorem]{Assumption}
\renewcommand\theassumption{\arabic{assumption}}
\newtheorem{example}[theorem]{Example}

\theoremstyle{remark}

\newtheorem{remark}[theorem]{Remark}

\begin{document}
	\makeatletter
	\let\arxiv@originallabel\label
	\let\arxiv@originalmathlabel\ltx@label
	\renewcommand{\label}[1]{\arxiv@originallabel{#1}\arxiv@originallabel{main-#1}\def\arxiv@currentlabelname{#1}\def\arxiv@selectionlabelname{cor:selection(regionwise)}\ifx\arxiv@currentlabelname\arxiv@selectionlabelname\arxiv@originallabel{main-cor:selection(region-wise)}\fi}
	\renewcommand{\ltx@label}[1]{\arxiv@originalmathlabel{#1}\arxiv@originalmathlabel{main-#1}}
	\let\arxiv@originaladdtocontents\addtocontents
	\renewcommand{\addtocontents}[2]{\def\arxiv@targetfile{#1}\def\arxiv@tocfile{toc}\ifx\arxiv@targetfile\arxiv@tocfile\else\arxiv@originaladdtocontents{#1}{#2}\fi}
	\makeatother
	\pagenumbering{arabic}
	
	\def\spacingset#1{\renewcommand{\baselinestretch}{#1}\small\normalsize} \spacingset{1}

	{
		\title{\bf Towards a Statistical Understanding of Mixture-of-Experts}
		
		\author[a]{Siyuan He}
		\author[a]{Bokai Yang\thanks{The first two authors contributed equally to this work.}}
		\author[b]{Jie Hu}
		\author[c]{Ziwen Gao}
		\author[b,d]{Yuhong Yang\thanks{Corresponding author: \href{mailto:yyangsc@mail.tsinghua.edu.cn}{yyangsc@mail.tsinghua.edu.cn}}}
		\affil[a]{Qiuzhen College, Tsinghua University, Beijing, China.}
		\affil[b]{Yau Mathematical Sciences Center, Tsinghua University, Beijing, China.}
		\affil[c]{KLATASDS-MOE, School of Statistics, East China Normal University, Shanghai, China.}
		\affil[d]{Beijing Institute of Mathematical Sciences and Applications (BIMSA), Beijing, China.}
		\maketitle
	}
	
	\bigskip
	\begin{abstract}
		Mixture-of-experts (MoE) architectures increase model capacity by combining a collection of expert predictors through input-dependent routing, while often activating only a small subset of experts for each input. Despite their growing importance in modern large-scale models, the statistical roles of their design choices, especially routing, sparse activation, and shared experts, remain only partially understood, as existing theory has largely focused on parametric or correctly specified MoE models. In this paper, we view MoE as a form of localized aggregation and show how this localization reshapes the approximation-estimation-computation tradeoff. We derive oracle risk bounds for learning dense and sparse routing with evolving experts, separating approximation, expert-learning, and router-estimation errors, and characterize how sparse Top-$K$ routing can retain the benefits of localized aggregation while controlling per-input computation. We also interpret gating through the geometry of input space, relating routing performance to regions of local expert advantage, and show how shared experts, as adopted in architectures such as DeepSeekMoE, can extract common predictive structure so that routed experts focus on residual local variation. Together, these results provide a unified statistical framework for understanding MoE through input-dependent expert aggregation, in which expert specialization and computational tradeoffs are governed by local predictive structure. 
	\end{abstract}
	
	\noindent{\it Keywords: expert specialization; localized aggregation; mixture of expert architectures; routing mechanisms; shared experts; sparse activation}  \\
	\vfill
	\newpage
	\spacingset{1.45}	
	\section{Introduction}
	Modern artificial intelligence systems increasingly rely on architectures whose total capacity greatly exceeds the computation activated for any individual input \citep{vaswani2017attention,openai2024gpt4,brown2020fewshot,rombach2022image}. A prominent example is the mixture-of-experts (MoE) architecture, in which a router assigns each input to a subset of expert networks and combines their outputs in an input-dependent manner. In transformer-based systems, MoE layers are often used as replacements for feed-forward blocks, as in GShard, Switch Transformer, GLaM, and Mixtral \citep{shazeer2017outrageously,lepikhin2020gshard,fedus2022switch,du2022glam,jiang2024mixtral}. These models use sparse expert activation to increase model capacity while keeping per-token computation manageable. More recent designs, such as DeepSeekMoE \citep{dai2024deepseekmoe}, also include shared experts that are always active and are intended to capture common knowledge across inputs. These developments suggest that the statistical roles of routing, sparsity, and shared components are central to understanding modern MoE systems. 
    
    This paper studies MoE architectures from the viewpoint of covariate-dependent aggregation. In this view, the expert networks provide a collection of candidate predictors, while the router determines how these predictors are locally combined. Dense softmax gating corresponds to smooth input-dependent averaging over all experts, while sparse Top-$K$ routing performs localized selection and encourages expert specialization. Shared experts add a global component that is reused across the input space. Together, these mechanisms provide different ways to balance global structure, local specialization, and computational constraints. 
    
    The statistical study of MoE models has a long history. Classical works introduced MoE and hierarchical MoE architectures together with likelihood-based and EM-type estimation procedures \citep{jacobs1991adaptive,jordan1994hierarchical}. Subsequent theory studied identifiability, convergence and asymptotic inference, often under parametric or correctly specified formulations \citep{jordan1995convergence,jiang1999identifiability,olteanu2011asymptotic}. More recent work has analyzed estimation rates for modern variants of MoE models, including dense softmax gating, sparse routing and architectures with shared experts \citep{nguyen2023statistical,nguyen2024least,nguyen2025deepseekmoe}. On the approximation side, classical and recent results show that softmax-gated MoE classes can have strong universal approximation properties \citep{pmlr-vR2-jiang99a,nguyen2016universal}.
	
	Despite these advances, there remains a gap between existing theory and the structural questions raised by modern MoE architectures. Much of the available theory treats the target function as belonging to a finite-dimensional MoE class and focuses on estimation of all model parameters. This perspective is less directly suited to large-scale systems, where the experts and router are highly overparameterized, the model is typically misspecified, and end-to-end optimization is non-convex. Moreover, existing results do not fully separate the statistical roles of dense averaging, sparse routing, adaptive partitioning and shared experts. 
	In particular, it remains unclear how these architectural choices shape approximation, estimation, and computation outside a fully specified joint parametric MoE model.
	
	Recent empirical MoE studies make these structural questions particularly salient. Large-scale sparse MoE models increase the total number of experts while activating only a small subset for each input \citep{shazeer2017outrageously,lepikhin2020gshard,fedus2022switch,du2022glam,jiang2024mixtral}. More recent architectures further refine this idea by using fine-grained routed experts, multiple active experts, or shared experts \citep{dai2024deepseekmoe,ludziejewski2024scaling,yun2024toward,he2024mixture}. These design choices are often motivated by empirical efficiency and scalability considerations. They also raise an important statistical question: how do the size and composition of the expert pool, the sparsity and flexibility of the routing mechanism, and the inclusion of shared components jointly shape prediction accuracy, statistical complexity, and computational cost?
	
	Addressing this question is challenging because joint optimization of experts and routers in large-scale MoE models is highly coupled and non-convex, making it difficult to disentangle the statistical contributions of different architectural components. We therefore base our analysis on a predictable expert-training trajectory, which provides a tractable way to study router learning without directly characterizing the full end-to-end training dynamics. Conditional on the past data, the current experts enter the next-step prediction problem as plug-in components, while their quality is allowed to evolve over time. Under this formulation, we develop a unified statistical framework that treats MoE models as flexible aggregation rules built from a collection of expert predictors and a gating mechanism. This framework then provides a common ground for comparing dense softmax routing, sparse Top-$K$ routing, adaptive gating classes and shared-expert architectures in terms of their approximation power, statistical complexity and computational tradeoffs.

    Our first contribution is to clarify the approximation role of routing. We show that input-dependent routing turns a collection of experts into a localized aggregation class, which can be substantially more expressive than global averaging. Dense and sparse routing differ not only computationally but also statistically: dense softmax gating provides smooth averaging, while sparse Top-$K$ gating induces localized expert selection. We characterize how these mechanisms affect approximation ability and complexity, and show when localization, through either dense or Top-$K$ aggregation, can improve upon global averaging.
    
    Our second contribution is to develop oracle risk bounds for estimating the gating mechanism along an evolving expert trajectory. We use a discretization-based scheme \citep{yang2001adaptive,yang2004Combining} that avoids assuming global optimization of a highly non-convex MoE objective. The resulting bounds separate the oracle approximation risk, the cumulative learning error of the evolving experts, and the statistical cost of learning the router. The framework applies to both dense and sparse routing and highlights the estimation price of routing flexibility. 
    
    Our third contribution is to interpret gating as a problem of input-space partitioning. Different routing functions induce different soft or hard partitions of the input space, and their performance depends on whether these partitions align with the regions on which different experts perform well. Linear, quadratic and kernel-based gating functions therefore represent different levels of geometric flexibility. This perspective explains why a single routing class may be effective for some target structures but inadequate for others, and motivates data-adaptive selection among gating mechanisms.
    
    Our final contribution is to study the statistical role of shared experts. We show that shared experts are not merely additional predictors in the ensemble. Their main role is to extract common structure that would otherwise have to be repeatedly learned by routed experts. Once this common component is removed, the routed experts only need to learn residual local structure, which can reduce the effective complexity of the routed learning problem. This provides a statistical interpretation of recent MoE designs that combine shared and routed components. 
    
    The rest of the paper is organized as follows. Section \ref{subsec:formulation} introduces the mathematical formulation of dense routing, sparse routing and shared experts. Section \ref{sec:local_averaging} studies the approximation properties of MoE predictors as localized aggregation rules. Sections \ref{sec:oracle_softmax} and \ref{sec:oracle_topK} develop risk bounds for dense and Top-$K$ gating estimation. Section \ref{sec:adaptation} studies gating as input-space partitioning and discusses adaptive selection of routing classes. Section \ref{sec:shared} analyzes the role of shared experts. Technical proofs and additional discussions are provided in the supplementary material. 
    
    \textbf{Notation}. Throughout the paper, for two nonnegative sequences $a_n$ and $b_n$, we write $a_n \lesssim b_n$ if there exists a constant $C > 0$, independent of $n$, such that $a_n \le C b_n$, and write $a_n \asymp b_n$ if both $a_n \lesssim b_n$ and $b_n \lesssim a_n$ hold. For two real numbers $a$ and $b$, we write $a \vee b = \max\{a, b\}$ and $a \wedge b = \min\{a, b\}$. For a positive integer $M$, $[M] = \{1, \ldots, M\}$. For a finite set $\cS$, $|\cS|$ denotes its cardinality. The set $\simplex{M} = \{w \in \RR^M : w_j \ge 0,\ \sum_{j=1}^M w_j = 1\}$ denotes the probability simplex in $\RR^M$. For a set $\cX$, $\one_{\cX}(x)$ denotes the indicator function taking the value one when $x \in \cX$ and zero otherwise. For two vectors $a$ and $b$, $\langle a, b\rangle = a^\top b$ denotes their inner product. Unless otherwise specified, $\|\cdot\|$ denotes the Euclidean norm for vectors and the spectral norm for matrices. For a differentiable function $f$, $\nabla f$ denotes its gradient. For a measurable function $f=(f_1,\ldots,f_M)^\top:\cX\to\RR^M$, with the scalar case corresponding to $M=1$, define $\|f\|_{L_2(P_X)} = \left(\int_{\cX}\sum_{m=1}^M f_m^2(x)\,dP_X(x)\right)^{1/2}$. For a scalar-valued $f$, we also write $\|f\|_{\infty} = \sup_{x \in \cX} |f(x)|$. Finally, for $x \in \RR$, $\lfloor x \rfloor$ denotes the largest integer not exceeding $x$, and $\lceil x \rceil$ denotes the smallest integer not smaller than $x$.

    \subsection{Mathematical formulation of MoE architecture}\label{subsec:formulation}
    
    We begin with a routed-only MoE layer, which serves as the baseline formulation before shared experts are introduced. Let $\cX \subset \RR^d$ be a compact input space and let $x \in \cX$ denote an input. The layer contains $\MR$ routed experts. For $m \in [\MR]$, we write $f_{R,m}(x;W_{R,m})$ for the output of the $m$-th routed expert, where $W_{R,m}$ denotes its network parameter. In transformer implementations, such experts are typically feed-forward networks, although the statistical analysis below treats them only as candidate predictors.
    
    A router maps the input $x$ to a vector of gating weights
    \begin{align*}
    	g(x;\theta) = (g_1(x;\theta), \ldots, g_{\MR}(x;\theta))^\top \in \simplex{\MR},
    \end{align*}
    where $\theta$ is the gating parameter. The output of the conventional MoE \citep{jordan1994hierarchical} is then
    \begin{align}
    	F_{\mathrm{MoE}}(x;W_R,\theta) = \sum_{m=1}^{\MR} g_m(x;\theta) f_{R,m}(x;W_{R,m}), \label{eq:formulation_MoE}
    \end{align}
    where $W_R = (W_{R,1}, \ldots, W_{R,\MR})$. Throughout the paper, we use \textbf{router} for the architectural module that assigns inputs to experts, and \textbf{gating function} for its mathematical representation $x \mapsto g(x;\theta)$. Experts weighted or selected by the router are called \textbf{routed experts}. Experts that are always active, and hence do not depend on the router, are called \textbf{shared experts}.
    
    Many MoE gating functions are constructed by first computing unnormalized routing scores and then normalizing them. Let $s_m(x;\theta_m)$ be the score assigned to the $m$-th routed expert, where $\theta_m$ is the expert-specific score parameter. We also write $\theta = (\theta_1^\top, \ldots, \theta_{\MR}^\top)^\top$. We use two score classes as running examples. The first is the \textbf{linear score},
    \begin{align*}
    	s_m^{\lin}(x;\theta_m) = \beta_m^\top x + \alpha_m, \qquad \theta_m = (\beta_m^\top, \alpha_m)^\top \in \RR^{d+1},
    \end{align*}
    which induces routing regions separated by affine decision boundaries. The second is the \textbf{quadratic score},
    \begin{align*}
    	s_m^{\qdr}(x;\theta_m) = x^\top B_m x + \beta_m^\top x + \alpha_m, \qquad B_m = B_m^\top,
    \end{align*}
    where $\theta_m = (\mathrm{vech}(B_m)^\top, \beta_m^\top, \alpha_m)^\top$, and $\mathrm{vech}(B_m)$ vectorizes the upper triangular part of the symmetric matrix $B_m$. The quadratic score allows curved decision boundaries. These two score classes will later be used to illustrate how gating geometry affects the input-space partition induced by the router.
    
    Given scores $s_1, \ldots, s_{\MR}$, dense softmax gating assigns positive weight to every routed expert:
    \begin{align}
    	g_m^{\soft}(x;\theta) = \frac{\exp\{s_m(x;\theta_m)\}}{\sum_{j=1}^{\MR} \exp\{s_j(x;\theta_j)\}}, \qquad m = 1, \ldots, \MR. \label{eq:formulation_gating_dense}
    \end{align}
    Sparse Top-$K$ gating instead activates only the $K$ experts with the largest scores, where $1 \le K \le \MR$. Let $\cT_K(x;\theta)$ denote the resulting set of exactly $K$ selected indices. The Top-$K$ softmax gate is
    \begin{align}
    	g_m^{(K)}(x;\theta) = \frac{\exp\{s_m(x;\theta_m)\} \one\{m \in \cT_K(x;\theta)\}}{\sum_{j \in \cT_K(x;\theta)} \exp\{s_j(x;\theta_j)\}}, \qquad m = 1, \ldots, \MR. \label{eq:formulation_gating_sparse}
    \end{align}
    Thus dense softmax gating and Top-$K$ gating represent two different forms of covariate-dependent aggregation. Dense gating gives smooth averaging over the full expert collection, whereas Top-$K$ gating gives a piecewise-defined rule whose active expert set changes across the input space. This partition viewpoint will be central to the approximation and estimation analyses below.
    
    We next incorporate shared experts. Motivated by recent MoE architectures such as DeepSeekMoE \citep{dai2024deepseekmoe}, we consider a layer with $\MS$ shared experts and $\MR$ routed experts. The shared experts are always active, while the gating function is applied only to the routed experts. For input $x$, let $\{f_{S,\ell}(x;W_{S,\ell})\}_{\ell=1}^{\MS}$ denote the shared expert outputs, and let $\{f_{R,m}(x;W_{R,m})\}_{m=1}^{\MR}$ denote the routed expert outputs. The shared-routed MoE layer is
    \begin{align}
    	F_{\mathrm{shared}}(x;W_S,W_R,\theta) = \sum_{\ell=1}^{\MS} f_{S,\ell}(x;W_{S,\ell}) + \sum_{m=1}^{\MR} g_m(x;\theta) f_{R,m}(x;W_{R,m}), \label{eq:formulation_MoE_shared}
    \end{align}
    where $W_S = (W_{S,1}, \ldots, W_{S,\MS})$. Combining this formulation with dense softmax or Top-$K$ gating gives dense or sparse routing with shared experts. This decomposition separates an always-active shared component from an input-dependent routed component. Issues of identifiability, estimation and the statistical benefits of this separation are discussed in later sections.
    
	\section{Localized averaging in MoE}\label{sec:local_averaging}
	Building on the formulation in Section \ref{subsec:formulation}, we now examine the statistical role of input-dependent aggregation in MoE models. The key distinction is between global averaging, where the aggregation weights are fixed across the input space, and localized averaging, where the weights are generated by a gating function and therefore vary with the covariate value $x$. In modern MoE terminology, this localization is implemented by the router. Thus, the router determines not only which experts are used, but also how the prediction rule varies across the input space. 
	
	Existing approximation results have established that input-dependent weighting generates rich approximation classes even when the experts are constants \citep{pmlr-vR2-jiang99a, nguyen2016universal,zeevi1998error,gao2026combining}. We take this constant expert setting as a starting point and first ask whether sparse Top-$K$ routing retains universal approximation when only $K$ experts are activated for each input. We then consider a fixed number of experts to characterize more explicitly the approximation ability and complexity induced by localized aggregation, before moving beyond constant experts to examine whether its expressive advantage persists for richer expert classes. Throughout this section, we consider routed experts without shared experts, write $M=\MR$ and take $\mathcal{X} = [0,1]^d$.

    \subsection{Constant experts: global versus localized averaging}
    Suppose that every expert belongs to the class of constant functions $\cF_{\mathrm c} := \{f:\cX \to \RR:f(x)\equiv c,\ c\in\RR\}$. The corresponding global averaging class is
    \begin{align*}
    	\cA_{\glob} := \left\{\sum_{m=1}^M w_m f_m \;\middle|\; M\in\mathbb{N}_+,\ f_m\in\cF_{\mathrm c},\ w_m\geq 0,\ \sum_{m=1}^M w_m=1\right\}.
    \end{align*}
    Since convex combinations of constant functions remain constant, $\cA_{\glob}=\cF_{\mathrm c}$. In contrast, for softmax routing \eqref{eq:formulation_gating_dense} with linear scores, define
    \begin{align*}
    	\cA_{\loc}^{\soft} := \left\{\sum_{m=1}^M g_m^{\soft}(x;\theta)f_m \;\middle|\; M\in\mathbb{N}_+,\ f_m\in\cF_{\mathrm c},\ \theta_m=(\beta_m^\top,\alpha_m)^\top\in\RR^{d+1},\ \theta=(\theta_1^\top,\dots,\theta_M^\top)^\top\right\}.
    \end{align*}
    Although the experts are constant, input-dependent softmax weighting induces a function class substantially richer than that generated by global averaging. The following existing result shows that this class can approximate any continuous function arbitrarily well on a compact input space. 
    
    \begin{lemma}[Theorem~7 in \citep{nguyen2016universal}]\label{lem:approximation of softmax}
		For every compact $\cX$, $\cA_{\loc}^{\soft}$ is dense in $C(\cX)$ with respect to the uniform norm, where $C(\cX)$ denotes the class of all continuous real-valued functions on $\cX$.
    \end{lemma}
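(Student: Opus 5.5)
The plan is to show that $\cA_{\loc}^{\soft}$ contains, up to arbitrarily small uniform error, a class that is already known to be dense in $C(\cX)$. The natural choice is piecewise-constant approximants on fine grids, realized by softmax gates with large temperature. Concretely, fix $f\in C(\cX)$ and $\varepsilon>0$. By compactness $f$ is uniformly continuous, so there is $\delta>0$ with $|f(x)-f(y)|<\varepsilon$ whenever $\|x-y\|<\delta$. Choose finitely many centers $c_1,\dots,c_M\in\cX$ forming a $\delta/2$-net of $\cX$, and set the constant experts $f_m\equiv f(c_m)$.

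The key construction is a "softmax Voronoi" gate. Take linear scores
\begin{align*}
s_m(x)=\lambda\bigl(2c_m^\top x-\|c_m\|^2\bigr)=\lambda\bigl(\|x\|^2-\|x-c_m\|^2\bigr),
\end{align*}
so $\beta_m=2\lambda c_m$ and $\alpha_m=-\lambda\|c_m\|^2$, which is admissible in the linear-score class. The term $\lambda\|x\|^2$ is common to all $m$ and cancels in the softmax. Hence $g_m^{\soft}(x;\theta)\propto \exp\{-\lambda\|x-c_m\|^2\}$, a normalized Gaussian-kernel (Nadaraya--Watson type) weight. Then
\begin{align*}
\Bigl|\sum_m g_m(x)f(c_m)-f(x)\Bigr|\le \sum_{m:\|x-c_m\|<\delta} g_m(x)\,\varepsilon+2\|f\|_\infty\sum_{m:\|x-c_m\|\ge\delta} g_m(x).
\end{align*}

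The main step, and the only real obstacle, is bounding the far-weight mass uniformly in $x$. For each $x$ some center satisfies $\|x-c_{m^*}\|<\delta/2$, so the denominator is at least $\exp\{-\lambda\delta^2/4\}$. Each far term is at most $\exp\{-\lambda\delta^2\}$. The far mass is therefore at most $M\exp\{-3\lambda\delta^2/4\}$, which tends to $0$ uniformly in $x$ as $\lambda\to\infty$, with $M$ and $\delta$ held fixed.

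Choosing $\lambda$ large enough makes the total error at most $2\varepsilon$ uniformly on $\cX$, which establishes density in the uniform norm. The order of quantifiers matters here: $\delta$ is chosen first, then $M$ and the centers, and only then $\lambda$.
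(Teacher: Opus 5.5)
Your proof is correct. The paper does not prove this lemma; it imports it from Nguyen et al.\ (2016). The closest internal comparison is therefore the paper's proof of the Top-$K$ density result, Proposition~\ref{prop:topk_density}, which uses exactly your nearest-center scores $2\lambda c_m^\top x-\lambda\|c_m\|_2^2$.

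The difference is in how the far experts are suppressed. Hard Top-$K$ selection gives them weight exactly zero, so $\lambda$ can stay in the fixed box $\Theta_{\lin}(C_1)$. Dense softmax never assigns zero weight, so you must let $\lambda\to\infty$; your far-mass bound shows that $\lambda\ge \frac{4}{3\delta^2}\log(2M\|f\|_\infty/\varepsilon)$ suffices. This is legitimate only because $\theta$ is unrestricted in $\cA_{\loc}^{\soft}$.

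The unbounded $\lambda$ is not an artifact of your construction. Suppose $|f_m|\le R$ and $\|\theta_m\|_\infty\le C_1$. Then $\nabla_x\sum_m g_m^{\soft}(x;\theta)f_m=\sum_m g_m^{\soft}f_m\bigl(\beta_m-\sum_j g_j^{\soft}\beta_j\bigr)$ has Euclidean norm at most $2RC_1\sqrt d$ for every $M$. Hence, on $[0,1]^d$, $\bigcup_M\cA_{\loc}^{\soft}(M)$ consists of $2RC_1\sqrt d$-Lipschitz functions and cannot be dense in $C_R(\cX)$, in contrast with Proposition~\ref{prop:topk_density}. Your constructive route thus exposes a genuine difference between dense and Top-$K$ routing that the bare citation hides.

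If only the qualitative statement is wanted, a shorter nonconstructive route is Stone--Weierstrass:
\begin{itemize}
\item The class contains the constants.
\item It separates points: experts $1$ and $0$ with scores $(x-y)^\top z$ and $0$ give the logistic function $z\mapsto\{1+\exp(-(x-y)^\top z)\}^{-1}$, which takes different values at $x\neq y$.
\item It is closed under sums and products: multiplying out the numerators and denominators of two elements again yields a softmax average with linear scores $s_k+s'_{k'}$ and expert constants $f_k+f'_{k'}$ or $f_kf'_{k'}$, respectively.
\end{itemize}
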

    
    In many modern large-scale MoE architectures, however, only the Top-$K$ experts are activated at each input in order to control per-input computation. This widespread architectural choice motivates a corresponding approximation question: does activating only $K$ of the $M$ routed experts at each input fundamentally reduce the expressive power relative to dense softmax routing?
    
    To study this question, we consider bounded constant experts and restrict the linear score parameters to a fixed box. Specifically, let
    \begin{align*}
    	\Theta_{\lin}(C_1) := \left\{\vartheta=(\beta^\top,\alpha)^\top\in\RR^{d+1}:\|\vartheta\|_\infty\leq C_1\right\}, \qquad C_1>0.
    \end{align*}
   The bounded constant-expert class is defined by
    \begin{align*}
    	\cF_{\mathrm c}^R := \{f:\cX\to\RR:f(x)\equiv c,\ |c|\leq R\}.
    \end{align*}
    For $1\leq K\leq M$, define the Top-$K$ localized averaging class with $M$ routed experts by
    \begin{align*}
    	\cA_{\loc}^{\TopK}(M) := \left\{\sum_{m=1}^M g_m^{(K)}(x;\theta)f_m(x) \;\middle|\; f_m\in\cF_{\mathrm c}^R,\ \theta_m\in\Theta_{\lin}(C_1),\ \theta=(\theta_1^\top,\ldots,\theta_M^\top)^\top\right\},
    \end{align*}
    where $g_m^{(K)}$ is the Top-$K$ softmax gate generated by the linear score $s_m^{\lin}$. The full Top-$K$ localized averaging class is
    \begin{align*}
    	\cA_{\loc}^{\TopK} := \bigcup_{M\geq K}\cA_{\loc}^{\TopK}(M).
    \end{align*}
    For comparison, the corresponding bounded global averaging class is
    \begin{align*}
    	\cA_{\glob}^R := \left\{\sum_{m=1}^M w_m f_m \;\middle|\; M\in\mathbb{N}_+,\ f_m\in\cF_{\mathrm c}^R,\ w_m\geq 0,\ \sum_{m=1}^M w_m=1\right\} = \cF_{\mathrm c}^R.
    \end{align*}
    
    The next proposition shows that fixing the per-input activation level $K$ does not destroy universal approximation as the total number of routed experts increases: for every fixed $K$, the union over $M \ge K$ remains sufficiently rich to approximate any bounded continuous target function. 
    
    \begin{proposition}\label{prop:topk_density}
		Let $\cX = [0,1]^d$ and fix $K\in\mathbb{N}_+$. Then $\cA_{\loc}^{\TopK}$ is dense in
		\begin{align*}
			C_R(\cX) := \{h\in C(\cX):\|h\|_\infty\leq R\}
		\end{align*}
		with respect to the uniform norm, where $C(\cX)$ denotes the class of all continuous real-valued functions on $\cX$.
    \end{proposition}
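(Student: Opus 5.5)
The plan is to realize Top-$K$ routing with linear scores as a \emph{$K$-nearest-neighbour rule} over a fine grid of centers, assigning each center the value of the target function there. The bounded parameter box $\Theta_{\lin}(C_1)$ is not an obstacle, because positive rescaling of all scores does not change which $K$ experts are selected. Only the softmax weights among the selected experts change under rescaling, and these weights form a convex combination, so they will not matter for the error bound.

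\textbf{Construction.} Fix $h\in C_R(\cX)$ and $\eta>0$, and let $\omega_h$ be the modulus of continuity of $h$, which is finite since $h$ is uniformly continuous on the compact cube. For $\delta>0$ small, let $z_1,\dots,z_M$ be the points of the regular grid $\delta\mathbb{Z}^d\cap[0,1]^d$. We take $\delta$ small enough that $M\ge K$. Set $f_m\equiv h(z_m)$, which lies in $\cF_{\mathrm c}^R$ because $|h(z_m)|\le R$. Define
\begin{align*}
s_m(x)=\varepsilon\bigl(2z_m^\top x-\|z_m\|^2\bigr)=\varepsilon\bigl(\|x\|^2-\|x-z_m\|^2\bigr),
\end{align*}
that is, $\beta_m=2\varepsilon z_m$ and $\alpha_m=-\varepsilon\|z_m\|^2$. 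Since $z_m\in[0,1]^d$, we have $\|\theta_m\|_\infty\le\varepsilon\max(2,d)$, so choosing $\varepsilon=C_1/\max(2,d)$ puts every $\theta_m$ in $\Theta_{\lin}(C_1)$. Because $\|x\|^2$ is common to all $m$, ordering experts by score is the same as ordering centers by increasing distance $\|x-z_m\|$.

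\textbf{Localization.} The key step is to show that every selected index $m\in\cT_K(x;\theta)$ satisfies $\|x-z_m\|\le r_K(x)$, where $r_K(x)$ is the distance from $x$ to its $K$-th nearest grid point. This holds for any tie-breaking rule. If some selected center were strictly farther than $r_K(x)$, then at least one of the $K$ nearest centers would be unselected yet have a strictly larger score, which contradicts the Top-$K$ selection. A counting argument then bounds $r_K(x)$ uniformly. Every $x\in[0,1]^d$ lies in a grid sub-cube of side $\delta$. Enlarging to a cube of side $\lceil K^{1/d}\rceil\delta$ that contains $x$ and stays inside $[0,1]^d$ (possible once $\delta$ is small) yields at least $K$ grid points within distance $\sqrt d\,(\lceil K^{1/d}\rceil+1)\delta=:c_{d,K}\delta$.

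\textbf{Conclusion and main obstacle.} Since the gate weights $g^{(K)}_m(x;\theta)$ are nonnegative, sum to one, and are supported on $\cT_K(x;\theta)$, we get
\begin{align*}
\Bigl|\sum_m g_m^{(K)}(x;\theta)h(z_m)-h(x)\Bigr|\le\max_{m\in\cT_K(x;\theta)}|h(z_m)-h(x)|\le\omega_h(c_{d,K}\delta).
\end{align*}
This is below $\eta$ for $\delta$ small, uniformly in $x$, and since $K$ is fixed, $c_{d,K}$ does not depend on $\delta$. I expect the main subtlety to be the localization step: handling ties in $\cT_K$ and the boundary of the cube in the counting bound, which is why I argue via "score at least the $K$-th largest" rather than through an explicit Voronoi description. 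The approach does not invoke Lemma~\ref{lem:approximation of softmax}, and it does not need large gate slopes, which is exactly what the box constraint $\Theta_{\lin}(C_1)$ forbids.
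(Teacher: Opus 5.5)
Your proof is correct, but it follows a different route from the paper. Both use the same scores $2\lambda m^\top x-\lambda\|m\|^2$ with $\lambda=C_1/\max\{2,d\}$, so Top-$K$ selection reduces to ordering by distance to centers. The difference is how the centers are populated.

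\textbf{The paper's construction.} It places $K$ identical copies of the constant expert $h(m_\nu)$ at each cell center, so $M=Kq^d$. Top-$K$ then collapses to a $1$-nearest-center rule. The $K$ selected experts all sit at nearest centers, since at least $K$ copies attain the maximal score, and this also disposes of ties. The localization radius is the half-diagonal $\sqrt d/(2q)$, independent of $K$, and no counting argument near the boundary of the cube is needed. This is also why the same construction gives the clean explicit bound $L\sqrt d/(2\lfloor (M/K)^{1/d}\rfloor)$ reused in Proposition~\ref{prop:approximation_ability}.

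\textbf{Your construction.} You use one expert per grid point and control the $K$-th-nearest-neighbour radius. The tie-robust argument that every selected score is at least $s_{(K)}$ is sound, and so is the per-coordinate choice of $\lceil K^{1/d}\rceil$ consecutive grid values near the boundary. Since $\delta\approx M^{-1/d}$, your radius $\sqrt d(\lceil K^{1/d}\rceil+1)\delta$ is of the same order $\sqrt d\,(K/M)^{1/d}$ as the paper's, with a worse constant. So your argument would also yield a version of the quantitative Proposition~\ref{prop:approximation_ability}.

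\textbf{What each buys.} The paper's duplication trick makes the localization step trivial and the constants sharp, but the $K$ active experts at a generic point carry the same value, so sparsity plays no real role. Your version avoids redundant experts and realizes a genuine $K$-nearest-neighbour local average, which is closer in spirit to the "localized averaging" interpretation. The cost is a slightly more delicate counting argument.
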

    
    Proposition~\ref{prop:topk_density} shows that the expressive advantage of input-dependent routing over global averaging persists under Top-$K$ activation: Top-$K$ localized averaging of bounded constant experts remains dense in $C_R(\cX)$. For every fixed $K$, increasing $M$ allows the router to generate increasingly fine local regions even though only $K$ experts are active at each input and the score parameters remain in a fixed box. Thus, Top-$K$ sparsity controls per-input computation without sacrificing universal approximation, provided that sufficiently many routed experts are available.
	
	\subsection{Approximation and metric entropy with a fixed number of experts}
	We next consider the regime in which the number of routed experts $M$ is fixed, complementing the preceding analysis that allows $M$ to grow. While increasing $M$ allows sparse Top-$K$ routing to retain universal approximation, fixing $M$ limits the number of effective local regions that the router can generate, leading to a more explicit approximation-complexity tradeoff.
	
	Throughout this subsection, $C_1$ and $R$ are fixed as above. We compare the Top-$K$ localized averaging class $\cA_{\loc}^{\TopK}(M)$ with the bounded global averaging class $\cA_{\glob}^R$ over the Lipschitz ball
	\begin{align*}
		\cF_b^{R,L} := \left\{h:\cX \to \RR \;\middle|\; \|h\|_\infty \le R,\ |h(x)-h(y)| \le L\|x-y\|_2,\ \forall x,y \in \cX\right\}.
	\end{align*}
	The following proposition quantifies the approximation gain from localized routing with a fixed number of experts.
	
	\begin{proposition}\label{prop:approximation_ability}
		Let $d \ge 1$ and fix $M,K \in \mathbb{N}_+$ with $1 \le K \le M$. Then
		\begin{align*}
			\sup_{h \in \cF_b^{R,L}} \inf_{f \in \cA_{\glob}^R} \|f-h\|_\infty = \min\left\{R,\frac{L\sqrt{d}}{2}\right\},
		\end{align*}
		and
		\begin{align*}
			\sup_{h \in \cF_b^{R,L}} \inf_{f \in \cA_{\loc}^{\TopK}(M)} \|f-h\|_\infty \le \frac{L\sqrt{d}}{2\lfloor(M/K)^{1/d}\rfloor}.
		\end{align*}
	\end{proposition}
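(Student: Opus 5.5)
For the first identity, note that $\cA_{\glob}^R=\cF_{\mathrm c}^R$, so for each $h\in\cF_b^{R,L}$ the inner infimum is $\inf_{|c|\le R}\|h-c\|_\infty$. Write $\overline h=\max_{\cX} h$ and $\underline h=\min_{\cX} h$.

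\emph{Upper bound.} The optimal unconstrained constant is the midpoint $c^*=(\overline h+\underline h)/2$. It satisfies $|c^*|\le R$ and gives error $(\overline h-\underline h)/2$. So the infimum equals half the oscillation of $h$. By the sup bound and the Lipschitz bound, the oscillation is at most $\min\{2R,\,L\,\mathrm{diam}(\cX)\}=\min\{2R,L\sqrt d\}$. This gives $\le \min\{R,L\sqrt d/2\}$.

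\emph{Lower bound.} Take a ridge function along the main diagonal,
$$h_0(x)=\max\Bigl\{-R,\ \min\bigl\{R,\ L\bigl(u^\top x-\sqrt d/2\bigr)\bigr\}\Bigr\},\qquad u=d^{-1/2}(1,\dots,1)^\top .$$
It is $L$-Lipschitz since $\|u\|=1$ and clipping is $1$-Lipschitz, and it is bounded by $R$. Comparing the vertices $0$ and $(1,\dots,1)$ shows its oscillation is exactly $\min\{2R,L\sqrt d\}$. Hence the supremum is attained and equals $\min\{R,L\sqrt d/2\}$.

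For the second bound I use a grid construction. Let $n=\lfloor (M/K)^{1/d}\rfloor\ge1$; this is valid since $M\ge K$. Then $Kn^d\le M$. Partition $\cX$ into $n^d$ closed subcubes $Q_1,\dots,Q_{n^d}$ of side $1/n$ with centers $c_1,\dots,c_{n^d}$. Every point of $Q_j$ lies within $\sqrt d/(2n)$ of $c_j$.

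\emph{Experts.} Assign to each cell $j$ a group $G_j$ of experts. Every expert in $G_j$ is the constant $h(c_j)$, which lies in $\cF_{\mathrm c}^R$ because $|h|\le R$. The groups have size $K$, except that the $M-Kn^d$ leftover experts are also placed in $G_1$, so $|G_j|\ge K$ for all $j$.

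\emph{Scores.} Every expert in $G_j$ gets the linear score $s_j(x)=\lambda\,(2c_j^\top x-\|c_j\|^2)$. Since $2c_j^\top x-\|c_j\|^2=\|x\|^2-\|x-c_j\|^2$, maximizing the score is nearest-center (Voronoi) assignment. For an axis-aligned grid of centers, this Voronoi partition coincides with the grid cells. Taking $\lambda>0$ small puts all parameters in $\Theta_{\lin}(C_1)$, and positive scaling leaves the score ordering unchanged.

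\emph{Why the gate returns a nearby constant.} The key point, and the main thing to check carefully, is that $\cT_K(x;\theta)$ consists only of experts from groups attaining the maximal score at $x$.
\begin{itemize}[leftmargin=*]
\item At an interior point of $Q_j$, group $G_j$ alone attains the strict maximum and contains at least $K$ tied experts. Whatever the tie-breaking convention, $\cT_K$ picks $K$ experts from $G_j$. The Top-$K$ softmax output is then a convex combination of copies of $h(c_j)$, namely $h(c_j)$ itself.
\item At a boundary point, several groups may tie for the maximum. Each such group has center at distance $\le \sqrt d/(2n)$ from $x$, because the tied centers are exactly the nearest centers. The output is a convex combination of the values $h(c_j)$ over these nearest centers.
\end{itemize}
In both cases, the $L$-Lipschitz property gives $|f(x)-h(x)|\le L\sqrt d/(2n)$.

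Taking the supremum over $x$ and $h$ yields the stated bound $L\sqrt d/(2\lfloor (M/K)^{1/d}\rfloor)$.
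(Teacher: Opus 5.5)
Your proof is correct and follows essentially the same route as the paper: the half-oscillation argument plus an extremal diagonal witness for the global class, and a nearest-center grid with $K$ constant copies per cell and scores $\lambda(2c_j^\top x-\|c_j\|_2^2)$ for the Top-$K$ class. The differences are cosmetic. The paper's witness is the unclipped linear function $\frac{a}{d}\sum_j(2x_j-1)$ with $a=\min\{R,L\sqrt d/2\}$ rather than your clipped ridge, and it gives the $M-Kq^d$ leftover experts value $0$ and constant score $-C_1$, so they are never selected, instead of duplicating them into $G_1$.
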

	Proposition~\ref{prop:approximation_ability}  shows that Top-$K$ localized averaging can improve worst-case approximation over global averaging even with a fixed number of experts. Global averaging cannot adapt to local variation and therefore incurs a non-vanishing approximation error for general Lipschitz functions. By contrast, for fixed $M$, the Top-$K$ approximation bound improves as $K$ decreases, showing that greater sparsity yields a sharper worst-case approximation guarantee.
	
	The preceding result concerns approximation accuracy. To understand the statistical price of this increased flexibility, we next examine metric entropy under the uniform norm. Similar to $\cA_{\loc}^{\TopK}(M)$, we define the localized averaging class with a fixed number of experts for dense softmax gating by
	\begin{align*}
		\cA_{\loc}^{\soft}(M) := \left\{\sum_{m=1}^M g_m^{\soft}(x;\theta) f_m(x) \;\middle|\; f_m \in \cF_{\mathrm c}^R,\ \theta_m \in \Theta_{\lin}(C_1),\ \theta = (\theta_1^\top,\ldots,\theta_M^\top)^\top\right\},
	\end{align*}
	where $g_m^{\soft}$ is generated by the linear score $s_m^{\lin}$. The bounded parameter box $\Theta_{\lin}(C_1)$ in this definition is introduced only for the metric entropy comparison below: it makes the covering numbers finite and directly comparable across global, Top-$K$, and dense softmax classes. For the other analyses of dense softmax gating in this section, this bounded-parameter restriction is not needed.
	
	For a function class $\cF$, let $N(\epsilon,\cF,\|\cdot\|_\infty)$ denote its $\epsilon$-covering number with respect to the uniform norm. The following theorem compares three levels of flexibility: the global constant class, the dense softmax class, and the sparse Top-$K$ class.
	
	\begin{theorem}\label{thm:metric_entropy}
		Let $\cX = [0,1]^d$ and fix $M \in \mathbb{N}_+$ and $R,C_1 > 0$. Then the following hold.
		\setlength{\abovedisplayskip}{4pt}
		\setlength{\belowdisplayskip}{4pt}
		\setlength{\abovedisplayshortskip}{4pt}
		\setlength{\belowdisplayshortskip}{4pt}
		\begin{enumerate}[itemsep=0pt, parsep=0pt, topsep=0.25em, partopsep=0pt]
			\item[\textnormal{(i)}] \textbf{Global constant class.}
			As $\epsilon\downarrow0$,
			\begin{align*}
				\log N\left(\epsilon,\cA_{\glob}^R,\|\cdot\|_\infty\right) = \log(1/\epsilon)+O(1).
			\end{align*}
			\item[\textnormal{(ii)}] \textbf{Dense softmax class.}
			If $M=1$, the result in part~\textnormal{(i)} applies. If $M\ge2$, then as $\epsilon\downarrow0$,
			\begin{align*}
				(d+1)\log(1/\epsilon) + O(1) \le \log N\left(\epsilon,\cA_{\loc}^{\soft}(M),\|\cdot\|_\infty\right) \le [M+(M-1)(d+1)]\log(1/\epsilon) + O(1).
			\end{align*}
			
			\item[\textnormal{(iii)}] \textbf{Sparse Top-$K$ class.}
			If $1 \le K < M$, then for any $0 < \epsilon \le R/K$,
			\begin{align*}
				\log N\left(\epsilon,\cA_{\loc}^{\TopK}(M),\|\cdot\|_\infty\right) = \infty.
			\end{align*}
		\end{enumerate}
	\end{theorem}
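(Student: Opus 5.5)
I would treat the three parts separately.

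\textbf{Part (i).} Since $\cA_{\glob}^R=\cF_{\mathrm c}^R$, the uniform distance between two elements is $|c-c'|$. Covering $\cF_{\mathrm c}^R$ is therefore the same as covering $[-R,R]$ by intervals of length $2\epsilon$, so $N=\lceil R/\epsilon\rceil$. Taking logarithms gives $\log(1/\epsilon)+\log R+o(1)$.

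\textbf{Part (ii), upper bound.} Softmax is invariant under a common shift of all scores, so I may fix $\theta_M=0$. The map
\[
(c,\theta_1,\ldots,\theta_{M-1})\mapsto \sum_m g_m^{\soft}(\cdot;\theta)c_m
\]
then has $M+(M-1)(d+1)$ free parameters, ranging over the compact box $[-R,R]^M\times[-2C_1,2C_1]^{(M-1)(d+1)}$ (differences of box parameters), and it is Lipschitz into $(C(\cX),\|\cdot\|_\infty)$.
\begin{itemize}
\item Moving each $c_m$ by $\delta$ moves the output by at most $\delta$, because the weights sum to one.
\item The softmax map is $1$-Lipschitz from $\ell_\infty$ scores to $\ell_1$ weights, up to a constant factor. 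The scores satisfy $|s_m(x;\theta)-s_m(x;\theta')|\le (d+1)\|\theta_m-\theta_m'\|_\infty$ on $[0,1]^d$. Hence the output moves by at most $C R(d+1)\delta$.
\end{itemize}
A $\delta$-grid on the parameter box, with $\delta\asymp\epsilon$, then gives $N\lesssim \epsilon^{-[M+(M-1)(d+1)]}$.

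\textbf{Part (ii), lower bound.} I would exhibit a $(d+1)$-dimensional sub-family that is well separated. Take $M=2$ effectively: use experts $c_1=R$, $c_2=-R$ (for larger $M$, set the other experts' scores to be dominated or duplicate expert $2$). Put $\theta_2=0$ and $\theta_1=\vartheta\in[-C_1,C_1]^{d+1}$. Then
\[
f_\vartheta(x)=R\tanh\bigl((\beta^\top x+\alpha)/2\bigr).
\]
I would show a reverse Lipschitz bound
\[
\|f_\vartheta-f_{\vartheta'}\|_\infty\ge c\|\vartheta-\vartheta'\|_\infty
\]
on the box. Since $\tanh$ has derivative at least $\operatorname{sech}^2(C_1(d+1)/2)>0$ on the relevant range, it suffices to have $\sup_{x\in[0,1]^d}|(\beta-\beta')^\top x+(\alpha-\alpha')|\ge c'\|\vartheta-\vartheta'\|_\infty$. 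This holds because an affine function on $[0,1]^d$ is determined by its values at the vertices $0,e_1,\ldots,e_d$, and the evaluation map at these vertices is an invertible linear map. A $2\epsilon/c$-separated grid in the box, of size $\asymp\epsilon^{-(d+1)}$, is then an $\epsilon$-packing, which yields the lower bound.

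\textbf{Part (iii).} This is the main obstacle: I need an uncountable family in $\cA_{\loc}^{\TopK}(M)$ whose elements are pairwise at uniform distance greater than $2\epsilon$ whenever $\epsilon\le R/K$. Then no finite $\epsilon$-cover exists.

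The plan is to exploit hard switching. Take two experts, say $m=1,2$, that compete for the $K$-th slot, and make the remaining $K-1$ selected experts always win. To do this, give $K-1$ experts the score parameter $\alpha=C_1$, $\beta=0$, and the other $M-K-1\ge0$ experts the score $-C_1$ (or equal to a loser). Let
\begin{itemize}
\item $s_1(x)=t$, with $\beta_1=0$ and $\alpha_1=t\in(-C_1/2,C_1/2)$;
\item $s_2(x)=x_1-1/2$ scaled within the box, i.e.\ $\beta_2=C_1e_1/2$ and $\alpha_2=-C_1/4$.
\end{itemize}
Top-$K$ then selects expert $1$ exactly on the half-space $\{s_2(x)<t\}$, whose boundary moves continuously in $t$. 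The constants are chosen so that:
\begin{itemize}
\item the top $K-1$ experts have constant $0$;
\item expert $1$ has constant $R$ and expert $2$ has constant $-R$.
\end{itemize}
Near the threshold the scores of experts $1$ and $2$ are equal. So on the side where $1$ is chosen, the output is close to $R\exp(t)/(\text{sum of }K\text{ exponentials})$, and on the other side it is close to $-R\exp(t)/(\cdots)$. The jump across the boundary is of order $2R/K$ when all active scores are nearly equal.

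Getting the jump to be at least $2R/K$ exactly, rather than merely order $R/K$, requires care. A cleaner choice is to set all scores of the $K-1$ fixed experts equal to the common threshold value, i.e.\ make all relevant scores equal at the boundary. Then the weights are exactly $1/K$, and the jump is exactly $2R/K$.

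For $t\ne t'$, the strip between the two boundaries contains points where $f_t$ and $f_{t'}$ differ by about $2R/K$. Taking a limit to the boundary gives $\|f_t-f_{t'}\|_\infty\ge 2R/K-o(1)$. To get a strict inequality $>2\epsilon$ for $\epsilon<R/K$, and equality at the endpoint $\epsilon=R/K$, I would handle the tie-breaking convention in $\cT_K$ explicitly. I expect that step, together with choosing tie-breaking so the jump is attained, to be the delicate part.

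The resulting uncountable $2\epsilon$-separated family forces $N(\epsilon)=\infty$.
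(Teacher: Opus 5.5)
Parts (i) and (ii) are correct and follow the paper's route. Part (i) uses the isometry with $[-R,R]$. The upper bound in (ii) uses the reference-expert reparametrization with $M+(M-1)(d+1)$ free coordinates, together with the $\ell_\infty\to\ell_1$ Lipschitz bound for softmax. The lower bound in (ii) uses a $(d+1)$-parameter subfamily $x\mapsto\psi(\beta^\top x+\alpha)$ with $\psi$ strictly monotone, separated by evaluating at $0,e_1,\dots,e_d$. The gap is in part (iii), exactly at the quantitative step that produces the threshold $R/K$, which you leave open.

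\textbf{First construction.} For $K\ge2$, you place the $K-1$ zero experts at score $C_1>t$. The $\pm R$ expert then receives weight $e^{t}/(e^{t}+(K-1)e^{C_1})<1/K$, so the jump is strictly below $2R/K$. This only yields $N=\infty$ for much smaller $\epsilon$.

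\textbf{``Cleaner'' construction.} Here expert $1$ (score $\equiv t$) is tied with the $K-1$ zero experts on the whole open set $\{s_2>t\}$. So which expert is dropped when expert $2$ enters is decided by tie-breaking, not by the scores. With your labeling and the smaller-index rule, a zero expert is dropped. The output on that side is then $R(e^{t}-e^{s_2})/(e^{s_2}+(K-1)e^{t})\to 0$ at the boundary, the jump is only $R/K$, and the family is only about $R/K$-separated. Your claim that the jump is exactly $2R/K$ needs a labeling that makes the rule discard the $+R$ expert, and you do not supply it.

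\textbf{Missing idea.} Avoid ties away from the boundary by letting the two competitors cross the zero experts' common score in opposite directions. The paper takes $u_1^{(t)}(x)=C_1(x_1-t)$, $u_2^{(t)}=-u_1^{(t)}$, and all other scores $0$; all of these lie in $\Theta_{\lin}(C_1)$. For $s<t$ and $x=((s+t)/2,0,\dots,0)$, the active $\pm R$ expert has score $u=C_1(t-s)/2>0$, strictly above the selected zero experts. Hence $h_s(x)=Re^{u}/(e^{u}+K-1)>R/K$ and likewise $h_t(x)<-R/K$. The separation is strictly larger than $2R/K$ directly, with no limit to the boundary and no dependence on tie-breaking.

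\textbf{The endpoint.} Your worry about $\epsilon=R/K$ is well founded in one case. Every element of the class is bounded by $R$ in sup norm. So for $K=1$, where $R/K=R$, the closed ball of radius $R$ around the zero function (which lies in the class) covers everything. Correspondingly, the paper's strict inequality $h_s(x)>R/K$ becomes an equality when $K=1$, and its family is only exactly $2R$-separated. At $\epsilon=R/K$ the claim therefore needs open balls when $K=1$. For $K\ge2$ the strict separation above handles $\epsilon=R/K$ under either convention.
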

	
	Theorem~\ref{thm:metric_entropy} shows that, for fixed $M$, input-dependent routing can substantially enlarge the complexity of the resulting prediction class, with sparse Top-$K$ routing permitting particularly rich local variation through changes in the active expert set. For dense softmax routing, the larger logarithmic entropy coefficient, relative to global averaging, reflects the additional finite-dimensional complexity introduced by input-dependent weighting. By contrast, Top-$K$ routing can change the selected expert set discontinuously across routing boundaries, which leads to infinite covering numbers at sufficiently small scales. This infinite entropy is specific to the uniform norm. Under $L_2(P_X)$, boundary effects can be controlled by regularity or margin conditions on $P_X$, a distinction that becomes important in the estimation analysis below. 
	
	\subsection{Beyond constant experts}
	The preceding subsections used constant experts to isolate the expressive contribution of the router. This simplification shows that input dependence in the gating weights alone can generate nontrivial approximation power. Modern MoE architectures, however, use highly expressive expert networks. It is therefore natural to ask whether the advantage of localized routing persists beyond constant experts, or whether it can be reproduced by sufficiently rich input-independent aggregation.
	
	To address this question, let $\cE$ be a generic expert class. Define the global aggregation class
	\begin{align*}
		\cA_{\glob}(\cE;M) := \left\{\sum_{m=1}^M w_m e_m \;\middle|\; e_m \in \cE,\ w_m \ge 0,\ \sum_{m=1}^M w_m = 1\right\},
	\end{align*}
	and the localized softmax aggregation class
	\begin{align*}
		\cA_{\loc}^{\soft}(\cE;M) := \left\{\sum_{m=1}^M g_m^{\soft}(x;\theta) e_m(x) \;\middle|\; e_m \in \cE,\ \theta_m = (\beta_m^\top,\alpha_m)^\top \in \RR^{d+1},\ \theta = (\theta_1^\top,\ldots,\theta_M^\top)^\top\right\},
	\end{align*}
	where $g_m^{\soft}$ is generated by the linear score $s_m^{\lin}$. The next example shows that localization can create functions outside the finite global aggregation class even when the experts are already nonlinear.
	
	\begin{example}\label{exa:complex_experts}
		Let $\cX = [0,1]^d$ and let $\sigma(t) := \frac{1}{1+\exp(-t)}$. Consider the expert class
		\begin{align*}
			\cE_{\sigma} := \{x \mapsto c_0 + c_1\sigma(a x_1 + b):c_0,c_1,a,b \in \RR\}.
		\end{align*}
		Then the function $h(x) = \sigma(x_1)^2$ belongs to $\cA_{\loc}^{\soft}(\cE_{\sigma};2)$, but
		\begin{align*}
			h \notin \bigcup_{M \ge 1} \cA_{\glob}(\cE_{\sigma};M).
		\end{align*}
	\end{example}
	
	Example~\ref{exa:complex_experts} illustrates a structural distinction between global and localized aggregation. A global aggregator forms an input-independent convex combination of experts from $\cE_{\sigma}$, and hence remains within the finite linear span generated by such expert functions. In contrast, localized aggregation multiplies expert outputs by input-dependent gating weights. This interaction between the router and the experts creates multiplicative structure, and can generate functions, such as $\sigma(x_1)^2$, that are not representable by any finite input-independent aggregation of the same expert class.
	
	\subsection{Two specialized experts: the advantage of localized routing}\label{subsec:averaging_advantage}
	
	This subsection provides a risk-oriented illustration of how gating can amplify the strengths of specialized experts. 
	For convenience, let $\cX=[0,1]$, $\cX_1=[0,1/2]$ and $\cX_2=(1/2,1]$. Let $f:\cX\to\RR$ be the target regression function, with marginal law $P_X$. For any measurable $u:\cX\to\RR$, write
	\begin{align*}
		\|u\|_{\cX_j} := \left\{\int_{\cX_j} u(x)^2\,dP_X(x)\right\}^{1/2}, \qquad j=1,2.
	\end{align*}
	Assume that $\hat f_1$ is specialized to $\cX_1$ and $\hat f_2$ to $\cX_2$ in the sense that
	\begin{align*}
		\|\hat f_1-f\|_{\cX_1}=\epsilon_1, \qquad \|\hat f_2-f\|_{\cX_2}=\epsilon_2, \qquad \|\hat f_2-f\|_{\cX_1}=\delta_1, \qquad \|\hat f_1-f\|_{\cX_2}=\delta_2,
	\end{align*}
	where $\epsilon_j$ is the good-region error, assumed to be relatively small, and $\delta_j$ is the error of the other expert on $\cX_j$, typically relatively large.
	
	For the fixed pair $(\hat f_1,\hat f_2)$, define
	\begin{align*}
		\cA_{\glob}(\hat f_1,\hat f_2) := \left\{w\hat f_1+(1-w)\hat f_2 \;\middle|\; 0\le w\le 1\right\}.
	\end{align*}
	This is the restriction of $\cA_{\glob}(\cE;2)$ to the two fitted experts. By contrast, the oracle Top-$1$ routed predictor that uses the same two experts and splits the input space at $1/2$ is
	\begin{align*}
		\hat f_{\loc}^{\mathrm{Top\mbox{-}1}}(x) :=
		\begin{cases}
			\hat f_1(x), & x\in\cX_1,\\
			\hat f_2(x), & x\in\cX_2.
		\end{cases}
	\end{align*}
	Here the routing rule assigns inputs in $\cX_1$ to expert $1$ and inputs in $\cX_2$ to expert $2$.
	
	\begin{proposition}\label{prop:two_specialized_experts}
		Assume $\delta_1+\delta_2+\epsilon_1+\epsilon_2>0$. Let $\gamma := \left[\frac{\delta_1\delta_2-\epsilon_1\epsilon_2}{\delta_1+\delta_2+\epsilon_1+\epsilon_2}\right]_+$, where $[a]_+:=\max\{a,0\}$, and write $R(h) := \|h-f\|_{L_2(P_X)}^2 = \int_{\cX}\{h(x)-f(x)\}^2\,dP_X(x)$. Then
		\begin{align*}
			\inf_{h\in\cA_{\glob}(\hat f_1,\hat f_2)} R(h) \ge \gamma^2, \qquad R(\hat f_{\loc}^{\mathrm{Top\mbox{-}1}})=\epsilon_1^2+\epsilon_2^2.
		\end{align*}
		Consequently, if $\epsilon_1^2+\epsilon_2^2<\gamma^2$, then the localized Top-$1$ rule using the same two experts has strictly smaller population $L_2(P_X)$ risk than every global convex combination of them.
	\end{proposition}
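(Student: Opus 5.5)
The equality for the Top-$1$ rule is immediate: split the integral over $\cX=\cX_1\cup\cX_2$, and on each region the routed predictor coincides with its specialized expert. Hence $R(\hat f_{\loc}^{\mathrm{Top\mbox{-}1}})=\|\hat f_1-f\|_{\cX_1}^2+\|\hat f_2-f\|_{\cX_2}^2=\epsilon_1^2+\epsilon_2^2$. All the work is in the lower bound for global combinations.

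Fix $w\in[0,1]$ and put $h_w=w\hat f_1+(1-w)\hat f_2$. On each region we again split the risk, $R(h_w)=\|h_w-f\|_{\cX_1}^2+\|h_w-f\|_{\cX_2}^2$. We then write $h_w-f=w(\hat f_1-f)+(1-w)(\hat f_2-f)$ and apply the reverse triangle inequality in the seminorm $\|\cdot\|_{\cX_j}$. On $\cX_1$ the expert $\hat f_2$ is bad, which gives
\begin{align*}
\|h_w-f\|_{\cX_1}\ge \big[(1-w)\delta_1-w\epsilon_1\big]_+ .
\end{align*}
On $\cX_2$ the expert $\hat f_1$ is bad, which gives
\begin{align*}
\|h_w-f\|_{\cX_2}\ge \big[w\delta_2-(1-w)\epsilon_2\big]_+ .
\end{align*}
Write $a(w)=(1-w)\delta_1-w\epsilon_1$, which is decreasing in $w$, and $b(w)=w\delta_2-(1-w)\epsilon_2$, which is increasing. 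The sum of squares is at least $\max\{[a(w)]_+,[b(w)]_+\}^2$.

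Next we minimize $\max\{a(w),b(w)\}$ over $w\in[0,1]$. The two linear functions cross at
\begin{align*}
w^*=\frac{\delta_1+\epsilon_2}{\delta_1+\delta_2+\epsilon_1+\epsilon_2}\in[0,1],
\end{align*}
which is well defined by the positivity assumption. At this point the common value is
\begin{align*}
a(w^*)=\frac{(\delta_2+\epsilon_1)\delta_1-(\delta_1+\epsilon_2)\epsilon_1}{\delta_1+\delta_2+\epsilon_1+\epsilon_2}=\frac{\delta_1\delta_2-\epsilon_1\epsilon_2}{\delta_1+\delta_2+\epsilon_1+\epsilon_2}.
\end{align*}
Since one function is decreasing and the other increasing, $\max\{a,b\}\ge a(w^*)$ for every $w$. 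Taking positive parts, $\max\{[a]_+,[b]_+\}\ge\gamma$, so $R(h_w)\ge\gamma^2$ for all $w$. The consequence in the statement then follows directly.

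The only delicate step is handling the monotonicity cleanly, including degenerate cases where a slope vanishes (for example $\delta_1=\epsilon_1=0$). In such cases the inequality $\max\{a,b\}\ge a(w^*)$ still holds: for $w\le w^*$ we have $a(w)\ge a(w^*)$, and for $w\ge w^*$ we have $b(w)\ge b(w^*)=a(w^*)$, using only that $a$ is non-increasing and $b$ is non-decreasing.
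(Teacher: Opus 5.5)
Your proof is correct and follows the paper's argument essentially step for step. Both apply the reverse triangle inequality on each region, balance the decreasing and increasing affine lower bounds at $w^*=(\delta_1+\epsilon_2)/(\delta_1+\delta_2+\epsilon_1+\epsilon_2)$, and then use that the larger regional norm is at least $\gamma$. Your explicit handling of the monotonicity, which covers vanishing slopes, just spells out the paper's ``balancing'' step more carefully.
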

	
	\begin{remark}
		Proposition~\ref{prop:two_specialized_experts} formalizes the compromise faced by global aggregation. Accuracy on $\cX_1$ pushes $w$ toward $\hat f_1$, while accuracy on $\cX_2$ pushes it toward $\hat f_2$; a single input-independent weight cannot generally satisfy both. The quantity $\gamma$ measures this tension. In the symmetric case $\delta_1=\delta_2=\delta$ and $\epsilon_1=\epsilon_2=\epsilon$, we have $\gamma=[(\delta-\epsilon)/2]_+$. When $\delta$ is much larger than $\epsilon$, each expert is genuinely specialized: it performs well on its own region but poorly on the other. In this regime, local routing has a larger advantage because it preserves the good-region performance of both experts instead of forcing a global compromise.
	\end{remark}
	
	The results in this section give a simple but useful message. The expressive power of MoE models does not come only from the complexity of the individual experts. It also comes from the router, which turns a fixed collection of expert outputs into a locally adaptive prediction rule. Dense routing, sparse routing and global averaging therefore define genuinely different approximation classes.

	\section{Risk bounds for dense MoE}\label{sec:oracle_softmax}
	The previous section studied MoE architectures from an approximation perspective and showed that input-dependent routing can substantially enlarge the class of attainable predictors. We now turn to the corresponding statistical question: Given such a localized aggregation class, what is the cost of learning the router from finite data? This section focuses on dense softmax gating and derives oracle risk bounds for router learning along an online training trajectory. These bounds decompose the risk into approximation error, cumulative expert-learning error, and the statistical cost of learning the routing rule.
	
	\subsection{Basic setting and training procedure}\label{subsec:BasicSetting}
	We observe independent data $Z_i=(X_i,Y_i)$ generated from $Y_i = f_0(X_i) + \varepsilon_i,$
	where $f_0(x) = \EE(Y_i \mid X_i = x)$ is an unknown regression function and $X_i \sim P_X$ on a compact input space $\cX \subset \mathbb{R}^d$ with $\|x\|_2 \le C_X\sqrt{d}$ for every $x \in \cX$ and some constant $C_X$. Throughout the remainder of the paper, the error $\varepsilon$ is assumed to be independent of $X$ and to have a scale-family density $h_{\varepsilon}(z) = \sigma^{-1} h_{\varepsilon,0}(z/\sigma)$, where $h_{\varepsilon,0}$ is known, centered at zero, and normalized to have unit variance.
	
	We adopt the shared-routed MoE formulation introduced in Section \ref{subsec:formulation}, specialize to dense softmax gating, and use an online decoupled formulation to accommodate sequential expert updating. Let $\mathcal U$ collect all auxiliary randomization used by the initialization and the blackbox expert training procedure, and assume that $\mathcal U$ is independent of the data. For each $t = 1, \dots, n-1$, the shared and routed expert functions available after processing $Z_1, \dots, Z_t$ are measurable with respect to the $\sigma$-field generated by $Z_1, \dots, Z_t$ and $\mathcal{U}$, namely, $\sigma(Z_1,\ldots,Z_t,\mathcal U)$. After observing $X_{t+1}$, their evaluations and the candidate MoE predictions of $Y_{t+1}$ are measurable with respect to the $\sigma$-field $\sigma(Z_1,\ldots,Z_t,\mathcal U,X_{t+1})$. Once $Y_{t+1}$ is observed, the resulting prediction loss is used to update the aggregation weights, and $Z_{t+1}$ may also be used to update the experts for subsequent predictions. 
	
	Under this formulation, for each gating parameter $\theta$ and each time $t$, the candidate predictor takes the form
	\begin{align}
		\check f^{t}(x;\theta) = \sum_{\ell=1}^{\MS}\hat f_{S,\ell}^{t}(x) + \sum_{m=1}^{\MR} g_m^{\soft}(x;\theta)\hat f_{R,m}^{t}(x), \label{eqn:predictor}
	\end{align}
	where $\hat f_{S,\ell}^{t}$ and $\hat f_{R,m}^{t}$ denote the shared and routed experts available after processing the first $t$ observations, respectively. Thus, conditional on the past data, the experts enter the next-step prediction problem as fixed plug-in components, while their quality is allowed to improve along the training trajectory.
	
	This online formulation also simplifies identifiability issues. In a fully joint MoE model, the parameters are generally non-identifiable because experts can be permuted without changing the predictor. Here, for each time $t$, the experts $\hat f_{S,\ell}^{t}$ and $\hat f_{R,m}^{t}$ are viewed as fixed expert slots produced by the past training process, so this permutation ambiguity is irrelevant. The only remaining non-identifiability is the usual shift invariance of softmax scores: adding a common function to all scores does not change the normalized gating weights. Since our results concern the induced gating functions and predictors, this invariance has no effect on the risk bounds. When a parameter level normalization is convenient for softmax gating, we use a standard reference constraint, for example $s_{\MR}(x;\theta_{\MR}) = 0$.
	
	Even along this online training trajectory, router learning remains statistically and computationally nontrivial. Conditional on the experts available at time $t$, estimating the dense softmax gate from the next observation or from a block of subsequent observations still involves a non-convex parametrization. We therefore use a discretization-based aggregation strategy, inspired by adaptive regression via mixing \citep{yang2001adaptive} and aggregated forecasting through exponential reweighting \citep{yang2004Combining}, to obtain risk guarantees without assuming global optimization of a non-convex empirical objective. The procedure constructs a finite net of candidate gates, aggregates the corresponding MoE predictors using likelihood-based weights, and, when a single MoE parameter is required, projects the aggregate back onto the discretized class. This construction is intended as a theoretical benchmark for router learning along the training trajectory, not as a practical substitute for end-to-end MoE training. The estimation procedure is summarized in Algorithm~\ref{alg:general} for a general gating class. In this algorithm, $i$ indexes the i.i.d. observations, while $t$ is an algorithmic time index that records the sequential use of observations in the training procedure.
		
	\begin{algorithm}[htbp]
		\caption{Discretized Aggregation over a General Router Family}\label{alg:general}
		\textbf{Input:} A burn-in size $n_1$; a variance calibration size $n_2$; a router family $g(\cdot;\theta)$ with $\theta \in \Theta_{\MR}$; a blackbox expert update rule; data $\{(X_i,Y_i)\}_{i=1}^n$.\\
		\textbf{Output:} An online aggregated sequence $\{\tilde{f}^t(x)\}_{t=n_1+n_2}^{n-1}$; estimated gating parameter $\hat{\theta}$ and final estimator $\hat{f}_0(x)$.
		{\small
			\begin{enumerate}[label=(\alph*)]
				\item Choose an $\eta$-net of the normalized bounded gating parameter space $\Theta_{\MR}$, denoted by $\Theta(\eta)$. Write $\Theta(\eta) = \{\theta^{(1)},\ldots,\theta^{(S_\eta)}\}$ where $S_\eta = |\Theta(\eta)|$. The mesh size $\eta$ is chosen according to the continuity and covering properties of the router family. These parameters induce the discretized routers $g(\cdot;\theta^{(1)}),\ldots,g(\cdot;\theta^{(S_\eta)})$.
				
				\item Randomly permute the observations once, relabel them as $Z_1,\dots,Z_n$, and split the relabeled sample into three consecutive blocks, $Z^{(1)} = (X_i,Y_i)_{i=1}^{n_1}$, $Z^{(2)} = (X_i,Y_i)_{i=n_1+1}^{n_1+n_2}$ and $Z^{(3)} = (X_i,Y_i)_{i=n_1+n_2+1}^{n}$, and let $n_3 = n-n_1-n_2$. 
				
				\item \textbf{Burn-in:} Process $Z^{(1)}$ by the blackbox update rule to obtain $\{\hat{f}^{n_1}_{S,\ell}\}_{\ell=1}^{\MS}$ and $\{\hat{f}^{n_1}_{R,m}\}_{m=1}^{\MR}$.
				
				\item \textbf{Calibration:} For $t = n_1,\ldots,n_1+n_2-1$:
				\begin{enumerate}[label=(\roman*)]
					\item For each $\theta^{(s)} \in \Theta(\eta)$, construct the candidate MoE predictor $\check{f}^{t}(x;\theta^{(s)})$ as in \eqref{eqn:predictor} by replacing softmax gating $g^{\soft}$ by the specified gating $g$ and using experts trained at time $t$. After observing $Z_{t+1}$, compute the residual $e_{s,t+1} = Y_{t+1}-\check{f}^{t}(X_{t+1};\theta^{(s)})$.
					
					\item Update the experts by the blackbox rule to obtain $\{\hat{f}^{t+1}_{S,\ell}\}_{\ell=1}^{\MS}$ and $\{\hat{f}^{t+1}_{R,m}\}_{m=1}^{\MR}$.
				\end{enumerate}
				Estimate the corresponding scale parameter by $\tilde{\sigma}_s^2 = 1/n_2\sum_{t=n_1+1}^{n_1+n_2} e_{s,t}^2$, and let $\hat{\sigma}_s^2 = \Pi_{[\underline{\sigma}^2,\bar{\sigma}^2]}(\tilde{\sigma}_s^2)$ where $\Pi_{[a,b]}(u) = \min\{b,\max\{a,u\}\}$.
				
				\item \textbf{Aggregation:} Set $\hat{w}_{s,n_1+n_2} = 1/S_{\eta}$. For $t = n_1+n_2,\ldots,n-1$:
				\begin{enumerate}[label=(\roman*)]
					\item Obtain the candidate predictors $\check{f}^{t}(x;\theta^{(s)})$ similarly as before, and output $\tilde{f}^t(x) = \sum_{s=1}^{S_{\eta}}\hat{w}_{s,t}\check{f}^t(x;\theta^{(s)})$.
					
					\item After observing $Z_{t+1}$, update the weights $\hat{w}_{s,t+1}$ by
					\begin{align}
						\hat w_{s,t+1} = \frac{\hat w_{s,t}\hat\sigma_s^{-1} h_{\varepsilon,0}\left((Y_{t+1}-\check{f}^t(X_{t+1};\theta^{(s)}))/\hat\sigma_s\right)}{\sum_{r=1}^{S_\eta}\hat w_{r,t}\hat\sigma_r^{-1}h_{\varepsilon,0}\left((Y_{t+1}-\check{f}^t(X_{t+1};\theta^{(r)}))/\hat\sigma_r\right)}. \label{eq:cal-after-update}
					\end{align}
					
					\item Update the experts by the blackbox rule to obtain $\{\hat{f}^{t+1}_{S,\ell}\}_{\ell=1}^{\MS}$ and $\{\hat{f}^{t+1}_{R,m}\}_{m=1}^{\MR}$.
				\end{enumerate}
				
				\item Averaging the online outputs gives the predictor $\tilde{f}_0$, while, for each $s \in [S_\eta]$, we define the corresponding time-averaged candidate predictor $f^{n_3}(\cdot;\theta^{(s)})$ by
				\begin{align*}
					\tilde{f}_0(x) = \frac{1}{n_3}\sum_{t=n_1+n_2}^{n-1}\tilde{f}^t(x), \qquad \bar f^{n_3}(x;\theta^{(s)}) = \frac{1}{n_3}\sum_{t=n_1+n_2}^{n-1}\check f^{t}(x;\theta^{(s)}).
				\end{align*}
				Project the aggregated predictor back onto the discretized MoE class by choosing the candidate predictor closest to $\tilde{f}_0$ in $L_2(P_X)$:
				\begin{align*}
					s^* = \argmin_{1 \le s \le S_\eta} \Vert \tilde{f}_0-\bar f^{n_3}(\cdot;\theta^{(s)}) \Vert_{L_2(P_X)}^2, \qquad \hat{\theta} = \theta^{(s^*)}.
				\end{align*}
				The final estimator is $\hat{f}_0(x) = \bar{f}^{n_3}(x;\hat{\theta})$.
		\end{enumerate}}
	\end{algorithm}
	
	\begin{remark}
		The shared and routed experts play different roles in the scaling considered below. Shared experts are intended to capture structure reused across the input space, so $\MS$ is treated as fixed. Routed experts, by contrast, are used to represent local heterogeneity induced by the gate, and increasing their number allows a finer collection of local components. Accordingly, we consider a triangular array setting in which $\MR = M_{R,n}$ is deterministic and may increase with $n$. This convention is consistent with recent shared-routed MoE designs, such as DeepSeekMoE, where a small number of shared experts is combined with a larger set of routed experts. The formal triangular array notation is introduced before the assumptions used in the oracle analysis. 
	\end{remark}
	
	\begin{remark} \label{remark3.2}
		The projection in Step (f) is defined at the population level. In the online setting, the aggregated predictor is obtained by averaging over both time and discretized gates. The projection is therefore taken onto the time-averaged MoE class whose elements are
		\begin{align*}
			\bar f^{n_3}(\cdot;\theta^{(s)}) = \sum_{\ell=1}^{\MS}\bar f^{n_3}_{S,\ell} + \sum_{m=1}^{\MR} g_m(\cdot;\theta^{(s)})\bar f^{n_3}_{R,m},
		\end{align*}
		where $\bar f^{n_3}_{S,\ell} = \sum_{t=n_1+n_2}^{n-1}\hat f^t_{S,\ell}/n_3$ and $\bar f^{n_3}_{R,m} =\sum_{t=n_1+n_2}^{n-1}\hat f^t_{R,m}/n_3$
		are the corresponding time-averaged experts. The population criterion involves $P_X$, which is unknown in practice. When $X$ is supported on a compact set and $P_X$ has a density bounded above and away from zero, the $L_2(P_X)$ and Lebesgue $L_2$ norms are equivalent, so a Lebesgue-integrated surrogate yields the same approximation order. For fixed $\MS$ and $\MR$, Supplement Section \ref{subsec:supp-cross-entropy-fixed-mr} gives a fully data-driven version of this projection step based on time-expanded cross-entropy minimization; see also Remark \ref{remark:fixedM}.
	\end{remark}
	
	\begin{remark}
		If the noise scale is known, the calibration block can be omitted. If a rolling or external scale estimator is used instead, the same finite-grid aggregation argument yields an additional scale-estimation contribution. The formulation in Algorithm \ref{alg:general} keeps the main statement self-contained by absorbing scale estimation into the displayed learning cost below.
	\end{remark}

	\subsection{Oracle inequalities for dense softmax gating} \label{subsec:softmax}
	
	We now analyze the estimator produced by Algorithm \ref{alg:general} when the gating class is the dense softmax class with linear scores. Recall that the $m$-th gating weight is given by
	\begin{align*}
		g_m^{\soft}(x;\theta) = \frac{\exp\{s_m^{\lin}(x;\theta_m)\}}{\sum_{j=1}^{\MR} \exp\{s_j^{\lin}(x;\theta_j)\}}, \qquad m = 1,\ldots,\MR,
	\end{align*}
	where $s_m^{\lin}(x;\theta_m) = x^\top \beta_m + \alpha_m$. Let $\hat{\theta}$ denote the parameter selected by Algorithm~\ref{alg:general} under this dense softmax specification. Equivalently, the projected estimator in Step (f) can be written as
	\begin{align*}
		\hat{f}_0(x) := \sum_{\ell=1}^{\MS} \bar{f}^{n_3}_{S,\ell}(x) + \sum_{m=1}^{\MR} g_m^{\soft}(x;\hat{\theta}) \bar{f}^{n_3}_{R,m}(x),
	\end{align*}
	where $\bar f^{n_3}_{S,\ell}$ and $\bar f^{n_3}_{R,m}$ are the time-averaged experts introduced in Remark \ref{remark3.2}.

	Since the number of routed experts can increase with the sample size, we make the triangular array dependence explicit and write the evolving expert predictors available at time $t$ as
	$\{\hat f_{S,\ell}^{t,n}\}_{\ell\in[\MS]}$ and $\{\hat f_{R,m}^{t,n}\}_{m\in[M_{R,n}]}.$
	An admissible pair $(t,n)$ refers to a sample size $n$ and an algorithmic time index $t$ at which these experts are used in Algorithm~\ref{alg:general}.
	We impose the following assumptions.
	
	\begin{assumption}\label{ass:compact-theta}
		For each sample size $n$, the dense softmax gating parameter belongs to a compact parameter space $\Theta_{\MR}$. In addition, there exists a constant $C_1 > 1/2$, independent of $n$ and hence of $\MR$, such that $\Theta_{\MR}$ is contained in a box of radius $C_1\log \MR$ in $\RR^{\MR(d+1)}$. In particular, under softmax gating we impose the normalization $\theta_{\MR} = 0$, so that the effective parameter space has dimension $(\MR-1)(d+1)$, although the ambient space is written as $\RR^{\MR(d+1)}$.
	\end{assumption}
	
	The logarithmic growth of the parameter radius in Assumption \ref{ass:compact-theta} allows the dense softmax gate to remain sufficiently selective as the number of routed experts increases. If the parameter box were fixed uniformly in $\MR$, then the score differences would remain uniformly bounded, and the largest attainable softmax weight assigned to any single expert would decrease as $\MR$ grows. This limits the router's ability to approximate hard specialization among a growing number of experts. The $\log \MR$ scaling avoids this degeneracy while preserving compactness of the parameter space for each fixed $\MR$.
	
	\begin{assumption}\label{ass:evolving-experts}
		For each sample size $n$, the evolving experts admit deterministic population benchmarks $\{f_{S,\ell,n}^*\}_{\ell\in[\MS]}$ and $\{f_{R,m,n}^*\}_{m\in[M_{R,n}]}$. 
		There exist a constant $C_2<\infty$ and a nonnegative deterministic triangular array $\{a_{t,n}\}$ such that, for every admissible pair $(t,n)$,
		\begin{align*}
			\max_{\ell\le\MS}
			\EE\left\|\hat f_{S,\ell}^{t,n}-f_{S,\ell,n}^*\right\|_{L_2(P_X)}^2
			\vee
			\max_{m\le M_{R,n}}
			\EE\left\|\hat f_{R,m}^{t,n}-f_{R,m,n}^*\right\|_{L_2(P_X)}^2
			\le C_2a_{t,n}^2.
		\end{align*}
	\end{assumption}
	Assumption~\ref{ass:evolving-experts} requires the evolving experts to be controlled relative to deterministic population benchmarks. The array $a_{t,n}$ quantifies the corresponding $L_2(P_X)$ discrepancy at time $t$ and sample size $n$. Pointwise convergence of $a_{t,n}$ along every admissible sequence is not required; the oracle inequality instead depends on the block averages $\bar a_{n_2}^2$ and $\bar a_{n_3}^2$ defined in Theorem~\ref{thm:softmax}.
	
	\begin{assumption}\label{ass:bounded-experts}
		The regression function, the population benchmarks of the experts, and the evolving expert estimators
		are uniformly bounded. Specifically,
		there exists a constant $A < \infty$ such that the following bound holds almost surely for every admissible pair $(t,n)$,
		\begin{align*}
			\|f_0\|_{\infty} \vee \max_{1 \le \ell \le \MS}\left\{\|f_{S,\ell,n}^*\|_{\infty} \vee \|\hat{f}_{S,\ell}^{t,n}\|_{\infty}\right\} \vee \max_{1 \le m \le M_{R,n}}\left\{\|f_{R,m,n}^*\|_{\infty} \vee \|\hat{f}_{R,m}^{t,n}\|_{\infty}\right\} \le A.
		\end{align*}
	\end{assumption}
	
	Assumption~\ref{ass:bounded-experts} requires $f_0$, the evolving experts and their population benchmarks to be uniformly bounded over all admissible $(t,n)$. Whenever $n$ is fixed, we suppress its index and write $\MR=M_{R,n}$, $\hat f_{S,\ell}^t=\hat f_{S,\ell}^{t,n}$, $\hat f_{R,m}^t=\hat f_{R,m}^{t,n}$, $f_{S,\ell}^*=f_{S,\ell,n}^*$, $f_{R,m}^*=f_{R,m,n}^*$, and $a_t=a_{t,n}$.
	
	\begin{assumption}\label{ass:ARM-regularity}
		The error term $\varepsilon$ is independent of $X$ and has density $h_{\varepsilon}(z) = \sigma^{-1} h_{\varepsilon,0}(z/\sigma)$, where the known standardized density $h_{\varepsilon,0}$ satisfies $\int z h_{\varepsilon,0}(z) dz = 0, \int z^2 h_{\varepsilon,0}(z) dz = 1$ and $\int z^4 h_{\varepsilon,0}(z)dz < \infty$. For any $0 < s_0 < 1$ and $T_0 > 0$, there exists a constant $C_3 = C_3(s_0,T_0)$ such that
		\begin{align*}
			\int h_{\varepsilon,0}(z) \log \frac{h_{\varepsilon,0}(z)}{s^{-1} h_{\varepsilon,0}((z-u)/s)} dz \le C_3\{(1-s)^2+u^2\},
		\end{align*}
		for all $s_0 \le s \le s_0^{-1}$ and $|u| \le T_0$. Moreover, the scale parameter satisfies $0 < \underline{\sigma} \le \sigma \le \bar{\sigma} < \infty$ where $\underline{\sigma}$ and $\bar{\sigma}$ are known constants.
	\end{assumption}
	
	Assumption~\ref{ass:ARM-regularity} standardizes $h_{\varepsilon,0}$ to have mean zero and unit variance, so that $f_0$ is the conditional mean function and $\sigma^2$ is the error variance. The remaining condition is a local Kullback--Leibler regularity requirement for location-scale perturbations.
	It is satisfied by Gaussian, double-exponential, and many other commonly used distributions.

	Under the above assumptions, Algorithm~\ref{alg:general} yields an oracle inequality for dense softmax gating. The bound separates two sources of error. The first is the \textbf{oracle approximation risk}, which measures how well the population shared and routed experts can approximate $f_0$ under the best dense softmax gate in the prescribed class. The second is the \textbf{statistical learning cost}, which consists of the cumulative error of the evolving experts and the finite-sample cost of estimating the gate. The following theorem gives the explicit form of these two terms.
	
	\begin{theorem}\label{thm:softmax}
		Assume Assumptions \ref{ass:compact-theta}--\ref{ass:ARM-regularity} hold. Run Algorithm \ref{alg:general} with dense softmax gating, $n_1\asymp n_2\asymp n_3$ and mesh size $\eta \asymp (\MR\sqrt{n})^{-1}$. Suppose that $\MS$ is fixed. Then the online sequence $\{\tilde{f}^t\}_{t=n_1+n_2}^{n-1}$ satisfies
		\begin{align}
			\frac{1}{n_3}\sum_{t=n_1+n_2}^{n-1}\EE\|\tilde{f}^t-f_0\|_{L_2(P_X)}^2 \le C\left\{\fA_{\soft}(\MS,\MR)+\fL_{\soft}(\MS,\MR)\right\}, \label{eqn:softmax_oracle}
		\end{align}
		where the softmax oracle approximation risk is
		\begin{align*}
			\fA_{\soft}(\MS,\MR) := \inf_{\theta \in \Theta_{\MR}} \left\|f_0-\sum_{\ell=1}^{\MS} f_{S,\ell}^*-\sum_{m=1}^{\MR} g_m^{\soft}(\cdot;\theta) f_{R,m}^*\right\|_{L_2(P_X)}^2,
		\end{align*}
		and the softmax statistical learning cost is
		\begin{align*}
			\fL_{\soft}(\MS,\MR) := \MR(\bar{a}_{n_2}^2+\bar{a}_{n_3}^2) + \frac{\MR d}{n}\log(\MR d n),
		\end{align*}
		where $\bar{a}_{n_2}^2 = \sum_{t=n_1}^{n_1+n_2-1}a_t^2/n_2$ and $\bar{a}_{n_3}^2 = \sum_{t=n_1+n_2}^{n-1}a_t^2/n_3$.
		Here $C$ is a constant independent of $n$ and $\MR$. The same bound holds for the risk $\EE\|\hat{f}_0-f_0\|_{L_2(P_X)}^2$ of the final projected estimator.
	\end{theorem}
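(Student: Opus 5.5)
The argument follows the ARM/exponential-reweighting analysis of \citet{yang2001adaptive,yang2004Combining}, adapted to evolving plug-in experts. Conditional on $\sigma(Z_1,\ldots,Z_t,\mathcal U)$, each candidate $\check f^t(\cdot;\theta^{(s)})$ is a fixed function. The update \eqref{eq:cal-after-update} is therefore a Bayesian mixture of $S_\eta$ predictive densities $\hat\sigma_s^{-1}h_{\varepsilon,0}((y-\check f^t(x;\theta^{(s)}))/\hat\sigma_s)$. The standard chain-rule telescoping of the log mixture density bounds the cumulative Kullback--Leibler risk of the mixture density. For every fixed $s$, the bound is
\begin{align*}
\sum_{t=n_1+n_2}^{n-1}\EE\, D\bigl(p_{f_0,\sigma}\,\big\|\,\hat p_t\bigr)\le \log S_\eta+\sum_{t}\EE\, D\bigl(p_{f_0,\sigma}\,\big\|\,p_{\check f^t(\cdot;\theta^{(s)}),\hat\sigma_s}\bigr).
\end{align*}

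\textbf{Step 1 (link KL to $L_2$).} Assumption~\ref{ass:ARM-regularity} bounds each term on the right by $C_3\{\EE\|\check f^t(\cdot;\theta^{(s)})-f_0\|^2/\sigma^2+(1-\sigma/\hat\sigma_s)^2\}$; boundedness (Assumption~\ref{ass:bounded-experts}) gives $|u|\le 2A/\underline\sigma$, and the truncation keeps $s$ in a compact range. On the left side, the Hellinger/KL lower bound for location families, combined with the convexity argument of \citet{yang2004Combining}, gives $\EE\|\tilde f^t-f_0\|^2\lesssim \EE D(\cdot\|\hat p_t)$. Here $\tilde f^t$ is the weight-averaged mean, and boundedness makes the constants uniform.

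\textbf{Step 2 (scale estimation).} For each $s$, $\tilde\sigma_s^2$ averages $e_{s,t}^2$. Its expectation is $\sigma^2+n_2^{-1}\sum_t\EE\|\check f^t(\cdot;\theta^{(s)})-f_0\|^2$, and its variance is $O(1/n_2)$ by the fourth-moment assumption. This gives $\EE(1-\sigma/\hat\sigma_s)^2\lesssim \bar{\text{risk}}_{n_2}(s)+1/n$. This is the source of the $\bar a_{n_2}^2$ term.

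\textbf{Step 3 (oracle decomposition for fixed $s$).} Write $\check f^t-f_0=(f^*_\theta-f_0)+\sum_\ell(\hat f^t_{S,\ell}-f^*_{S,\ell})+\sum_m g_m(\hat f^t_{R,m}-f^*_{R,m})$, where $f^*_\theta$ is the benchmark MoE. Since $\sum_m g_m\le1$, Cauchy--Schwarz gives $\|\sum_m g_m u_m\|^2\le \sum_m\|u_m\|^2$. Assumption~\ref{ass:evolving-experts} then bounds the expert part by $(\MS+\MR)C_2a_t^2\lesssim \MR a_t^2$, since $\MS$ is fixed. Next take $\theta^{(s)}$ to be the net point nearest to the oracle $\theta^*$. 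The softmax map is Lipschitz in $\theta$ uniformly in $x$: $\sup_x\|g(x;\theta)-g(x;\theta')\|_1\le 2\sup_{x,m}|s_m(x;\theta)-s_m(x;\theta')|\le 2(1+C_X d)\|\theta-\theta'\|_\infty$. The discretization error is therefore $\lesssim A^2 d^2\eta^2$, which is $O(\MR d/n)$ for $\eta\asymp(\MR\sqrt n)^{-1}$. The box has radius $C_1\log\MR$ (Assumption~\ref{ass:compact-theta}), so $\log S_\eta\le (\MR-1)(d+1)\log(C\MR\log\MR\, d/\eta)\lesssim \MR d\log(\MR dn)$. Dividing by $n_3\asymp n$ gives the last term of $\fL_{\soft}$. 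Collecting Steps 1--3 yields \eqref{eqn:softmax_oracle}.

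\textbf{Step 4 (projected estimator).} Jensen's inequality gives $\EE\|\tilde f_0-f_0\|^2\le n_3^{-1}\sum_t\EE\|\tilde f^t-f_0\|^2$. The minimizer then satisfies $\|\hat f_0-f_0\|\le\|\hat f_0-\tilde f_0\|+\|\tilde f_0-f_0\|\le \|\bar f^{n_3}(\cdot;\theta^{(s_0)})-\tilde f_0\|+\|\tilde f_0-f_0\|$ for the oracle net point $s_0$. This reduces the problem to bounding $\|\bar f^{n_3}(\cdot;\theta^{(s_0)})-f_0\|^2$, which Step 3 plus Jensen's inequality in $t$ already controls.

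The main obstacle is the left-hand side of Step 1. The mixture over different $\hat\sigma_s$ is not a location family with a single mean, so converting cumulative KL into $L_2$ risk of the averaged mean requires the Hellinger-based argument of \citet{yang2004Combining}, with constants uniform over the truncated scale range. Measurability of the evolving experts with respect to $\sigma(Z_1,\ldots,Z_t,\mathcal U)$, together with independence of $Z_{t+1}$, is what justifies conditioning at each step.
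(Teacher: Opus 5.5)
Your proposal follows the paper's proof essentially step for step. The steps match: the generalized AFTER likelihood telescoping, the ARM-regularity upper bound on the KL divergence, and the moment--Hellinger lower bound for the scale mixture. For that lower bound, bounded conditional means and variances suffice, which is exactly how the paper resolves the obstacle you flag. The remaining steps also match: the calibration-scale lemma, the comparator reduction via Jensen over the simplex weights plus discretization, the covering bound, and the triangle-inequality and Jensen treatment of the projection step.

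One small correction in Step~2: the variance of $\tilde\sigma_s^2$ is not $O(1/n_2)$ in general. The term $n_2^{-1}\sum_t r_{s,t}^2(X_{t+1})$, with random predictable residuals $r_{s,t}=f_0-\check f^t(\cdot;\theta^{(s)})$, need not concentrate. Its second moment is, however, at most a constant times the average calibration-block risk by boundedness, so your stated bound $\EE(1-\sigma/\hat\sigma_s)^2\lesssim \overline{\mathrm{risk}}_{n_2}(s)+1/n$ still holds.
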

	
	Theorem~\ref{thm:softmax} gives a non-asymptotic oracle inequality in which the number of routed experts $\MR$ is allowed to grow with the sample size. The approximation term $\fA_{\soft}(\MS,\MR)$ depends on the quality of the population experts and on the expressive power of the dense softmax gate. The learning term $\fL_{\soft}(\MS,\MR)$ has two components. The term $\MR(\bar{a}_{n_2}^2+\bar{a}_{n_3}^2)$ measures the average quality of the expert trajectory during the calibration and aggregation stages, while the term $\frac{\MR d}{n}\log(\MR d n)$ is the cost of estimating the gate over a growing softmax class. Because $\MS$ is fixed, the statistical contribution of the shared experts is absorbed into the constant.
	
	Thus $\MR$ plays a dual role. Increasing $\MR$ may improve localization and reduce the oracle approximation risk, but it also increases the statistical cost of learning the router. This is the statistical analogue of a familiar practical tradeoff in MoE systems: adding routed experts increases capacity, but may also introduce load-balancing difficulties, communication overhead, and low expert utilization. Hence, a larger number of routed experts is not automatically preferable. The choice of $\MR$ should balance the approximation gain from finer localization against the estimation cost of a more complex gating class. 
	The next example illustrates how a shared expert can reduce the oracle approximation term by extracting a common component that would otherwise have to be approximated by the routed experts. This same shared-residual mechanism is studied more broadly in Section \ref{sec:shared}, with emphasis on how it changes the estimation problem faced by local learning rules.
	
	\begin{example}\label{exa:role-shared-routed-expert}
		Let $X \sim \Unif[0,1]$ and consider the target function $f_0(x) = A\sin(2\pi Lx) + x^2$,
		where $L \gg 1$ controls the complexity of the oscillatory component. Suppose that $\MS = 1$ and that the shared expert captures the oscillatory component, namely $f_{S,1}^*(x) = A\sin(2\pi Lx)$. The routed experts are then used to approximate the residual function $x^2$ through local aggregation. Specifically, suppose the population routed experts belong to the affine class $\cF_{R,\lin} = \{f(x) = ax+b:a,b \in \RR\}$. Then there exist routed experts $f_{R,1}^*, \ldots, f_{R,\MR}^* \in \cF_{R,\lin}$ such that the softmax oracle approximation risk satisfies
		\begin{align*}
			\fA_{\soft}(1,\MR) &= \inf_{\theta \in \Theta_{\MR}} \left\|f_0-f_{S,1}^*-\sum_{m=1}^{\MR} g_m^{\soft}(\cdot;\theta) f_{R,m}^*\right\|_{L_2(P_X)}^2 = O\left(\frac{1}{\log^2\MR}\right).
		\end{align*}
		Without the shared expert, the routed experts must approximate the full function $f_0$. In that case, the corresponding oracle approximation risk is of order $O(L^4/\log^2\MR)$.
	\end{example}
	
	With the shared expert, the routed experts only need to approximate the smoother residual $x^2$, and the approximation error no longer scales with the oscillation parameter $L$. Without the shared expert, the routed experts must approximate both the oscillatory component and the residual structure, which inflates the oracle approximation risk by a factor of order $L^4$. Thus, in this example, the shared expert improves oracle approximation by removing a component that is difficult for the routed affine class to approximate.
	
	The example also highlights the role of $\MR$. If $\MR$ is fixed as $n \to \infty$, the oracle approximation risk generally does not vanish unless the fixed expert collection already yields an exact representation. Consistency therefore requires either a sufficiently rich fixed expert collection or a growing number of routed experts.
	
	\begin{remark}\label{remark:ERM}
		A direct empirical risk minimization (ERM) analysis of the non-convex softmax class typically proceeds through uniform concentration bounds for the empirical squared loss. Even if the empirical objective is assumed to attain a global minimum, this route gives an excess risk bound of order $\sqrt{\MR d \log(\MR d n)/n}$; see Supplement Section \ref{subsec:supp-erm-subgaussian-noise}. By contrast, the excess risk bound obtained from the aggregation argument in Theorem \ref{thm:softmax} includes a term of order $\MR d \log(\MR d n)/n$ for estimating the gating function, in addition to the error contributed by the evolving experts. Thus, when the routed-expert complexity is moderate relative to the sample size, the aggregation approach provides a sharper oracle benchmark than the direct ERM route. 
	\end{remark}
	
	\begin{remark}\label{remark:fixedM}
		When $\MR$ is fixed, the population projection step (f) in Algorithm~\ref{alg:general} can be replaced by a fully data-driven surrogate. Recall that the aggregation step produces time-varying aggregated gating weights of the form $\tilde{g}_{m,t}(x) = \sum_{s=1}^{S_\eta}\hat w_{s,t}g_m^{\soft}(x;\theta^{(s)})$. We estimate $\theta$ by projecting these aggregated weights back onto the dense softmax class through the time-expanded cross-entropy criterion
		\begin{align*}
			\cL_{\mathrm{ce}}(\theta;n_3,n_4) = -\frac{1}{n_3n_4}\sum_{i=1}^{n_4}\sum_{t=n_1+n_2}^{n-1}\sum_{m=1}^{\MR}\tilde g_{m,t}(\widetilde{X}_i)\log g_m^{\soft}(\widetilde{X}_i;\theta),
		\end{align*}
		where $\{\widetilde{X}_i\}_{i=1}^{n_4}$ is an independent validation sample from $P_X$ with $n_4 \asymp n$, and set $\hat\theta_{\mathrm{ce}} = \argmin_{\theta \in \Theta_{\MR}}\cL_{\mathrm{ce}}(\theta;n_3,n_4).$
		Under the linear-score softmax parametrization, this is a convex multinomial logistic regression problem with soft labels $\tilde g_t(\widetilde{X}_i)$. Hence, in the fixed-$\MR$ regime, the projection step admits a computationally tractable implementation without additional non-convex optimization. Under mild regularity conditions, the resulting estimator satisfies the oracle inequality
		\begin{align}
			\EE\|\hat f_{0,\mathrm{ce}}-f_0\|_{L_2(P_X)}^2 \le C_{\MR}\left\{\fA_{\soft}(\MS,\MR) + \bar a_{n_2}^2+\bar a_{n_3}^2 + \frac{\log n}{n}\right\}, \label{eqn:softmax_practical_oracle}
		\end{align}
		where $C_{\MR}$ is a constant depending on $\MR$, and $\fA_{\soft}(\MS,\MR)$ is the softmax oracle approximation risk defined in Theorem \ref{thm:softmax}. The proof is deferred to Supplement Section \ref{subsec:supp-cross-entropy-fixed-mr}.
	\end{remark}
	
	This fixed-$\MR$ implementation also clarifies the connection with adaptive regression by mixing. The classical adaptive regression by mixing (ARM) procedure \citep{yang2001adaptive} aggregates a finite set of candidate estimators using input-independent weights. Here, the aggregation is carried out over candidate gating functions, so the resulting aggregate induces covariate-dependent expert weights. This extension is essential for MoE models, because the relative performance of different experts may vary across the input space.
	
	\section{Risk bounds for sparse MoE}\label{sec:oracle_topK}
	Dense softmax gating assigns positive weights to all routed experts for each input. This smooth averaging can be statistically effective, but it becomes computationally expensive when the number of routed experts $\MR$ is large. Sparse Top-$K$ gating addresses this issue by activating only a small subset of routed experts for each input. Since the seminal work of \citep{shazeer2017outrageously}, this idea has become a central component of large-scale MoE architectures, including GShard, Switch Transformer, GLaM and Mixtral \citep{lepikhin2020gshard,fedus2022switch,du2022glam,jiang2024mixtral}. Importantly, MoE systems are not restricted to Top-$1$ routing: many architectures activate multiple experts for each input, with $K = 2$ or larger values commonly adopted \citep{du2022glam,jiang2024mixtral,dai2024deepseekmoe,team2026kimi}. This motivates a statistical comparison between hard expert selection and local multi-expert aggregation.
	
	This section studies the statistical effect of Top-$K$ sparsification under the same online training formulation as Section~\ref{sec:oracle_softmax}. 
	We ask whether sparse Top-$K$ routing can retain the oracle-risk guarantees of dense softmax gating while reducing per-input computation.
	
	\subsection{Oracle inequalities for Top-$K$ gating}\label{subsec:Top-K}
	We use the linear routing scores $s_m^{\lin}(x;\theta_m) = x^\top \beta_m + \alpha_m$, $m \in [\MR]$, as in Section~\ref{subsec:formulation} and write $s_m(x;\theta_m) = s_m^{\lin}(x;\theta_m)$ for simplicity in this section. As in the dense softmax case, we impose the reference normalization $\theta_{\MR} = 0$ to remove the shift invariance of the scores.
	
	Fix $K \in [\MR]$. For a given input $x$ and parameter $\theta$, let $s_{(K)}(x;\theta)$ denote the $K$-th largest value among the scores $s_1(x;\theta_1), \ldots, s_{\MR}(x;\theta_{\MR})$.	To resolve possible ties at the selection threshold, we adopt a deterministic tie-breaking rule, with one possible choice being to favor experts with smaller indices. The Top-$K$ active set is then defined as
	\begin{align*}
		\cT_K(x;\theta) = \{m \in [\MR]: s_m(x;\theta_m) \ge s_{(K)}(x;\theta)\},
	\end{align*}
	so that $|\cT_K(x;\theta)| = K$.
	Under mild regularity conditions, for example, when $X$ has a continuous distribution and the pairwise score differences are nondegenerate, ties occur with probability zero for each fixed $\theta$, and the particular tie-breaking rule is immaterial.
	
	We allow the selected scores to be transformed before normalization. Let $\phi:\RR \to (0,\infty)$ be a positive score transformation. The Top-$K$ normalized gate associated with $\phi$ is
	\begin{align*}
		g_m^{(K,\phi)}(x;\theta) = \frac{\phi\{s_m(x;\theta_m)\}\one\{m \in \cT_K(x;\theta)\}}{\sum_{j \in \cT_K(x;\theta)} \phi\{s_j(x;\theta_j)\}}, \qquad m \in [\MR].
	\end{align*}
	Thus unselected experts receive weight zero, while selected experts are normalized over the active set $\cT_K(x;\theta)$. The choice $\phi(t) = \exp(t)$ gives the Top-$K$ softmax gate introduced earlier; in this case, we write $g^{(K)}$ for short. More generally, the analysis accommodates any regular score transformation satisfying the regularity conditions in Definition~\ref{ass:regular gating} below.
	
	Given the evolving experts available at time $t$, the candidate predictor indexed by $\theta$ is
	\begin{align*}
		\check f^{K,\phi,t}(x;\theta) = \sum_{\ell=1}^{\MS}\hat f_{S,\ell}^{t}(x) + \sum_{m=1}^{\MR} g_m^{(K,\phi)}(x;\theta)\hat f_{R,m}^{t}(x).
	\end{align*}
	Running Algorithm~\ref{alg:general} with the Top-$K$ gate defined above gives scale estimates $\hat\sigma_s$ and aggregation weights $\hat w_{s,t}$ over the finite grid $\Theta(\eta) = \{\theta^{(1)},\ldots,\theta^{(S_\eta)}\}$. During the aggregation block, the online aggregated predictor is $\tilde{f}^{K,\phi,t}(x) = \sum_{s=1}^{S_\eta}\hat w_{s,t}\check f^{K,\phi,t}(x;\theta^{(s)})$. Equivalently, this predictor can be written as a shared-expert MoE with effective gating weights $\tilde{g}_{m,t}^{K,\phi}(x) = \sum_{s=1}^{S_\eta}\hat w_{s,t}g_m^{(K,\phi)}(x;\theta^{(s)})$. This resulting gate is a convex combination of Top-$K$ gates and need not itself be Top-$K$ for a single parameter. This distinction is harmless for the online oracle inequality, which evaluates the actual aggregate predictor. When a single sparse MoE predictor is required, Step~(f) of Algorithm~\ref{alg:general} projects the online-to-batch average back onto the discretized Top-$K$ class.
	
	To control the complexity of the resulting gating family, we impose a regularity condition on the normalized transformation over any fixed active set. This condition requires that, once the selected experts are fixed, the normalized weights vary smoothly with the scores.
	
	\begin{definition}\label{ass:regular gating}
		Let $\MR \ge 2$. A function $\phi:\RR \to (0,\infty)$ is called a \textbf{regular score transformation} if it is continuous, strictly increasing, and satisfies the following Lipschitz condition after normalization. For any subset $J \subset [\MR]$ with $|J| = K$, define
		\begin{align*}
			G_{J,m}^{\phi}(z) := \frac{\phi(z_m)\one\{m \in J\}}{\sum_{j \in J} \phi(z_j)}, \qquad m \in [\MR],
		\end{align*}
		and write $G_J^{\phi}(z) = (G_{J,1}^{\phi}(z),\ldots,G_{J,\MR}^{\phi}(z))^\top$. There exist constants $L_{\phi} > 0$ and $l_{\phi} \ge 0$, independent of $\MR$ and $J$, such that, for all $z,z' \in \RR^{\MR}$,
		\begin{align}
			\|G_J^{\phi}(z)-G_J^{\phi}(z')\|_2 \le L_{\phi} \MR^{l_{\phi}} \|z-z'\|_2. \label{eq:g-lipschitz}
		\end{align}
	\end{definition}
	
	Typical examples covered by this condition include the exponential transformation $\phi(t) = \exp(\alpha t)$ with $\alpha > 0$, which corresponds to softmax-type routing, and the sigmoid transformation $\phi(t) = 1/(1+\exp(-t))$, which is used in normalized-sigmoid MoE routing, such as DeepSeek-V3 \citep{liu2024deepseek}, and has been studied theoretically in related work \citep{nguyen2025deepseekmoe}. More generally, the condition holds whenever $\phi$ is continuously differentiable, strictly increasing, and has uniformly bounded relative derivative, namely $\sup_{t \in \RR} \phi'(t)/\phi(t) < \infty$. This relative-derivative condition ensures that, after normalization over any fixed active set, the gating weights are Lipschitz functions of the score vector.
	
	To analyze the statistical properties of Top-$K$ gating, we restrict attention to a subset of the parameter space where pairwise score differences are nondegenerate. Since Top-$K$ selection depends on the relative ordering of the scores, the following separation condition keeps the corresponding score boundaries well conditioned. 
	
	\begin{definition}\label{def:parameter}
		For $r_{\MR} \in (0,1)$, let $\widetilde{\Theta}_{\MR} := \widetilde{\Theta}_{\MR}(r_{\MR})$ be the subset of $\Theta_{\MR}$ consisting of parameters satisfying $\min_{i \neq j} \|\beta_i-\beta_j\|_2 \ge r_{\MR}$,
		where $\beta_m$ is the slope component of the linear score $s_m(x;\theta) = x^\top \beta_m + \alpha_m$, with $\beta_{\MR} = 0$ under the reference normalization $\theta_{\MR} = 0$.
	\end{definition}
	
	\begin{remark}
		The separation condition rules out nearly parallel or degenerate score differences without preventing changes in the Top-$K$ active set. Combined with the nondegeneracy condition on $P_X$ below, it allows the probability of active-set changes under small parameter perturbations to be controlled. Thus $r_{\MR}$ serves as a boundary-stability parameter rather than merely an identifiability condition.
	\end{remark}
	
	The associated Top-$K$ gating class is $\cG_{K,\phi} := \{g^{(K,\phi)}(\cdot;\theta): \theta \in \widetilde{\Theta}_{\MR}\}$. We equip this class with the vector-valued $L_2(P_X)$ metric $d_{K,\phi}(\theta,\theta') := \|g^{(K,\phi)}(\cdot;\theta)-g^{(K,\phi)}(\cdot;\theta')\|_{L_2(P_X)}$. To use the separation condition in Definition \ref{def:parameter} to control active-set changes, define the ``good'' set $G_{\theta,\theta'} := \{x \in \cX:\cT_K(x;\theta) = \cT_K(x;\theta')\}$ and the ``bad'' set $B_{\theta,\theta'} := \cX \setminus G_{\theta,\theta'}$. On $G_{\theta,\theta'}$, the Top-$K$ active sets coincide, so the normalized gating weights vary smoothly with the scores under the regularity condition on $\phi$. On $B_{\theta,\theta'}$, the selected expert set changes, and the gating weights may jump. To control the probability of this bad set, we impose the following nondegeneracy condition on the distribution of $X$.
	
	\begin{assumption}\label{ass:distribution}
		There exists a constant $C_4 > 0$ such that, for every $a \in \RR^d$ with $\|a\|_2 = 1$, every $b \in \RR$, and every $t > 0$, $\PP(|a^\top X+b| \le t) \le C_4t$. 
	\end{assumption}
	
	This assumption rules out excessive probability mass near affine decision boundaries. It ensures that inputs lying close to these decision boundaries have probability proportional to the width of the boundary band. Without such a condition, small perturbations of $\theta$ could change the Top-$K$ active set on a set with non-negligible probability, and the gating map need not be continuous in $L_2(P_X)$.
	
	Under Assumption~\ref{ass:distribution}, the Top-$K$ gating map satisfies the following $L_2(P_X)$ continuity property.
	
	\begin{proposition}\label{pro:continuity}
		Suppose Assumption~\ref{ass:distribution} holds. Then, for any regular score transformation $\phi$, there exists a constant $C_K > 0$ depending on $K$, $L_{\phi}$ and $C_4$, such that for all $\theta \in \widetilde{\Theta}_{\MR}$ and $\theta' \in \Theta_{\MR}$ satisfying $\|\theta-\theta'\|_2 \le 1$, we have
		\begin{align*}
			\|g^{(K,\phi)}(\cdot;\theta)-g^{(K,\phi)}(\cdot;\theta')\|_{L_2(P_X)} \le C_K d^{1/4}\MR^{1+l_{\phi}}\left(\frac{\|\theta-\theta'\|_2}{r_{\MR}}\right)^{1/2}.
		\end{align*}
	\end{proposition}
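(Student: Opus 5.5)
We split the squared $L_2(P_X)$ distance over the ``good'' set $G_{\theta,\theta'}$ and the ``bad'' set $B_{\theta,\theta'}$ defined before Assumption~\ref{ass:distribution}:
\begin{align*}
\|g^{(K,\phi)}(\cdot;\theta)-g^{(K,\phi)}(\cdot;\theta')\|_{L_2(P_X)}^2 = \int_{G_{\theta,\theta'}}\|g(x;\theta)-g(x;\theta')\|_2^2\,dP_X + \int_{B_{\theta,\theta'}}\|g(x;\theta)-g(x;\theta')\|_2^2\,dP_X .
\end{align*}
On the good set the active sets coincide, say $J=\cT_K(x;\theta)=\cT_K(x;\theta')$, so both gates equal $G_J^\phi$ evaluated at the score vectors $z(x;\theta)$ and $z(x;\theta')$. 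By \eqref{eq:g-lipschitz},
\begin{align*}
\|g(x;\theta)-g(x;\theta')\|_2 \le L_\phi\MR^{l_\phi}\|z(x;\theta)-z(x;\theta')\|_2 .
\end{align*}
Since $|s_m(x;\theta_m)-s_m(x;\theta_m')|\le \|\theta_m-\theta_m'\|_2\sqrt{\|x\|_2^2+1}\lesssim \sqrt{d}\,\|\theta_m-\theta_m'\|_2$, using $\|x\|_2\le C_X\sqrt d$, we get $\|z-z'\|_2\lesssim\sqrt d\,\|\theta-\theta'\|_2$. The good-set contribution is therefore at most $C L_\phi^2\MR^{2l_\phi} d\,\|\theta-\theta'\|_2^2$. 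Because $\|\theta-\theta'\|_2\le1$ and $r_{\MR}<1$, we have $\|\theta-\theta'\|_2^2\le \|\theta-\theta'\|_2/r_{\MR}$. The remaining factor $d$ versus $d^{1/2}$ in the squared target will need care; I expect to absorb it by bounding the score perturbation using only the ratio appearing in the final bound, or else accept it into $C_K$ if $d$ is treated as fixed.

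On the bad set each gate lies in the simplex, so $\|g(x;\theta)-g(x;\theta')\|_2^2\le 2$. It thus suffices to show
\begin{align*}
P_X(B_{\theta,\theta'})\lesssim C_4\sqrt d\,\MR^2\,\|\theta-\theta'\|_2/r_{\MR}.
\end{align*}
If the active sets differ at $x$, then some pair $(i,j)$ has its score order flipped between $\theta$ and $\theta'$. A flip requires the difference $s_i(x;\theta)-s_j(x;\theta)$ to change sign, and this difference moves by at most $\Delta_{ij}:=|s_i-s_j-(s_i'-s_j')|\lesssim\sqrt d\,\|\theta-\theta'\|_2$. Hence $|(\beta_i-\beta_j)^\top x+(\alpha_i-\alpha_j)|\le \Delta_{ij}$. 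Normalizing by $\|\beta_i-\beta_j\|_2\ge r_{\MR}$, which is where $\theta\in\widetilde\Theta_{\MR}$ is used, and applying Assumption~\ref{ass:distribution} with $a=(\beta_i-\beta_j)/\|\beta_i-\beta_j\|_2$ and $t=\Delta_{ij}/r_{\MR}$ gives probability at most $C_4 C\sqrt d\,\|\theta-\theta'\|_2/r_{\MR}$ for each pair. A union bound over at most $\MR^2$ pairs yields the displayed bound. One could refine this to pairs straddling the $K$-th position, but $\MR^2$ suffices.

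Adding the two contributions and taking square roots gives a bound of order $d^{1/4}\MR\,(\|\theta-\theta'\|_2/r_{\MR})^{1/2}$ from the bad set plus $\MR^{l_\phi}\sqrt d\,\|\theta-\theta'\|_2$ from the good set. Both are dominated by $C_K d^{1/4}\MR^{1+l_\phi}(\|\theta-\theta'\|_2/r_{\MR})^{1/2}$, up to the $d$-dependence issue noted above. The main obstacle is the bad-set step, in particular the claim that a change of active set forces some pair of scores to cross. I would verify it directly: if $\cT_K(x;\theta)\ne\cT_K(x;\theta')$, there exist $i\in\cT_K(x;\theta)\setminus\cT_K(x;\theta')$ and $j\in\cT_K(x;\theta')\setminus\cT_K(x;\theta)$. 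Then $s_i\ge s_j$ under $\theta$ and $s_j'\ge s_i'$ under $\theta'$, ignoring ties, which occur with probability zero under Assumption~\ref{ass:distribution}. By continuity the difference $s_i-s_j$ lies within $\Delta_{ij}$ of zero under $\theta$, which is exactly the band event controlled above.
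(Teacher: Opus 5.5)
Your decomposition into $G_{\theta,\theta'}$ and $B_{\theta,\theta'}$ and your bad-set argument are the same as the paper's, and the bad-set step is sound. Two small points. First, drop the claim that ties have probability zero: it is false for $\theta'\in\Theta_{\MR}$ outside the separated set (take $\theta'_i=\theta'_j$). It is also unnecessary. The tie-breaking rule already gives $s_i\ge s_j$ under $\theta$ and $s_j\ge s_i$ under $\theta'$, hence $0\le s_i(x;\theta_i)-s_j(x;\theta_j)\le C\sqrt d\,\delta$. Second, apply Assumption~\ref{ass:distribution} with this uniform bound rather than with the $x$-dependent $\Delta_{ij}$. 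The union bound then gives $P_X(B_{\theta,\theta'})\le C\sqrt d\,\MR^2\delta/r_{\MR}$, with $\delta=\|\theta-\theta'\|_2$.

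The genuine gap is the good-set term, which you leave open. Your bound $C L_\phi^2\MR^{2l_\phi}d\,\delta^2$ is quadratic in $\sqrt d\,\delta$. Neither remedy you suggest works as stated:

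\begin{itemize}
\item $C_K$ is required to be independent of $d$, so you cannot treat $d$ as fixed.
\item The inequality $\delta^2\le\delta/r_{\MR}$ does not remove the extra factor $\sqrt d$. The ratio of your bound to the squared target is of order $\sqrt d\,\delta\,r_{\MR}/\MR^{2}$, which is unbounded in $d$.
\end{itemize}

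The missing idea is one you already use on the bad set: both gates lie in the simplex, so the good-set integrand is also at most $2$. Hence
\[
\|g^{(K,\phi)}(x;\theta)-g^{(K,\phi)}(x;\theta')\|_2^2\le\min\{L_\phi^2R_X^2\MR^{2l_\phi}\delta^2,\,2\}\le\sqrt2\,L_\phi R_X\MR^{l_\phi}\delta,
\]
with $R_X\le C\sqrt d$. This bound is linear in $\sqrt d\,\delta$. Combine it with $\delta\le\delta/r_{\MR}$ and $\MR^{l_\phi}\le\MR^{2+2l_\phi}$. Both contributions are then at most $C\sqrt d\,\MR^{2+2l_\phi}\delta/r_{\MR}$, and taking the square root gives exactly $C_K d^{1/4}\MR^{1+l_\phi}(\delta/r_{\MR})^{1/2}$. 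This is how the paper closes the argument.

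An equivalent route is a case split. The claim is trivial once the right-hand side exceeds $\sqrt2$. In the complementary regime $\sqrt d\,\delta$ is bounded, so $d\delta^2\lesssim\sqrt d\,\delta$ and your uncapped bound suffices.
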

	Thus, uniformly over separated $\theta$ and nearby $\theta'$, the map $\theta \mapsto g^{(K,\phi)}(\cdot;\theta)$ admits a one-sided local H\"older modulus of order $1/2$ from the Euclidean parameter metric $\|\cdot\|_2$ to the distributional $L_2(P_X)$ metric on the Top-$K$ gating class $\cG_{K,\phi}$.
	
	\begin{remark}\label{remark:topk-l2-covering}
		The one-sided nature of the proposition is important. The target parameter $\theta$ is required to lie in the separated set $\widetilde{\Theta}_{\MR}$, but the approximation parameter $\theta'$ may lie in the ambient parameter box $\Theta_{\MR}$. This is sufficient because, for nearby $\theta'$, active-set switches are confined to narrow bands around the decision boundaries induced by $\theta$. It is also convenient for covering arguments, since the covering centers can be chosen from the ambient parameter space. As a consequence, the covering number of the Top-$K$ gating class under $L_2(P_X)$ can be controlled by the covering number of the normalized parameter space. Let $N(\eta,\cG_{K,\phi},L_2(P_X))$ denote the $\eta$-covering number of $\cG_{K,\phi}$. Then, under Assumption~\ref{ass:compact-theta},
		\begin{align*}
			N(\eta,\cG_{K,\phi},L_2(P_X)) \le \mathcal{O}\left[\left(\frac{\MR^{5/2+2l_{\phi}}d\log \MR}{\eta^2 r_{\MR}}\right)^{(\MR-1)(d+1)}\right].
		\end{align*}
	\end{remark}
	
	As in the dense softmax analysis, the risk bound separates an oracle approximation term from a statistical learning term. The following theorem gives the corresponding bound for Top-$K$ routing.
	
	\begin{theorem}\label{thm:top-k}
		Assume Assumptions~\ref{ass:compact-theta}--\ref{ass:ARM-regularity} hold. In addition, suppose Assumption~\ref{ass:distribution} holds and $\phi$ is regular. Run Algorithm~\ref{alg:general} with Top-$K$ gates $g^{(K,\phi)}$ with $n_1\asymp n_2\asymp n_3$, and mesh size $\eta \asymp \frac{\sqrt{d} r_{\MR}}{n_3\MR^{2+2l_{\phi}}}$.
		Suppose that $\MS$ is fixed. Then the online sequence $\{\tilde f^{K,\phi,t}\}_{t=n_1+n_2}^{n-1}$ satisfies
		\begin{align}
			\frac{1}{n_3}\sum_{t=n_1+n_2}^{n-1}\EE\|\tilde{f}^{K,\phi,t}-f_0\|_{L_2(P_X)}^2 \le C\left\{\fA_{K,\phi}(\MS,\MR)+\fL_{\sparse}(\MR,n,r_{\MR})\right\}, \label{eqn:Topk_oracle}
		\end{align}
		where the Top-$K$ oracle approximation risk is
		\begin{align*}
			\fA_{K,\phi}(\MS,\MR) := \inf_{\theta \in \widetilde\Theta_{\MR}(r_{\MR})}\left\|f_0-\sum_{\ell=1}^{\MS} f_{S,\ell}^*-\sum_{m=1}^{\MR}g_m^{(K,\phi)}(\cdot;\theta)f_{R,m}^*\right\|_{L_2(P_X)}^2,
		\end{align*}
		and the sparse-routing statistical learning cost is
		\begin{align}
			\fL_{\sparse}(\MR,n,r_{\MR}) := \MR(\bar a_{n_2}^2+\bar a_{n_3}^2) + \frac{\MR d}{n}\left\{\log(\MR d n)+\log\left(\frac{1}{r_{\MR}}\right)\right\}, \label{eq:topk-learning-cost}
		\end{align}
		where $\bar a_{n_2}^2 = \frac{1}{n_2}\sum_{t=n_1}^{n_1+n_2-1}a_t^2$ and $\bar a_{n_3}^2 = \frac{1}{n_3}\sum_{t=n_1+n_2}^{n-1}a_t^2$. 		Here $C$ is independent of $n$, $\MR$ and $r_{\MR}$. The same bound holds for the risk $\EE\|\hat f_0-f_0\|_{L_2(P_X)}^2$ of the final projected estimator.
	\end{theorem}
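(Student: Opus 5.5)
The proof will follow the ARM-type aggregation argument already used for Theorem~\ref{thm:softmax}, with Proposition~\ref{pro:continuity} substituted for the Lipschitz continuity of the dense softmax gate. The overall structure is: a Kullback--Leibler (KL) bound for exponential reweighting over the finite grid, then a comparison of the best grid element with the oracle Top-$K$ parameter, then a bound on the expert-evolution error, and finally an online-to-batch and projection step.

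\textbf{Step 1 (online aggregation bound).} Conditional on $\sigma(Z_1,\ldots,Z_{n_1+n_2},\mathcal U)$, the scale estimates $\hat\sigma_s$ are fixed. Consider the mixture density $q_t(y\mid x)=\sum_s \hat w_{s,t}\hat\sigma_s^{-1}h_{\varepsilon,0}((y-\check f^{K,\phi,t}(x;\theta^{(s)}))/\hat\sigma_s)$. Its cumulative log-loss telescopes, so for every $s$ the cumulative KL risk $\sum_t \EE\, D(p_{f_0}\|q_t)$ is at most $\log S_\eta$ plus the cumulative KL risk of candidate $s$ alone. Assumption~\ref{ass:ARM-regularity}, together with the boundedness in Assumption~\ref{ass:bounded-experts}, converts KL to squared $L_2$ error plus $(1-\hat\sigma_s/\sigma)^2$. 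Arguing as in \citet{yang2004Combining}, a lower bound of the Hellinger/KL type converts the left side into the squared $L_2$ risk of the mean $\tilde f^{K,\phi,t}$. This gives
\begin{align*}
\frac1{n_3}\sum_t\EE\|\tilde f^{K,\phi,t}-f_0\|^2\lesssim \min_s\Big\{\frac1{n_3}\sum_t\EE\|\check f^{K,\phi,t}(\cdot;\theta^{(s)})-f_0\|^2+\EE(\hat\sigma_s-\sigma)^2\Big\}+\frac{\log S_\eta}{n_3}.
\end{align*}
The calibration term $\EE(\hat\sigma_s-\sigma)^2$ is controlled by averaged residuals over the calibration block. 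This reproduces the candidate's own bias plus $\bar a_{n_2}^2$ terms and $O(1/n_2)$, using the finite fourth moment.

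\textbf{Step 2 (reduction to the oracle).} Let $\theta^\star\in\widetilde\Theta_{\MR}(r_{\MR})$ nearly attain $\fA_{K,\phi}$, and choose a grid point $\theta^{(s)}$ with $\|\theta^\star-\theta^{(s)}\|_2\le\eta$. The grid point may leave the separated set; this is exactly why the one-sided form of Proposition~\ref{pro:continuity} (see Remark~\ref{remark:topk-l2-covering}) is needed. Decompose
\[
\check f^{K,\phi,t}(\cdot;\theta^{(s)})-f_0
\]
into three pieces.
\begin{itemize}
\item[(a)] The oracle residual with population experts.
\item[(b)] The expert deviations $\sum_\ell(\hat f^t_{S,\ell}-f^*_{S,\ell})+\sum_m g_m(\hat f^t_{R,m}-f^*_{R,m})$. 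Using $\sum_m g_m\le1$ and Cauchy--Schwarz, $(\sum_m g_m u_m)^2\le\sum_m u_m^2$, so by Assumption~\ref{ass:evolving-experts} this piece is bounded by $C(\MS+\MR)a_t^2$.
\item[(c)] The gate perturbation $\sum_m(g_m(\cdot;\theta^{(s)})-g_m(\cdot;\theta^\star))f^*_{R,m}$. Its squared norm is at most $A^2\MR\,\|g(\cdot;\theta^{(s)})-g(\cdot;\theta^\star)\|_{L_2(P_X)}^2$, which Proposition~\ref{pro:continuity} bounds by $A^2 C_K^2\sqrt d\,\MR^{3+2l_\phi}\eta/r_{\MR}$.
\end{itemize}
With the prescribed $\eta\asymp\sqrt d\,r_{\MR}/(n_3\MR^{2+2l_\phi})$, term (c) is $O(d\,\MR/n)$ and is absorbed into $\fL_{\sparse}$.

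\textbf{Step 3 (grid cardinality).} $\Theta_{\MR}$ lies in a box of radius $C_1\log\MR$ in dimension $(\MR-1)(d+1)$. A standard volume bound gives $\log S_\eta\lesssim \MR d\log(\MR\log\MR/\eta)$. Substituting $\eta$ yields $\MR d\{\log(\MR d n)+\log(1/r_{\MR})\}$, and dividing by $n_3\asymp n$ gives the second term of \eqref{eq:topk-learning-cost}.

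\textbf{Step 4 (final projected estimator).} Jensen's inequality transfers the online bound to $\tilde f_0$. For the projection, $\|\hat f_0-f_0\|\le\|\hat f_0-\tilde f_0\|+\|\tilde f_0-f_0\|$, and by definition of $s^*$, $\|\hat f_0-\tilde f_0\|\le\|\bar f^{n_3}(\cdot;\theta^{(s)})-\tilde f_0\|$ for the grid point chosen in Step 2. The triangle inequality then bounds the latter by terms (a)--(c) for time-averaged experts plus $\|\tilde f_0-f_0\|$.

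\textbf{Main obstacle.} I expect the main difficulty to be Step 1. The KL-to-$L_2$ comparison requires uniform control on the location-scale mismatch, namely $|u|\le T_0$ and $s\in[s_0,s_0^{-1}]$. These conditions hold because everything is bounded by $A$ and $\hat\sigma_s$ is clipped to $[\underline\sigma,\bar\sigma]$. The delicate part is the dependence of $\hat\sigma_s$ on calibration-block experts that differ from the aggregation-block experts. I would handle this by bounding $\EE(\tilde\sigma_s^2-\sigma^2)^2$ through the calibration residual decomposition, which is where the $\MR\bar a_{n_2}^2$ contribution enters.
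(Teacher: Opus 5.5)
Your proposal is correct and follows essentially the same route as the paper: the AFTER-type KL aggregation bound together with the calibration-scale lemma, a grid comparator within $\eta$ of a separated oracle parameter controlled by the one-sided Proposition~\ref{pro:continuity}, the evolving-to-population expert reduction, the volumetric bound on $\log S_\eta$, and the same Jensen-plus-projection argument for $\hat f_0$. The only difference is in term (c). You apply Cauchy--Schwarz over all $\MR$ coordinates and get an $O(\MR d/n)$ discretization error, whereas the paper uses that the two Top-$K$ supports together contain at most $2K$ indices and gets $O(d/n)$; either bound is absorbed into $\fL_{\sparse}$.
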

	
	Theorem~\ref{thm:top-k} parallels the dense softmax oracle inequality in Theorem~\ref{thm:softmax}. As before, $\MR(\bar a_{n_2}^2+\bar a_{n_3}^2)$ is the cost inherited from learning the routed experts, while $(\MR d/n)\log(\MR d n)$ is the cost of estimating the gate over a growing parameter space. The additional contribution $(\MR d/n)\log(1/r_{\MR})$ is specific to Top-$K$ routing and reflects the cost of controlling boundary-switch events. Equivalently, $r_{\MR}$ acts as a boundary-stability parameter: smaller $r_{\MR}$ enlarges the admissible class and may reduce the approximation term, but it also makes the active-set map less stable and increases the statistical learning cost.
	
	When $\phi(t)=\exp(t)$, the result specializes to Top-$K$ softmax gating. In this case, the statistical learning cost matches that of dense softmax gating up to the additional logarithmic term $\log(1/r_{\MR})$. Thus, under the stated regularity conditions, sparsification does not substantially increase the router-learning rate. At the same time, after computing the router scores, only $K$ routed experts are activated rather than all $\MR$ experts, reducing the per-input expert-evaluation cost.

	\subsection{Regionwise performance: hard selection versus soft aggregation} \label{subsec:regionwise}
	The oracle inequalities above compare dense softmax gating and sparse Top-$K$ gating in terms of approximation and estimation error. They do not, however, by themselves explain when hard selection should be preferred to soft aggregation. We now give a regionwise interpretation of this distinction. The main message is that Top-1 routing is statistically attractive in a specialist regime, where each region has a dominant expert, whereas soft or multi-expert aggregation becomes necessary when the best local predictor is a mixture of several experts.
	
	For clarity, we omit shared experts in this subsection and focus on the routed component. Let $f_m^*$ denote the population benchmark of the $m$-th routed expert. The same reasoning applies in the presence of shared experts after subtracting the population shared-expert contribution from $f_0$. We first work in a stylized oracle setting in which the relevant performance regions are represented by a measurable partition of the input space.
	
	Let $\{\cX_r\}_{r=1}^{\Mzero}$ be a measurable partition of $\cX \subset \RR^d$, with the regions disjoint up to sets of $P_X$-measure zero. To relate this partition to Top-1 linear routing, we assume that the regions are linearly separable.
	
	\begin{assumption}\label{ass:linearly-separable}
		Let $\bar x=(x^\top,1)^\top$. There exist vectors $v_r=(\zeta_r^\top,\psi_r)^\top\in\RR^{d+1}$, $r\in[\Mzero]$, such that, for each $r$, $\cX_r = \{x\in\cX:\bar x^\top v_r>\bar x^\top v_j,\ \forall j\ne r\}$, up to a boundary set of $P_X$-measure zero.
	\end{assumption}
	Under Assumption~\ref{ass:linearly-separable}, Top-1 linear routing can realize the oracle region assignment. We assume $\MR\ge\Mzero$ so that there are enough routed experts to represent the regions. We first consider a specialist regime in which each region admits a single strong expert, and then turn to regimes where local mixtures of experts can achieve strictly better performance.
	
	\subsubsection{Regionwise existence of a strong expert}
	We first consider the specialist regime. Fix a separating representation $v_r=(\zeta_r^\top,\psi_r)^\top$, $r\in[\Mzero]$, satisfying Assumption~\ref{ass:linearly-separable}, normalized so that $\max_{r\le\Mzero}\|v_r\|_2=1$. Assume that the slope components are separated: $r_0 := \min_{1\le r<s\le \Mzero}\|\zeta_r-\zeta_s\|_2>0$. In each region $\cX_r$, suppose there is a routed expert that provides a good local approximation to $f_0$. After relabeling the routed experts if necessary, write this region-specific specialist as $f_r^*$, $r\in[\Mzero]$. As in Section \ref{subsec:averaging_advantage}, we use $\|f_0-f_r^*\|_{\cX_r}^2$ to denote its approximation error on $\cX_r$. The following result specializes the Top-$K$ oracle inequality to the case $K=1$. It shows that, when the regions are linearly separable and each region has a strong specialist, Top-1 routing can achieve the corresponding regionwise oracle risk.
	
	\begin{corollary}\label{cor:Top1-single}
		Assume the conditions of Theorem~\ref{thm:top-k} hold with $K=1$. To rule out degenerate parameter spaces, assume in addition to Assumption~\ref{ass:compact-theta} that, after the normalization $\theta_{\MR}=0$, there exists a constant $c_0>0$ such that the centered box $[-c_0,c_0]^{(\MR-1)(d+1)}$ is contained in $\Theta_{\MR}$. Assume further that the partition satisfies Assumption~\ref{ass:linearly-separable} and that $\MR\ge\Mzero$. Let $\hat f_0$ denote the estimator produced by Algorithm~\ref{alg:general} with Top-1 routing. Then there exists a sufficiently small constant $c>0$, depending only on $\cX$ and $c_0$, such that for any $0 < r_{\MR}\le c\{r_0\wedge \MR^{-1/d}\}$, we have
		\begin{align*}
			\EE\|\hat f_0-f_0\|_{L_2(P_X)}^2 \le C\left\{\sum_{r=1}^{\Mzero}\|f_0-f_r^*\|_{\cX_r}^2+\fL_{\sparse}(\MR,n,r_{\MR})\right\},
		\end{align*}
		where $\fL_{\sparse}$ is defined in Theorem~\ref{thm:top-k}. The same bound holds for the online average of the Top-1 aggregate, and $C$ is independent of $n$, $\Mzero$, $\MR$, and $r_{\MR}$.
	\end{corollary}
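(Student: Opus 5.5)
The plan is to derive the corollary from Theorem~\ref{thm:top-k} with $K=1$. With $K=1$ the gate $g^{(1,\phi)}(\cdot;\theta)$ is the indicator of the argmax score, whatever $\phi$ is. So it suffices to bound the Top-1 oracle approximation risk $\fA_{1,\phi}(\MS,\MR)$ by $\sum_{r=1}^{\Mzero}\|f_0-f_r^*\|_{\cX_r}^2$. Shared experts are omitted in this subsection, so $\fA_{1,\phi}$ is an infimum over $\theta\in\widetilde\Theta_{\MR}(r_{\MR})$ of $\|f_0-\sum_m g_m^{(1,\phi)}(\cdot;\theta)f_m^*\|^2_{L_2(P_X)}$. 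We will exhibit one admissible $\theta^\dagger$ whose Top-1 regions coincide with the oracle partition up to $P_X$-null sets. For that $\theta^\dagger$ the risk equals $\sum_r\|f_0-f_r^*\|^2_{\cX_r}$ exactly. The learning cost $\fL_{\sparse}$ then carries over unchanged from Theorem~\ref{thm:top-k}.

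\textbf{Construction of $\theta^\dagger$.} Take the separating vectors $v_r=(\zeta_r^\top,\psi_r)^\top$ with $\max_r\|v_r\|_2=1$, and scale them as $\lambda v_r$ for a small $\lambda>0$. Scaling does not change argmax regions. The normalization $\theta_{\MR}=0$ is handled by subtracting a common vector from all score parameters, which also leaves argmaxes unchanged; this costs at most a factor $2$ in norms.

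The remaining $\MR-\Mzero$ experts must never be selected, or at most be selected on a null set. We assign them parameters $(\beta_m,\alpha_m)$ with pairwise distinct, well-separated slopes, and intercepts low enough that their scores stay strictly below $\max_r\lambda\bar x^\top v_r$ on the compact set $\cX$ (using $\|x\|_2\le C_X\sqrt d$).

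This gives three constraints.
\begin{itemize}
\item[(a)] All coordinates lie in $[-c_0,c_0]$, so $\theta^\dagger\in\Theta_{\MR}$.
\item[(b)] All $\MR$ slopes are pairwise at distance at least $r_{\MR}$.
\item[(c)] The extra experts are strictly dominated.
\end{itemize}
For (b), the $\MR-\Mzero$ extra slopes are placed on a grid in a ball of radius of order $c_0$ in $\RR^d$. A packing argument places $\MR$ points with mutual distance of order $c_0\MR^{-1/d}$. The scaled oracle slopes $\lambda\zeta_r$ are separated by $\lambda r_0$. Hence choosing $r_{\MR}\le c\{r_0\wedge\MR^{-1/d}\}$, with $c$ depending only on $c_0$ and $\cX$ (here $\lambda\asymp c_0$), satisfies (b). For (c), the extra intercepts are set to roughly $-c_0$. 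The oracle scores are bounded below by $-\lambda\|v_r\|\,(1+C_X\sqrt d)$, which is why $\lambda$ must be small relative to $c_0$. The extra slopes are also kept small, of order $\lambda$ times a constant, so that their scores are at most about $-c_0/2$ while the oracle maximum is at least about $-c_0/4$.

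\textbf{Main obstacle.} The delicate step is reconciling constraints (a)–(c) at the same time. The extra slopes must be small, so as not to overtake the oracle scores, yet mutually separated by $r_{\MR}$ and separated from the $\lambda\zeta_r$. This is exactly where the restriction $r_{\MR}\lesssim\MR^{-1/d}$ enters: packing $\MR$ points in a ball of radius $\asymp\lambda$ allows separation $\asymp\lambda\MR^{-1/d}$. I would first fix $\lambda=c_0/(4(1+C_X\sqrt d))$, place the extra slopes in a ball of radius $\lambda$ away from the $\lambda\zeta_r$ (or perturb the $\lambda\zeta_r$ slightly within a margin), and then verify dominance by choosing intercepts equal to $-c_0$.

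Ties between oracle experts occur only on the boundary set, which has measure zero by Assumption~\ref{ass:distribution}. Therefore $g_r^{(1,\phi)}(\cdot;\theta^\dagger)=\one_{\cX_r}$ a.e., and the extra experts receive zero weight a.e. Plugging this into Theorem~\ref{thm:top-k} gives both the online-average bound and the projected-estimator bound, with $C$ independent of $n$, $\Mzero$, $\MR$ and $r_{\MR}$.
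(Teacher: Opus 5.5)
Your proposal is correct and follows essentially the paper's proof: reduce to the Top-1 oracle term in Theorem~\ref{thm:top-k}, realize the oracle partition exactly with the scaled differences $\lambda(v_r-v_{\Mzero})$, and fill the unused slots with low-intercept dummy experts whose slopes come from an $\MR^{-1/d}$-scale packing of a small ball, discarding packing points near the active slopes. Only the constants need tightening: after subtracting the reference parameter, an intercept of $-c_0$ can leave the box $[-c_0,c_0]$, so use roughly $-c_0/2$ (the paper takes $-\tfrac12(c_0\wedge 1)$ and compares each dummy score directly with the reference expert's identically zero score); also drop the option of perturbing the $\lambda\zeta_r$, since that would move the oracle boundaries.
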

	
	The bound in Corollary~\ref{cor:Top1-single} highlights the advantage of Top-1 routing in a pure specialist regime. If each region admits a strong local expert, then the regionwise approximation term $\sum_{r=1}^{\Mzero}\|f_0-f_r^*\|_{\cX_r}^2$ is small, and the remaining error is the statistical price of learning the routing rule and using evolving experts. The choice $r_{\MR}\asymp r_0\wedge\MR^{-1/d}$ uses the largest feasible separation order for which the regionwise assignment can be realized by a separated system of linear scores. The factor $\MR^{-1/d}$ arises from separating the dummy experts. For larger separation levels, such a representation is not guaranteed, so the oracle approximation risk need not be bounded by the regionwise approximation term above. Since the dependence of $\fL_{\sparse}$ on $r_{\MR}$ is through $\MR d\log(1/r_{\MR})/n$, this choice contributes $O(\MR d\log\MR/n)$ when $r_0$ is fixed.
	
	Dense softmax gating behaves differently in this setting. Since every routed expert receives positive weight when the scores are finite, dense softmax cannot represent the hard regionwise assignment exactly. Under the additional conditions in Supplement Section~\ref{subsec:supp-dense-softmax-upper-bounds-section-4-2}, the resulting approximation error is bounded by $O(\Mzero/\log\MR)$. Thus, when each region is dominated by a single specialist, hard Top-1 selection is more naturally aligned with the approximation structure.
	
	\subsubsection{Improved performance via multi-expert aggregation}
	We now consider the opposite regime, in which selecting a single expert is not sufficient. In such settings, the best local predictor may be a nontrivial combination of several experts. For clarity, we focus on mixtures of two experts. For $r\in[\Mzero]$, we say that distinct indices $i_{r1},i_{r2}\in[\MR]$ and weights $w_r=(w_{r1},w_{r2})^\top\in\Delta^1$ solve the best two-expert approximation problem on $\cX_r$ if
	\begin{align} \label{eq:bestexpert}
		\left\|f_0-w_{r1}f_{i_{r1}}^*-w_{r2}f_{i_{r2}}^*\right\|_{\cX_r}^2=\min_{\substack{j_1,j_2\in[\MR],j_1 \neq j_2\\w=(w_1,w_2)^\top\in\Delta^1}}\left\|f_0-w_1f_{j_1}^*-w_2f_{j_2}^*\right\|_{\cX_r}^2. 
	\end{align}
	The following assumption formalizes a regionwise advantage of two-expert mixtures.
	
	\begin{assumption}\label{ass:two-expert-combination}
		For each region $\cX_r$, there exist two distinct routed expert indices $i_{r1},i_{r2}\in[\MR]$ and weights $(w_{r1},w_{r2})^\top\in\Delta^1$ that solve \eqref{eq:bestexpert} on $\cX_r$. Write $f_{r,1}^*=f_{i_{r1}}^*$ and $f_{r,2}^*=f_{i_{r2}}^*$. For some $\delta_r>0$,
		\begin{align*}
			\int_{\cX_r}\{f_0(x)-w_{r1}f_{r,1}^*(x)-w_{r2}f_{r,2}^*(x)\}^2\,dP_X(x) \le \int_{\cX_r}\min_{k\in[\MR]}\{f_0(x)-f_k^*(x)\}^2\,dP_X(x)-\delta_r.
		\end{align*}
		The optimal expert pairs are disjoint across regions: $\{i_{r1},i_{r2}\}\cap\{i_{r'1},i_{r'2}\}=\emptyset$ for $r\ne r'$.
	\end{assumption}
	
	Assumption~\ref{ass:two-expert-combination} requires the best two-expert approximation on each region to improve on the pointwise best single-expert oracle by at least $\delta_r$. The disjointness condition further describes a heterogeneous setting in which different regions rely on different pairs of experts. In such regimes, Top-1 gating is intrinsically limited. It assigns one routed expert to each input and therefore cannot form a nontrivial convex combination of experts at the same point. We formalize this limitation through the population Top-1 oracle approximation risk
	\begin{align*}
		\fA_1(\MR) := \inf_{\theta\in\widetilde\Theta_{\MR}(r_{\MR})}\left\|f_0-\sum_{m=1}^{\MR}g_m^{(1,\phi)}(\cdot;\theta)f_m^*\right\|_{L_2(P_X)}^2.
	\end{align*}
	
	\begin{proposition}\label{prop: Top-1 mixing}
		Suppose Assumption~\ref{ass:two-expert-combination} holds. Then
		\begin{align*}
			\fA_1(\MR) \ge \sum_{r=1}^{\Mzero}\|f_0-w_{r1}f_{r,1}^*-w_{r2}f_{r,2}^*\|_{\cX_r}^2+\sum_{r=1}^{\Mzero}\delta_r.
		\end{align*}
		Moreover, if Assumption~\ref{ass:evolving-experts} holds, let $\hat f_0$ denote the final projected estimator produced by Algorithm~\ref{alg:general} with Top-1 routing. Then
		\begin{align*}
			\EE\|\hat f_0-f_0\|_{L_2(P_X)}^2 \ge \left[\sqrt{\fA_1(\MR)}-C\sqrt{\MR}\,\bar a_{n_3}\right]_+^2,
		\end{align*}
		where $[u]_+=\max\{u,0\}$ and $C$ is independent of $n$, $\Mzero$, and $\MR$. In particular, when $\MR\bar a_{n_3}^2$ is negligible relative to the approximation gap $\sum_{r=1}^{\Mzero}\delta_r$, the estimator lower bound reduces to the lower bound for $\fA_1(\MR)$.
	\end{proposition}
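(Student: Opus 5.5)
The first claim is a pointwise lower bound, and I would prove it by integrating a pointwise inequality region by region.

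For any $\theta\in\widetilde\Theta_{\MR}(r_{\MR})$, the Top-1 gate $g^{(1,\phi)}(x;\theta)$ is a unit vector $e_{m(x)}$, since exactly one expert is selected and normalization gives it weight one. The routed predictor therefore equals $f^*_{m(x)}(x)$ for a measurable selection $m(x)\in[\MR]$. Pointwise,
\begin{align*}
\Big\{f_0(x)-\sum_m g_m^{(1,\phi)}(x;\theta)f_m^*(x)\Big\}^2 \ge \min_{k\in[\MR]}\{f_0(x)-f_k^*(x)\}^2 .
\end{align*}
Because $\{\cX_r\}$ partitions $\cX$ up to null sets, integrating over each $\cX_r$ and summing gives
\begin{align*}
\left\|f_0-\sum_m g_m^{(1,\phi)}(\cdot;\theta)f_m^*\right\|_{L_2(P_X)}^2\ge\sum_{r}\int_{\cX_r}\min_k(f_0-f_k^*)^2\,dP_X .
\end{align*}
On each region, Assumption~\ref{ass:two-expert-combination} bounds the right-hand integrand from below by $\|f_0-w_{r1}f_{r,1}^*-w_{r2}f_{r,2}^*\|_{\cX_r}^2+\delta_r$. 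The resulting bound does not depend on $\theta$, so taking the infimum over $\theta$ gives the first claim. The disjointness of the optimal pairs is not needed for this direction.

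For the estimator bound, the plan is to compare $\hat f_0$ with the population Top-1 predictor at the same parameter $\hat\theta$ and apply the triangle inequality.

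\textbf{Step 1: comparison predictor.} Write
\begin{align*}
\hat f_0=\sum_m g_m^{(1,\phi)}(\cdot;\hat\theta)\bar f^{n_3}_{R,m}
\end{align*}
(no shared experts in this subsection), and set $f^*_{\hat\theta}:=\sum_m g^{(1,\phi)}_m(\cdot;\hat\theta)f_m^*$. If $\hat\theta$ lies in the class over which $\fA_1(\MR)$ is defined, then $\|f^*_{\hat\theta}-f_0\|_{L_2(P_X)}\ge\sqrt{\fA_1(\MR)}$. If the grid points lie only in the ambient box, the first part still applies: its lower bound used nothing about $\theta$ beyond Top-1 structure, so the same inequality holds for any Top-1 gate. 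In that case I would state the bound with the first-part lower bound in place of $\fA_1(\MR)$.

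\textbf{Step 2: expert deviation.} Since $g(\cdot;\hat\theta)\in\simplex{\MR}$ pointwise, Cauchy--Schwarz gives
\begin{align*}
|\hat f_0-f^*_{\hat\theta}|^2\le\sum_m|\bar f^{n_3}_{R,m}-f_m^*|^2 .
\end{align*}
By Jensen over time and Assumption~\ref{ass:evolving-experts},
\begin{align*}
\EE\|\hat f_0-f^*_{\hat\theta}\|_{L_2(P_X)}^2\le \MR C_2\bar a_{n_3}^2 .
\end{align*}
The random $\hat\theta$ causes no trouble here, because the pointwise bound holds uniformly in $\theta$.

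\textbf{Step 3: conclusion.} The triangle inequality gives
\begin{align*}
\|\hat f_0-f_0\|\ge\sqrt{\fA_1(\MR)}-\|\hat f_0-f^*_{\hat\theta}\| .
\end{align*}
Taking expectations, and using $\EE\|\cdot\|\le(\EE\|\cdot\|^2)^{1/2}\le C\sqrt{\MR}\,\bar a_{n_3}$, yields $\EE\|\hat f_0-f_0\|\ge \sqrt{\fA_1(\MR)}-C\sqrt{\MR}\,\bar a_{n_3}$. Applying Jensen in the form $\EE\|\cdot\|^2\ge(\EE\|\cdot\|)^2$ to the positive part gives the stated bound.

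The only delicate point is the membership of $\hat\theta$ in the infimum class. The $\theta$-free nature of the pointwise lower bound resolves it, so I expect no real obstacle.
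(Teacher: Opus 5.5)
Your proposal is correct and follows the paper's proof essentially step for step. For the first claim you both integrate the same $\theta$-free pointwise bound $\min_k(f_0-f_k^*)^2$ over the regions. For the second you both compare $\hat f_0$ with the population Top-1 predictor $\sum_m g_m^{(1,\phi)}(\cdot;\hat\theta)f_m^*$ and bound the expert deviation by $C\MR\bar a_{n_3}^2$. The only differences are minor: the paper applies the reverse triangle inequality directly in $L_2(\Omega\times P_X)$ instead of your $\omega$-wise version followed by $\EE\|\cdot\|^2\ge(\EE\|\cdot\|)^2$. It also simply asserts that $\hat\theta$ lies in $\widetilde\Theta_{\MR}(r_{\MR})$, which is the membership point you rightly flagged.
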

	
	Thus, relative to the best regionwise two-expert mixture, the Top-1 function class incurs an irreducible approximation error of at least $\sum_{r=1}^{\Mzero}\delta_r$. This approximation barrier is intrinsic to the Top-1 function class and arises in addition to any error due to estimation.
	
	Soft aggregation behaves differently. Dense softmax routing, and sparse Top-$K$ routing with $K\ge2$, can assign nontrivial weights to multiple experts and can therefore approximate regionwise mixtures. Under suitable conditions, dense softmax gating yields an upper bound of the form
	\begin{align*}
		\EE\|\hat f_0-f_0\|_{L_2(P_X)}^2
		\lesssim \sum_{r=1}^{\Mzero}\|f_0-w_{r1}f_{r,1}^*-w_{r2}f_{r,2}^*\|_{\cX_r}^2+\fL_{\soft}(0,\MR)+\frac{\Mzero}{\log\MR},
	\end{align*}
	where $\fL_{\soft}$ is defined in Theorem \ref{thm:softmax}; see Section~\ref{subsec:supp-dense-softmax-upper-bounds-section-4-2} of the Supplement. The first term is the oracle error of the best regionwise two-expert mixture, the second is the statistical learning cost, and the final term bounds the error from approximating the regionwise selection weights with dense softmax gating. This contrasts with Top-1 routing, whose excess term $\sum_r\delta_r$ remains at the oracle approximation level whenever mixtures are strictly superior.
	
	These results clarify the role of $K$ in sparse MoE. When each region has a dominant specialist, hard Top-1 selection can be sufficient. When prediction is locally compositional, however, activating multiple experts has a statistical role: it allows the model to represent local mixtures that cannot be captured by single-expert assignments. This interpretation is consistent with practical MoE design, where Top-1 routing is used in models such as Switch Transformer \citep{fedus2022switch}, while architectures such as Mixtral \citep{jiang2024mixtral} and DeepSeekMoE \citep{dai2024deepseekmoe} activate multiple experts per token.

	\section{Adaptation of the gating function}\label{sec:adaptation}
	\subsection{Performance-induced regions and gating approximation losses}
	The previous section compared hard selection and soft aggregation under stylized oracle partitions. In practice, however, the relevant regions are not known in advance and may have a geometry that is far more complex than a linearly separable partition. This raises a more basic question: how should the gating class be chosen so that it adapts to the regions induced by expert performance?
	
	We first introduce an oracle device that describes where each routed expert is locally most accurate. For clarity, we omit shared experts in this section as before. In contrast to the prespecified linearly separable partition in Assumption~\ref{ass:linearly-separable}, the definition below constructs an oracle partition directly from pointwise expert performance. The resulting regions provide a benchmark for assessing how well a gating class aligns with expert specialization.
	
	\begin{definition}\label{def:region}
		For each $x \in \cX$, define
		\begin{align*}
			r^*(x) := \min\left\{r \in [\MR]:r \in \argmin_{j \in [\MR]} |f_j^*(x)-f_0(x)|\right\},
		\end{align*}
		where ties are resolved in favor of the smallest expert index. For each routed expert $r \in [\MR]$, define $\cX_r := \{x \in \cX:r^*(x)=r\}$.
	\end{definition}
	
	Let $\cG = \{g_{\theta}(\cdot) = (g_{\theta,1}(\cdot),\ldots,g_{\theta,\MR}(\cdot))^\top:\theta \in \Theta_{\MR}\}$ be a gating class, such as dense softmax gating, Top-$K$ gating, or
	quadratic-softmax gating. For a generic $g \in \cG$, write $g_m$ for its $m$-th component. After omitting shared experts, the oracle approximation term in Theorems~\ref{thm:softmax} and \ref{thm:top-k} has the form
	\begin{align*}
		\inf_{g \in \cG} \left\|f_0-\sum_{m=1}^{\MR} g_m(\cdot) f_m^*\right\|_{L_2(P_X)}^2.
	\end{align*}
	Define the oracle assignment map $g^*(x) := \sum_{r=1}^{\MR} e_r \one_{\cX_r}(x)$, where $e_r$ is the $r$-th canonical basis vector in $\RR^{\MR}$. This map assigns each input to the expert that is pointwise closest to $f_0$. When the routed experts are uniformly bounded, this approximation term can be related to the geometry of the performance-induced partition through the decomposition
	\begin{align*}
		\inf_{g \in \cG} \left\|f_0-\sum_{m=1}^{\MR} g_m(\cdot) f_m^*\right\|_{L_2(P_X)}^2 \le C\left\{\sum_{r=1}^{\MR} \|f_0-f_r^*\|_{\cX_r}^2+\inf_{g \in \cG} \|g-g^*\|_{L_2(P_X)}^2\right\}.
	\end{align*}
	Here $C$ is independent of $n$ and $\MR$. The first term is the oracle expert approximation error since $f_r^*$ is the pointwise best routed expert on $\cX_r$. Once this expert approximation error is fixed, the remaining term $\inf_{g \in \cG} \|g-g^*\|_{L_2(P_X)}^2$ measures how well the gating class can represent the corresponding oracle regionwise assignment.
	
	This motivates the following global region-assignment loss:
	\begin{align} \label{eq:L1loss}
		\cL^{(1)}(\cG) := \inf_{g \in \cG} \|g-g^*\|_{L_2(P_X)}^2 = \inf_{g \in \cG} \sum_{r=1}^{\MR} \|(e_r-g)\one_{\cX_r}\|_{L_2(P_X)}^2.
	\end{align}
	Equivalently, this loss sums the squared discrepancy between $g$ and the corresponding canonical assignment vector over all oracle regions. The quantity $\cL^{(1)}(\cG)$ measures how well the gating class can approximate the oracle assignment on the whole input space. 
	
	Near region boundaries, however, the identity of the best expert may be intrinsically unstable: two experts may have nearly identical approximation error, and small perturbations of either the experts or the input can change the oracle label. We therefore also consider an interior version of the loss. Fix $\rho > 0$, define the shrunken region
	\begin{align}\label{eq:shrunken-region}
		\cX_r^{-\rho} := \{x \in \cX_r:\operatorname{dist}(x,\partial \cX_r) \ge \rho\}, \qquad r \in [\MR],
	\end{align}
	where $\partial \cX_r$ denotes the boundary of $\cX_r$ and $\operatorname{dist}(x,A)$ is the Euclidean distance from $x$ to a set $A$. The interior region-assignment loss is
	\begin{align} \label{eq:L2loss}
		\cL^{(2)}(\cG;\rho) := \inf_{g \in \cG} \sum_{r=1}^{\MR} \|(e_r-g)\one_{\cX_r^{-\rho}}\|_{L_2(P_X)}^2.
	\end{align}
	Thus, $\cL^{(2)}(\cG;\rho)$ measures how well the gating class approximates the oracle assignment on region interiors, while deliberately ignoring a $\rho$-neighborhood of the boundaries. The two losses capture complementary aspects of gating performance: $\cL^{(1)}(\cG)$ evaluates global alignment including boundary behavior, whereas $\cL^{(2)}(\cG;\rho)$ focuses on whether the gate makes the correct assignments away from ambiguous boundary regions.
	
	\subsection{Linear and quadratic gating: examples of geometric alignment}
	We use two simple examples to illustrate the roles of $\cL^{(1)}$ and $\cL^{(2)}$. For concreteness, in addition to Assumption~\ref{ass:compact-theta}, we assume that each free gating parameter in these examples is restricted to $[-C\log\MR,C\log\MR]$ for some constant $C>1/2$. The first example isolates boundary smoothing from interior assignment, while the second shows how the interior loss reflects geometric alignment between the gating class and the oracle partition. Details are given in Supplement Sections \ref{subsec:supp-proof-ex-5-2} and \ref{subsec:supp-proof-ex-5-3}.
	
	\begin{example}[Global versus interior region-assignment loss]\label{ex:L1-L2}
		Consider $\cX=[-1,1]$ with $X\sim\Unif[-1,1]$. The two oracle regions are $[-1,0]$ and $(0,1]$, with oracle assignments $e_1$ and $e_2$, respectively. Top-1 linear gating can implement the threshold at zero exactly, and hence $\cL^{(1)}(\cG_{\mathrm{Top\mbox{-}1}})=0$ and $\cL^{(2)}(\cG_{\mathrm{Top\mbox{-}1}};\rho)=0$. Dense linear-softmax gating with bounded scores cannot represent this discontinuous assignment exactly. Under the logarithmically growing parameter box in Assumption \ref{ass:compact-theta}, its global boundary-smoothing error satisfies $\cL^{(1)}(\cG_{\soft})\asymp \frac{1}{\log \MR}$. In contrast, once a fixed $\rho$-neighborhood of the boundary is removed, dense softmax can create a large score margin on the remaining interior regions, so that $\cL^{(2)}(\cG_{\soft};\rho)\lesssim \MR^{-c\rho}$ for some constant $c>0$. Thus $\cL^{(1)}$ captures the cost of smoothing sharp boundaries, whereas $\cL^{(2)}$ measures assignment accuracy away from the boundary.
	\end{example}
	
	\begin{example}[Linear-softmax versus quadratic-softmax]\label{exa:linear-quadratic-l1-l2}
		The interior loss also reflects whether the gating geometry matches the oracle partition. Linear score differences generate affine decision boundaries, whereas quadratic score differences can generate curved boundaries. For a partition of a disk into an inner ball and an outer annulus, the oracle boundary is circular. Hence linear-softmax remains geometrically misspecified: for any fixed $\rho > 0$, $\cL^{(2)}(\cG_{\lin};\rho)\ge C_L$ for some constant $C_L > 0$, independent of $\MR$. By contrast, quadratic-softmax can generate radial score differences, and under the same logarithmic parameter scaling, satisfies $\cL^{(2)}(\cG_{\qdr};\rho)\le C_R\MR^{-\alpha}$ for some constants $C_R, \alpha > 0$, independent of $\MR$. Hence, $\cL^{(2)}$ detects geometric mismatch: linear gating is adequate for affine partitions, while quadratic gating can substantially improve approximation for curved partitions.
	\end{example}
	
	Taken together, these examples show that $\cL^{(1)}$ and $\cL^{(2)}$ emphasize different aspects of gating approximation. The global loss penalizes boundary smoothing, while the interior loss focuses on assignment away from boundary layers and reveals geometric mismatch between the gating class and the expert-induced partition. 
	
	\subsection{Gaussian-kernel-based gating function}
	The preceding examples show that the effectiveness of a gating class depends on its geometric alignment with the expert-induced regions. When the geometry of these regions cannot be captured by a given parametric score class, the resulting gating approximation error may remain nonnegligible. This motivates a more flexible construction whose approximation ability is not tied to a prespecified boundary geometry. We now introduce the Gaussian-kernel-based gating class that can approximate regionwise hard assignments and soft mixtures over partitions with controlled boundary complexity.
	
	Following Definition~\ref{def:region}, let $\cX = [0,1]^d$ be partitioned into $\MR$ regions $\{\cX_r\}_{r=1}^{\MR}$. To quantify the contribution of boundary layers to the kernel approximation error, we impose the following regularity condition on the partition.
	
	\begin{definition}\label{def:boundary-regularity-main}
		Let $\mathcal P_{\MR}=\{\cX_i\}_{i=1}^{\MR}$ be a measurable partition of $[0,1]^d$. Recall the shrunken regions $\cX_i^{-\rho}$ defined in \eqref{eq:shrunken-region}, and for $\rho>0$ define $B_i(\rho):=\cX_i\setminus\cX_i^{-\rho}$. We say that $\mathcal P_{\MR}$ has boundary-layer regularity if there exist $\rho_0>0$ and constants $\gamma_1, \dots, \gamma_{\MR}<\infty$ such that $\mu\bigl(B_i(\rho)\bigr)\le\gamma_i\rho$ for all $i\in[\MR]$ and $0<\rho\le\rho_0$, where $\mu$ denotes the Lebesgue measure on $\cX$. We write $\Gamma(\mathcal P_{\MR}):=\sum_{i=1}^{\MR}\gamma_i$ for the total boundary-layer complexity of the partition, and use $\Gamma_{\MR}$ as shorthand when the partition is clear.
	\end{definition}
	
	This condition rules out pathological partitions whose boundary layers have disproportionately large volume. The quantity $\Gamma_{\MR}$ summarizes the cumulative boundary complexity of the partition. It may depend on $\MR$, so the approximation bound below explicitly records the cost of smoothing increasingly refined partitions.
	
	Unlike the canonical assignment $x \mapsto e_r$, we allow each region to be associated with a general target weight vector $w_r^* = (w_{r,1}^*,\ldots,w_{r,\MR}^*)^\top \in \simplex{\MR}, r \in [\MR]$. 
	Thus, on region $\cX_r$, the ideal gating target need not select a single routed expert; it may assign an arbitrary convex combination of the $\MR$ routed experts.
	
	We construct the gating function using Gaussian kernels centered on a fixed, data-independent grid. Let $h>0$ denote the mesh width of the grid, and define
	\begin{align*}
		\cC = \{(k_1h,\ldots,k_dh):k_\ell \in \{0,1,\ldots,\lfloor h^{-1} \rfloor\}\} \cap [0,1]^d.
	\end{align*}
	The number of kernel centers is of order $h^{-d}$. Thus $h$ controls the approximation power of the kernel grid, while through the number of centers it also determines the statistical complexity of the gating class.
	
	For each region $\cX_r$, let $C_r = \cC \cap \cX_r$ denote the set of grid centers lying in that region. Define the Gaussian kernel
	\begin{align*}
		\varphi(x,c;\sigma) = \exp\left\{-\frac{\|x-c\|_2^2}{2\sigma^2}\right\}, \qquad \sigma = \kappa h, \qquad \kappa \in \left[\frac{1}{2},1\right].
	\end{align*}
	The parameter space is
	\begin{align*}
		\Theta_{\mathrm{ker}} = \left\{w = (w_{j,c})_{1 \le j \le \MR,\ c \in \cC}:w_{j,c} \ge 0,\ \sum_{j=1}^{\MR} w_{j,c} = 1,\ \forall c \in \cC\right\}.
	\end{align*}
	The induced Gaussian-kernel gating class is
	\begin{align*}
		\cG_{\mathrm{ker}} = \{g(\cdot;w) = (g_1(\cdot;w),\ldots,g_{\MR}(\cdot;w))^\top:w \in \Theta_{\mathrm{ker}}\},
	\end{align*}
	where
	\begin{align*}
		g_j(x;w) = \frac{\sum_{c \in \cC} w_{j,c}\varphi(x,c;\sigma)}{\sum_{\ell=1}^{\MR}\sum_{c \in \cC} w_{\ell,c}\varphi(x,c;\sigma)}, \qquad j \in [\MR].
	\end{align*}
	Equivalently, each grid center $c$ carries a local weight vector in $\simplex{\MR}$. The Gaussian kernels interpolate these local weights across the input space, and the normalization in $g_j(x;w)$ converts the kernel-weighted averages into a valid gating vector. The key point is that this construction can approximate not only hard region assignments, but also arbitrary regionwise target weight vectors.
	
	To see the approximation mechanism, consider the following oracle choice
	\begin{align*}
		w_{j,c} = \sum_{r=1}^{\MR} w_{r,j}^* \one_{\{c \in C_r\}}, \qquad j \in [\MR], \qquad c \in \cC.
	\end{align*}
	Under this assignment, each center $c \in C_r$ carries the target weight vector $w_r^*$. Since $w_r^* \in \simplex{\MR}$, this particular construction satisfies $\sum_{j=1}^{\MR} w_{j,c} = 1, c \in \cC$. Thus, this oracle choice belongs to $\Theta_{\mathrm{ker}}$. For this choice of $w$, the resulting gating function simplifies to
	\begin{align*}
		g_j(x;w) = \frac{\sum_{r=1}^{\MR} w_{r,j}^* \sum_{c \in C_r} \varphi(x,c;\sigma)}{\sum_{c \in \cC} \varphi(x,c;\sigma)}.
	\end{align*}
	This formula has a simple geometric interpretation. If $x$ lies in the interior of $\cX_r$, then the Gaussian kernels concentrate their weight on nearby centers in $C_r$, while contributions from centers in other regions become negligible. Consequently, $g(x;w) \approx w_r^*$ away from the region boundaries. Accordingly, we define $\cL^{(1)}(\cG_{\mathrm{ker}})$ and $\cL^{(2)}(\cG_{\mathrm{ker}};\rho)$ as in \eqref{eq:L1loss} and \eqref{eq:L2loss}, respectively, with the canonical vectors $e_r$ replaced by the general target weights $w_r^*$.
	
	\begin{theorem}\label{thm:kernel-global-main}
		Assume that $\cC$ is a uniform grid with mesh width $h$, and that the density $p_X$ of $P_X$ is continuous on $[0,1]^d$. Then, for any measurable partition $\mathcal P_{\MR}=\{\cX_i\}_{i=1}^{\MR}$ of $[0,1]^d$ satisfying the boundary-layer regularity condition in Definition~\ref{def:boundary-regularity-main}, any target weights $w_1^*,\ldots,w_{\MR}^*\in\simplex{\MR}$, and any $h$ satisfying
		\begin{align*}
			0<h<\exp\left\{-\frac{1}{d/2+1}\max\left\{1,\frac{d}{16\kappa^2}\right\}\right\},
		\end{align*}
		the Gaussian-kernel-based gating class satisfies
		\begin{align*}
			\cL^{(1)}(\cG_{\mathrm{ker}}) \le C\left\{\MR h^{d+2}+\Gamma_{\MR}h\sqrt{\log(1/h)}\right\},
		\end{align*}
		where $C$ is a constant independent of $h$, $\Gamma_{\MR}$ and $\MR$.
	\end{theorem}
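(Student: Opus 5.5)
We bound $\cL^{(1)}(\cG_{\mathrm{ker}})$ by evaluating the loss at the oracle weights $w_{j,c}=\sum_r w^*_{r,j}\one\{c\in C_r\}$ introduced above. For this choice, on $\cX_r$ we have
\begin{align*}
	g(x;w)-w_r^* = \frac{\sum_{s\ne r}(w_s^*-w_r^*)\sum_{c\in C_s}\varphi(x,c;\sigma)}{\sum_{c\in\cC}\varphi(x,c;\sigma)},
\end{align*}
and since all $w_s^*$ lie in the simplex, $\|w_s^*-w_r^*\|_2\le\sqrt2$. Writing $D(x):=\sum_{c\in\cC}\varphi(x,c;\sigma)$ and $N_r(x):=\sum_{c\notin C_r}\varphi(x,c;\sigma)$, this gives the pointwise bound
\begin{align*}
	\|g(x;w)-w_r^*\|_2\le \min\{\sqrt2,\ \sqrt2\,N_r(x)/D(x)\}, \qquad x\in\cX_r.
\end{align*}
The loss is then at most $\sum_r\int_{\cX_r}2\min\{1,N_r/D\}^2\,p_X\,d\mu$. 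Because $p_X$ is continuous on a compact set, it is bounded by $\|p_X\|_\infty$, so we may work with Lebesgue measure.

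\textbf{Step 1: denominator.} Every $x\in[0,1]^d$ lies within distance $\sqrt d\,h$ of some grid point. With $\sigma=\kappa h$ and $\kappa\ge1/2$, this gives $D(x)\ge \exp\{-d/(2\kappa^2)\}\ge e^{-2d}$, a constant independent of $h$ and $\MR$.

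\textbf{Step 2: split each region at the distance scale $\rho_h:=a\,h\sqrt{\log(1/h)}$.} The constant $a$ is chosen below; the smallness condition on $h$ guarantees $\rho_h\le\rho_0$ and that the Gaussian tail estimates apply.
\begin{itemize}
	\item \emph{Boundary layer $B_r(\rho_h)$.} Bound the integrand by the constant $2$. Boundary-layer regularity gives a contribution of at most $2\|p_X\|_\infty\gamma_r\rho_h$. Summing over $r$ yields $C\,\Gamma_{\MR}h\sqrt{\log(1/h)}$.
	\item \emph{Interior $\cX_r^{-\rho_h}$.} Every center $c\notin C_r$ lies outside $\cX_r$, so $\|x-c\|\ge\operatorname{dist}(x,\partial\cX_r)=:\delta\ge\rho_h$. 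A lattice sum over shells $\{c:\|x-c\|\in[\delta+kh,\delta+(k+1)h)\}$, each containing $O((\delta/h+k)^{d-1})$ points, gives
	\begin{align*}
		N_r(x)\lesssim (\delta/h)^{d-1}e^{-\delta^2/(2\kappa^2h^2)}.
	\end{align*}
	Integrating the square over $\cX_r^{-\rho_h}$ with the coarea/layer structure (again using $\mu(B_r(\rho))\le\gamma_r\rho$ to control the layer measure at each distance), or crudely bounding by $\sup_x N_r(x)^2\cdot\mu(\cX_r)$, we choose $a$ so that $e^{-\rho_h^2/(\kappa^2h^2)}(\rho_h/h)^{2d-2}\lesssim h^{d+2}$. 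Since $\rho_h^2/h^2=a^2\log(1/h)$, taking $a^2\ge(d+2+\text{slack})\kappa^2$ suffices; the polynomial-in-$\log$ factor is absorbed by the slack or by the explicit threshold on $h$.
\end{itemize}
Summing over $r$ with $\sum_r\mu(\cX_r)=1$ would give $h^{d+2}$. The bound in the statement, $\MR h^{d+2}$, is exactly what results from summing a per-region bound $Ch^{d+2}$ without using the partition-of-unity of the volumes, so the crude sup bound per region already suffices.

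\textbf{Main obstacle.} The delicate part is the interior tail estimate in Step 2. It must be uniform in the (arbitrary, possibly non-convex) region geometry and must give constants independent of $\MR$. This works only because the argument uses nothing beyond the fact that foreign centers lie at distance at least $\operatorname{dist}(x,\partial\cX_r)$, together with a dimension-only lattice counting bound. The explicit threshold $h<\exp\{-\max(1,d/(16\kappa^2))/(d/2+1)\}$ is presumably what makes the monotonicity of $t\mapsto t^{d-1}e^{-t^2/(2\kappa^2)}$ hold for $t\ge\rho_h/h$, and what ensures $\log(1/h)\ge1$. I would verify this monotonicity first and then fix $a$ accordingly.
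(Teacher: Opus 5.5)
Your proof is correct and is essentially the paper's argument. Both evaluate the loss at the oracle center weights, split each $\cX_r$ at a radius $\rho_h\asymp h\sqrt{\log(1/h)}$, bound the interior by a Gaussian tail sum over centers at distance at least $\rho_h$, and charge the boundary layer $2\|p_X\|_\infty\gamma_r\rho_h$. Your variants are slightly cleaner:
\begin{itemize}
\item Lower-bounding the full denominator $D(x)$, rather than $\sum_{c\in C_r}\varphi$ as the paper does, removes the need for the nearest center to lie in $C_r$.
\item The vector bound $\|g-w_r^*\|_2\le\sqrt2\,N_r/D$ gives the stronger bound $h^{d+2}$ in place of $\MR h^{d+2}$. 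In the paper the $\MR$ comes from bounding each of the $\MR$ coordinates separately by the ratio and summing, not from summing volumes as you guessed.
\item Your shell estimate keeps the full exponent $\delta^2/(2\kappa^2h^2)$, whereas the paper's grid-tail lemma only gives $e^{-r^2/(4\sigma^2)}$.
\end{itemize}

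One step needs fixing. The threshold on $h$ in the statement involves only $d$ and $\kappa$, so it does not guarantee $\rho_h\le\rho_0$, and you need that to invoke boundary-layer regularity. Handle this as the paper does. Choose $h_0=h_0(d,\kappa,\rho_0)$ such that $\rho_h\le\rho_0$ for all $h\le h_0$. For $h>h_0$, use the trivial bound $\cL^{(1)}(\cG_{\mathrm{ker}})\le 2\le 2h_0^{-(d+2)}\MR h^{d+2}$. This makes $C$ depend on $\rho_0$.

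Your reading of the threshold is also off. It does not force $\log(1/h)\ge1$: for $d=2$ and $\kappa=1$ it only gives $\log(1/h)>1/2$. It is tailored to the paper's choice $\rho_h=2\kappa h\sqrt{(d/2+1)\log(1/h)}$. With that choice it ensures $\rho_h\ge2\sigma$, so the tail lemma applies, and $\rho_h>\sqrt d\,h/2$, so the nearest center lies in $\cX_i$.

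Your route does not need the threshold at all. Write the shell bound as $N_r(x)\lesssim(\delta/h+1)^{d-1}e^{-\delta^2/(2\kappa^2h^2)}$, which holds for every $\delta\ge0$ because $e^{-(t+k)^2/(2\kappa^2)}\le e^{-t^2/(2\kappa^2)}e^{-k^2/(2\kappa^2)}$. Since $h^{\epsilon}(1+a\sqrt{\log(1/h)})^{2d-2}$ is bounded on $(0,1)$ for any $\epsilon>0$, the interior estimate then holds for all $h<1$, with no monotonicity argument.
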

	
	Theorem~\ref{thm:kernel-global-main} shows that $\cG_{\mathrm{ker}}$ can approximate arbitrary regionwise target weights over partitions with controlled boundary layers. The first term reflects the interior approximation error from the Gaussian-kernel grid, while the second term reflects the boundary smoothing error. In particular, the boundary contribution is governed by $\Gamma_{\MR}$, which may grow with $\MR$. The canonical hard-assignment case is recovered by taking $w_r^*=e_r$, so the result extends regionwise expert selection to regionwise expert mixing.
	
	\begin{remark}[Resolution and learning cost]
		If $\max_{1\le i\le \MR}\gamma_i=O(1)$, then $\Gamma_{\MR}=O(\MR)$. Taking, for example, $h\asymp \MR^{-2}$ yields
		\begin{align*}
			\MR h^{d+2}+\Gamma_{\MR}h\sqrt{\log(1/h)} \lesssim \frac{\sqrt{\log \MR}}{\MR}.
		\end{align*}
		Thus, a sufficiently fine kernel grid yields a vanishing oracle approximation error. At the same time, a smaller $h$ requires more kernel centers and hence more parameters, which increases the statistical learning cost. Therefore, in estimation problems, the choice of $h$ should balance the oracle approximation error against the statistical cost of learning the enlarged gating class.
	\end{remark}
	
	The preceding theorem evaluates the global loss and therefore includes a boundary-layer contribution. Restricting attention to points that stay a fixed distance away from the region boundaries removes this contribution and yields the sharper interior bound below. 

	\begin{corollary}\label{cor:kernel-interior-main}
		Under the same conditions as in Theorem~\ref{thm:kernel-global-main}, fix a constant $\rho>0$. Then, for any $h$ satisfying $0<h<\min\left\{\frac{2\rho}{\sqrt d},\frac{\rho}{2\kappa}\right\}$, the Gaussian-kernel-based gating class satisfies
		\begin{align*}
			\cL^{(2)}(\cG_{\mathrm{ker}};\rho) \le C \MR \exp\left\{-\frac{\rho^2}{2\kappa^2h^2}\right\},
		\end{align*}
		where $C$ is a constant independent of $h$ and $\MR$.
	\end{corollary}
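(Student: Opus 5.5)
We use the same oracle weight choice as in the discussion preceding Theorem~\ref{thm:kernel-global-main}, namely $w_{j,c}=\sum_{r} w^*_{r,j}\one_{\{c\in C_r\}}$. Since $\cL^{(2)}$ is an infimum over $\cG_{\mathrm{ker}}$, it suffices to bound $\sum_r\|(w_r^*-g(\cdot;w))\one_{\cX_r^{-\rho}}\|_{L_2(P_X)}^2$ for this $w$. For $x\in\cX_r^{-\rho}$ the formula for $g_j(x;w)$ gives the pointwise identity
\begin{align*}
g(x;w)-w_r^* = \frac{\sum_{s\ne r}(w_s^*-w_r^*)\sum_{c\in C_s}\varphi(x,c;\sigma)}{\sum_{c\in\cC}\varphi(x,c;\sigma)} .
\end{align*}
Each $w_s^*-w_r^*$ is a difference of simplex vectors, so its Euclidean norm is at most $\sqrt2$. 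Hence
\begin{align*}
\|g(x;w)-w_r^*\|_2\le \sqrt2\, \frac{\sum_{c\notin \cX_r}\varphi(x,c;\sigma)}{\sum_{c\in\cC}\varphi(x,c;\sigma)} .
\end{align*}
The whole task is therefore to show that this ratio is exponentially small, uniformly in $x\in\cX_r^{-\rho}$.

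\textbf{Denominator.} Because $h<2\rho/\sqrt d$, the closed ball of radius $\rho$ around $x$ lies in $\cX_r$. The grid point nearest to $x$ is within distance $\sqrt d\,h/2<\rho$, so it is a center in $C_r$. I would also check that this nearest grid point actually lies in $[0,1]^d$; this holds since the grid covers the cube up to mesh $h$. Consequently the denominator is at least $\exp\{-d h^2/(8\sigma^2)\}=\exp\{-d/(8\kappa^2)\}$, a constant. With $\kappa\ge1/2$ this constant is at least $e^{-d/2}$.

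\textbf{Numerator.} Every center $c\notin\cX_r$ satisfies $\|x-c\|_2\ge\rho$. I would sum over these centers by grouping them into dyadic or integer shells $k\rho\le\|x-c\|_2<(k+1)\rho$; more simply, one can split $\|x-c\|^2\ge\rho^2/2+\|x-c\|^2/2$. This gives
\begin{align*}
\sum_{c\notin\cX_r}\varphi(x,c;\sigma) \le e^{-\rho^2/(4\sigma^2)}\sum_{c\in\cC}e^{-\|x-c\|^2/(4\sigma^2)} .
\end{align*}
Since $\sigma\asymp h$, the last sum is a Gaussian lattice sum at scale comparable to the mesh, so it is bounded by a constant depending only on $d$. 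This, however, only yields $e^{-\rho^2/(4\kappa^2h^2)}$ after squaring, and the target has exponent $\rho^2/(2\kappa^2h^2)$.

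\textbf{Obtaining the sharp exponent.} This is the main obstacle. The required pointwise bound is $e^{-\rho^2/(4\kappa^2h^2)}$ before squaring, since squaring it gives exactly the target rate $e^{-\rho^2/(2\kappa^2 h^2)}$. To reach it I would replace the crude split with a finer one: $\|x-c\|^2\ge \rho^2+ (\|x-c\|-\rho)^2$ is false in general, so instead I would use $\|x-c\|^2\ge\rho^2+2\rho(\|x-c\|-\rho)$. The resulting tail sum $\sum_c e^{-\rho(\|x-c\|-\rho)/\sigma^2}$ counts lattice points in shells of width $h$, with the number of points in shell $m$ of order $(\rho/h+m)^{d-1}$. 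Summing gives a polynomial factor $(\rho/h)^{d-1}$ multiplying $e^{-\rho^2/(2\sigma^2)}$. The condition $h<\rho/(2\kappa)$ then ensures $\rho^2/(2\kappa^2h^2)\ge 2\rho/(\kappa h)$, which makes it possible to absorb this polynomial factor into half of the exponent. Failing that, I would accept a constant depending on $\rho$ and $d$, which the statement allows since $C$ need only be independent of $h$ and $\MR$.

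\textbf{Integration.} Squaring the pointwise bound and integrating over $\cX_r^{-\rho}$ gives a contribution of at most $C\exp\{-\rho^2/(2\kappa^2h^2)\}\,P_X(\cX_r^{-\rho})$ from region $r$. Summing over $r$ bounds the total by $C\exp\{-\rho^2/(2\kappa^2h^2)\}$, since the sets $\cX_r^{-\rho}$ are disjoint and so the probabilities add to at most one. This is already dominated by the stated $C\MR\exp\{\cdot\}$. If the pointwise bound carries a factor from the vector norm instead, bounding each region's probability crudely by $1$ yields exactly the stated factor $\MR$.
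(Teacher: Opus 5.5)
Your argument is correct, and its skeleton is the paper's: the oracle weights $w_{j,c}$, a ratio of off-region kernel mass to total kernel mass, a Gaussian tail sum over the grid, then squaring and integrating. However, the ``main obstacle'' comes from an arithmetic slip. The crude split $\|x-c\|^2\ge\rho^2/2+\|x-c\|^2/2$ already bounds the off-region mass by $C_d\,e^{-\rho^2/(4\sigma^2)}=C_d\,e^{-\rho^2/(4\kappa^2h^2)}$ \emph{before} squaring. That is exactly the pointwise rate you then say is required, and squaring it gives the target $e^{-\rho^2/(2\kappa^2h^2)}$.

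This crude bound matches the paper's grid-tail lemma ($A_{d,\kappa}e^{-r^2/(4\sigma^2)}$ for $r\ge 2\sigma$, proved by dyadic shells). Your one-line split is a simpler proof of it and does not even need $\rho\ge2\sigma$, so the refined shell-counting step is unnecessary. It does go through, with two small corrections. First, $h<\rho/(2\kappa)$ only gives $\rho^2/(2\kappa^2h^2)>\rho/(\kappa h)$, not $\ge 2\rho/(\kappa h)$; this is harmless because $u^{d-1}e^{-u^2/(4\kappa^2)}$ is bounded for all $u>0$. Second, $\|x-c\|^2\ge\rho^2+(\|x-c\|-\rho)^2$ does hold whenever $\|x-c\|\ge\rho$, which is the only case in the sum.

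The one substantive difference is how you pass from the ratio to the vector loss. The paper bounds each coordinate, $|g_r(x)-w^*_{ir}|\le\sum_{c\notin C_i}\varphi(x,c;\sigma)/\sum_{c\in C_i}\varphi(x,c;\sigma)$, and sums the squares over the $\MR$ coordinates. That summation is where the factor $\MR$ in the statement comes from. The in-region denominator is also why the paper needs the nearest grid point to lie in $C_i$, hence the condition $h<2\rho/\sqrt d$.

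You instead apply the triangle inequality in $\ell_2$ with $\|w_s^*-w_r^*\|_2\le\sqrt2$ and keep the full kernel sum in the denominator. This yields $\cL^{(2)}(\cG_{\mathrm{ker}};\rho)\le C e^{-\rho^2/(2\kappa^2h^2)}$ with no factor $\MR$, which is strictly stronger than stated. Your denominator bound also only needs \emph{some} grid point within $O(\sqrt d\,h)$ of $x$, whatever region it lies in.
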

	
	\begin{remark}
		Compared with the global loss $\cL^{(1)}$, the interior bound eliminates the boundary-layer term $\Gamma_{\MR}h\sqrt{\log(1/h)}$. For fixed $\rho > 0$, the remaining approximation error decays exponentially in $h^{-2}$, up to the multiplicative factor $\MR$.
	\end{remark}
	
	The preceding analysis shows that the approximation ability of a gating class depends on how well it captures the geometry of expert-induced regions. To illustrate this point in finite samples, Table \ref{tab:mc-summary} compares linear, quadratic and Gaussian-kernel-based gating under partitions with different boundary geometries. In each design, the lowest gating weight errors are attained by the gating class most closely aligned with the geometry of the underlying partition. This pattern illustrates the role of geometric alignment in the choice of gating class; see Supplement Section \ref{subsec:supp-comparison-different-gating-class} for simulation details and visualizations of the region partitions. 
	
	\begin{table}[ht]
		\centering
		\caption{Monte Carlo performance of gating classes under three DGPs}
		\label{tab:mc-summary}
		\scalebox{0.7}{
			\begin{threeparttable}
				\begin{tabular}{crrrrrrrrr}
					\toprule
					\multirow{2}[4]{*}{Gating class} & \multicolumn{3}{c}{Linear DGP} & \multicolumn{3}{c}{Quadratic DGP} & \multicolumn{3}{c}{Nonlinear DGP} \\
					\cmidrule(lr){2-4} \cmidrule(lr){5-7} \cmidrule(lr){8-10}
					& $n$ & $\ell_1$ error (SD) & $\ell_2^2$ error (SD) & $n$ & $\ell_1$ error (SD) & $\ell_2^2$ error (SD) & $n$ & $\ell_1$ error (SD) & $\ell_2^2$ error (SD) \\
					\midrule
					\multirow[t]{2}[1]{*}{$\mathcal{G}_{\lin}$} & 200 & \textbf{0.100} (0.022) & \textbf{0.006} (0.003) & 200 & 0.230 (0.033) & 0.030 (0.012) & 500 & 0.745 (0.062) & 0.339 (0.051) \\
					& 400 & \textbf{0.081} (0.023) & \textbf{0.004} (0.002) & 400 & 0.220 (0.019) & 0.027 (0.006) & 1000 & 0.713 (0.033) & 0.322 (0.025) \\
					\multirow[t]{2}[0]{*}{$\mathcal{G}_{\qdr}$} & 200 & 0.178 (0.045) & 0.018 (0.011) & 200 & \textbf{0.141} (0.062) & \textbf{0.014} (0.016) & 500 & 0.840 (0.041) & 0.352 (0.035) \\
					& 400 & 0.151 (0.040) & 0.013 (0.007) & 400 & \textbf{0.133} (0.061) & \textbf{0.013} (0.013) & 1000 & 0.821 (0.030) & 0.339 (0.025) \\
					\multirow[t]{2}[1]{*}{$\mathcal{G}_{\mathrm{ker}}$} & 200 & 0.211 (0.052) & 0.026 (0.013) & 200 & 0.204 (0.042) & 0.023 (0.011) & 500 & \textbf{0.711} (0.028) & \textbf{0.267} (0.021) \\
					& 400 & 0.188 (0.043) & 0.020 (0.009) & 400 & 0.181 (0.027) & 0.018 (0.006) & 1000 & \textbf{0.713} (0.019) & \textbf{0.266} (0.014) \\
					\bottomrule
				\end{tabular}
				\vspace{1ex}
				{\raggedright
					Note: Linear, quadratic and nonlinear DGPs refer to region partitions with linear, quadratic and nonlinear boundaries, respectively. The gating classes $\mathcal{G}_{\lin}, \mathcal{G}_{\qdr}$ and $\mathcal{G}_{\mathrm{ker}}$ denote dense softmax gating with linear scores, dense softmax gating with quadratic scores and Gaussian-kernel-based gating, respectively. The $\ell_1$ error is the sample average of the $\ell_1$ distance between the estimated and true gating-weight vectors, while the $\ell_2^2$ error is the sample average of their squared Euclidean distance. Reported values are Monte Carlo means, with standard deviations in parentheses. For each DGP and sample size, the smallest value of each error measure across gating classes is shown in bold. The results show that gating performance is closely related to the geometric alignment between the gating class and the underlying region partition: linear, quadratic and Gaussian-kernel-based gating perform best under linear, quadratic and nonlinear designs, respectively. 
					\par}
			\end{threeparttable}
		}
	\end{table}

	\subsection{Selection procedure for different classes of gating functions}
	The preceding subsections show that the performance of a gating rule depends on its alignment with the geometry of the expert-induced regions. In practice, neither the relative performance of the experts nor the induced region partition is known a priori. Although the Gaussian-kernel construction provides a flexible approximation template, it may not always be the most efficient choice: a simpler linear or quadratic gate can be preferable when the region geometry is correspondingly simple. This motivates a data-adaptive procedure that selects among candidate gating classes according to their validation performance.
	
	Suppose that we are given $L \ge 2$ candidate gating classes $\{\cG_\ell\}_{\ell=1}^{L}$. For each class $\cG_\ell$, define the corresponding population MoE oracle class
	\begin{align*}
		\cF_\ell^* = \left\{x \mapsto \sum_{m=1}^{\MR} g_m(x) f_m^*(x):g = (g_1,\ldots,g_{\MR})^\top \in \cG_\ell\right\}.
	\end{align*}
	Write $\fA_\ell := \inf_{f \in \cF_\ell^*}\|f_0-f\|_{L_2(P_X)}^2$ for the oracle approximation risk of the $\ell$-th class. Using Algorithm~\ref{alg:general}, or another class-specific estimation method with the same type of oracle guarantee, we construct a final projected estimator $\check f_\ell$ for each $\ell \in [L]$. We assume a common envelope bound $\max_{\ell\in[L]}\|\check f_\ell\|_\infty\le A_0$, where $A_0$ is independent of $n$ and $L$. We also assume that these candidate estimators satisfy the uniform class-specific bound
	\begin{align}
		\EE\|f_0-\check f_\ell\|_{L_2(P_X)}^2 \le C_\ell\left\{\fA_\ell+\fL_\ell\right\}, \qquad \ell \in [L], \label{eq:2}
	\end{align}
	where $\fL_\ell$ denotes the statistical learning cost associated with the gating class $\cG_\ell$, and the constants $C_\ell$ are uniformly bounded by a constant independent of $n$ and $L$. This condition summarizes the class-specific results established in the previous sections. For example, a dense softmax candidate has $\fA_\ell=\fA_{\soft}(\MS,\MR)$ and $\fL_\ell=\fL_{\soft}(\MS,\MR)$, whereas a Top-$K$ candidate has $\fA_\ell=\fA_{K,\phi}(\MS,\MR)$ and $\fL_\ell=\fL_{\sparse}(\MR,n,r_{\MR})$, with the sample sizes in these learning costs replaced by the corresponding block sizes inside the estimation sample.

	To select the best-performing estimator among the candidates, we retain the aggregation-and-projection principle of Algorithm \ref{alg:general}, but adapt the procedure to aggregate the candidate estimators using an independent sample and project the aggregate back onto the candidate list, as presented in Algorithm \ref{alg:selection}.
	
	\begin{algorithm}[!htbp]
		\caption{Aggregation-projection over gating-induced MoE classes}\label{alg:selection}
		{\small
			\textbf{Input:} A data set $\mathcal D_{\mathrm{sel}}=\{(X_i,Y_i)\}_{i=1}^n$ reserved for candidate construction and class selection; candidate classes $\{\cG_\ell\}_{\ell=1}^L$.\\
			\textbf{Output:} Selected gating class $\cG_{\ell^*}$ and final estimator $\check{f}_{\ell^*}$.
			\begin{enumerate}[label=(\alph*)]
				\item \textbf{Sample splitting.} Randomly permute and relabel the observations, and split the resulting sample into three independent blocks:
				\begin{align*}
					Z^{(1)}=(X_i,Y_i)_{i=1}^{n_1}, \qquad Z^{(2)}=(X_i,Y_i)_{i=n_1+1}^{n_1+n_2}, \qquad Z^{(3)}=(X_i,Y_i)_{i=n_1+n_2+1}^{n},
				\end{align*}
				and define $n_3=n-n_1-n_2$.
				
				\item \textbf{Candidate construction.} For each candidate class $\cG_\ell$, use $Z^{(1)}$ to construct a final projected estimator $\check f_\ell$ satisfying the class-specific oracle bound
				\begin{align*}
					\EE\|f_0-\check f_\ell\|_{L_2(P_X)}^2 \le C_\ell\left\{\fA_\ell+\fL_\ell\right\}, \qquad \ell \in [L].
				\end{align*}
				
				\item \textbf{Calibration and aggregation.} Conditional on the fitted list $\{\check f_\ell\}_{\ell=1}^L$, compute the calibration residual variance
				\begin{align*}
					\tilde\sigma_\ell^2 = \frac{1}{n_2}\sum_{(X_i,Y_i)\in Z^{(2)}}\bigl(Y_i-\check f_\ell(X_i)\bigr)^2, \qquad \hat\sigma_\ell^2 = \min\left\{\bar\sigma^2,\max\left\{\underline\sigma^2,\tilde\sigma_\ell^2\right\}\right\}, \qquad \ell \in [L].
				\end{align*}
				Initialize $\hat\omega_{\ell,n_1+n_2}=1/L$ for $\ell\in[L]$. For $t=n_1+n_2,\ldots,n-1$, first form the one-step-ahead weighted predictor
				\begin{align*}
					\tilde f^{t}(x) = \sum_{\ell=1}^{L}\hat\omega_{\ell,t}\check f_\ell(x),
				\end{align*}
				and then, after observing $(X_{t+1},Y_{t+1})$, update the aggregation weights by
				\begin{align*}
					\hat\omega_{\ell,t+1} = \frac{\hat\omega_{\ell,t}\hat\sigma_\ell^{-1} h_{\varepsilon,0}\{(Y_{t+1}-\check f_\ell(X_{t+1}))/\hat\sigma_\ell\}}{\sum_{r=1}^{L}\hat\omega_{r,t}\hat\sigma_r^{-1} h_{\varepsilon,0}\{(Y_{t+1}-\check f_r(X_{t+1}))/\hat\sigma_r\}}.
				\end{align*}
				Define the aggregated predictor by the online-to-batch average
				\begin{align*}
					\tilde f_n(x) = \frac{1}{n_3}\sum_{t=n_1+n_2}^{n-1}\tilde f^{t}(x) = \sum_{\ell=1}^{L}\bar\omega_\ell\check f_\ell(x), \qquad \bar\omega_\ell = \frac{1}{n_3}\sum_{t=n_1+n_2}^{n-1}\hat\omega_{\ell,t}.
				\end{align*}
				
				\item \textbf{Projection and output.} Project the aggregate $\tilde{f}_n$ back to the candidate list by the population $L_2(P_X)$ criterion,
				\begin{align*}
					\ell^* = \argmin_{\ell \in [L]}\|\check{f}_\ell-\tilde{f}_n\|_{L_2(P_X)}^2.
				\end{align*}
				Output the selected gating class $\cG_{\ell^*}$ and the final estimator $\check{f}_{\ell^*}$.
			\end{enumerate}
		}
	\end{algorithm}

	We now analyze the selected estimator and show that the procedure adapts to the unknown gating structure by nearly matching the best candidate class, up to a logarithmic selection cost. We first present the result in a global $L_2(P_X)$ risk framework and then derive a regionwise counterpart that explicitly separates the expert approximation error from the gating approximation error. We begin with the global case. 
	
	\begin{theorem}\label{thm:selection(global)}
		Suppose Assumptions~\ref{ass:bounded-experts} and \ref{ass:ARM-regularity} hold. Assume that the candidate estimators constructed in Algorithm~\ref{alg:selection} satisfy \eqref{eq:2}. If $n_1\asymp n_2\asymp n_3\asymp n$, then the selected estimator $\check{f}_{\ell^*}$ satisfies
		\begin{align*}
			\EE\|f_0-\check{f}_{\ell^*}\|_{L_2(P_X)}^2 \le C\left[\inf_{\ell \in [L]}\left\{\fA_\ell+\fL_\ell\right\}+\frac{\log L}{n}\right],
		\end{align*}
		where $C$ is a constant independent of $n$ and $L$.
	\end{theorem}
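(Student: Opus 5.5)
Conditional on $Z^{(1)}$, the candidate list $\{\check f_\ell\}_{\ell=1}^L$ is fixed, so the argument reduces to the classical ARM oracle inequality of \citet{yang2001adaptive,yang2004Combining} on the independent blocks $Z^{(2)},Z^{(3)}$, followed by a projection step. The plan has three stages: an aggregation bound for the online-averaged predictor $\tilde f_n$, a triangle-inequality bound for the projected choice $\check f_{\ell^*}$, and an expectation over $Z^{(1)}$ that invokes \eqref{eq:2}.

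\textbf{Aggregation step.} Conditionally on $Z^{(1)}$ and $Z^{(2)}$, the scales $\hat\sigma_\ell$ are fixed and lie in $[\underline\sigma,\bar\sigma]$. The exponential weights are a Bayes mixture of the conditional densities $q_\ell(y\mid x)=\hat\sigma_\ell^{-1}h_{\varepsilon,0}((y-\check f_\ell(x))/\hat\sigma_\ell)$. The standard telescoping argument then bounds the cumulative KL risk of the mixture predictive density: for every $\ell$,
\begin{align*}
\sum_t \EE\,\mathrm{KL}(p_0\,\|\,\hat q_t) \le \log L + n_3\,\EE\,\mathrm{KL}(p_0\,\|\,q_\ell).
\end{align*}
Assumption~\ref{ass:ARM-regularity}, with $|f_0-\check f_\ell|\le A+A_0$ and bounded scales, gives
\begin{align*}
\mathrm{KL}(p_0\,\|\,q_\ell)\le C_3\{(1-\sigma/\hat\sigma_\ell)^2+\|f_0-\check f_\ell\|^2_{L_2(P_X)}/\underline\sigma^2\}.
\end{align*}
In the other direction, the KL divergence of the mixture dominates a constant times the squared Hellinger distance. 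A location-family argument (as in Yang's ARM, using the finite fourth moment and boundedness) lower bounds this by $c\|f_0-\tilde f^t\|^2_{L_2(P_X)}$. Convexity then gives
\begin{align*}
\EE\|f_0-\tilde f_n\|^2 \le C\Big[\min_\ell\{\|f_0-\check f_\ell\|^2+\EE(\hat\sigma_\ell-\sigma)^2\}+\log L/n_3\Big].
\end{align*}

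\textbf{Scale estimation.} I would show that $\EE(\hat\sigma_\ell^2-\sigma^2)^2\lesssim \|f_0-\check f_\ell\|^2+1/n_2$. Expanding $\tilde\sigma_\ell^2-\sigma^2$ gives three pieces: the mean $\|f_0-\check f_\ell\|^2$, a cross term $2\varepsilon(f_0-\check f_\ell)$, and the centered term $\varepsilon^2-\sigma^2$, which needs the fourth moment. Truncation to $[\underline\sigma^2,\bar\sigma^2]$ only reduces the error, because $\sigma^2$ lies in that interval. This is the step I expect to be most delicate: I need the cross terms and the scale error to be absorbed without a factor growing in $L$. Since the bound is needed only for the single comparator $\ell$, no union bound over $L$ is required, and the extra $1/n$ is absorbed in $\log L/n$ because $L\ge2$.

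\textbf{Projection.} Since $\ell^*$ minimizes $\|\check f_\ell-\tilde f_n\|$, for every $\ell$,
\begin{align*}
\|f_0-\check f_{\ell^*}\|\le\|f_0-\tilde f_n\|+\|\tilde f_n-\check f_{\ell^*}\|\le 2\|f_0-\tilde f_n\|+\|f_0-\check f_\ell\|.
\end{align*}
Squaring, taking the minimum over $\ell$, and combining with the aggregation bound gives a conditional bound of order $\min_\ell\|f_0-\check f_\ell\|^2+\log L/n$. The minimum can be moved outside the expectation, because $\EE\min_\ell\le\min_\ell\EE$. Taking expectation over $Z^{(1)}$ and applying \eqref{eq:2} with the uniform bound on the $C_\ell$ then yields the theorem, with $n_1\asymp n_2\asymp n_3\asymp n$ converting the block sizes into $n$.
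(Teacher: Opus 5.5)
Your proposal is correct and follows the paper's proof essentially step for step. You condition on the fitted list, apply the AFTER/ARM KL--Hellinger oracle inequality with uniform prior $1/L$, bound $\EE(\hat\sigma_\ell-\sigma)^2$ by $1/n_2+\|f_0-\check f_\ell\|^2_{L_2(P_X)}$ through the same three-term expansion, and pass to $\check f_{\ell^*}$ via the projection-plus-triangle-inequality argument and \eqref{eq:2}. The only difference is cosmetic: you do the projection comparison pathwise before integrating over $Z^{(1)}$ and use $\EE\min_\ell\le\min_\ell\EE$ once, whereas the paper takes full expectations first and then compares with $\inf_\ell\EE\|\tilde f_n-\check f_\ell\|^2_{L_2(P_X)}$; the two orderings are equivalent.
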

	
	The leading term $\inf_{\ell \in [L]}\left\{\fA_\ell+\fL_{\ell}\right\}$ is the best achievable risk among the candidate gating-induced MoE classes. It balances the approximation power of each gating class with its statistical learning cost. The additive term $(\log L)/n$ is the price of adapting over $L$ candidate classes.
	
	The same result can be expressed in regionwise form.
	
	\begin{corollary}\label{cor:selection(regionwise)}
		Under the conditions of Theorem~\ref{thm:selection(global)}, and with the expert-wise optimal regions defined as in Definition \ref{def:region}, the selected estimator $\check{f}_{\ell^*}$ satisfies
		\begin{align*}
			\EE\|f_0-\check{f}_{\ell^*}\|_{L_2(P_X)}^2 \le C\left[\sum_{r=1}^{\MR}\|f_0-f_r^*\|_{\cX_r}^2+\inf_{\ell \in [L]}\left\{\cL^{(1)}(\cG_\ell)+\fL_{\ell}\right\}+\frac{\log L}{n}\right],
		\end{align*}
		where $\cL^{(1)}(\cG_\ell)$ is defined in \eqref{eq:L1loss} and $C$ is a constant independent of $n$ and $L$.
	\end{corollary}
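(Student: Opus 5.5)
The plan is to derive Corollary~\ref{cor:selection(regionwise)} directly from Theorem~\ref{thm:selection(global)}. We bound the oracle approximation risk $\fA_\ell$ of each candidate class by a regionwise expert term plus the gating loss $\cL^{(1)}(\cG_\ell)$, then substitute this into the infimum. Since shared experts are omitted in this section, $\fA_\ell=\inf_{g\in\cG_\ell}\|f_0-\sum_m g_m f_m^*\|_{L_2(P_X)}^2$. The key claim is that, for every $g\in\cG_\ell$,
\begin{align*}
\Big\|f_0-\sum_{m=1}^{\MR} g_m f_m^*\Big\|_{L_2(P_X)}^2 \le 2\sum_{r=1}^{\MR}\|f_0-f_r^*\|_{\cX_r}^2+2\Big\|\sum_{m=1}^{\MR}(g_m-g_m^*)f_m^*\Big\|_{L_2(P_X)}^2 ,
\end{align*}
where $g^*$ is the oracle assignment map attached to the performance-induced partition of Definition~\ref{def:region}.

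To prove the claim, write $f_0-\sum_m g_m f_m^* = (f_0-\sum_m g_m^* f_m^*) + \sum_m (g_m^*-g_m) f_m^*$ and apply $(a+b)^2\le 2a^2+2b^2$. On $\cX_r$ we have $g^*=e_r$, so $\sum_m g_m^* f_m^* = f_r^*$. Because the $\cX_r$ partition $\cX$, the first piece integrates to exactly $\sum_r\|f_0-f_r^*\|_{\cX_r}^2$.

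For the second piece, use the uniform bound $\|f_m^*\|_\infty\le A$ from Assumption~\ref{ass:bounded-experts} together with Cauchy--Schwarz pointwise:
\[
\Big|\sum_m (g_m-g_m^*)(x)f_m^*(x)\Big|\le \|g(x)-g^*(x)\|_1 A\le A\sqrt{\MR}\|g(x)-g^*(x)\|_2 .
\]
This introduces a factor $\MR$, which would contradict the requirement that $C$ be independent of $\MR$ (the paper's display claims $C$ independent of $n$ and $\MR$). This is the main obstacle. The fix is to exploit the simplex structure instead of Cauchy--Schwarz. Since both $g(x)$ and $e_r$ lie in $\simplex{\MR}$, on $\cX_r$ we have $\|g(x)-e_r\|_1 = 2(1-g_r(x))$. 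We also have
\[
\|g(x)-e_r\|_2^2=(1-g_r)^2+\sum_{m\ne r}g_m^2\ge (1-g_r)^2 .
\]
Hence $\|g-g^*\|_1^2\le 4\|g-g^*\|_2^2$ pointwise, with no dependence on $\MR$. Integrating gives
\[
\Big\|\sum_m (g_m-g_m^*)f_m^*\Big\|_{L_2(P_X)}^2\le 4A^2\|g-g^*\|_{L_2(P_X)}^2 .
\]
Taking the infimum over $g\in\cG_\ell$ then yields
\[
\fA_\ell\le 2\sum_r\|f_0-f_r^*\|_{\cX_r}^2+8A^2\cL^{(1)}(\cG_\ell).
\]

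Finally, substitute this bound into Theorem~\ref{thm:selection(global)}. The expert term $\sum_r\|f_0-f_r^*\|_{\cX_r}^2$ does not depend on $\ell$, so it factors out of the infimum:
\[
\inf_\ell\{\fA_\ell+\fL_\ell\}\le 2\sum_r\|f_0-f_r^*\|_{\cX_r}^2+\max(8A^2,1)\inf_\ell\{\cL^{(1)}(\cG_\ell)+\fL_\ell\}.
\]
Absorbing these constants into $C$, which remains independent of $n$ and $L$, gives the stated regionwise bound.
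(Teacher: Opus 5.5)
Your proof is correct and is essentially the paper's argument. You split $f_0-\sum_m g_m f_m^*$ through the oracle assignment $g^*$ and bound the gating part on each $\cX_r$ by $2A(1-g_r)\le 2A\|e_r-g\|_2$, using the simplex identity instead of Cauchy--Schwarz so that no $\MR$ factor appears. You then insert the resulting bound on $\fA_\ell$ into Theorem~\ref{thm:selection(global)} and pull the $\ell$-free expert term out of the infimum; the paper does exactly this, with the same constant structure.
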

	
	This regionwise inequality separates three components: the approximation error of the population experts on their locally optimal regions, the best achievable gating approximation error among the candidate classes, and the logarithmic cost of selecting the class from data. Thus, the procedure adapts to the unknown geometry of expert performance by choosing a gating class whose approximation complexity and statistical learning cost are well balanced.

	\section{The role of shared experts}\label{sec:shared}
    The previous sections allowed for shared experts as given components, without isolating their statistical contribution. We now examine what is gained by combining a shared expert with routed experts. In contrast to routed experts, shared experts remain active for all inputs and are designed to capture information that is useful across different regions of the input space. This architectural idea is particularly prominent in recent large-scale models such as DeepSeekMoE \citep{dai2024deepseekmoe}, where shared experts are introduced to reduce redundancy among routed experts and to encourage local specialization. Recent statistical work has also begun to analyze the sample-efficiency benefit of shared experts in DeepSeek-type architectures \citep{nguyen2025deepseekmoe}. Our analysis is complementary: instead of studying full parameter convergence in a specified MoE model, we focus on how the presence of a shared component changes the statistical learning problem faced by the routed experts. 

    In this section, we study the statistical role of shared experts from a simplified but revealing perspective. Rather than analyzing the full joint training dynamics of experts and the router, which is highly non-convex, we fix the routing partition and the regional learning rules and compare the estimation problems induced by shared-routed and pure-routed representations. This comparison isolates the statistical effect of introducing an explicit shared component. 
    
    The resulting comparison is learner-specific. The benefit of a shared expert does not require the shared component to be simpler in an absolute sense. Rather, the benefit appears when the common component is better matched to the shared learning rule than to the routed learning rules. In that case, separating common from region-specific structure can reduce the effective complexity of the targets assigned to the routed experts and improve estimation efficiency. 
	
	\subsection{Model and learning rules}
	Let $\cX = \cX_1\cup\cX_2$ be a fixed Top-1 routing partition with $\cX_1 \cap \cX_2 = \emptyset$. Suppose that $Y_i = f_0(X_i) + \varepsilon_i$, where $\EE[\varepsilon_i \mid X_i] = 0$ and $\varepsilon_i$ is conditionally sub-Gaussian with variance proxy $\sigma^2$. We assume that the target regression function admits the common-local decomposition
	\begin{align}
		f_0(x) = f_S(x) + \one_{\cX_1}(x) f_{R,1}(x) + \one_{\cX_2}(x) f_{R,2}(x), \qquad x \in \cX, \label{eq:model_shared}
	\end{align}
	where $f_S \in \cF_S$ is a common component and $f_{R,j} \in \cF_j$, $j = 1,2$, are region-specific residual components. The corresponding shared-representation class is
	\begin{align*}
		\cA_{\mathrm{sh}} := \{f_S + \one_{\cX_1} f_{R,1} + \one_{\cX_2} f_{R,2}:f_S \in \cF_S,\ f_{R,j} \in \cF_j,\ j = 1,2\}.
	\end{align*}
	To understand the role of the shared expert, we compare this representation with a pure-routed representation using the same routing partition. Without an explicit shared component, the branch function on region $\cX_j$ is $	H_j(x) := f_S(x) + f_{R,j}(x)$ for $x \in \cX_j$.
	Thus the pure-routed architecture represents the same target as $f_0(x) = \one_{\cX_1}(x) H_1(x) + \one_{\cX_2}(x) H_2(x),$
	provided that the regional routed class is enlarged to $\cH_j := \cF_S|_{\cX_j} + \cF_j$, where $\cF_S|_{\cX_j}$ denotes the restriction of the shared class to region $\cX_j$. Thus, at the level of approximation, the pure-routed representation can match the shared representation if its regional learners are rich enough to contain the full branch functions. The distinction is statistical rather than purely representational: with a shared expert, the routed learner on region $\cX_j$ estimates only the residual component $f_{R,j}$; without a shared expert, the same routed learner must estimate the full branch function $H_j = f_S + f_{R,j}$.
	
	Let $\cM_S$ denote the learning rule designed for the shared class $\cF_S$, and let $\cM_j$ denote the routed learning rule used on region $\cX_j$. Under the shared representation, write $\cM_{\mathrm{sh}}$ for the combined shared-residual learning rule that uses $\cM_S$, $\cM_1$, and $\cM_2$. Given the full training sample $\cD_n = \{(X_i,Y_i)\}_{i=1}^n$, this combined rule produces $(\hat{f}_S,\hat{f}_{R,1},\hat{f}_{R,2}) = \cM_{\mathrm{sh}}(\cD_n).$
	The combined rule may be implemented, for example, by joint least squares or penalized least squares over the additive shared-residual class. It may also be implemented by a two-stage procedure that first estimates the shared component and then applies the regional learners to residuals, provided that the resulting residualization step is well posed. The resulting shared estimator is
	\begin{align*}
		\hat{f}_{0,\mathrm{sh}}(x) := \hat{f}_S(x) + \one_{\cX_1}(x) \hat{f}_{R,1}(x) + \one_{\cX_2}(x) \hat{f}_{R,2}(x).
	\end{align*}
	
	Under the pure-routed representation, there is no explicit shared expert. The same regional learning rule $\cM_j$ is applied directly to the full branch target $H_j(x)$. Equivalently, the routed learner is now required to estimate an element of the larger branch class $\cH_j = \cF_S|_{\cX_j} + \cF_j$, rather than only the residual class $\cF_j$. Let $\cD_{n,j} = \{(X_i,Y_i):X_i \in \cX_j\}$ denote the regional training sample. The branch estimators are $\hat{H}_j = \cM_j(\cD_{n,j})$, $j\in\{1,2\}$, and the corresponding pure-routed estimator is
	\begin{align*}
		\hat{f}_{0,\mathrm{rt}}(x) := \one_{\cX_1}(x) \hat H_1(x) + \one_{\cX_2}(x) \hat H_2(x).
	\end{align*}
	Thus the comparison keeps the routing partition and the regional learning rules fixed, but changes the target faced by the routed learner: the shared representation replaces the full branch-learning problem by a residual-learning problem on each region. 
	
	This formulation reflects the practical motivation for shared experts. As argued in \citep{dai2024deepseekmoe}, routed experts may otherwise redundantly acquire common knowledge in their parameters; a shared expert can absorb this common information and leave the routed experts to specialize locally. 
	
	\subsection{A learner-specific comparison}
	We now formalize the preceding idea through the risk profiles of the learning rules. For a function $g$ on $\cX_j$, write $\|g\|_j^2 := \EE\{g(X)^2 \mid X \in \cX_j\},$
	so that $\|\cdot\|_j$ denotes the $L_2$ norm under the conditional distribution of $X$ given $X\in\cX_j$. Let $p_j := P_X(\cX_j) > 0$, $j = 1,2$. For a target $h$, let $\cD_{m,j}^{h}$ and $\cD_m^{h}$ denote training samples of size $m$ generated from $Z = h(X)+\xi$, where $\xi$ has conditional mean zero and is conditionally sub-Gaussian with variance proxy $\sigma^2$. Here, $X \sim P_X(\cdot \mid X \in \cX_j)$ for $\cD_{m,j}^{h}$ and $X \sim P_X$ for $\cD_m^{h}$. 
	
	For a local target $h$ on $\cX_j$, define the learner-specific risk of the regional rule $\cM_j$ by
	\begin{align*}
		\cR_j^{\cM}(h;m,\sigma) := \EE_{h,\sigma}\left\|\cM_j(\cD_{m,j}^{h})-h\right\|_j^2.
	\end{align*}
	This is the risk of a fixed learning rule at a specified target, not a minimax risk over a function class. Similarly, for a global target $h$ on $\cX$, define the shared learner risk
	\begin{align*}
		\cR_S^{\cM}(h;m,\sigma) := \EE_{h,\sigma}\left\|\cM_S(\cD_m^h)-h\right\|_{L_2(P_X)}^2.
	\end{align*}
	Recall that the full branch class in the pure-routed representation is $\cH_j = \cF_S|_{\cX_j} + \cF_j$ for $j\in\{1,2\}$.
	
	\begin{assumption}\label{ass:learner-separation}
		There exist positive rate profiles $\varrho_S(m)$, $\varrho_j(m)$ and $\tilde{\varrho}_j(m)$, $j\in\{1,2\}$, where $m$ denotes the number of observations available to the corresponding learning rule, and constants $0 < c < C < \infty$ such that the following conditions hold:
		\begin{enumerate}[label=(\alph*),leftmargin=*,nosep]
			\item $\sup_{h\in\cF_S}\cR_S^{\cM}(h;m,\sigma)\le C\varrho_S(m)$;
			
			\item $\sup_{h\in\cF_j}\cR_j^{\cM}(h;m,\sigma)\le C\varrho_j(m)$ for $j\in\{1,2\}$;
			
			\item for each $j\in\{1,2\}$, there exists a nonempty subclass $\cB_j\subseteq\cH_j$, independent of $m$, such that $\inf_{h\in\cB_j}\cR_j^{\cM}(h;m,\sigma)\ge c\tilde\varrho_j(m)$.
		\end{enumerate} 
		In addition, the rate profiles are nonincreasing and regular under fixed rescaling: for every fixed $0 < a < \infty$, $\varrho(am) \asymp \varrho(m)$
		for each $\varrho \in \{\varrho_S,\varrho_1,\varrho_2,\tilde\varrho_1,\tilde\varrho_2\}$. Finally, writing $p_j=P_X(\cX_j)$, the shared representation estimator satisfies
		\begin{align}
			\sup_{f_0 \in \cA_{\mathrm{sh}}} \EE_{f_0,\sigma}\left\|\hat{f}_{0,\mathrm{sh}}-f_0\right\|^2_{L_2(P_X)} \le C\{\varrho_S(n)+p_1\varrho_1(np_1)+p_2\varrho_2(np_2)\}, \label{eq:shared_upper}
		\end{align}
	\end{assumption}

 	Assumption~\ref{ass:learner-separation} compares the performance of fixed learning rules across different targets. In particular, the same regional learner $\cM_j$ may perform well on residual targets in $\cF_j$, but poorly on full branch targets in $\cH_j$, either because the latter are harder to estimate or because they are less compatible with the learner's approximation class.
    	
    The bound in \eqref{eq:shared_upper} requires a joint estimator of the shared and regional components to attain the displayed rate. A naive sequential procedure that first regresses $Y$ on $\cF_S$ need not recover $f_S$, since the regional residuals may have nonzero projection onto $\cF_S$. Such a sequential interpretation therefore requires additional normalization conditions, such as regional orthogonality or residualization. Alternatively, the shared upper bound \eqref{eq:shared_upper} can be justified directly by a joint procedure, for example penalized least squares over the additive class $\left\{s + \one_{\cX_1} r_1 + \one_{\cX_2} r_2:s \in \cF_S,\ r_j \in \cF_j\right\}.$
    Under standard oracle inequalities for such sieve or penalized estimators, the risk decomposes at the scale $\varrho_S(n)+p_1\varrho_1(np_1)+p_2\varrho_2(np_2)$; see Supplement Section \ref{subsec:supp-sieve-justification-shared-upper-bound} for details. We use this high-level formulation to keep the comparison independent of a particular implementation, such as sieve estimation or a specific orthogonalization scheme.
    
    The preceding assumption leads to the following generic rate comparison.
    
    \begin{proposition}\label{prop:procedure-shared}
    	Assume the routing partition is known and Assumption \ref{ass:learner-separation} holds. Suppose also that, for each $j = 1,2$, the hard subclass $\cB_j \subseteq \cH_j$ in Assumption \ref{ass:learner-separation} is compatible with the shared representation, in the sense that its elements can arise as branch functions $H_j = f_S|_{\cX_j}+f_{R,j}$ for some $f_0 \in \cA_{\mathrm{sh}}$. Then, for all sufficiently large $n$, there exist constants $0 < c < C < \infty$ such that the shared representation satisfies
    	\begin{align*}
    		\sup_{f_0 \in \cA_{\mathrm{sh}}} \EE_{f_0,\sigma}\left\|\hat{f}_{0,\mathrm{sh}}-f_0\right\|_{L_2(P_X)}^2 \le C\{\varrho_S(n)+p_1\varrho_1(np_1)+p_2\varrho_2(np_2)\},
    	\end{align*}
    	whereas the pure-routed architecture trained with the same regional learners satisfies
    	\begin{align*}
    		\sup_{f_0 \in \cA_{\mathrm{sh}}} \EE_{f_0,\sigma}\left\|\hat{f}_{0,\mathrm{rt}}-f_0\right\|_{L_2(P_X)}^2 \ge c\{p_1\tilde\varrho_1(np_1) \vee p_2\tilde\varrho_2(np_2)\}.
    	\end{align*}
    	Consequently, if
    	\begin{align*}
    		\varrho_S(n)+p_1\varrho_1(np_1)+p_2\varrho_2(np_2) = o\{p_1\tilde\varrho_1(np_1) \vee p_2\tilde\varrho_2(np_2)\},
    	\end{align*}
    	then the shared architecture has asymptotically smaller worst-case risk for this fixed routed training procedure.
    \end{proposition}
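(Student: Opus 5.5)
The upper bound is immediate: it is exactly \eqref{eq:shared_upper} in Assumption~\ref{ass:learner-separation}, so no further work is needed there. The substance is the lower bound for the pure-routed estimator, which I would obtain by restricting the supremum over $\cA_{\mathrm{sh}}$ to targets whose branch functions lie in the hard subclasses $\cB_j$, and then conditioning on the random regional sample sizes.

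\textbf{Decomposing the risk.} Since $\cX_1,\cX_2$ are disjoint,
\begin{align*}
\|\hat f_{0,\mathrm{rt}}-f_0\|_{L_2(P_X)}^2 = p_1\|\hat H_1-H_1\|_1^2+p_2\|\hat H_2-H_2\|_2^2 .
\end{align*}
Both terms are nonnegative, so the risk is at least $p_j\EE\|\hat H_j-H_j\|_j^2$ for each $j$. By the compatibility hypothesis, for any $h\in\cB_j$ there is an $f_0\in\cA_{\mathrm{sh}}$ whose $j$-th branch equals $h$. Hence the worst-case risk over $\cA_{\mathrm{sh}}$ is at least $p_j\inf_{h\in\cB_j}\EE\|\cM_j(\cD_{n,j})-h\|_j^2$ for each $j$. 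Taking the larger of $j=1,2$ gives the maximum in the statement.

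\textbf{Conditioning on the regional sample size.} This is the main technical step. Let $N_j=\#\{i:X_i\in\cX_j\}\sim\mathrm{Bin}(n,p_j)$. Conditional on $N_j=m$, the regional sample $\cD_{n,j}$ is distributed as $\cD_{m,j}^{h}$: the covariates are i.i.d.\ from $P_X(\cdot\mid X\in\cX_j)$, and the noise has conditional mean zero and is sub-Gaussian with the same proxy. Therefore
\begin{align*}
\EE\|\hat H_j-h\|_j^2=\sum_m \PP(N_j=m)\,\cR_j^{\cM}(h;m,\sigma)\ge \PP(N_j\le 2np_j)\,c\,\tilde\varrho_j(2np_j),
\end{align*}
using Assumption~\ref{ass:learner-separation}(c) and the fact that $\tilde\varrho_j$ is nonincreasing. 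The case $m=0$ requires a convention for the learner; I would either treat the risk as at least the bound anyway or note that this event has exponentially small probability. By Chernoff (or Markov), $\PP(N_j\le 2np_j)\ge 1/2$ for large $n$. Rescaling regularity then gives $\tilde\varrho_j(2np_j)\asymp\tilde\varrho_j(np_j)$, which yields the lower bound $c\,p_j\tilde\varrho_j(np_j)$.

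A subtle point is that the infimum over $\cB_j$ must be taken inside the expectation over $N_j$. Assumption (c) holds uniformly in $h$ for each $m$, so for every fixed $h\in\cB_j$ the bound holds. Because the lower bound does not depend on $h$, the infimum passes through.

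\textbf{Concluding the comparison.} The final claim follows by dividing the upper bound by the lower bound: under the stated $o(\cdot)$ condition, the ratio tends to zero. The shared architecture therefore has asymptotically smaller worst-case risk for $n$ large.
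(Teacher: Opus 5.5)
Your proposal is correct and follows essentially the same route as the paper. The upper bound is read off from the assumed shared bound. The lower bound comes from the disjoint regional decomposition, conditioning on the regional counts $N_j$, realizing a hard branch from $\cB_j$ through the compatibility hypothesis, and combining concentration of $N_j$ with rescaling regularity of $\tilde\varrho_j$.

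The only difference is minor. You work on the one-sided event $\{1\le N_j\le 2np_j\}$ and use that $\tilde\varrho_j$ is nonincreasing, so Markov's inequality already suffices. The paper instead uses the two-sided event $\{np_j/2\le N_j\le 2np_j\}$ with a Chernoff bound.
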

    
    \begin{remark}
    	Proposition~\ref{prop:procedure-shared} compares the shared-routed and pure-routed representations under the same regional learning rules, with residual targets in the shared-routed case and full branch targets in the pure-routed case. The latter may be statistically harder, or may simply be less well approximated by the chosen regional learner. This comparison is motivated by practical MoE training, where routed experts are typically trained under the same optimization pipeline rather than by oracle procedures tailored to each induced branch class. In special cases, for instance when $\cH_j$ is finite dimensional and well matched to the routed learner, the pure-routed minimax rate over $\cH_j$ may match the shared minimax rate up to constants.
    \end{remark}
    
    \subsection{An illustrative example}
   	We now consider a setting in which the regional learning rules estimate the residual components more efficiently than the full branch functions. The shared component is a linear trend, whereas the routed residuals are finite periodic Fourier signals. A routed learner based only on periodic Fourier sieves can estimate the residuals efficiently after the shared trend has been extracted, but it is poorly matched to the full branch functions because the linear trend is nonperiodic in the local coordinates. For simplicity, both regions use the same Fourier basis. The same learner-specific mechanism can also arise with region-specific bases when each regional learning rule estimates its residual target more efficiently than the corresponding full branch target.
    
    \begin{example}[A Fourier-sieve illustration]\label{ex:fourier-x-sin-cos}
    	Let $\cX = [0,1]$, $X\sim\Unif[0,1]$, $\cX_1 = [0,1/2]$, $\cX_2 = (1/2,1]$, and $p_1 = p_2 = 1/2$. For $x \in \cX_j$, define the local coordinate as $t_1(x) = 2x$, $t_2(x) = 2x-1$,
    	so that $t_j(X) \sim \Unif[0,1]$ conditional on $X \in \cX_j$. Consider the shared-plus-routed target
    	\begin{align*}
    		f_0(x) = f_S(x) + \one_{\cX_1}(x) f_{R,1}(x) + \one_{\cX_2}(x) f_{R,2}(x),
    	\end{align*}
    	where $f_S(x) = \gamma x$ with $\gamma_0 \le |\gamma| \le B$,
    	for fixed constants $0 < \gamma_0 \le B < \infty$. The routed residuals are finite periodic Fourier signals: for a fixed integer $K_0 > 0$,
    	\begin{align*}
    		f_{R,j}(x) = \sum_{k=1}^{K_0}\{a_{j,k}\sin(2k\pi t_j(x)) + b_{j,k}\cos(2k\pi t_j(x))\}, \qquad \max_{1 \le k \le K_0}\{|a_{j,k}|,|b_{j,k}|\} \le B.
    	\end{align*}
    	Assume that the noise $\varepsilon \sim N(0,\sigma^2)$, independent of $X$.
    	
    	Let the routed learner be a periodic Fourier-sieve learner on $t \in [0,1]$, based on $\cS_K = \operatorname{span}\{1,\sin(2\pi kt),\cos(2\pi kt):1 \le k \le K\}.$
    	Under the shared representation, use a joint sieve least-squares estimator over the additive class
    	\begin{align*}
    		\cA_{\mathrm{sh},K_0} = \left\{\alpha x + \one_{\cX_1}(x) r_1(t_1(x)) + \one_{\cX_2}(x) r_2(t_2(x)):\alpha \in \RR,\ r_j \in \cS_{K_0}\right\}.
    	\end{align*}
    	The true function $f_0$ belongs to this finite-dimensional class. Since $K_0$ is fixed, standard least-squares theory gives $\EE\|\hat{f}_{0,\mathrm{sh}}-f_0\|_{L_2(P_X)}^2 \lesssim \sigma^2/n.$
    	Thus, in the notation of Assumption~\ref{ass:learner-separation}, for a sample of size $m$, the corresponding rate profiles can be taken as $\varrho_S(m) \asymp\sigma^2/m$ and $\varrho_j(m) \asymp \sigma^2/m$ for $j\in\{1,2\}$.
    	Consequently,
    	\begin{align*}
    		\varrho_S(n) + p_1\varrho_1(np_1) + p_2\varrho_2(np_2) \asymp \frac{\sigma^2}{n},
    	\end{align*}
    	which verifies the shared upper-rate condition for the joint Fourier-sieve estimator.
    	
    	Now consider the pure-routed representation trained with the same regional Fourier-sieve learner. On region $\cX_j$, the routed learner must estimate the full branch function $H_j = f_S(x)+f_{R,j}(x)$. In the local coordinate, the shared component becomes a nonperiodic linear ramp. Specifically, $f_S(x) = \gamma t_1(x)/2$ on $\cX_1$ and $f_S(x) = \gamma\{t_2(x)+1\}/2$ on $\cX_2$. The constant part is contained in $\cS_K$, but the ramp $t \mapsto t$ is not periodic. Its squared approximation tail under the periodic Fourier basis is of order $K^{-1}$. Therefore, even if the sieve dimension $K$ is chosen optimally, the regional Fourier learner faces the bias-variance tradeoff
    	\begin{align*}
    		\cR_j^{\cM}(H_j;m,\sigma) \gtrsim \inf_{K \ge 1}\left\{\gamma_0^2 K^{-1} + \frac{\sigma^2 K}{m}\right\}.
    	\end{align*}
    	The first term is the Fourier approximation tail of the nonperiodic linear ramp, and the second term is the variance cost of estimating $K$ Fourier modes. Optimizing over $K$ yields $\cR_j^{\cM}(H_j;m,\sigma) \gtrsim \gamma_0\sigma/\sqrt{m}$. Therefore, for balanced regions, $\EE\|\hat{f}_{0,\mathrm{rt}}-f_0\|_{L_2(P_X)}^2 \gtrsim \gamma_0\sigma/\sqrt{n}.$
    \end{example}

    This example illustrates a learner-specific complexity separation. After the linear common trend is extracted by the shared expert, the routed experts only need to estimate finite Fourier residuals and achieve the parametric rate $n^{-1}$. In contrast, a pure-routed Fourier learner must estimate the full branch function $H_j = f_S+f_{R,j}$, whose approximation by periodic Fourier sieves is limited by the nonperiodic linear component. This yields the slower learner-specific rate $n^{-1/2}$.
   
    The rate gap in this example arises from the periodic Fourier basis used by the routed learner. A pure-routed learner whose dictionary contains the linear function $x$, or whose basis is adapted to $\cH_j$, can avoid the corresponding approximation error.
    
    \begin{remark}
    	A complementary form of separation arises when the shared component is complex but globally smooth, while the routed residuals are simple discontinuities. Suppose $f_0(x) = h(x) + \sum_{r=1}^{\Mzero} a_r \one_{\cX_r}(x)$, where $h$ is a smooth nonparametric function and the routed residual is piecewise constant over the routing regions. If the shared learner is well suited to estimating $h$, while the routed learners are simple piecewise-constant or low-order local learners, then the shared representation leaves only the discontinuous regional offsets to the routed learners. In contrast, a pure-routed learner must approximate $h+a_r$ separately on each region. When $h$ is not well represented by the routed learner, the pure-routed estimator may suffer a large approximation bias, even if the residual offsets themselves are easy to learn. Figure \ref{fig:fourier-simulation} illustrates the two complementary settings considered in the preceding example and this remark.  
    \end{remark}
    
    \begin{figure}[t]
    	\centering
    	
    	\begin{subfigure}[b]{0.48\textwidth}
    		\centering
    		\includegraphics[width=\textwidth]{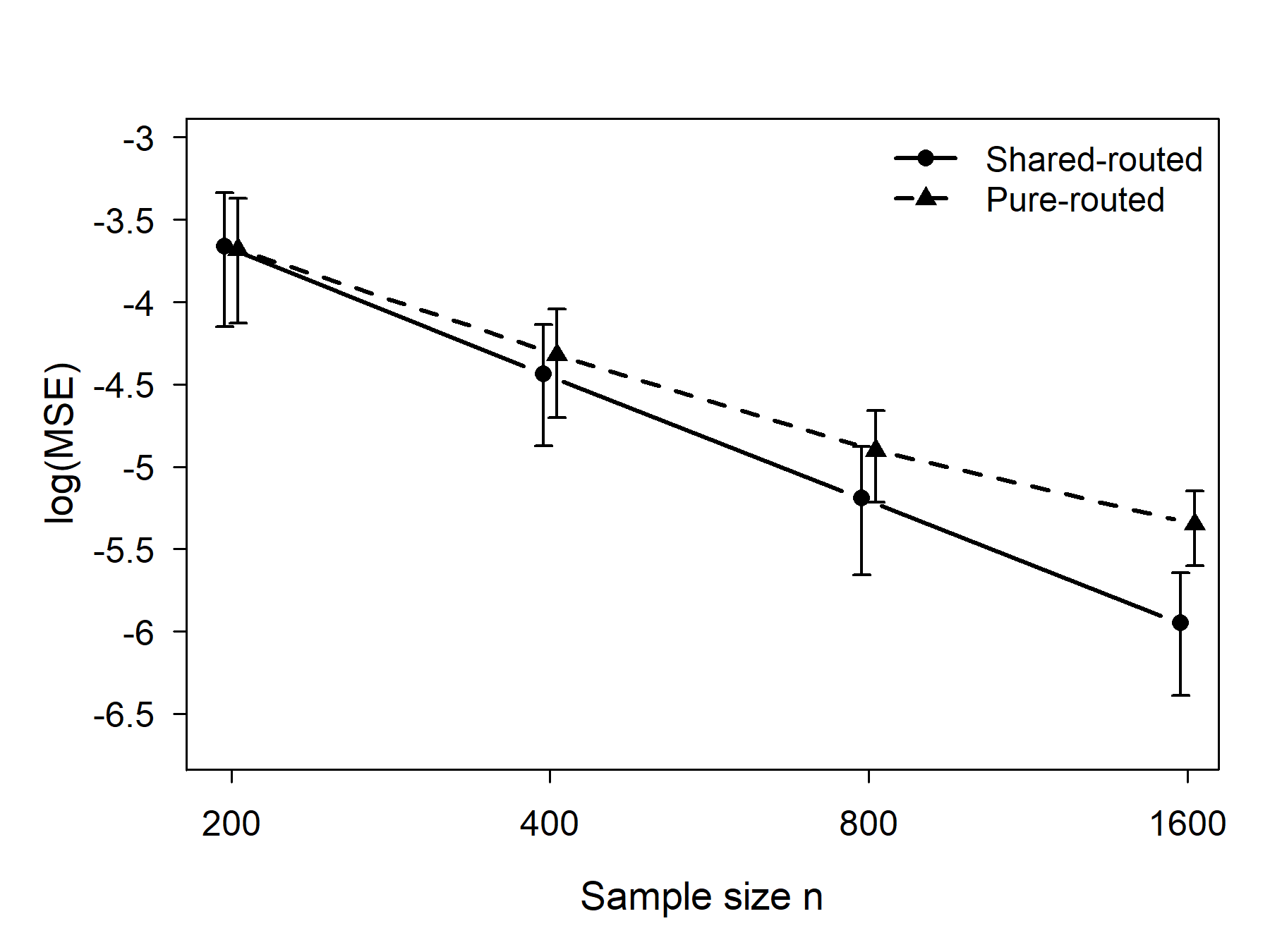}
    		\caption{Simple shared component, complex routed components}
    		\label{fig:fourier-mse}
    	\end{subfigure}
    	\hfill
    	\begin{subfigure}[b]{0.48\textwidth}
    		\centering
    		\includegraphics[width=\textwidth]{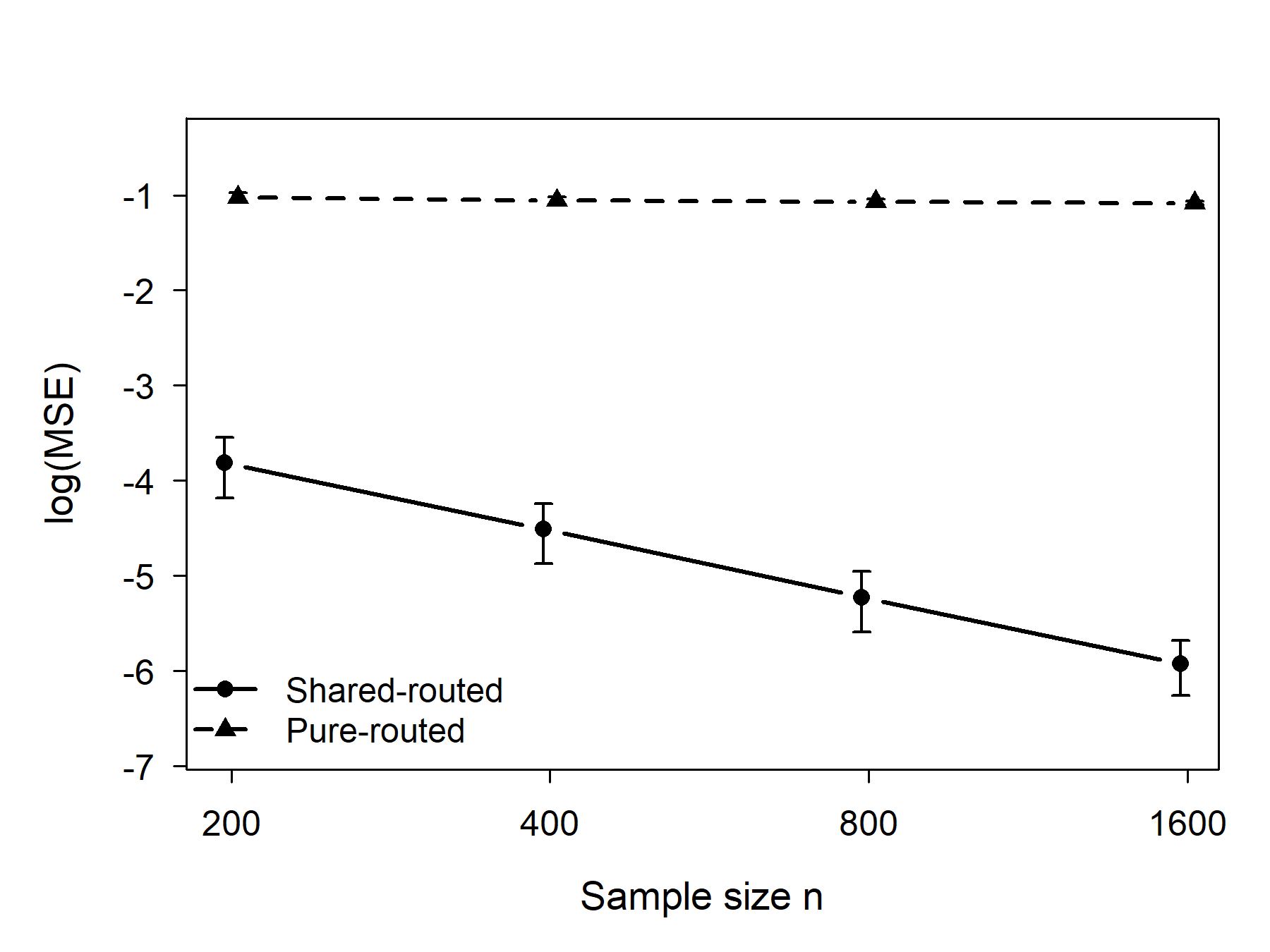}
    		\caption{Complex shared component, simple routed components}
    		\label{fig:fourier-cerr}
    	\end{subfigure}
    	
    	\caption{
    		Prediction MSE for the shared-routed and pure-routed estimators on log-log axes. Panel (a) considers a simple shared component and complex routed components, while Panel (b) considers a complex shared component and simple routed components. Error bars represent one standard deviation across Monte Carlo replications. The shared-routed estimator has lower prediction MSE across the displayed sample sizes in both settings. See Supplement Section \ref{subsec:supp-mse-comparison-shared-expert} for implementation details.
    	}
    	\label{fig:fourier-simulation}
    \end{figure}

    Together, the two settings show that the gain from a shared expert can arise through either a slower estimation rate or approximation bias for the pure-routed learner, depending on how the induced targets interact with the local learning rules. 
    
	\section{Conclusions and outlook}
	\subsection{Main conclusions}
	
	This paper develops a statistical perspective on MoE architectures. The central message is that the statistical role of MoE stems from how the gating function makes the combination of experts depend on the input, thereby changing the approximation-estimation-computation tradeoff. From this viewpoint, dense routing, sparse routing, adaptive gating classes, and shared experts are different mechanisms for allocating modeling capacity locally across the input space.
	
	The analysis highlights three complementary mechanisms. First, sparse Top-$K$ routing admits an oracle inequality with a router-learning cost of the same order as dense softmax gating, up to a boundary-stability logarithmic term, while reducing per-input expert-evaluation cost. Second, the choice of gating class is a geometric approximation problem: linear, quadratic, and kernel-based gates encode different assumptions about the geometry of the expert-induced partition of the input space. Third, shared experts can change the targets faced by routed learners, replacing full branch-learning problems by residual-learning problems when the common component is better matched to the shared learning rule. Taken together, these results suggest that the statistical value of MoE architectures lies not only in using more experts, but in matching routing, sparsity, and shared structure to the local complexity of the target.
	
	These conclusions also provide a statistical interpretation of several heuristics in large-scale MoE practice. Increasing the total number of routed experts is useful only insofar as it creates a richer library of locally accurate predictors; otherwise the router pays a larger estimation and search cost without a corresponding approximation gain. Activating more than one routed expert per input is useful when the locally best predictor is not a single expert but a nondegenerate mixture of several experts. Shared experts are useful when a common component is repeatedly needed across many regions, because they prevent routed experts from redundantly learning the same global structure. Thus, the empirical success of fine-grained experts, Top-$K$ routing with $K>1$, and shared-routed designs can be viewed as different ways of improving the local approximation-estimation-computation tradeoff.

	\subsection{Limitations and open questions}
	
	The theory developed here is intentionally modular. It isolates several statistical mechanisms: localized aggregation, sparse activation, geometry-adaptive gating, and shared-residual decomposition. This modular view clarifies which architectural component is responsible for which statistical effect and the conditions under which each effect should be expected: Top-$K$ routing requires control of boundary-switching events; flexible gating classes require enough data to offset their larger estimation cost; and shared experts improve efficiency only when the induced residual learning problem is better matched to the routed learners.
	
	Several limitations remain. First, although the dense and sparse oracle inequalities allow the experts to evolve predictably along the training trajectory, our analysis relies on high-level expert-error control and boundedness conditions and does not characterize the underlying joint gradient-based expert-router dynamics. Other approximation and shared-expert results condition on population experts, oracle partitions, or fixed learning rules. Second, some arguments use oracle or performance-induced partitions to describe the geometry of expert specialization. These partitions are useful for interpretation and approximation analysis, but they are not directly observed in practice. Third, the framework abstracts away from engineering features that are central in large-scale MoE systems, including load balancing, token capacity constraints, optimization dynamics, and interactions between routing and representation learning.
	
	These limitations point to several open questions. A first question is how to empirically validate the local-performance premise underlying the analysis. The oracle or performance-induced partitions used in the theory are not meant to be directly observed objects, but they suggest a testable diagnostic question: do trained experts, or trained mixtures of experts, exhibit systematic predictive advantages on different parts of the input space? Answering this question would require data-driven measures of local expert performance and methods for checking whether the learned router aligns with regions where the selected experts are genuinely stronger. This would help distinguish meaningful expert specialization from purely architectural sparsity.
	
	A second question concerns end-to-end expert-router co-adaptation. The modular perspective adopted here isolates the statistical role of routing and localized aggregation, but it leaves open how these mechanisms interact with gradient-based training, representation learning, router regularization, load balancing, and token capacity constraints in large-scale MoE systems. A more complete theory should explain not only how a given collection of experts should be combined, but also when training dynamics produce experts that are diverse, stable, and locally useful.
	
	A third direction is to develop data-driven procedures for choosing the composition of the expert pool under a given computational or model-capacity budget. Our analysis shows that increasing the number of routed experts can improve local approximation, but in practice this expansion cannot be separated from the resulting costs in expert capacity, router learning and system coordination. This naturally leads to the question of expert granularity: how finely should a fixed modeling budget be divided across experts in order to achieve the best predictive performance? The answer may depend jointly on the number and capacity of routed experts, the allocation to shared experts, and the complexity of the routing mechanism. Developing principled procedures for selecting such budget-constrained expert granularity remains an important open problem.

	Finally, extending the approximation and risk analyses to deep MoE architectures with compositional routing across layers and high-dimensional input spaces, establishing minimax lower bounds, and incorporating computational constraints into the statistical analysis remain important steps toward a more complete theory of modern MoE architectures. 
	
	\subsection{A statistical outlook for future MoE architectures}
	
	The preceding analysis suggests several architecture-level directions that may be worth exploring in future MoE systems.
	
	\paragraph{Specialization-aware routing under system constraints.}
	A central message of our analysis is that routing should preserve and exploit local predictive advantages among experts, thereby maintaining statistically meaningful specialization. In large-scale systems, this objective must coexist with load, capacity, and communication constraints. Recently, DeepSeek-V3 \citep{liu2024deepseek} and Kimi K3 \citep{team2026kimi} highlight the tension between expert specialization and load control in large-scale MoE systems. This motivates designing routing mechanisms that maintain acceptable system balance while disturbing predictive specialization as little as possible. More broadly, predictive specialization, routing stability, and system feasibility may be better treated as jointly designed objectives rather than separate considerations.
	
	\paragraph{Input-adaptive expert activation.}
	Conditional on a given expert pool, Section~\ref{sec:oracle_topK} shows that the statistically appropriate level of expert activation depends on the local prediction regime: a dominant specialist may be sufficient for some inputs, whereas others benefit from combining several experts. This suggests going beyond the choice of a single common value of $K$. Rather than fixing the same Top-$K$ rule for all inputs, the router could adapt the number of active experts to local predictive structure, activating fewer experts where one or a few specialists suffice and more experts where prediction is intrinsically compositional. Related token- and sequence-adaptive routing mechanisms have already begun to explore variable expert activation \citep{zeng2024adamoe,wen2025route}. Developing statistically principled procedures for such input-dependent activation, including how the activation level should be selected from data, is therefore a natural direction. Such input-dependent expert allocation may be viewed as a first step toward a broader framework of adaptive expert specialization.
	
	\paragraph{Multiscale shared--routed architectures.}
	Section~\ref{sec:shared} suggests that shared experts are useful when they absorb predictive structure that would otherwise be repeatedly learned by routed experts. The same principle may extend beyond the binary distinction between globally shared and fully routed components. Classical hierarchical MoE architectures \citep{jordan1994hierarchical} illustrate how experts can be organized at intermediate levels, while more recent constructions such as CartesianMoE \citep{su2025cartesianmoe} introduce structured forms of sharing across expert components. We therefore envision shared--routed architectures with multiple scales of reuse, where some components are shared globally, others within groups of related inputs, and the remaining components are routed locally. The scope of each shared component could therefore adapt to the set of inputs over which it captures common predictive structure.
	
	\paragraph{Boundary-aware counterfactual expert discovery.}
	Section~\ref{sec:adaptation} defines meaningful specialization through local predictive advantage, but standard sparse training observes only the executed route and may therefore reinforce early routing decisions without sufficient information about alternatives. Counterfactual routing and exploration have recently been investigated as ways to provide such information \citep{fedus2022switch,kim2025exploration,yoon2026experts}. Our analysis of routing geometry and active-set stability motivates investigating whether this additional information may be particularly valuable near routing boundaries. Rather than exploring alternative experts uniformly, a limited exploration budget could be concentrated on inputs with unstable expert assignments, for example near routing boundaries where small perturbations may change the active set. Such boundary-aware exploration may reduce specialization driven by early routing noise without increasing the number of experts activated at inference time.
	
	Taken together, these directions suggest that future MoE design may benefit from making specialization, expert granularity, expert activation, sharing, and exploration increasingly adaptive to the local prediction structure, while respecting the computational constraints that make sparse MoE architectures attractive in the first place.
	
	\renewcommand\refname{References}
	\bibliographystyle{apalike}
	\bibliography{RefList}

	\clearpage

	\makeatletter
	\let\addtocontents\arxiv@originaladdtocontents
	\let\label\arxiv@originallabel
	\let\ltx@label\arxiv@originalmathlabel
	\let\arxiv@supplementlabel\label
	\let\arxiv@supplementmathlabel\ltx@label
	\renewcommand{\label}[1]{\def\arxiv@currentlabelname{#1}\def\arxiv@duplicatelabelname{eqn:softmax_practical_oracle}\ifx\arxiv@currentlabelname\arxiv@duplicatelabelname\arxiv@supplementlabel{supp-#1}\else\arxiv@supplementlabel{#1}\fi}
	\renewcommand{\ltx@label}[1]{\def\arxiv@currentlabelname{#1}\def\arxiv@duplicatelabelname{eqn:softmax_practical_oracle}\ifx\arxiv@currentlabelname\arxiv@duplicatelabelname\arxiv@supplementmathlabel{supp-#1}\else\arxiv@supplementmathlabel{#1}\fi}
	\makeatother
	\renewcommand{\baselinestretch}{}
	\setcounter{algocf}{0}
	\renewcommand{\thealgocf}{\thesection.\arabic{algocf}}
	\renewcommand\theassumption{\thetheorem}

	\def\spacingset#1{\renewcommand{\baselinestretch}{#1}\small\normalsize} \spacingset{1}

	\bigskip
	\bigskip
	\bigskip
	\begin{center}
		{\LARGE\bf Supplementary Materials for ``Towards a Statistical Understanding of Mixture-of-Experts''}
	\end{center}
	\medskip
	
	\bigskip
	\spacingset{1.45}
	This supplementary file contains the proofs of lemmas, propositions and theorems in the main article, as well as simulation details. This file is organized as follows. Appendix~\ref{sec:supp-proof-section-2} provides the proofs for Section~\ref{main-sec:local_averaging}, including the approximation properties, metric entropy and the comparison between global and localized routing. Appendix~\ref{sec:supp-proof-section-3} proves the oracle risk bounds for dense softmax gating in Section~\ref{main-sec:oracle_softmax}. Appendix~\ref{sec:supp-proof-section-4} establishes the oracle results for sparse Top-$K$ gating in Section~\ref{main-sec:oracle_topK}, together with the corresponding Top-1 results. Appendix~\ref{sec:supp-proof-section-5} contains the gating-approximation decomposition and the gating-class selection results in Section~\ref{main-sec:adaptation}. Appendix~\ref{sec:supp-proof-section-6} proves the shared-expert results in Section~\ref{main-sec:shared}. Appendix~\ref{sec:supp-additional-materials} collects additional technical results on empirical risk minimization, cross-entropy-based parameter estimation, and dense-softmax upper bounds. Finally, Appendix~\ref{sec:supp-simulation-details} provides the implementation details for the simulation studies.

	\textbf{Additional Notation}. We will use the following notations in this file additional to the main article. $\mathbf{1}$ denotes the all-ones vector, with its dimension understood from context. For a vector $v=(v_1,\ldots,v_m)^\top$ and $1\le p<\infty$, we denote its $p$-norm by $\|v\|_p=\left(\sum_{j=1}^m|v_j|^p\right)^{1/p}$, while $\|v\|_\infty=\max_{1\le j\le m}|v_j|$. For a matrix $A$, $\|A\|_{\mathrm{op}}=\sup_{\|v\|_2=1}\|Av\|_2$ denotes its operator norm induced by the Euclidean norm. For $x\in\RR^d$ and $r>0$, $B_d(x,r)=\{y\in\RR^d:\|y-x\|_2\le r\}$ denotes the closed Euclidean ball centered at $x$ with radius $r$. For a finite set $\mathcal S$, $\#\mathcal S$ denotes its cardinality. Finally, for a Lebesgue-measurable set $\cA\subseteq\RR^d$, $\mathrm{Vol}(\cA)$ denotes its volume with respect to the Lebesgue measure.

	\tableofcontents
	\clearpage
	
	\appendix
	\section{Proof of Section \ref{main-sec:local_averaging}}
	\label{sec:supp-proof-section-2}
	
	\subsection{Proof of Proposition \ref{main-prop:topk_density}}
	\label{subsec:supp-proof-prop-2-2}
	
	\begin{proof}[Proof of Proposition \ref{main-prop:topk_density}]
		Fix $\varepsilon > 0$ and $h \in C_R(\cX)$. Since $\cX=[0,1]^d$ is compact, $h$ is uniformly continuous. Hence there exists $q \in \mathbb{N}_+$ such that
		\begin{align*}
			|h(x)-h(y)| < \varepsilon \qquad \text{whenever} \qquad \|x-y\|_2 \le \frac{\sqrt{d}}{q}.
		\end{align*}
		Partition $[0,1]^d$ into the $q^d$ cubes
		\begin{align*}
			Q_\nu = \left[\frac{\nu_1-1}{q},\frac{\nu_1}{q}\right] \times \left[\frac{\nu_2-1}{q},\frac{\nu_2}{q}\right] \times \cdots \times \left[\frac{\nu_d-1}{q},\frac{\nu_d}{q}\right], \qquad \nu=(\nu_1,\dots,\nu_d)\in\{1,\dots,q\}^{d},
		\end{align*}
		and let
		\begin{align*}
			m_\nu = \left(\frac{2\nu_1-1}{2q},\dots,\frac{2\nu_d-1}{2q}\right)
		\end{align*}
		be the center of $Q_\nu$. Set
		\begin{align*}
			n=q^d, \qquad M=Kn, \qquad \lambda=\frac{C_1}{\max\{2,d\}}.
		\end{align*}
		For each $\nu \in \{1,\dots,q\}^{d}$ and each $r \in [K]$, define the constant expert
		\begin{align*}
			f_{\nu,r}(x) \equiv h(m_\nu)
		\end{align*}
		and the linear score
		\begin{align*}
			u_{\nu,r}(x) = 2\lambda m_\nu^\top x-\lambda\|m_\nu\|_2^2.
		\end{align*}
		These scores belong to the parameter box $\Theta_{\lin}(C_1)$. Indeed, for every coordinate,
		\begin{align*}
			|2\lambda m_{\nu,j}| \le 2\lambda \le C_1,
		\end{align*}
		and the intercept is bounded by
		\begin{align*}
			\lambda\|m_\nu\|_2^2 \le \lambda d \le C_1.
		\end{align*}
		
		For fixed $x \in [0,1]^d$, maximizing $u_{\nu,r}(x)$ over $\nu$ is equivalent to maximizing $2 m_\nu^\top x-\|m_\nu\|_2^2$, or equivalently, to minimizing $\|x - m_\nu \|_2^2$. Thus the largest scores are attained by the grid center or centers closest to $x$. Since each center has $K$ identical copies, the Top-$K$ rule selects copies attached to nearest centers. If the nearest center is unique, it selects exactly the $K$ copies attached to that center. The resulting MoE output is therefore
		\begin{align*}
			\ell(x) = \sum_{\nu,r} g_{\nu,r}^{(K)}(x) f_{\nu,r}(x), \qquad g_{\nu,r}^{(K)}(x) = \frac{\exp\{u_{\nu,r}(x)\}\one_{\{(\nu,r)\in\cT_K(x)\}}}{\sum_{(\nu',r')\in\cT_K(x)}\exp\{u_{\nu',r'}(x)\}},
		\end{align*}
		where $\ell(x)$ is a convex combination of values $h(m_\nu)$ at nearest grid centers. Each such center belongs to a cube whose closure contains $x$. Consequently,
		\begin{align*}
			\|x-m_\nu\|_2 \le \frac{\sqrt{d}}{2q}<\frac{\sqrt{d}}{q},
		\end{align*}
		and therefore
		\begin{align*}
			|h(x)-h(m_\nu)|<\varepsilon.
		\end{align*}
		Since $\ell(x)$ is a convex combination of such values $h(m_\nu)$, it follows that $|\ell(x)-h(x)|<\varepsilon$ for all $x\in\cX$. Thus
		\begin{align*}
			\|\ell-h\|_\infty<\varepsilon.
		\end{align*}
		Because $\varepsilon>0$ was arbitrary, $\cA_{\loc}^{\TopK}$ is dense in $C_R(\cX)$.
	\end{proof}

	\subsection{Proof of Proposition \ref{main-prop:approximation_ability}}
	\label{subsec:supp-proof-prop-2-3}
	
	\begin{proof}[Proof of Proposition \ref{main-prop:approximation_ability}]
		We first analyze the global class. Since $\cA_{\glob}^R=\cF_{\mathrm c}^R$, every $f \in \cA_{\glob}^R$ is a constant $f(x)\equiv c$ with $|c| \le R$. For a fixed $h \in \cF_b^{R,L}$, the midpoint of the range of $h$ belongs to $[-R,R]$, and hence the best constant approximation error equals half of the oscillation:
		\begin{align*}
			\inf_{f \in \cA_{\glob}^R}\|f-h\|_\infty = \frac{1}{2}\left(\sup_{x \in \cX}h(x)-\inf_{x \in \cX}h(x)\right).
		\end{align*}
		Because $h$ is $L$-Lipschitz and $\cX=[0,1]^d$ has diameter $\sqrt{d}$, the oscillation is at most $L\sqrt{d}$. Also $|h|\le R$, so the oscillation is at most $2R$. Therefore
		\begin{align*}
			\inf_{f \in \cA_{\glob}^R}\|f-h\|_\infty \le \min\left\{R,\frac{L\sqrt{d}}{2}\right\}.
		\end{align*}
		To show that this bound is sharp, let
		\begin{align*}
			a := \min\left\{R,\frac{L\sqrt{d}}{2}\right\}, \qquad h_a(x) = \frac{a}{d}\sum_{j=1}^d (2x_j-1).
		\end{align*}
		Then $\|h_a\|_\infty\le a\le R$, and
		\begin{align*}
			\|\nabla h_a\|_2 = \frac{2a}{d}\sqrt{d} = \frac{2a}{\sqrt{d}} \le L.
		\end{align*}
		Thus $h_a \in \cF_b^{R,L}$. Its range is exactly $[-a,a]$, and therefore
		\begin{align*}
			\inf_{f \in \cA_{\glob}^R}\|f-h_a\|_\infty = a.
		\end{align*}
		This proves
		\begin{align*}
			\sup_{h \in \cF_b^{R,L}} \inf_{f \in \cA_{\glob}^R} \|f-h\|_\infty = \min\left\{R,\frac{L\sqrt{d}}{2}\right\}.
		\end{align*}
		
		We next prove the local upper bound. Set
		\begin{align*}
			q := \left\lfloor(M/K)^{1/d}\right\rfloor.
		\end{align*}
		Then $q \ge 1$ and $Kq^d \le M$. Partition $[0,1]^d$ into the $q^d$ cubes $Q_\nu$ defined in the proof of Proposition~\ref{main-prop:topk_density}, and let $m_\nu$ be their centers. For any fixed $h \in \cF_b^{R,L}$, we use the same nearest-center construction a before. Assign $K$ copies of an expert to each center, with constant value $f_{\nu,r}(x)\equiv h(m_\nu)$, and take the logits
		\begin{align*}
			u_{\nu,r}(x) = 2\lambda m_\nu^\top x-\lambda\|m_\nu\|_2^2, \qquad \lambda = \frac{C_1}{2\max\{2,d\}}.
		\end{align*}
		If $M>Kq^d$, assign the remaining experts the constant value $0$ and the logit $-C_1$. Since $u_{\nu,r}(x) \ge -\lambda d \ge -C_1/2$ for every constructed copy and every $x \in \cX$, the remaining experts have strictly smaller scores than all constructed copies and are never selected by the Top-$K$ rule.
		
		For any $x \in \cX$, maximizing $u_{\nu,r}(x)$ over $\nu$ is equivalent to minimizing $\|x-m_\nu\|_2^2$. Hence the selected constructed experts are attached to nearest grid centers. Therefore the output $\ell(x)$ is a convex combination of the values $h(m_\nu)$ associated with centers $m_\nu$, whose cubes contain $x$ in their closure. For each such center,
		\begin{align*}
			\|x-m_\nu\|_2 \le \frac{\sqrt{d}}{2q},
		\end{align*}
		so the Lipschitz condition gives
		\begin{align*}
			|h(x)-h(m_\nu)| \le \frac{L\sqrt{d}}{2q}.
		\end{align*}
		Taking convex combinations yields
		\begin{align*}
			|h(x)-\ell(x)| \le \frac{L\sqrt{d}}{2q}, \qquad x \in \cX.
		\end{align*}
		Hence
		\begin{align*}
			\|h-\ell\|_\infty \le \frac{L\sqrt{d}}{2\left\lfloor(M/K)^{1/d}\right\rfloor}.
		\end{align*}
		Taking the supremum over $h \in \cF_b^{R,L}$ proves the claim.
	\end{proof}
	
	\subsection{Proof of Theorem \ref{main-thm:metric_entropy}}
	\label{subsec:supp-proof-thm-2-4}
	
	\begin{proof}[Proof of Theorem \ref{main-thm:metric_entropy}]
		\textbf{For part (i),}
		since $\cA_{\glob}^R=\cF_{\mathrm c}^R$ is isometric to the interval $[-R,R]$ under the metric $\|f_c-f_{c'}\|_\infty=|c-c'|$, its covering number is the covering number of $[-R,R]$. For $0<\epsilon\le 2R$, this is
		\begin{align*}
			N\left(\epsilon,\cA_{\glob}^R,\|\cdot\|_\infty\right) = \left\lceil\frac{R}{\epsilon}\right\rceil.
		\end{align*}
		Taking logarithm yields the desired order.
		
		\textbf{For part (ii),}
		if $M=1$, then $\cA_{\loc}^{\soft}(1)=\cA_{\glob}^R$, so part (i) applies. Assume $M \ge 2$. For the proof, define
			$T_M:=2(d+1)C_1+\log(M-1)$ and $m_M:=\exp\{-T_M\}/(1+\exp\{-T_M\})^2$. For $\beta \in [0,2C_1]^d$ and $\alpha \in [0,2C_1]$, define
		\begin{align*}
			\sigma(x)=\frac{1}{1+\exp\{-x\}}, \qquad f_{\beta,\alpha}(x) = R\sigma\left(\beta^\top x+\alpha-\log(M-1)\right).
		\end{align*}
		We first check $f_{\beta,\alpha} \in \cA_{\loc}^{\soft}(M)$. Take $c_1=R, c_2=\cdots=c_M=0$, and choose the scores
		\begin{align*}
			u_1(x)=\frac{1}{2}(\beta^\top x+\alpha), \qquad u_2(x)=\cdots=u_M(x)=-\frac{1}{2}(\beta^\top x+\alpha).
		\end{align*}
		All coefficients lie in $[-C_1,C_1]$, since each component of $\beta/2$ and the intercept $\alpha/2$ belong to $[0,C_1]$. The softmax weight of expert $1$ is
		\begin{align*}
			\frac{\exp\{u_1(x)\}}{\exp\{u_1(x)\}+\sum_{j=2}^M \exp\{u_j(x)\}} = \frac{1}{1+(M-1)\exp\{-(\beta^\top x+\alpha)\}} = \sigma\left(\beta^\top x+\alpha-\log(M-1)\right).
		\end{align*}
		Thus the resulting output is exactly $f_{\beta,\alpha}$.
		
		For every $x\in[0,1]^d$, $\beta\in[0,2C_1]^d$, and $\alpha\in[0,2C_1]$, the argument of $\sigma$ satisfies
		\begin{align*}
			-\log(M-1) \le \beta^\top x+\alpha-\log(M-1) \le 2(d+1)C_1-\log(M-1).
		\end{align*}
		Since $T_M = 2(d+1)C_1 + \log(M-1)$, this argument lies in $[-T_M,T_M]$. Hence
		\begin{align*}
			\sigma'(t)\ge m_M \qquad \text{for all } |t| \le T_M.
		\end{align*}
		
		For the lower bound on the covering number, take $\delta:=\frac{3\epsilon}{Rm_M}$. Let $\Gamma_\beta$ be a grid of $[0,2C_1]^d$ with mesh size $\delta$ in each coordinate, and let $\Gamma_\alpha$ be a grid of $[0,2C_1]$ with mesh size $\delta$. Then
		\begin{align*}
			|\Gamma_\beta| = \left(\left\lfloor\frac{2Rm_MC_1}{3\epsilon}\right\rfloor+1\right)^d, \qquad |\Gamma_\alpha| = \left\lfloor\frac{2Rm_MC_1}{3\epsilon}\right\rfloor+1.
		\end{align*}
		Consider $\cH = \{f_{\beta,\alpha}:\beta\in\Gamma_\beta,\alpha\in\Gamma_\alpha\}$. Distinct elements of $\cH$ are $3\epsilon$-separated under $\|\cdot\|_\infty$. Indeed, take $(\beta,\alpha)\neq(\beta',\alpha')$. If $\alpha\neq\alpha'$, then evaluate at $x=0$ gives
		\begin{align*}
			\|f_{\beta,\alpha}-f_{\beta',\alpha'}\|_\infty \ge |f_{\beta,\alpha}(0)-f_{\beta',\alpha'}(0)| \ge Rm_M|\alpha-\alpha'| \ge 3\epsilon.
		\end{align*}
		If $\alpha=\alpha'$, then some coordinate $\beta_j\neq\beta_j'$. Evaluating at $x=e_j$ gives
		\begin{align*}
			\|f_{\beta,\alpha}-f_{\beta',\alpha}\|_\infty \ge |f_{\beta,\alpha}(e_j)-f_{\beta',\alpha}(e_j)| \ge Rm_M|\beta_j-\beta_j'| \ge 3\epsilon.
		\end{align*}
		Therefore $\cH$ is $3\epsilon$-separated. Since every ball of radius $\epsilon$ has diameter at most $2\epsilon$, each such ball contains at most one element of $\cH$. It follows that
		\begin{align*}
			N\left(\epsilon,\cA_{\loc}^{\soft}(M),\|\cdot\|_\infty\right) \ge |\cH| = \left(\left\lfloor\frac{2Rm_MC_1}{3\epsilon}\right\rfloor+1\right)^{d+1}.
		\end{align*}
		
		We next prove the upper bound of the covering number. Fix $h \in \cA_{\loc}^{\soft}(M)$. Then
		\begin{align*}
			h(x)=\sum_{i=1}^M s_i(x)c_i, \qquad |c_i|\le R,
		\end{align*}
		where
		\begin{align*}
			s_i(x)=g_i^{\soft}(x;\theta)=\frac{\exp\{\beta_i^\top x+\alpha_i\}}{\sum_{j=1}^M \exp\{\beta_j^\top x+\alpha_j\}}, \qquad (\beta_i^\top,\alpha_i)^\top\in\Theta_{\lin}(C_1).
		\end{align*}
		Using expert $M$ as a reference, define
		\begin{align*}
			v_0:=c_M, \qquad v_i:=c_i-c_M\in[-2R,2R], \qquad a_i:=\beta_i-\beta_M\in[-2C_1,2C_1]^d, \qquad b_i:=\alpha_i-\alpha_M\in[-2C_1,2C_1]
		\end{align*}
		for $i=1,\dots,M-1$. Then $h(x)=v_0+\sum_{i=1}^{M-1} v_i q_i(x)$, where
		\begin{align*}
			q_i(x) = \frac{\exp\{a_i^\top x+b_i\}}{1+\sum_{j=1}^{M-1} \exp\{a_j^\top x+b_j\}}.
		\end{align*}
		
		Let $h$ and $h'$ correspond to parameter tuples $(v_0,v,a,b)$ and $(v_0',v',a',b')$. Then
		\begin{align*}
			|h(x)-h'(x)| \le |v_0-v_0'| + \sum_{i=1}^{M-1}|v_i-v_i'|q_i(x) + \sum_{i=1}^{M-1}|v_i'||q_i(x)-q_i'(x)|.
		\end{align*}
		Since $\sum_{i=1}^{M-1}q_i(x)\le 1$,
		\begin{align*}
			\sum_{i=1}^{M-1}|v_i-v_i'|q_i(x)\le \|v-v'\|_\infty.
		\end{align*}
		Also $|v_i'|\le 2R$.
		
		It remains to control the difference between $q$ and $q'$. Let
		\begin{align*}
			S(z_1,\dots,z_M) = \left(\frac{\exp\{z_1\}}{\sum_{j=1}^M \exp\{z_j\}},\dots,\frac{\exp\{z_M\}}{\sum_{j=1}^M \exp\{z_j\}}\right)
		\end{align*}
		be the $M$-dimensional softmax map. Its Jacobian satisfies
		\begin{align*}
			DS(z)v = p\odot\left(v-\langle p,v\rangle \mathbf{1}\right), \qquad p=S(z),
		\end{align*}
		where $\odot$ denotes the Hadamard product. Therefore,
		\begin{align*}
			\|DS(z)v\|_1 = \sum_{i=1}^M p_i |v_i-\langle p,v\rangle| \le 2\|v\|_\infty.
		\end{align*}
		By the mean value theorem, $\|S(z)-S(z')\|_1\le 2\|z-z'\|_\infty$. Applying this to
		\begin{align*}
			z(x)=(a_1^\top x+b_1,\dots,a_{M-1}^\top x+b_{M-1},0), \qquad z'(x)=({a'_1}^\top x+b_1',\dots,{a'_{M-1}}^\top x+b_{M-1}',0),
		\end{align*}
		we obtain
		\begin{align*}
			\sum_{i=1}^{M-1}|q_i(x)-q_i'(x)| \le 2\|z(x)-z'(x)\|_\infty.
		\end{align*}
		Since $x\in[0,1]^d$,
		\begin{align*}
			\|z(x)-z'(x)\|_\infty \le d\max_{1\le i\le M-1}\|a_i-a_i'\|_\infty + \max_{1\le i\le M-1}|b_i-b_i'|.
		\end{align*}
		Combining the estimates gives
		\begin{align*}
			\|h-h'\|_\infty \le |v_0-v_0'| + \|v-v'\|_\infty + 4R\left(d\max_{1\le i\le M-1}\|a_i-a_i'\|_\infty + \max_{1\le i\le M-1}|b_i-b_i'|\right).
		\end{align*}
		
		Now choose parameter nets satisfying
		\begin{align*}
			|v_0-v_0'|\le \frac{\epsilon}{3}, \qquad \|v-v'\|_\infty\le \frac{\epsilon}{3}, \qquad \max_i\|a_i-a_i'\|_\infty\le \frac{\epsilon}{24Rd}, \qquad \max_i|b_i-b_i'|\le \frac{\epsilon}{24R}.
		\end{align*}
		Then $\|h-h'\|_\infty\le \epsilon$. The interval $[-R,R]$ can be covered at accuracy $\epsilon/3$ by $\left\lceil\frac{3R}{\epsilon}\right\rceil$ points. Each interval $[-2R,2R]$ can be covered at the same accuracy by $\left\lceil\frac{6R}{\epsilon}\right\rceil$ points. Each interval $[-2C_1,2C_1]$ can be covered at accuracy $\epsilon/(24Rd)$ by $\left\lceil\frac{48RdC_1}{\epsilon}\right\rceil$ points, and at accuracy $\epsilon/(24R)$ by $\left\lceil\frac{48RC_1}{\epsilon}\right\rceil$ points. Since there are $M-1$ scalar parameters $v_i$, $(M-1)d$ coordinates in the vectors $a_i$, and $M-1$ intercept differences $b_i$, the stated upper bound follows.
		
		Finally, taking logarithms yields
		\begin{align*}
			(d+1)\log(1/\epsilon)+O(1) \le \log N\left(\epsilon,\cA_{\loc}^{\soft}(M),\|\cdot\|_\infty\right) \le \left[M+(M-1)(d+1)\right]\log(1/\epsilon)+O(1)
		\end{align*}
		for fixed $M\ge 2$.
		
		\textbf{For part (iii),}
		fix $1 \le K < M$. For each $t\in(0,1)$, define experts
		\begin{align*}
			f_1\equiv R,\qquad f_2\equiv -R,\qquad f_3=\cdots=f_M\equiv 0,
		\end{align*}
		and scores
		\begin{align*}
			u_1^{(t)}(x)=C_1(x_1-t),\qquad u_2^{(t)}(x)=-C_1(x_1-t),\qquad u_3^{(t)}(x)=\cdots=u_M^{(t)}(x)=0.
		\end{align*}
		All these parameters lie in $\Theta_{\lin}(C_1)$. Let $h_t\in\cA_{\loc}^{\TopK}(M)$ be the resulting function.
		
		Take $s<t$ and set $x=((s+t)/2,0,\dots,0)^\top$. Then $x_1=(s+t)/2$ lies strictly between $s$ and $t$. Hence
		\begin{align*}
			u_1^{(s)}(x)>0,\qquad u_2^{(s)}(x)<0,
		\end{align*}
		Thus the Top-$K$ rule for $h_s$ selects expert $1$ together with $K-1$ zero experts. Therefore
		\begin{align*}
			h_s(x) = R\cdot \frac{\exp\{u_1^{(s)}(x)\}}{\exp\{u_1^{(s)}(x)\}+(K-1)} > \frac{R}{K}.
		\end{align*}
		Similarly,
		\begin{align*}
			u_1^{(t)}(x)<0,\qquad u_2^{(t)}(x)>0.
		\end{align*}
		Thus the Top-$K$ rule for $h_t$ selects expert $2$ together with $K-1$ zero experts, and
		\begin{align*}
			h_t(x) = -R\cdot \frac{\exp\{u_2^{(t)}(x)\}}{\exp\{u_2^{(t)}(x)\}+(K-1)} < -\frac{R}{K}.
		\end{align*}
		Consequently,
		\begin{align*}
			\|h_s-h_t\|_\infty\ge |h_s(x)-h_t(x)|> \frac{2R}{K}.
		\end{align*}
		Thus $\{h_t:t\in(0,1)\}$ is an uncountable $2R/K$-separated family. Therefore, for every $0<\epsilon\le R/K$, every ball of radius $\epsilon$ has diameter at most $2\epsilon\le 2R/K$ and hence contains at most one element of this family. Consequently,
		\begin{align*}
			N\left(\epsilon,\cA_{\loc}^{\TopK}(M),\|\cdot\|_\infty\right)=\infty.
		\end{align*}
	\end{proof}
	
	\subsection{Proof of Example \ref{main-exa:complex_experts}}
	\label{subsec:supp-proof-ex-1}
	
	\begin{proof}[Proof of Example \ref{main-exa:complex_experts}]
		For the local representation, take two experts
		\begin{align*}
			e_1(x)=\sigma(x_1), \qquad e_2(x)\equiv 0.
		\end{align*}
		Both experts belong to $\cE_\sigma$. Choose the scores
		\begin{align*}
			u_1(x)=x_1, \qquad u_2(x)=0.
		\end{align*}
		Then the softmax weight of expert $1$ is
		\begin{align*}
			\frac{\exp(x_1)}{\exp(x_1)+1} = \sigma(x_1),
		\end{align*}
		and the resulting local average is
		\begin{align*}
			\frac{\exp(x_1)}{\exp(x_1)+1}e_1(x) + \frac{1}{\exp(x_1)+1}e_2(x) = \sigma(x_1)^2 = h(x).
		\end{align*}
		Thus $h\in\cA_{\loc}^{\soft}(\cE_{\sigma};2)$.
		
		We next show that $h\notin \bigcup_{M\ge 1}\cA_{\glob}(\cE_{\sigma};M)$. Suppose otherwise. Then for some $M$,
		\begin{align*}
			h(x)=\sum_{i=1}^M w_i e_i(x), \qquad w_i\ge 0, \qquad \sum_{i=1}^M w_i=1,
		\end{align*}
		with $e_i(x)=c_{0,i}+c_{1,i}\sigma(a_i x_1+b_i)$. Collecting the constants, we obtain an identity on $[0,1]$:
		\begin{align*}
			\sigma(t)^2 = \beta_0+\sum_{i=1}^M \beta_i\sigma(a_i t+b_i), \qquad t \in [0,1],
		\end{align*}
		for suitable real coefficients $\beta_0,\dots,\beta_M$. Although the aggregation weights $w_i$ are nonnegative and sum to one, the coefficients $\beta_i$ need not be nonnegative, since the expert coefficients $c_{1,i}$ are arbitrary real numbers.
		
		Now view both sides as meromorphic functions of a complex variable $z$. The logistic function $\sigma(z)=\frac{1}{1+\exp(-z)}$ has simple poles at $z=(2\ell+1)\pi i$ for $\ell\in\mathbb{Z}$. Therefore each nonconstant function $\sigma(a_i z+b_i)$ has only simple poles, while if $a_i = 0$, the function is constant and has no pole. Hence any finite linear combination
		\begin{align*}
			\beta_0+\sum_{i=1}^M \beta_i\sigma(a_i z+b_i)
		\end{align*}
		has at most simple poles at every point of $\mathbb{C}$. On the other hand, $\sigma(z)^2$ has a second-order pole at $z=\pi i$.
		
		The two meromorphic functions agree on the real interval $[0,1]$, which has an accumulation point in a domain where both sides are analytic. By the identity theorem for meromorphic functions, they must agree as meromorphic functions. This is impossible, because the left-hand side has a second-order pole at $z=\pi i$, while the right-hand side can have at most a simple pole there. Hence $h\notin \bigcup_{M\ge 1}\cA_{\glob}(\cE_{\sigma};M)$.
	\end{proof}
	
	\subsection{Proof of Proposition \ref{main-prop:two_specialized_experts}}
	
	\begin{proof}[Proof of Proposition \ref{main-prop:two_specialized_experts}]
		Fix any $w \in [0,1]$ and write $\hat f_w^{\glob}:=w\hat f_1+(1-w)\hat f_2$. On $\cX_1$, the reverse triangle inequality gives
		\begin{align*}
			\|\hat f_w^{\glob}-f\|_{\cX_1}
			&= \|w(\hat f_1-f)+(1-w)(\hat f_2-f)\|_{\cX_1} \\
			&\ge (1-w)\|\hat f_2-f\|_{\cX_1}-w\|\hat f_1-f\|_{\cX_1} \\
			&= (1-w)\delta_1-w\epsilon_1.
		\end{align*}
		Similarly, on $\cX_2$,
		\begin{align*}
			\|\hat f_w^{\glob}-f\|_{\cX_2}
			&= \|w(\hat f_1-f)+(1-w)(\hat f_2-f)\|_{\cX_2} \\
			&\ge w\|\hat f_1-f\|_{\cX_2}-(1-w)\|\hat f_2-f\|_{\cX_2} \\
			&= w\delta_2-(1-w)\epsilon_2.
		\end{align*}
		Hence
		\begin{align*}
			\max_{j=1,2}\|\hat f_w^{\glob}-f\|_{\cX_j} \ge \max\left\{(1-w)\delta_1-w\epsilon_1,\,w\delta_2-(1-w)\epsilon_2\right\}_+.
		\end{align*}
		The first affine term is decreasing in $w$, while the second is increasing in $w$. Balancing them gives
		\begin{align*}
			w_\star = \frac{\delta_1+\epsilon_2}{\delta_1+\delta_2+\epsilon_1+\epsilon_2},
		\end{align*}
		and the common value is $(\delta_1\delta_2-\epsilon_1\epsilon_2)/(\delta_1+\delta_2+\epsilon_1+\epsilon_2)$. Taking the positive part yields the lower bound $\gamma$.
		
		Since
		\begin{align*}
			R(\hat f_w^{\glob}) = \|\hat f_w^{\glob}-f\|_{\cX_1}^2+\|\hat f_w^{\glob}-f\|_{\cX_2}^2,
		\end{align*}
		at least one regional norm being no smaller than $\gamma$ implies $R(\hat f_w^{\glob})\ge \gamma^2$. Taking the infimum over $w \in [0,1]$ gives the desired lower bound for $\cA_{\glob}(\hat f_1,\hat f_2)$. Finally, by the definition of $\hat f_{\loc}^{\mathrm{Top\mbox{-}1}}$,
		\begin{align*}
			R(\hat f_{\loc}^{\mathrm{Top\mbox{-}1}}) = \|\hat f_1-f\|_{\cX_1}^2+\|\hat f_2-f\|_{\cX_2}^2 = \epsilon_1^2+\epsilon_2^2.
		\end{align*}
		This proves the proposition.
	\end{proof}
	
	\section{Proof of Section \ref{main-sec:oracle_softmax}}
	\label{sec:supp-proof-section-3}
	
	\subsection{Proof of Theorem \ref{main-thm:softmax}}
	\label{subsec:supp-proof-thm-3-1}
	
	We consider the same regression model as in Section \ref{main-subsec:BasicSetting} of the main article. We first introduce the following generalized version of AFTER algorithm \citep{yang2004Combining} in Algorithm \ref{alg:scale_online}, which allows the aggregation of covariate-dependent forecasting procedures. Throughout this subsection, all candidate forecasts and scale estimators are understood to satisfy the non-anticipation convention stated in Section \ref{main-subsec:BasicSetting}. In particular, the quantities used to predict $Y_{t+1}$ may depend on the observations available up to time $t$, the auxiliary algorithmic randomization, and the newly observed covariate $X_{t+1}$, but not on $Y_{t+1}$ or future observations.
	
	\begin{algorithm}[!h]
		\caption{General AFTER Algorithm}\label{alg:scale_online}
		\textbf{Input:} A burn-in size $n_1$; a class of $S$ forecasts $\{\mu_{s,t}(x)\}_{s=1}^S$; a black-box scale estimator; data $\{(X_i,Y_i)\}_{i=1}^n$; prior weights $\{\pi_s\}_{s=1}^S$.\\
		\textbf{Output:} An online aggregated sequence $\{\tilde{\mu}_t(x)\}_{t=n_1}^{n-1}$.
		{\small
			\begin{enumerate}[label=(\alph*)]
				\item Randomly partition the data into two subsamples $Z^{(1)} = (X_i,Y_i)_{i=1}^{n_1}$ and $Z^{(2)} = (X_i,Y_i)_{i=n_1+1}^{n}$, and let $n_2 = n-n_1$. The first subsample is used for burn-in, and the second is for aggregation.
				
				\item \textbf{Burn-in:} Process $Z^{(1)}$ to obtain the initial forecasts $\mu_{s,n_1}(x)$ and initial scale estimators $\sigma_{s,n_1}$.
				
				\item \textbf{Aggregation:} Initialize $w_{s,n_1}=\pi_s$. For $t = n_1, \ldots, n-1$:
				\begin{enumerate}[label=(\roman*)]
					\item Construct the aggregated forecast estimator
					\begin{align*}
						\tilde{\mu}_t(x)=\sum_{s=1}^S w_{s,t}\mu_{s,t}(x).
					\end{align*}
					
					\item Denote the one-step-ahead joint density of $Z_{t+1}=(X_{t+1},Y_{t+1})$ under candidate $s$ at time $t$, with respect to the product measure $P_X(dx)dy$, by
					\begin{align*}
						q_{s,t}(x,y)=\frac{1}{\hat{\sigma}_{s,t}}h_{\varepsilon,0}\left(\frac{y-\mu_{s,t}(x)}{\hat{\sigma}_{s,t}}\right).
					\end{align*}
					
					\item After observing $Z_{t+1}$, update the weights $w_{s,t+1}$ by
					\begin{align}
						w_{s,t+1} = \frac{w_{s,t}q_{s,t}(X_{t+1},Y_{t+1})}{\sum_{r=1}^S w_{r,t}q_{r,t}(X_{t+1},Y_{t+1})}. \label{eqn:gen-after-update}
					\end{align}
					
					\item Using $Z_{t+1}$ and previous data, update the scale estimators by its black-box rule to obtain $\{\hat{\sigma}_{s,t+1}\}_{s=1}^{S}$, then evolve forecasts to obtain $\{\mu_{s,t+1}(x)\}_{s=1}^S$.
				\end{enumerate}
		\end{enumerate}}
	\end{algorithm}
	
	We first introduce the assumptions needed to establish the theory for the introduced algorithm.
	
	\begin{assumption}[Uniform boundedness]\label{ass:bound_forecast}
		There exists a constant $A<\infty$ such that, almost surely for $t\ge n_1$,
		\begin{align*}
			\|f_0\|_{\infty}\vee\max_{s\in[S]}\|\mu_{s,t}\|_{\infty}\le A.
		\end{align*}
	\end{assumption}
	
	\begin{assumption}\label{ass:bound_scale_est}
		Assume that the produced scale estimators $\{\hat{\sigma}_{s,t}\}_{s=1}^S$ satisfies $0<\underline{\sigma}\le\hat{\sigma}_{s,t}\le\bar{\sigma}<\infty$, where $\underline{\sigma}$ and $\bar{\sigma}$ are constants in Assumption \ref{main-ass:ARM-regularity}.
	\end{assumption}
	
	Here, Assumption \ref{ass:bound_scale_est} can be reached through simple truncation. Then, we have the following result.
	
	\begin{proposition}[Risk bound for general AFTER algoritm]\label{prop:risk_AFTER}
		Let $\{\tilde{\mu}_t\}_{t=n_1}^{n-1}$ be the online sequence produced by Algorithm \ref{alg:scale_online}. Assume the data generation mechanism satisfies Assumption \ref{main-ass:ARM-regularity}, and Assumptions \ref{ass:bound_forecast}, \ref{ass:bound_scale_est} hold. Then, there exists a constant $C$ depending only on the boundedness and error-distribution constants, such that
		\begin{align}
			\frac{1}{n_2}\sum_{t=n_1}^{n-1}\EE\left\|\tilde{\mu}_t-f_0\right\|_{L_2(P_X)}^2 \le C\inf_{s \le S}\left(\frac{1}{n_2}\sum_{t=n_1}^{n-1}\EE\left\|\mu_{s,t}-f_0\right\|_{L_2(P_X)}^2+\frac{\log(1/\pi_s)}{n_2}+\cV_s\right), \label{eqn:riskbound_after}
		\end{align}
		where
		\begin{align*}
			\cV_s=\frac{1}{n_2}\sum_{t=n_1}^{n-1}\EE\left(\hat{\sigma}_{s,t}-\sigma\right)^2.
		\end{align*}
	\end{proposition}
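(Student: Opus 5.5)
We follow the standard AFTER/ARM argument of \citet{yang2004Combining}. The plan is to express the cumulative log-loss of the mixture through the product of the normalized weights, and then to convert Kullback--Leibler risk into squared $L_2(P_X)$ risk.

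\textbf{Step 1: telescoping.} Let $p(x,y)=\sigma^{-1}h_{\varepsilon,0}((y-f_0(x))/\sigma)$ denote the true conditional density, and define the mixture density $\tilde q_t=\sum_s w_{s,t}q_{s,t}$. The update \eqref{eqn:gen-after-update} gives
\begin{align*}
\prod_{t=n_1}^{n-1}\tilde q_t(Z_{t+1})=\sum_{s}\pi_s\prod_{t}q_{s,t}(Z_{t+1})\ \ge\ \pi_s\prod_t q_{s,t}(Z_{t+1})
\end{align*}
for every $s$. Taking $-\log$ of the ratio with $\prod_t p(Z_{t+1})$ yields
\begin{align*}
\sum_t\log\frac{p(Z_{t+1})}{\tilde q_t(Z_{t+1})}\le \log(1/\pi_s)+\sum_t\log\frac{p(Z_{t+1})}{q_{s,t}(Z_{t+1})}.
\end{align*}
By non-anticipation, $w_{s,t}$, $\mu_{s,t}$ and $\hat\sigma_{s,t}$ are measurable with respect to the past and the auxiliary randomness, and $Z_{t+1}$ is independent of them. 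Conditioning on the past and taking expectations therefore gives
\begin{align*}
\sum_t\EE D(p\|\tilde q_t)\le\log(1/\pi_s)+\sum_t\EE D(p\|q_{s,t}),
\end{align*}
where $D$ is the Kullback--Leibler divergence of the joint law, integrated over $P_X$.

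\textbf{Step 2: upper bound on the candidate term.} Using the location-scale KL condition in Assumption~\ref{main-ass:ARM-regularity}, with $u=(f_0(x)-\mu_{s,t}(x))/\sigma$ and $s'=\hat\sigma_{s,t}/\sigma$, we get
\begin{align*}
D(p\|q_{s,t})\le C_3\left\{\|\mu_{s,t}-f_0\|^2_{L_2(P_X)}/\sigma^2+(\hat\sigma_{s,t}-\sigma)^2/\sigma^2\right\}.
\end{align*}
The condition applies because Assumptions~\ref{ass:bound_forecast} and \ref{ass:bound_scale_est} keep $|u|\le 2A/\underline\sigma=:T_0$ and $s'\in[\underline\sigma/\bar\sigma,\bar\sigma/\underline\sigma]$. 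This produces the first and third terms in \eqref{eqn:riskbound_after}.

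\textbf{Step 3: lower bound on the mixture term, which is the main obstacle.} We need $D(p\|\tilde q_t)\gtrsim\|\tilde\mu_t-f_0\|^2_{L_2(P_X)}$. Here $\tilde q_t$ is a mixture of location-scale densities rather than a single density centered at $\tilde\mu_t$, so the bound does not follow directly. The route we will take is the following.
\begin{enumerate}
\item Use $D\ge d_H^2$ (the squared Hellinger distance), and reduce to fixed $x$.
\item Use the fact that densities with finite variance and Hellinger-close distributions have close means. Concretely, the mean of $\tilde q_t(x,\cdot)$ is $\tilde\mu_t(x)$. Writing $\int y(\sqrt p-\sqrt{\tilde q})(\sqrt p+\sqrt{\tilde q})\,dy$ and applying Cauchy--Schwarz gives $|f_0(x)-\tilde\mu_t(x)|^2\le d_H^2(x)\cdot 2\int y^2(p+\tilde q)\,dy$.
\item Bound the second moments uniformly using the boundedness of the means and the scale bounds $\hat\sigma_{s,t}\le\bar\sigma$. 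The finite fourth moment in Assumption~\ref{main-ass:ARM-regularity} is available if a truncation argument is needed.
\end{enumerate}
This gives $\|\tilde\mu_t-f_0\|^2_{L_2(P_X)}\le C\,d_H^2(p,\tilde q_t)\le C\,D(p\|\tilde q_t)$, with a constant depending only on $A,\underline\sigma,\bar\sigma$. The delicate point is ensuring that the Hellinger-to-mean step uses only these uniform constants, so that $C$ does not depend on $n$ or $S$.

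\textbf{Step 4: conclusion.} Combine Steps 1--3, divide by $n_2$, and take the infimum over $s$. This yields \eqref{eqn:riskbound_after}.
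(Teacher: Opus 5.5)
Your proposal is correct and follows essentially the same route as the paper. Both arguments use the likelihood telescoping identity, which gives $\sum_t\EE D(p\|\tilde q_t)\le\log(1/\pi_s)+\sum_t\EE D(p\|q_{s,t})$. Both then bound the candidate term with the location--scale KL condition, and control the mixture term by a Hellinger--mean comparison carried out conditionally on $x$.

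The one minor difference is in that last step. You apply Cauchy--Schwarz with the uncentered second moments $\int y^2(p+\tilde q)\,dy\le 2(A^2+\bar\sigma^2)$, which gives $|\tilde\mu_t(x)-f_0(x)|^2\le 4(A^2+\bar\sigma^2)\,d_H^2$ directly. The paper instead centers at $(m_p+m_q)/2$ and then needs a case split on whether $H^2\le 1/2$. Neither version needs truncation or the fourth moment.
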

	
	\begin{proof}[Proof of Proposition \ref{prop:risk_AFTER}]
		Let
		\begin{align*}
			p_t(x,y)=\frac{1}{\sigma}h_{\varepsilon,0}\left(\frac{y-f_0(x)}{\sigma}\right)
		\end{align*}
		be the true joint density of $Z_{t+1}$. Define the mixture density
		\begin{align*}
			q_t(x,y)=\sum_{s=1}^S w_{s,t}q_{s,t}(x,y).
		\end{align*}
		The AFTER recursion \eqref{eqn:gen-after-update} yields
		\begin{align*}
			w_{s,t}=\frac{\pi_s\prod_{\ell=n_1}^{t-1}q_{s,\ell}(X_{\ell+1},Y_{\ell+1})}{\sum_{r=1}^S\pi_r\prod_{\ell=n_1}^{t-1}q_{r,\ell}(X_{\ell+1},Y_{\ell+1})},
		\end{align*}
		which results in
		\begin{align*}
			q_t(X_{t+1},Y_{t+1})=\frac{\sum_{r=1}^S\pi_r\prod_{\ell=n_1}^{t}q_{r,\ell}(X_{\ell+1},Y_{\ell+1})}{\sum_{r=1}^S\pi_r\prod_{\ell=n_1}^{t-1}q_{r,\ell}(X_{\ell+1},Y_{\ell+1})},
		\end{align*}
		and thus we have the standard likelihood identity
		\begin{align*}
			\prod_{t=n_1}^{n-1}q_t(X_{t+1},Y_{t+1})=\sum_{r=1}^S\pi_r\prod_{t=n_1}^{n-1}q_{r,t}(X_{t+1},Y_{t+1}).
		\end{align*}
		Thus, for a fixed $s\in[S]$, using a similar argument as in the proof of Theorem 1 in \citep{yang2001adaptive}, we have
		\begin{align}
			\sum_{t=n_1}^{n-1}\EE[D(p_t\|q_t)]
			&=\EE\left[\log\frac{\prod_{t=n_1}^{n-1}p_t(X_{t+1},Y_{t+1})}{\prod_{t=n_1}^{n-1}q_t(X_{t+1},Y_{t+1})}\right] \nonumber\\
			&\leq \EE\left[\log\frac{\prod_{t=n_1}^{n-1}p_t(X_{t+1},Y_{t+1})}{\pi_s\prod_{t=n_1}^{n-1}q_{s,t}(X_{t+1},Y_{t+1})}\right] \nonumber\\
			&=\log(1/\pi_s)+\sum_{t=n_1}^{n-1}\EE\left[D(p_t\|q_{s,t})\right]. \label{eqn:KLdist_mixture}
		\end{align}
		Here, the densities $p_t$, $q_t$, and $q_{s,t}$ are conditional one-step-ahead densities given the online history up to time $t$. Accordingly,
		$D(p_t\|q_t)$ is a conditional KL divergence and is itself a random variable through its dependence on the past data and the algorithmic randomness. Thus $\EE[D(p_t\|q_t)]$ denotes the outer expectation of this conditional KL divergence. The expectations in the logarithmic display are taken over the full data path and the algorithmic randomness.
		
		Now, a similar argument as in \citep{yang2001adaptive} shows that
		\begin{align}
			D(p_t\|q_{s,t})
			&= \int\left(\int\frac{1}{\sigma}h_{\varepsilon,0}\left(\frac{y-f_0(x)}{\sigma}\right)\log\left(\frac{\sigma^{-1}h_{\varepsilon,0}\left((y-f_0(x))/\sigma\right)}{\hat{\sigma}_{s,t}^{-1}h_{\varepsilon,0}\left((y-\mu_{s,t}(x))/\hat{\sigma}_{s,t}\right)}\right)dy\right)P_X(dx) \nonumber\\
			&= \int\left(h_{\varepsilon,0}(y)\log\left(\frac{h_{\varepsilon,0}(y)}{\sigma/\hat{\sigma}_{s,t}h_{\varepsilon,0}\left(\sigma(y-\sigma^{-1}(\mu_{s,t}(x)-f_0(x)))/\hat{\sigma}_{s,t}\right)}\right)dy\right)P_X(dx) \nonumber\\
			&\leq C\int\left(\left(1-\frac{\hat{\sigma}_{s,t}}{\sigma}\right)^2+\left(\frac{\mu_{s,t}(x)-f_0(x)}{\sigma}\right)^2\right)\,P_X(dx) \nonumber\\
			&\leq C\left(\|\mu_{s,t}-f_0\|_{L_2(P_X)}^2+(\hat{\sigma}_{s,t}-\sigma)^2\right). \label{eqn:KLdiv_RHS}
		\end{align}
		Here the third line is by Assumption \ref{main-ass:ARM-regularity}, where Assumption \ref{ass:bound_scale_est} ensures that $\hat{\sigma}_{s,t}/\sigma$ is bounded and Assumption \ref{ass:bound_forecast} together with $\underline{\sigma}\leq \sigma\leq \bar{\sigma}$ ensure that $(\mu_{s,t}(x)-f_0(x))/\sigma$ is uniformly bounded.
		
		Now, consider the squared Hellinger distance $H^2(p,q)$. We use the convention
		\begin{align*}
			H^2(p,q):=\int(\sqrt p-\sqrt q)^2\,d\mu.
		\end{align*}
		The standard Hellinger inequality gives
		\begin{align*}
			H^2(p,q)\leq D(p\|q).
		\end{align*}
		Moreover, for densities $p$ and $q$ with uniformly bounded means $m_p$, $m_q$ and variance $v_p$, $v_q$, an elementary moment-Hellinger bound
		\begin{align*}
			H^2(p,q)\geq \frac{(m_p-m_q)^2}{2(v_p+v_q)+(m_p-m_q)^2}.
		\end{align*}
		Indeed, let $\Delta=m_p-m_q$ and $a=(m_p+m_q)/2$, then using Cauchy Schwarz's inequality, we have
		\begin{align*}
			\Delta^2
			&= \left(\int x(p-q)\,d\mu\right)^2=\left(\int (x-a)(p-q)\,dx\right)^2\\
			&\le \left(\int(\sqrt p-\sqrt q)^2\,dx\right)\left(\int (x-a)^2(\sqrt p+\sqrt q)^2\,dx\right)\\
			&\leq 2H^2(p,q)\left(\int(x-a)^2(p+q)dx\right)\\
			&= 2H^2(p,q)\left(v_p+v_q+\frac{\Delta^2}{2}\right),
		\end{align*}
		and thus the desired inequality holds. Now, if $H^2(p,q)\leq1/2$, then $(m_p-m_q)^2\leq4(v_p+v_q)H^2(p,q)$.
		If $H^2(p,q)>1/2$, the uniform mean bound gives $(m_p-m_q)^2\leq C H^2(p,q)$ after enlarging $C$.
		Thus, in all cases,
		\begin{align*}
			(m_p-m_q)^2\leq CH^2(p,q)\leq CD(p\|q).
		\end{align*}
		
		We now apply this inequality conditionally on the online history and on the covariate value. For fixed $x$ and time $t$, let
		\begin{align*}
			p_t^x(y)=\sigma^{-1}h_{\varepsilon,0}\left(\frac{y-f_0(x)}{\sigma}\right), \qquad q_t^x(y)=\sum_{s=1}^S w_{s,t}\hat\sigma_{s,t}^{-1}h_{\varepsilon,0}\left(\frac{y-\mu_{s,t}(x)}{\hat\sigma_{s,t}}\right).
		\end{align*}
		The conditional mean of $p_t^x$ is $f_0(x)$, while the conditional mean of $q_t^x$ is
		\begin{align*}
			\sum_{s=1}^S w_{s,t}\mu_{s,t}(x)=\tilde\mu_t(x).
		\end{align*}
		Let $v_0$ denote the variance of $h_{\varepsilon,0}$. The conditional variance of $p_t^x$ is $\sigma^2v_0$. The conditional variance of $q_t^x$ is
		\begin{align*}
			\sum_{s=1}^S w_{s,t}\hat\sigma_{s,t}^2v_0+\sum_{s=1}^S w_{s,t}\{\mu_{s,t}(x)-\tilde\mu_t(x)\}^2.
		\end{align*}
		By Assumptions~\ref{ass:bound_forecast} and~\ref{ass:bound_scale_est}, this variance is bounded above by $\bar{\sigma}^2v_0+4A^2$. Moreover, both conditional means are uniformly bounded. Hence the preceding moment-Hellinger bound implies that, for a constant $C$ independent of $s,t$ and $x$,
		\begin{align*}
			(\tilde\mu_t(x)-f_0(x))^2\le C H^2(p_t^x,q_t^x)\le C D(p_t^x\|q_t^x).
		\end{align*}
		Integrating with respect to $P_X(dx)$ gives, conditionally on the online history,
		\begin{align*}
			\|\tilde\mu_t-f_0\|_{L_2(P_X)}^2\le C D(p_t\|q_t),
		\end{align*}
		and thus taking outer expectation yields
		\begin{align}
			\EE\left\|\tilde{\mu}_t-f_0\right\|_{L_2(P_X)}^2\leq C\EE D(p_t\|q_t). \label{eqn:KLdiv_LHS}
		\end{align}
		
		Combining \eqref{eqn:KLdiv_LHS} with \eqref{eqn:KLdist_mixture} and \eqref{eqn:KLdiv_RHS}, then taking infimum over all $s\in[S]$, one can get the desired result.
	\end{proof}
	
	We now turn to the specific scenario considered in Algorithm \ref{main-alg:general}, that is, the burn-in block in Algorithm \ref{alg:scale_online} are further separated into a burn-in block and a calibration block, and in the aggregation block, we have $\mu_{s,t}(x)=\check{f}^t(x;\theta^{(s)})$, $\pi_s\equiv 1/S_{\eta}$, and $\hat{\sigma}_{s,t}\equiv \hat{\sigma}_s$. Denote $\cF_t$ be the $\sigma$-field generated by $\{Z_\ell\}_{\ell=1}^{t}$, and the randomness in the black-box algorithm up to time $t$. We first state the following lemma that removes the scale estimation term $\cV_s$ in \eqref{eqn:riskbound_after}.
	
	\begin{lemma}[Calibration-scale error for predictable candidate forecasts]\label{lemma:remove_scale_estimation}
		Let $\{\mu_{s,t}:s\in[S],\ t=n_1,\ldots,n_1+n_2-1\}$ be a collection of predictable candidate forecasts such that $\mu_{s,t}$ is measurable with respect to $\sigma(Z_1,\dots,Z_t,U)$ as a random function and
		\begin{align*}
			\|f_0\|_\infty \vee \sup_{s\in[S]} \sup_{t=n_1,\ldots,n_1+n_2-1} \|\mu_{s,t}\|_\infty \le B
		\end{align*}
		almost surely for some constant $B<\infty$. Define
		\begin{align*}
			\widetilde\sigma_s^2 = \frac{1}{n_2}\sum_{t=n_1}^{n_1+n_2-1}\left\{Y_{t+1}-\mu_{s,t}(X_{t+1})\right\}^2,
		\end{align*}
		and let $\hat\sigma_s^2 = \Pi_{[\underline\sigma^2,\bar\sigma^2]}\left(\widetilde\sigma_s^2\right)$ with $\hat\sigma_s=(\hat\sigma_s^2)^{1/2}$. Suppose Assumption~\ref{main-ass:ARM-regularity} holds. Then, for every $s\in[S]$,
		\begin{align}
			\EE\left(\hat\sigma_s-\sigma\right)^2 \le C\left\{\frac{1}{n_2}+\frac{1}{n_2}\sum_{t=n_1}^{n_1+n_2-1}\EE\left\|\mu_{s,t}-f_0\right\|_{L_2(P_X)}^2\right\}, \label{eqn:scale_remove}
		\end{align}
		where $C$ depends only on $B$, $\underline\sigma$, $\bar\sigma$, and the fourth moment of the standardized error distribution.
	\end{lemma}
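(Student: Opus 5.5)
The plan is to reduce everything to the unclipped estimator $\widetilde\sigma_s^2$, decompose it into a pure-noise average, a bias average, and a martingale cross term, and control each part separately.

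\textbf{Step 1: removing the projection and the square root.} Since $\sigma^2\in[\underline\sigma^2,\bar\sigma^2]$, the projection $\Pi_{[\underline\sigma^2,\bar\sigma^2]}$ is $1$-Lipschitz and fixes $\sigma^2$. Hence
\begin{align*}
|\hat\sigma_s^2-\sigma^2|\le|\widetilde\sigma_s^2-\sigma^2|.
\end{align*}
Because $\hat\sigma_s,\sigma\ge\underline\sigma$, we also have
\begin{align*}
|\hat\sigma_s-\sigma|=\frac{|\hat\sigma_s^2-\sigma^2|}{\hat\sigma_s+\sigma}\le\frac{|\hat\sigma_s^2-\sigma^2|}{2\underline\sigma}.
\end{align*}
It therefore suffices to bound $\EE(\widetilde\sigma_s^2-\sigma^2)^2$ by the right-hand side of \eqref{eqn:scale_remove}.

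\textbf{Step 2: decomposition.} Write $Y_{t+1}-\mu_{s,t}(X_{t+1})=\varepsilon_{t+1}+\Delta_t(X_{t+1})$, where $\Delta_t:=f_0-\mu_{s,t}$ and $|\Delta_t|\le 2B$. Then
\begin{align*}
\widetilde\sigma_s^2-\sigma^2=\frac{1}{n_2}\sum_t(\varepsilon_{t+1}^2-\sigma^2)+\frac{2}{n_2}\sum_t\varepsilon_{t+1}\Delta_t(X_{t+1})+\frac{1}{n_2}\sum_t\Delta_t(X_{t+1})^2=:T_1+T_2+T_3.
\end{align*}
Using $(a+b+c)^2\le 3(a^2+b^2+c^2)$, I treat the three terms separately.

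\textbf{Step 3: the noise and cross terms.} For $T_1$, the errors are i.i.d. with finite fourth moment (Assumption~\ref{main-ass:ARM-regularity}), so $\EE T_1^2=\mathrm{Var}(\varepsilon^2)/n_2\le C/n_2$. For $T_2$, predictability is the key: $\Delta_t$ is $\sigma(Z_1,\dots,Z_t,U)$-measurable and $Z_{t+1}$ is independent of that $\sigma$-field, with $\varepsilon_{t+1}$ independent of $X_{t+1}$ and mean zero. The summands $\varepsilon_{t+1}\Delta_t(X_{t+1})$ therefore form a martingale difference sequence, and orthogonality of increments gives
\begin{align*}
\EE T_2^2=\frac{4}{n_2^2}\sum_t\sigma^2\,\EE\|\Delta_t\|_{L_2(P_X)}^2\le\frac{C}{n_2}\sum_t\EE\|\Delta_t\|_{L_2(P_X)}^2,
\end{align*}
which is already of the required form.

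\textbf{Step 4: the bias term, the main obstacle.} A naive squaring of $T_3$ would give $(\text{average bias})^2$, which is not linear in the bias. Instead I use boundedness: $0\le T_3\le 4B^2$, so $T_3^2\le 4B^2T_3$. Then
\begin{align*}
\EE T_3^2\le 4B^2\,\EE T_3=\frac{4B^2}{n_2}\sum_t\EE\|\Delta_t\|_{L_2(P_X)}^2,
\end{align*}
where the equality again uses predictability and the independence of $X_{t+1}$ from the past to write $\EE[\Delta_t(X_{t+1})^2\mid\text{past}]=\|\Delta_t\|_{L_2(P_X)}^2$.

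Combining Steps 1 through 4 yields \eqref{eqn:scale_remove}, with $C$ depending only on $B$, $\underline\sigma$, $\bar\sigma$ and the fourth moment of $h_{\varepsilon,0}$.
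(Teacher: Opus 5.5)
Your proposal is correct and follows the paper's proof essentially step for step. It uses the same three-term decomposition, the fourth-moment bound for the noise average, martingale-difference orthogonality (via predictability) for the cross term, and then the nonexpansive projection together with $|\hat\sigma_s-\sigma|\le|\hat\sigma_s^2-\sigma^2|/(2\underline\sigma)$; the only cosmetic difference is in the bias term, where you use $T_3^2\le 4B^2T_3$ directly while the paper applies Cauchy--Schwarz and then $\Delta_t^4\le 4B^2\Delta_t^2$, both resting on the same boundedness-plus-predictability observation.
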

	
	\begin{proof}[Proof of Lemma~\ref{lemma:remove_scale_estimation}]
		Fix $s\in[S]$ and define $r_{s,t}(x) = f_0(x)-\mu_{s,t}(x), t=n_1,\ldots,n_1+n_2-1$. By predictability, $r_{s,t}$ is measurable with respect to $\sigma(Z_1,\dots,Z_t,U)$ as a random function. Moreover, $Y_{t+1}-\mu_{s,t}(X_{t+1}) = \varepsilon_{t+1} + r_{s,t}(X_{t+1})$. Hence
		\begin{align*}
			\widetilde\sigma_s^2-\sigma^2
			&= \frac{1}{n_2}\sum_{t=n_1}^{n_1+n_2-1}\left(\varepsilon_{t+1}^2-\sigma^2\right) + \frac{2}{n_2}\sum_{t=n_1}^{n_1+n_2-1}\varepsilon_{t+1}r_{s,t}(X_{t+1}) + \frac{1}{n_2}\sum_{t=n_1}^{n_1+n_2-1}r_{s,t}^2(X_{t+1}) \\
			&=: I+II+III.
		\end{align*}
		The finite fourth moment of the error gives
		\begin{align}
			\EE I^2 \le \frac{C}{n_2}. \label{eqn:control_I}
		\end{align}
		Since $\EE(\varepsilon_{t+1}|Z_1,\dots,Z_t,U,X_{t+1}) = 0$, the summands constituting $II$ form a martingale-difference sequence. Moreover, because $\varepsilon_{t+1}$ is independent of $(Z_1,\dots,Z_t,U,X_{t+1})$, $\EE(\varepsilon_{t+1}^2)=\sigma^2$. Then we have
		\begin{align}
			\EE II^2
			&= \frac{4}{n_2^2}\sum_{t=n_1}^{n_1+n_2-1}\EE\left[\varepsilon_{t+1}^2 r_{s,t}^2(X_{t+1})\right] \nonumber\\
			&= \frac{4\sigma^2}{n_2^2}\sum_{t=n_1}^{n_1+n_2-1}\EE\left\|r_{s,t}\right\|_{L_2(P_X)}^2 \nonumber\\
			&\le \frac{C}{n_2}\left\{\frac{1}{n_2}\sum_{t=n_1}^{n_1+n_2-1}\EE\left\|r_{s,t}\right\|_{L_2(P_X)}^2\right\}. \label{eqn:control_II}
		\end{align}
		By Cauchy's inequality, $III^2 \le \frac{1}{n_2}\sum_{t=n_1}^{n_1+n_2-1} r_{s,t}^4(X_{t+1})$. The uniform boundedness of $f_0$ and $\mu_{s,t}$ implies
		\begin{align}
			\EE III^2 \le C\left\{\frac{1}{n_2}\sum_{t=n_1}^{n_1+n_2-1}\EE\left\|r_{s,t}\right\|_{L_2(P_X)}^2\right\}. \label{eqn:control_III}
		\end{align}
		Combining \eqref{eqn:control_I}--\eqref{eqn:control_III} gives
		\begin{align}
			\EE\left(\widetilde\sigma_s^2-\sigma^2\right)^2 \le C\left\{\frac{1}{n_2}+\frac{1}{n_2}\sum_{t=n_1}^{n_1+n_2-1}\EE\left\|\mu_{s,t}-f_0\right\|_{L_2(P_X)}^2\right\}. \label{eqn:scale_square_remove}
		\end{align}
		Since $\sigma^2\in[\underline\sigma^2,\bar\sigma^2]$ and projection onto a closed interval is nonexpansive, $|\hat\sigma_s^2-\sigma^2| \le |\widetilde\sigma_s^2-\sigma^2|$. Furthermore,
		\begin{align*}
			\left|\hat\sigma_s-\sigma\right| = \frac{\left|\hat\sigma_s^2-\sigma^2\right|}{\hat\sigma_s+\sigma} \le \frac{1}{2\underline\sigma}\left|\hat\sigma_s^2-\sigma^2\right|.
		\end{align*}
		The result follows.
	\end{proof}
	
	\begin{lemma}[Evolving-to-population comparator reduction]\label{lem:supp-evolving-population-comparator}
		Let $\Theta$ be a parameter set and let $g(\cdot;\theta) = \left(g_1(\cdot;\theta),\ldots,g_{\MR}(\cdot;\theta)\right)^\top, \theta\in\Theta$ be any gating rule satisfying $g_m(x;\theta)\ge 0$ and $\sum_{m=1}^{\MR}g_m(x;\theta)=1$ for every $x\in\cX$ and $\theta\in\Theta$. Define
		\begin{align*}
			\check f_g^t(x;\theta) &= \sum_{\ell=1}^{\MS}\hat f_{S,\ell}^t(x) + \sum_{m=1}^{\MR}g_m(x;\theta)\hat f_{R,m}^t(x), \\
			f_g^*(x;\theta) &= \sum_{\ell=1}^{\MS}f_{S,\ell}^*(x) + \sum_{m=1}^{\MR}g_m(x;\theta)f_{R,m}^*(x).
		\end{align*}
		Under Assumption~\ref{main-ass:evolving-experts},
		\begin{align}
			\EE\sup_{\theta\in\Theta}\left\|\check f_g^t(\cdot;\theta)-f_g^*(\cdot;\theta)\right\|_{L_2(P_X)}^2 \le C\left(\MS^2+\MR\right)a_t^2. \label{eq:supp-evolving-population-distance}
		\end{align}
		Consequently, for every $\theta\in\Theta$,
		\begin{align}
			\EE\left\|\check f_g^t(\cdot;\theta)-f_0\right\|_{L_2(P_X)}^2 \le 2\left\|f_g^*(\cdot;\theta)-f_0\right\|_{L_2(P_X)}^2 + C\left(\MS^2+\MR\right)a_t^2. \label{eq:supp-evolving-population-risk}
		\end{align}
		In particular, when $\MS$ is fixed and $\MR\ge 2$, the last term is bounded by $C\MR a_t^2$.
	\end{lemma}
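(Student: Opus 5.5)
The plan is to write the difference $\check f_g^t(\cdot;\theta)-f_g^*(\cdot;\theta)$ as a shared part plus a routed part, bound each pointwise in $x$ uniformly over $\theta$, and then integrate. Set $\Delta_{S,\ell}^t:=\hat f_{S,\ell}^t-f_{S,\ell}^*$ and $\Delta_{R,m}^t:=\hat f_{R,m}^t-f_{R,m}^*$. Then
\begin{align*}
\check f_g^t(x;\theta)-f_g^*(x;\theta)=\sum_{\ell=1}^{\MS}\Delta_{S,\ell}^t(x)+\sum_{m=1}^{\MR}g_m(x;\theta)\Delta_{R,m}^t(x).
\end{align*}
By $(a+b)^2\le 2a^2+2b^2$ it suffices to bound the two sums separately.

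For the shared part, Cauchy--Schwarz gives $\bigl(\sum_{\ell}\Delta_{S,\ell}^t(x)\bigr)^2\le \MS\sum_{\ell}\Delta_{S,\ell}^t(x)^2$. This bound does not depend on $\theta$. For the routed part, the gating weights are nonnegative and sum to one, so Jensen's inequality (convexity of the square) gives
\begin{align*}
\Bigl(\sum_m g_m(x;\theta)\Delta_{R,m}^t(x)\Bigr)^2\le\sum_m g_m(x;\theta)\Delta_{R,m}^t(x)^2\le \sum_{m=1}^{\MR}\Delta_{R,m}^t(x)^2 .
\end{align*}
This is again free of $\theta$. The key step is exactly this: we use $g_m\le 1$ pointwise to remove the dependence on $\theta$ before taking the supremum. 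This makes the supremum over the possibly uncountable set $\Theta$ trivial, since no covering or measurability argument is needed. (If measurability of the supremum is a concern, note that it is dominated by a measurable $\theta$-free quantity, so the outer expectation can be used.)

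Combining the two bounds and integrating against $P_X$,
\begin{align*}
\sup_{\theta\in\Theta}\|\check f_g^t(\cdot;\theta)-f_g^*(\cdot;\theta)\|_{L_2(P_X)}^2\le 2\MS\sum_{\ell=1}^{\MS}\|\Delta_{S,\ell}^t\|_{L_2(P_X)}^2+2\sum_{m=1}^{\MR}\|\Delta_{R,m}^t\|_{L_2(P_X)}^2 .
\end{align*}
Taking expectations and applying Assumption~\ref{main-ass:evolving-experts} to each term gives the bound $2C_2(\MS^2+\MR)a_t^2$, which is \eqref{eq:supp-evolving-population-distance}.

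For \eqref{eq:supp-evolving-population-risk}, fix $\theta$ and write $\check f_g^t-f_0=(f_g^*-f_0)+(\check f_g^t-f_g^*)$. Applying $(a+b)^2\le 2a^2+2b^2$ in $L_2(P_X)$, taking expectations (the first term is deterministic because the benchmarks are deterministic), and bounding the second term by the supremum result just proved gives the claim. Finally, when $\MS$ is fixed and $\MR\ge2$, we have $\MS^2\le \MS^2\MR$, so $\MS^2+\MR\le(\MS^2+1)\MR$ and the constant is absorbed into $C$.

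I expect no real obstacle in this argument. The only delicate point is making sure the routed contribution scales as $\MR$ and not $\MR^2$, and that is exactly what Jensen's inequality with the simplex weights guarantees.
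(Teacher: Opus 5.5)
Your proposal is correct and follows essentially the same route as the paper's proof: the shared/routed split $U_t+V_t(\theta)$, Cauchy--Schwarz for the shared sum, and Jensen with the simplex weights plus $g_m\le 1$ to remove $\theta$ before taking the supremum. The paper then closes with the same squared triangle inequality for \eqref{eq:supp-evolving-population-risk}. Your side remark about measurability of the supremum, handled by domination with a $\theta$-free quantity, is a small addition that the paper leaves implicit.
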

	
	\begin{proof}
		Write
		\begin{align*}
			U_t = \sum_{\ell=1}^{\MS}\left(\hat f_{S,\ell}^t-f_{S,\ell}^*\right), \qquad V_t(\theta) = \sum_{m=1}^{\MR}g_m(\cdot;\theta)\left(\hat f_{R,m}^t-f_{R,m}^*\right).
		\end{align*}
		Then $\check f_g^t(\cdot;\theta)-f_g^*(\cdot;\theta) = U_t+V_t(\theta)$. For the shared component,
		\begin{align*}
			\left\|U_t\right\|_{L_2(P_X)}^2 \le \MS\sum_{\ell=1}^{\MS}\left\|\hat f_{S,\ell}^t-f_{S,\ell}^*\right\|_{L_2(P_X)}^2.
		\end{align*}
		For the routed component, Jensen's inequality gives, pointwise in $x$,
		\begin{align*}
			\left|V_t(x;\theta)\right|^2 \le \sum_{m=1}^{\MR}g_m(x;\theta)\left|\hat f_{R,m}^t(x)-f_{R,m}^*(x)\right|^2.
		\end{align*}
		Therefore,
		\begin{align*}
			\sup_{\theta\in\Theta}\left\|V_t(\theta)\right\|_{L_2(P_X)}^2 \le \sum_{m=1}^{\MR}\left\|\hat f_{R,m}^t-f_{R,m}^*\right\|_{L_2(P_X)}^2.
		\end{align*}
		Assumption~\ref{main-ass:evolving-experts} now gives
		\begin{align*}
			\EE\left\|U_t\right\|_{L_2(P_X)}^2 \le C\MS^2a_t^2, \qquad \EE\sup_{\theta\in\Theta}\left\|V_t(\theta)\right\|_{L_2(P_X)}^2 \le C\MR a_t^2.
		\end{align*}
		Moreover,
		\begin{align*}
			\sup_{\theta\in\Theta}\left\|U_t+V_t(\theta)\right\|_{L_2(P_X)}^2 \le 2\left\|U_t\right\|_{L_2(P_X)}^2 + 2\sup_{\theta\in\Theta}\left\|V_t(\theta)\right\|_{L_2(P_X)}^2.
		\end{align*}
		Combining this inequality with the preceding bounds proves \eqref{eq:supp-evolving-population-distance}.
		
		The second conclusion follows from
		\begin{align*}
			\left\|\check f_g^t(\cdot;\theta)-f_0\right\|_{L_2(P_X)}^2 \le 2\left\|f_g^*(\cdot;\theta)-f_0\right\|_{L_2(P_X)}^2 + 2\left\|\check f_g^t(\cdot;\theta)-f_g^*(\cdot;\theta)\right\|_{L_2(P_X)}^2.
		\end{align*}
		Here we complete the proof.
	\end{proof}

	\begin{proof}[Proof of Theorem \ref{main-thm:softmax}]
		We first verify that under our scenario with $\mu_{s,t}(x)=\check{f}^t(x;\theta^{(s)})$, $\pi_s\equiv 1/S_{\eta}$, and $\hat{\sigma}_{s,t}\equiv \hat{\sigma}_s$, Assumptions \ref{ass:bound_forecast} and \ref{ass:bound_scale_est} required by the general AFTER algorithm hold. For any $\theta^{(s)} \in \Theta(\eta)$,
		\begin{align*}
			\check{f}^t(x;\theta^{(s)}) = \sum_{\ell=1}^{\MS}\hat{f}_{S,\ell}^t(x) + \sum_{m=1}^{\MR}g_m^{\soft}(x;\theta^{(s)})\hat{f}_{R,m}^t(x).
		\end{align*}
		Since $\sum_{m=1}^{\MR}g_m^{\soft}(x;\theta^{(s)})=1$, Assumption~\ref{main-ass:bounded-experts} gives
		\begin{align*}
			\|\check{f}^t(\cdot;\theta^{(s)})\|_\infty \le \sum_{\ell=1}^{\MS}\|\hat{f}_{S,\ell}^t\|_\infty + \sum_{m=1}^{\MR}g_m^{\soft}(\cdot;\theta^{(s)})\|\hat{f}_{R,m}^t\|_\infty \le A(\MS+1).
		\end{align*}
		Thus the candidate forecast predictors are uniformly bounded. Moreover, since $\hat{\sigma}_s$ are projected onto $[\underline{\sigma},\bar{\sigma}]$, Assumption \ref{ass:bound_scale_est} is satisfied automatically.
		
		Thus, applying Proposition~\ref{prop:risk_AFTER} with burn-in time $n_1+n_2$ and aggregation legnth $n_3$, with the uniform prior $\pi_s=1/S_\eta$, and then using Lemma~\ref{lemma:remove_scale_estimation} with $\mu_{s,t} = \check f^t(\cdot;\theta^{(s)})$, we obtain
		\begin{align}
			\frac{1}{n_3}\sum_{t=n_1+n_2}^{n-1}\EE\left\|f_0-\widetilde f^t\right\|_{L_2(P_X)}^2
			&\le C\inf_{1\le s\le S_\eta}\left\{\frac{1}{n_3}\sum_{t=n_1+n_2}^{n-1}\EE\left\|f_0-\check f^t(\cdot;\theta^{(s)})\right\|_{L_2(P_X)}^2 \right. \nonumber\\
			&\qquad \left. + \frac{1}{n_2}\sum_{t=n_1}^{n_1+n_2-1}\EE\left\|f_0-\check f^t(\cdot;\theta^{(s)})\right\|_{L_2(P_X)}^2 + \frac{1}{n_2} + \frac{\log S_\eta}{n_3}\right\}. \label{eq:3.1_1}
		\end{align}
		
		Define $f^*(x;\theta) = \sum_{\ell=1}^{\MS}f_{S,\ell}^*(x) + \sum_{m=1}^{\MR}g_m^{\soft}(x;\theta)f_{R,m}^*(x)$, and let $\theta^\circ \in \argmin_{\theta\in\Theta_{\MR}}\left\|f_0-f^*(\cdot;\theta)\right\|_{L_2(P_X)}^2$. Choose $\theta^{(s^\circ)}\in\Theta(\eta)$ such that $\left\|\theta^{(s^\circ)}-\theta^\circ\right\|_2 \le \eta$.
		By little abuse of notation, for $z\in\mathbb{R}^{\MR}$, we denote $g^{\soft}(z)$ as the softmax function on $z$, with $g^{\soft}_m(z)=\exp\{z_m\}/\sum_{j=1}^{\MR}\exp\{z_j\}$. For $z,z'\in\RR^{\MR}$, the softmax map satisfies
		\begin{align*}
			\left\|g^{\soft}(z)-g^{\soft}(z')\right\|_1 \le \left\|z-z'\right\|_\infty.
		\end{align*}
		In fact, we have for the softmax map,
		\begin{align*}
			\frac{\partial g_m^{\soft}}{\partial z_j} = g_m^{\soft}(\delta_{mj}-g_j^{\soft}),
		\end{align*}
		and thus the Jacobian can be written as $J(z)=\operatorname{diag}(g^{\soft})-g^{\soft}(g^{\soft})^{\top}$. We now compute the $(\infty,1)$ norm of the Jacobian. Recall that it can be computed as
		\begin{align*}
			\|J(z)\|_{\infty \to 1} = \max_{\|x\|_\infty \le 1}\|J(z)x\|_1.
		\end{align*}
		For any $\|x\|_{\infty}\leq 1$, we have
		\begin{align*}
			(J(z)x)_j = g^{\soft}_j x_j - g^{\soft}_j\sum_{j'=1}^{\MR} g^{\soft}_{j'}x_{j'} = g^{\soft}_j(x_j-c),
		\end{align*}
		where $c = \langle g^{\soft},x\rangle = \sum_{j'=1}^{\MR} g^{\soft}_{j'}x_{j'}$, which can be understood as the expectation of $x$ under probability distribution $g^{\soft}$. Then, $\|J(z)x\|_1 = \sum_{j=1}^{\MR} g^{\soft}_j |x_j-c|$.
		
		Now, let $S^+ = \{j : x_j \ge c\}$ and $S^- = \{j : x_j < c\}$, and denote $p=\sum_{j\in S^+}g^{\soft}_j$. Moreover, define $\bar{x}_+ = \sum_{j \in S^+}g^{\soft}_j x_j/p$ be the conditional mean of $x_j$ in the positive index set $S^+$, and $\bar{x}_- = \sum_{j \in S^-}g^{\soft}_j x_j/(1-p)$ similarly. We then have
		\begin{align*}
			\|J(z)x\|_1 = 2\sum_{i \in S^+}g^{\soft}_i(x_i-c) = 2\left(p\bar{x}_+ - p(p\bar{x}_+ + (1-p)\bar{x}_-)\right) = 2p(1-p)(\bar{x}_+-\bar{x}_-).
		\end{align*}
		Since $\|x\|_{\infty}\leq 1$, we have $\bar{x}_+-\bar{x}_-\leq 2$, and thus
		\begin{align*}
			\|J(z)x\|_1 \le 4p(1-p) \le 1 \quad\Longrightarrow\quad \|J(z)\|_{\infty\to1}\leq 1.
		\end{align*}
		Finally, by mean-value theorem and Minkowski's inequality, we have for any $z,z'\in\mathbb{R}^{\MR}$,
		\begin{align*}
			\|g^{\soft}(z)-g^{\soft}(z')\|_1
			&\leq \int_0^1\|J(tz+(1-t)z')(z-z')\|_1\,\mathrm{d}t\\
			&\leq \int_0^1\|J(tz+(1-t)z')\|_{\infty \to 1}\|z-z'\|_\infty\,\mathrm{d}t\\
			&\leq \int_0^1\|z-z'\|_\infty\,\mathrm{d}t = \|z-z'\|_\infty.
		\end{align*}
		Since $\|(x^\top,1)^\top\|_2\le C\sqrt d$ uniformly over $x\in\cX$, by Cauchy's inequality, we have
		\begin{align*}
			\max_{m\le\MR}\left|s_m^{\lin}(x;\theta_m^{(s^\circ)})-s_m^{\lin}(x;\theta_m^\circ)\right| \le C\sqrt d\,\eta.
		\end{align*}
		It follows from Assumption~\ref{main-ass:bounded-experts} that
		\begin{align*}
			\left\|f^*(\cdot;\theta^{(s^\circ)})-f^*(\cdot;\theta^\circ)\right\|_{L_2(P_X)}^2 \le Cd\eta^2.
		\end{align*}
		Consequently,
		\begin{align*}
			\left\|f^*(\cdot;\theta^{(s^\circ)})-f_0\right\|_{L_2(P_X)}^2 \le C\left\{\fA_{\soft}(\MS,\MR)+d\eta^2\right\}.
		\end{align*}
		
		Applying Lemma~\ref{lem:supp-evolving-population-comparator} with $g=g^{\soft}$ gives
		\begin{align*}
			&\frac{1}{n_2}\sum_{t=n_1}^{n_1+n_2-1}\EE\left\|f_0-\check f^t(\cdot;\theta^{(s^\circ)})\right\|_{L_2(P_X)}^2 + \frac{1}{n_3}\sum_{t=n_1+n_2}^{n-1}\EE\left\|f_0-\check f^t(\cdot;\theta^{(s^\circ)})\right\|_{L_2(P_X)}^2 \\
			&\qquad \le C\left\{\fA_{\soft}(\MS,\MR) + \MR\left(\bar a_{n_2}^2+\bar a_{n_3}^2\right) + d\eta^2\right\}.
		\end{align*}
		Substituting the comparator $\theta^{(s^\circ)}$ into \eqref{eq:3.1_1} yields
		{\footnotesize
			\begin{align}
				\frac{1}{n_3}\sum_{t=n_1+n_2}^{n-1}\EE\left\|f_0-\widetilde f^t\right\|_{L_2(P_X)}^2 \le C\left\{\fA_{\soft}(\MS,\MR) + \MR\left(\bar a_{n_2}^2+\bar a_{n_3}^2\right) + d\eta^2 + \frac{1}{n_2} + \frac{\log S_\eta}{n_3}\right\}. \label{eq:supp-dense-before-covering}
			\end{align}
		}
		
		We next bound $S_\eta$. Under the normalization $\theta_{\MR}=0$, the effective parameter dimension is $D = (\MR-1)(d+1)$. Assumption~\ref{main-ass:compact-theta} implies that $\Theta_{\MR}$ is contained in a $D$-dimensional box of radius $C_1\log\MR$. Hence a Euclidean $\eta$-net may be chosen such that
		\begin{align*}
			\log S_\eta \le C\MR d\left[\log(\MR d) + \log\left(\frac{1}{\eta}\right)\right].
		\end{align*}
		Choosing $\eta \asymp \frac{\sqrt{\MR}}{\sqrt{n_3}}$ gives $d\eta^2 \le \frac{Cd\MR}{n_3}$. Moreover,
		\begin{align*}
			\frac{\log S_\eta}{n_3} \le C\frac{\MR d}{n_3}\log(\MR dn_3).
		\end{align*}
		Since $n_1\asymp n_2\asymp n_3$, $\MR\ge2$, and $d\ge1$, we have
		\begin{align*}
			\frac{1}{n_3}\sum_{t=n_1+n_2}^{n-1}\EE\left\|f_0-\widetilde f^t\right\|_{L_2(P_X)}^2 \le C\left\{\fA_{\soft}(\MS,\MR) + \MR\left(\bar a_{n_2}^2+\bar a_{n_3}^2\right) + \frac{\MR d}{n}\log(\MR d n)\right\}.
		\end{align*}
		
		It remains to pass from the time-averaged $\tilde{f}_0$ to the projected estimator $\hat{f}_0$ in Step (f). By Jensen's inequality,
		\begin{align}
			\EE\|f_0-\tilde{f}_0\|_{L_2(P_X)}^2
			&\le \frac{1}{n_3}\sum_{t=n_1+n_2}^{n-1}\EE\|f_0-\tilde{f}^t\|_{L_2(P_X)}^2 \nonumber\\
			&\le C\left[\fA_{\soft}(\MS,\MR) + \MR(\bar a_{n_2}^2+\bar a_{n_3}^2) + \frac{\MR d}{n}\log(\MR d n)\right]. \label{eqn:tildef_bound}
		\end{align}
		By definition,
		\begin{align*}
			\hat{f}_0 = \argmin_{\theta^{(s)} \in \Theta(\eta)}\|\tilde{f}_0-\bar{f}^{n_3}(\cdot;\theta^{(s)})\|_{L_2(P_X)}^2.
		\end{align*}
		Therefore, for every $\theta^{(s)} \in \Theta(\eta)$,
		\begin{align*}
			\|\tilde{f}_0-\hat{f}_0\|_{L_2(P_X)} \le \|\tilde{f}_0-\bar{f}^{n_3}(\cdot;\theta^{(s)})\|_{L_2(P_X)}.
		\end{align*}
		Using the triangle inequality,
		\begin{align*}
			\|f_0-\hat{f}_0\|_{L_2(P_X)}^2 \le 2\|f_0-\tilde{f}_0\|_{L_2(P_X)}^2 + 2\|\tilde{f}_0-\hat{f}_0\|_{L_2(P_X)}^2.
		\end{align*}
		Hence, for every $\theta^{(s)} \in \Theta(\eta)$,
		\begin{align*}
			\|f_0-\hat{f}_0\|_{L_2(P_X)}^2 \le 6\|f_0-\tilde{f}_0\|_{L_2(P_X)}^2 + 4\|f_0-\bar{f}^{n_3}(\cdot;\theta^{(s)})\|_{L_2(P_X)}^2,
		\end{align*}
		where we use the inequality $(a+b)^2 \le 2a^2 + 2b^2$.
		Recall that
		\begin{align*}
			\bar f^{n_3}(x;\theta^{(s)}) = \frac{1}{n_3}\sum_{t=n_1+n_2}^{n-1}\check f^t(x;\theta^{(s)}).
		\end{align*}
		Taking expectations and then the infimum over $\theta^{(s)}\in\Theta(\eta)$, Jensen's inequality gives
		\begin{align*}
			\EE\left\|f_0-\hat f_0\right\|_{L_2(P_X)}^2 \le 6\EE\left\|f_0-\widetilde f_0\right\|_{L_2(P_X)}^2 + \frac{4}{n_3}\inf_{\theta^{(s)}\in\Theta(\eta)}\sum_{t=n_1+n_2}^{n-1}\EE\left\|f_0-\check f^t(\cdot;\theta^{(s)})\right\|_{L_2(P_X)}^2.
		\end{align*}
		Choosing the grid comparator $\theta^{(s^\circ)}$ constructed above and applying the preceding comparator bound yield
		\begin{align*}
			\EE\left\|f_0-\hat f_0\right\|_{L_2(P_X)}^2 \le C\left\{\fA_{\soft}(\MS,\MR) + \MR\left(\bar a_{n_2}^2+\bar a_{n_3}^2\right) + \frac{\MR d}{n}\log(\MR d n)\right\}.
		\end{align*}
		This proves the desired oracle inequality.
	\end{proof}
	
	\subsection{Proof of Example \ref{main-exa:role-shared-routed-expert}}
	\label{subsec:supp-proof-ex-2}
	
	\begin{proof}[Proof of Example \ref{main-exa:role-shared-routed-expert}]
		Let $I_m = [(m-1)/\MR,m/\MR]$ be a partition into $\MR$ regions, and $x_m = (2m-1)/(2\MR)$ be the center of $I_m$. Denote $h = 1/\MR$. For sufficiently large $\MR$, set $\tau = c_{\Theta}/\lfloor \sqrt{\log \MR} \rfloor$, where $c_{\Theta} > 0$ is chosen large enough so that the normalized softmax parameters constructed below belong to the logarithmic parameter box in Assumption \ref{main-ass:compact-theta}. Finitely many smaller values of $\MR$ can be absorbed into the constants. Let
		\begin{align*}
			\theta_m = \left(\frac{x_m}{\tau^2},-\frac{x_m^2}{2\tau^2}\right), \qquad s_m^{\lin}(x;\theta_m) = (x,1)^\top \theta_m, \qquad g_m^{\soft}(x;\theta) = \frac{\exp\{s_m^{\lin}(x;\theta_m)\}}{\sum_{j=1}^{\MR}\exp\{s_j^{\lin}(x;\theta_j)\}}
		\end{align*}
		for $m \in [\MR]$. The displayed parameters are unnormalized. By the shift invariance of the softmax logits, replacing $\theta_m$ by $\theta_m - \theta_{\MR}$ imposes the reference normalization without changing the softmax weights. Since the coordinates of the normalized parameters are of order $\tau^{-2} = O(\log \MR)$, the constructed parameter belongs to $\Theta_{\MR}$ by the choice of $c_{\Theta}$.
		
		We can rewrite the gating function as
		\begin{align*}
			g_m^{\soft}(x;\theta)
			&= \frac{\exp\{-(x-x_m)^2/(2\tau^2)\}\exp\{x^2/(2\tau^2)\}}{\sum_{j=1}^{\MR}\exp\{-(x-x_j)^2/(2\tau^2)\}\exp\{x^2/(2\tau^2)\}} \\
			&= \frac{\exp\{-(x-x_m)^2/(2\tau^2)\}}{\sum_{j=1}^{\MR}\exp\{-(x-x_j)^2/(2\tau^2)\}}.
		\end{align*}
		We now prove that this gating mechanism yields the desired upper bounds.
		
		We first consider the case where the shared expert is included. Consistent with the main text, set $\MS = 1$ and take $f_{S,1}^*(x) = A \sin(2\pi Lx)$. The routed experts only need to approximate the residual $f_0 - f_{S,1}^* = x^2$. Take
		\begin{align*}
			f_{R,m}^*(x) = a_m x + b_m, \qquad a_m = 2x_m = \frac{2m-1}{\MR}, \qquad b_m = -x_m^2 = -\left(\frac{2m-1}{2\MR}\right)^2,
		\end{align*}
		for $m \in [\MR]$. Since $X \sim \Unif[0,1]$, the approximation error of this construction satisfies
		\begin{align*}
			\int_0^1 \left(x^2 - \sum_{m=1}^{\MR} g_m^{\soft}(x;\theta)(a_m x + b_m)\right)^2 dx = \int_0^1 \left(\sum_{m=1}^{\MR} g_m^{\soft}(x;\theta)(x - x_m)^2\right)^2 dx.
		\end{align*}
		
		We next bound the local second moment of the softmax weights. Set $u_m = (x - x_m)/h$, and $\varpi = h^2/\tau^2$. Then $\exp\{-(x - x_m)^2/(2\tau^2)\} = \exp\{-\varpi u_m^2/2\}$.
		
		For $\MR$ large enough, $\tau \ge 2h$. Uniformly over $x \in [0,1]$, including boundary points, there are at least a constant multiple of $\tau/h$ grid centers satisfying $|x - x_m| \le \tau/2$. Hence
		\begin{align*}
			\sum_{m=1}^{\MR} \exp\{-\varpi u_m^2/2\} \ge c \frac{\tau}{h},
		\end{align*}
		for some universal constant $c > 0$.
		
		Moreover, for each $k \ge 1$, there are at most two indices $m$ such that $k - 1/2 < |u_m| \le k + 1/2$. For those indices,
		\begin{align*}
			u_m^2 \exp\{-\varpi u_m^2/2\} \le \left(k + \frac{1}{2}\right)^2 \exp\{-\varpi (k - 1/2)^2/2\}.
		\end{align*}
		
		Therefore,
		\begin{align*}
			\sum_{m=1}^{\MR} \exp\{-\varpi u_m^2/2\} u_m^2
			&\le \frac{1}{4} + 2\sum_{k=1}^{\infty} \left(k + \frac{1}{2}\right)^2 \exp\{-\varpi (k - 1/2)^2/2\} \\
			&\lesssim \int_0^{\infty} u^2 \exp\{-\varpi u^2/4\} \, du \\
			&\asymp \varpi^{-3/2} = \frac{\tau^3}{h^3}.
		\end{align*}
		
		Combining the denominator lower bound and the numerator bound gives
		\begin{align*}
			\sum_{m=1}^{\MR} g_m^{\soft}(x;\theta) (x - x_m)^2 = h^2 \frac{\sum_{m=1}^{\MR} \exp\{-\varpi u_m^2/2\} u_m^2}{\sum_{m=1}^{\MR}\exp\{-\varpi u_m^2/2\}} \lesssim h^2\frac{\tau^3/h^3}{\tau/h} = \tau^2.
		\end{align*}
		Thus,
		\begin{align*}
			\left|x^2 - \sum_{m=1}^{\MR} g_m^{\soft}(x;\theta)(a_m x + b_m)\right| \lesssim \tau^2, \qquad x \in [0,1].
		\end{align*}
		Consequently,
		\begin{align*}
			\fA_{\soft}(1,\MR) = \inf_{\theta \in \Theta_{\MR}} \left\|f_0 - f_{S,1}^* - \sum_{m=1}^{\MR}g_m^{\soft}(\cdot;\theta)f_{R,m}^* \right\|_{L_2(P_X)}^2 = O(\tau^4) = O\left(\frac{1}{\log^2 \MR}\right).
		\end{align*}
		
		We now consider the case without shared experts. Set $\MS = 0$, so that the routed experts must approximate the full target function $f_0(x) = A \sin(2\pi Lx) + x^2$. For each $m \in [\MR]$, define the local affine approximation at $x_m$ by
		\begin{align*}
			f_{R,m}^*(x) = a_m x + b_m := f_0(x_m) + f_0'(x_m)(x - x_m),
		\end{align*}
		where
		\begin{align*}
			a_m = f_0'(x_m) = 2\pi A L \cos(2\pi Lx_m) + 2x_m, \qquad b_m = f_0(x_m) - a_m x_m.
		\end{align*}
		We assume, as in the example statement, that the routed expert class contains these local affine approximations. The constants below retain their dependence on the curvature of $f_0$. By Taylor's theorem, for every $x \in [0,1]$ and each $m \in [\MR]$, there exists $\xi_{m,x}$ between $x$ and $x_m$ such that
		\begin{align*}
			f_0(x) - f_{R,m}^*(x) = \frac{1}{2}f_0''(\xi_{m,x})(x - x_m)^2.
		\end{align*}
		Since
		\begin{align*}
			f_0''(x) = -(2\pi L)^2 A \sin(2\pi Lx) + 2,
		\end{align*}
		we have
		\begin{align*}
			\sup_{x \in [0,1]} |f_0''(x)| \le 4\pi^2 |A| L^2 + 2 =: C_L.
		\end{align*}
		Therefore,
		\begin{align*}
			\left|f_0(x) - \sum_{m=1}^{\MR}g_m^{\soft}(x;\theta)f_{R,m}^*(x)\right|
			&= \left|\sum_{m=1}^{\MR} g_m^{\soft}(x;\theta) \left(f_0(x) - f_{R,m}^*(x)\right)\right| \\
			&\le \frac{C_L}{2}\sum_{m=1}^{\MR}g_m^{\soft}(x;\theta)(x - x_m)^2.
		\end{align*}
		Using the same softmax second-moment bound as above,
		\begin{align*}
			\sum_{m=1}^{\MR}g_m^{\soft}(x;\theta)(x - x_m)^2 \le C \tau^2,
		\end{align*}
		uniformly in $x \in [0,1]$. Hence,
		\begin{align*}
			\left|f_0(x)- \sum_{m=1}^{\MR}g_m^{\soft}(x;\theta)f_{R,m}^*(x)\right| \le C C_L \tau^2.
		\end{align*}
		Integrating over $x \in [0,1]$ yields
		\begin{align*}
			\left\|f_0 - \sum_{m=1}^{\MR} g_m^{\soft}(\cdot;\theta) f_{R,m}^*\right\|_{L_2(P_X)}^2 \le C C_L^2 \tau^4.
		\end{align*}
		Since $C_L^2 = O(L^4 + 1)$, we conclude that
		\begin{align*}
			\inf_{\theta \in \Theta_{\MR}} \left\|f_0 - \sum_{m=1}^{\MR} g_m^{\soft}(\cdot;\theta) f_{R,m}^*\right\|_{L_2(P_X)}^2 = O\left(\frac{L^4 + 1}{\log^2 \MR}\right).
		\end{align*}
		In particular, when $L$ is large, this is of order $O(L^4/\log^2 \MR)$. This completes the proof.
	\end{proof}
	
	\section{Proof of Section \ref{main-sec:oracle_topK}}
	\label{sec:supp-proof-section-4}
	
	\begin{lemma}[Regularity from bounded relative derivative]\label{lem:supp-relative-derivative}
		Suppose that $\phi:\mathbb R\to(0,\infty)$ is continuously differentiable,
		strictly increasing, and satisfies
		\begin{align*}
			B_\phi:=\sup_{t\in\mathbb R}\frac{\phi'(t)}{\phi(t)}<\infty.
		\end{align*}
		Then $\phi$ is regular in the sense of Definition~\ref{main-ass:regular gating} with
		$l_{\phi}=0$. More precisely, for every $J\subset[\MR]$ with $|J|=K$ and every
		$z,z'\in\mathbb R^{\MR}$,
		\begin{align*}
			\left\|G_J^\phi(z)-G_J^\phi(z')\right\|_2 \le \frac{B_\phi}{2}\|z-z'\|_2.
		\end{align*}
	\end{lemma}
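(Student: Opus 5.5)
The plan is to reduce to a softmax computation on the active set $J$ by writing $\phi=\exp\circ\psi$ with $\psi:=\log\phi$. Since $\phi>0$ is $C^1$ and strictly increasing, $\psi$ is $C^1$ with
\begin{align*}
0\le\psi'(t)=\phi'(t)/\phi(t)\le B_\phi .
\end{align*}
Then for $m\in J$ we have $G_{J,m}^\phi(z)=\exp\{\psi(z_m)\}/\sum_{j\in J}\exp\{\psi(z_j)\}$, and $G_{J,m}^\phi(z)=0$ for $m\notin J$. Thus $G_J^\phi$ is the composition of the coordinatewise map $z_J\mapsto(\psi(z_j))_{j\in J}$ with the $K$-dimensional softmax, padded by zeros outside $J$. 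Coordinates outside $J$ do not affect $G_J^\phi$, and the zero entries contribute nothing to the difference. So it suffices to bound
\begin{align*}
\|S(\psi(z_J))-S(\psi(z'_J))\|_2,
\end{align*}
where $S$ denotes the softmax on $\RR^K$.

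I would apply the mean value (integral) form along the segment $z(\tau)=z'+\tau(z-z')$. The Jacobian of $z_J\mapsto S(\psi(z_J))$ is $J_S(p)\,\mathrm{diag}(\psi'(z_j))$, where $J_S(p)=\operatorname{diag}(p)-pp^\top$ and $p$ is the current gate vector. This gives
\begin{align*}
\|G_J^\phi(z)-G_J^\phi(z')\|_2\le \sup_\tau \|J_S(p(\tau))\|_{\mathrm{op}}\cdot B_\phi\,\|z_J-z'_J\|_2 \le \sup_\tau \|J_S(p(\tau))\|_{\mathrm{op}}\cdot B_\phi\,\|z-z'\|_2 .
\end{align*}

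The remaining step, and the only one needing care, is the spectral bound $\|\operatorname{diag}(p)-pp^\top\|_{\mathrm{op}}\le 1/2$ for every probability vector $p$. The matrix is symmetric positive semidefinite, because $v^\top(\operatorname{diag}(p)-pp^\top)v=\mathrm{Var}_p(v)$, the variance of $v_j$ under $j\sim p$. So its operator norm equals $\sup_{\|v\|_2=1}\mathrm{Var}_p(v)$. To bound this, I would write
\begin{align*}
\mathrm{Var}_p(v)=\tfrac12\sum_{i,j}p_ip_j(v_i-v_j)^2 .
\end{align*}
Expanding and using $\sum_j p_j=1$ together with $p_ip_j\le (p_i^2+p_j^2)/2$ and $p_i\le1$, the sum $\sum_{i\neq j}p_ip_j(v_i^2+v_j^2)$ reduces to $\sum_i p_i(1-p_i)v_i^2\le \tfrac14\sum_i v_i^2$, since $p_i(1-p_i)\le 1/4$. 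This yields $\mathrm{Var}_p(v)\le\tfrac12\|v\|_2^2$.

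If this estimate proves delicate, the fallback is Gershgorin. Row $i$ has diagonal entry $p_i-p_i^2$ and off-diagonal absolute row sum $p_i(1-p_i)$, so every eigenvalue is at most $2p_i(1-p_i)\le 1/2$. This is clean and suffices. Combining the pieces gives the stated constant $B_\phi/2$ with $l_\phi=0$. Continuity and strict monotonicity of $\phi$ hold by hypothesis, so $\phi$ is regular in the sense of Definition~\ref{main-ass:regular gating}.
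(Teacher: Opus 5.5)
Your proposal is correct and is essentially the paper's proof: the same Jacobian $\{\operatorname{diag}(p)-pp^\top\}\operatorname{diag}(\phi'(z_j)/\phi(z_j))$ (the $\psi=\log\phi$ rewriting is only cosmetic), the same Gershgorin-type row-sum bound $2p_i(1-p_i)\le 1/2$ on the first factor, and the same integration along the segment. One small bookkeeping fix in your variance route: the tool needed is $(v_i-v_j)^2\le 2(v_i^2+v_j^2)$ rather than $p_ip_j\le(p_i^2+p_j^2)/2$, and $\sum_{i\neq j}p_ip_j(v_i^2+v_j^2)=2\sum_i p_i(1-p_i)v_i^2$ (not $\sum_i p_i(1-p_i)v_i^2$), so you get $\mathrm{Var}_p(v)\le 2\sum_i p_i(1-p_i)v_i^2\le \tfrac12\|v\|_2^2$, which matches your stated conclusion.
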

	
	\begin{proof}
		Fix $J\subset[\MR]$ with $|J|=K$ and write
		\begin{align*}
			p_m(z)=G_{J,m}^\phi(z)=\frac{\phi(z_m)}{\sum_{r\in J}\phi(z_r)},\qquad m\in J.
		\end{align*}
		Coordinates outside $J$ are identically zero. For $m,j\in J$, the quotient rule gives
		\begin{align*}
			\frac{\partial p_m(z)}{\partial z_j} = p_m(z)\{\delta_{mj}-p_j(z)\}\frac{\phi'(z_j)}{\phi(z_j)}.
		\end{align*}
		Hence the only nonzero Jacobian block is
		\begin{align*}
			\nabla_J G_J^\phi(z) = \{\operatorname{diag}(p_J(z))-p_J(z)p_J(z)^\top\}\operatorname{diag}\left(\frac{\phi'(z_j)}{\phi(z_j)}\right)_{j\in J}.
		\end{align*}
		For any $p\in\Delta^{K-1}$, the symmetric matrix
		$A(p)=\operatorname{diag}(p)-pp^\top$ satisfies
		\begin{align*}
			\|A(p)\|_{\mathrm{op}} \le \max_i\sum_{j=1}^K |A(p)_{ij}| = \max_i 2p_i(1-p_i) \le \frac{1}{2}.
		\end{align*}
		By the definition of $B_\phi$, the diagonal factor has operator norm at most
		$B_\phi$. Therefore $\|\nabla G_J^\phi(z)\|_{\mathrm{op}}\le B_\phi/2$
		uniformly in $z$, $J$, $K$, and $\MR$.
		
		For $z_t=z'+t(z-z')$, the fundamental theorem of calculus gives
		\begin{align*}
			G_J^\phi(z)-G_J^\phi(z') = \int_0^1\nabla G_J^\phi(z_t)(z-z')\,dt,
		\end{align*}
		and hence
		\begin{align*}
			\left\|G_J^\phi(z)-G_J^\phi(z')\right\|_2 \le \frac{B_\phi}{2}\|z-z'\|_2.
		\end{align*}
		This is Definition~\ref{main-ass:regular gating} with $L_\phi=B_\phi/2$ and
		$l_\phi=0$.
	\end{proof}
	
	\subsection{Proof of Proposition \ref{main-pro:continuity} and related remarks}
	\label{subsec:supp-proof-prop-4-3}
	
	\begin{proof}[Proof of Proposition \ref{main-pro:continuity}]
		Write $\delta:=\|\theta-\theta'\|_2$ and $\bar{x}:=(x^\top,1)^\top$. We prove the result for $\theta \in \widetilde{\Theta}_{\MR}(r_{\MR}), \theta' \in \Theta_{\MR}$, and $\delta \le 1$. By the standing covariate bound $\|X\|_2\le C_X\sqrt d$, we may take
		$R_X:=(C_X^2d+1)^{1/2}$, so that $\|\bar X\|_2\le R_X\le C\sqrt d$ almost surely. In what follows, $C$ may depend on $K$, $L_\phi$, and $C_4$ in Assumption~\ref{main-ass:distribution}, but not on $d,\MR,r_{\MR},\theta$, or $\theta'$.
		
		For each pair $(\theta,\theta')$, decompose $\cX=G_{\theta,\theta'}\cup B_{\theta,\theta'}$, where
		\begin{align*}
			G_{\theta,\theta'}=\{x:\cT_K(x;\theta)=\cT_K(x;\theta')\}, \qquad B_{\theta,\theta'}=\cX\setminus G_{\theta,\theta'}.
		\end{align*}
		Then
		\begin{align*}
			\left\|g^{(K,\phi)}(\cdot;\theta)-g^{(K,\phi)}(\cdot;\theta')\right\|_{L_2(P_X)}^2
			&= \EE\left[\left\|g^{(K,\phi)}(X;\theta)-g^{(K,\phi)}(X;\theta')\right\|_2^2 \one_{\{X\in G_{\theta,\theta'}\}}\right] \\
			&\quad + \EE\left[\left\|g^{(K,\phi)}(X;\theta)-g^{(K,\phi)}(X;\theta')\right\|_2^2 \one_{\{X\in B_{\theta,\theta'}\}}\right].
		\end{align*}
		On $G_{\theta,\theta'}$, the selected Top-$K$ set is fixed. Hence, by the regularity of $\phi$,
		\begin{align*}
			\left\|g^{(K,\phi)}(x;\theta)-g^{(K,\phi)}(x;\theta')\right\|_2
			&\le L_{\phi}\MR^{l_{\phi}}\left(\sum_{m\in\cT_K(x;\theta)}|s_m(x;\theta_m)-s_m(x;\theta'_m)|^2\right)^{1/2} \\
			&= L_{\phi}\MR^{l_{\phi}}\left(\sum_{m\in\cT_K(x;\theta)}|\bar{x}^\top(\theta_m-\theta'_m)|^2\right)^{1/2} \\
			&\le L_\phi R_X\MR^{l_{\phi}}\delta.
		\end{align*}
		Since both gating vectors lie in the simplex, their squared Euclidean distance is at most $2$. Therefore,
		\begin{align*}
			\EE\left[\left\|g^{(K,\phi)}(X;\theta)-g^{(K,\phi)}(X;\theta')\right\|_2^2 \one_{\{X\in G_{\theta,\theta'}\}}\right]
			&\le \min\left\{L_\phi^2R_X^2\MR^{2l_\phi}\delta^2,2\right\} \\
			&\le C R_X\MR^{l_\phi}\delta.
		\end{align*}
		
		Next consider $B_{\theta,\theta'}$. If $x \in B_{\theta,\theta'}$, then the two Top-$K$ active set differs. Hence there exist indices $i \in \cT_K(x;\theta) \setminus \cT_K(x;\theta')$ and $j \in \cT_K(x;\theta') \setminus \cT_K(x;\theta)$. By the definition of the Top-$K$ set, these indices satisfy
		\begin{align*}
			s_i(x;\theta_i)\ge s_j(x;\theta_j), \qquad s_i(x;\theta'_i)\le s_j(x;\theta'_j).
		\end{align*}
		Thus
		\begin{align*}
			|s_i(x;\theta_i)-s_j(x;\theta_j)| \le |s_i(x;\theta_i)-s_i(x;\theta'_i)| + |s_j(x;\theta_j)-s_j(x;\theta'_j)| \le C R_X\delta.
		\end{align*}
		Since $s_i(x;\theta_i)-s_j(x;\theta_j) = x^\top(\beta_i-\beta_j)+(\alpha_i-\alpha_j)$, we have
		\begin{align*}
			B_{\theta,\theta'} \subset \bigcup_{i\neq j}\left\{x:\left|x^\top(\beta_i-\beta_j)+(\alpha_i-\alpha_j)\right| \le C R_X\delta\right\}.
		\end{align*}
		By the definition of $\widetilde{\Theta}_{\MR}(r_{\MR})$, $\|\beta_i-\beta_j\|_2 \ge r_{\MR}, i \neq j$. Applying Assumption~\ref{main-ass:distribution} with
		\begin{align*}
			a=\frac{\beta_i-\beta_j}{\|\beta_i-\beta_j\|_2}, \qquad b=\frac{\alpha_i-\alpha_j}{\|\beta_i-\beta_j\|_2},
		\end{align*}
		gives
		\begin{align*}
			\PP\left(\left|X^\top(\beta_i-\beta_j)+(\alpha_i-\alpha_j)\right| \le C R_X\delta\right) \le C R_X\frac{\delta}{r_{\MR}}.
		\end{align*}
		A union bound over at most $\MR^2$ pairs yields
		\begin{align*}
			\PP(X\in B_{\theta,\theta'}) \le C R_X\MR^2\frac{\delta}{r_{\MR}}.
		\end{align*}
		Since both gating vectors lie in the simplex,
		\begin{align*}
			\left\|g^{(K,\phi)}(X;\theta)-g^{(K,\phi)}(X;\theta')\right\|_2^2\le 2,
		\end{align*}
		and therefore
		\begin{align*}
			\EE\left[\left\|g^{(K,\phi)}(X;\theta)-g^{(K,\phi)}(X;\theta')\right\|_2^2 \one_{\{X\in B_{\theta,\theta'}\}}\right] \le C R_X\MR^2\frac{\delta}{r_{\MR}}.
		\end{align*}
		Combining the two bounds gives
		\begin{align*}
			\left\|g^{(K,\phi)}(\cdot;\theta)-g^{(K,\phi)}(\cdot;\theta')\right\|_{L_2(P_X)}^2 \le C R_X\MR^{l_{\phi}}\delta + C R_X\MR^2\frac{\delta}{r_{\MR}}.
		\end{align*}
		Since $r_{\MR}\in(0,1)$, $\MR\ge2$, and $l_\phi\ge0$, both terms are bounded by a constant multiple of $R_X\MR^{2+2l_\phi}\delta/r_{\MR}$. Using $R_X\le C\sqrt d$, we obtain
		\begin{align*}
			\left\|g^{(K,\phi)}(\cdot;\theta)-g^{(K,\phi)}(\cdot;\theta')\right\|_{L_2(P_X)}^2 \le C R_X\MR^{2+2l_{\phi}}\frac{\delta}{r_{\MR}} \le C\sqrt d\,\MR^{2+2l_{\phi}}\frac{\delta}{r_{\MR}}.
		\end{align*}
		Taking square roots and recalling $\delta=\|\theta-\theta'\|_2$ gives
		\begin{align*}
			\left\|g^{(K,\phi)}(\cdot;\theta)-g^{(K,\phi)}(\cdot;\theta')\right\|_{L_2(P_X)} \le C d^{1/4}\MR^{1+l_{\phi}}\left(\frac{\|\theta-\theta'\|_2}{r_{\MR}}\right)^{1/2}.
		\end{align*}
		This completes the proof.
	\end{proof}

	\begin{proof}[Proof of remark~\ref{main-remark:topk-l2-covering}]
		By Proposition~\ref{main-pro:continuity}, for any $\theta \in \widetilde{\Theta}_{\MR}(r_{\MR})$ and $\theta' \in \Theta_{\MR}$ satisfying $\|\theta-\theta'\|_2\le 1$,
		\begin{align*}
			\left\|g^{(K,\phi)}(\cdot;\theta)-g^{(K,\phi)}(\cdot;\theta')\right\|_{L_2(P_X)} \le C d^{1/4}\MR^{1+l_{\phi}}\left(\frac{\|\theta-\theta'\|_2}{r_{\MR}}\right)^{1/2}.
		\end{align*}
		Fix $0<\varepsilon\le 1$ and set
		\begin{align*}
			u_\varepsilon:=\frac{c\,\varepsilon^2 r_{\MR}}{\sqrt d\,\MR^{2+2l_{\phi}}},
		\end{align*}
		with $c>0$ sufficiently small. We may assume $u_\varepsilon\le 1$ after decreasing $c$.
		
		Take a $u_\varepsilon$-cover of the larger parameter space $(\Theta_{\MR},\|\cdot\|_2)$:
		\begin{align*}
			\Theta_{\MR} \subset \bigcup_{j=1}^N B_2(\theta^j,u_\varepsilon), \qquad \theta^j\in\Theta_{\MR}.
		\end{align*}
		For every $\theta \in \widetilde{\Theta}_{\MR}(r_{\MR})$, choose $j$ such that $\|\theta-\theta^j\|_2 \le u_\varepsilon$. By the choice of $u_\varepsilon$ and Proposition~\ref{main-pro:continuity},
		\begin{align*}
			\left\|g^{(K,\phi)}(\cdot;\theta)-g^{(K,\phi)}(\cdot;\theta^j)\right\|_{L_2(P_X)} \le C d^{1/4}\MR^{1+l_{\phi}}\left(\frac{u_\varepsilon}{r_{\MR}}\right)^{1/2} \le \varepsilon.
		\end{align*}
		Thus the ambient grid induces an $\varepsilon$-cover of $\cG_{K,\phi}$.
		
		It remains to bound $N$. Since the reference normalization fixes $\theta_{\MR}=0$, only $\MR-1$ parameter vectors vary. Since each $\theta_m=(\beta_m^\top,\alpha_m)^\top \in \RR^{d+1}$, the effective dimension is $D=(\MR-1)(d+1)$. By Assumption~\ref{main-ass:compact-theta}, $\Theta_{\MR}$ is contained in a $D$-dimensional cube of radius $C'\log \MR$. Hence
		\begin{align*}
			N\left(u_\varepsilon,\Theta_{\MR},\|\cdot\|_2\right) \le \left(\frac{C'\log \MR\sqrt{(\MR-1)(d+1)}}{u_\varepsilon}\right)^{(\MR-1)(d+1)}.
		\end{align*}
		Substituting the value of $u_\varepsilon$ gives
		\begin{align*}
			N\left(\varepsilon,\cG_{K,\phi},L_2(P_X)\right) \le \left(\frac{C'\log \MR\sqrt{(\MR-1)d(d+1)}\,\MR^{2+2l_{\phi}}}{\varepsilon^2 r_{\MR}}\right)^{(\MR-1)(d+1)}.
		\end{align*}
		Since $\sqrt{(\MR-1)(d+1)} \le C\MR^{1/2}\sqrt{d}$, we obtain
		\begin{align*}
			N\left(\varepsilon,\cG_{K,\phi},L_2(P_X)\right) \le \left(\frac{C'\MR^{5/2+2l_{\phi}}d\log \MR}{\varepsilon^2 r_{\MR}}\right)^{(\MR-1)(d+1)}.
		\end{align*}
		Here we complete the proof.
	\end{proof}
	
	\subsection{Proof of Theorem \ref{main-thm:top-k}}
	\label{subsec:supp-proof-thm-4-4}
	
	\begin{proof}[Proof of Theorem \ref{main-thm:top-k}]
		Let $\Theta(\eta)=\{\theta^{(1)},\ldots,\theta^{(S_\eta)}\}$ be the Euclidean grid used by Algorithm~\ref{main-alg:general}. For $\theta\in\Theta_{\MR}$ define
		\begin{align*}
			R_{\mathrm{ag}}^{K,\phi}(\theta) &= \frac{1}{n_3}\sum_{t=n_1+n_2}^{n-1}\EE\left\|\check f^{K,\phi,t}(\cdot;\theta)-f_0\right\|_{L_2(P_X)}^2,\\
			R_{\mathrm{cal}}^{K,\phi}(\theta) &= \frac{1}{n_2}\sum_{t=n_1}^{n_1+n_2-1}\EE\left\|\check f^{K,\phi,t}(\cdot;\theta)-f_0\right\|_{L_2(P_X)}^2.
		\end{align*}
		Applying Proposition~\ref{prop:risk_AFTER} with burn-in time $n_1$, calibration time $n_2$, aggregation time $n_3$, and with
		\begin{align*}
			\mu_{s,t}(x) = \check f^{K,\phi,t}(x;\theta^{(s)}), \qquad \pi_s = S_\eta^{-1},
		\end{align*}
		gives the corresponding oracle inequality. Applying Lemma~\ref{lemma:remove_scale_estimation} to the same candidate forecasts then yields
		\begin{align}
			\frac{1}{n_3}\sum_{t=n_1+n_2}^{n-1}\EE\left\|\widetilde f^{K,\phi,t}-f_0\right\|_{L_2(P_X)}^2 \le C\inf_{1\le s\le S_\eta}\left\{R_{\mathrm{ag}}^{K,\phi}(\theta^{(s)}) + R_{\mathrm{cal}}^{K,\phi}(\theta^{(s)}) + \frac{\log S_\eta}{n_3} + \frac{1}{n_2}\right\}. \label{eq:supp-topk-finite-grid}
		\end{align}
		The boundedness conditions required by Proposition~\ref{prop:risk_AFTER} hold because the Top-$K$ gate lies in the simplex and the experts are uniformly bounded.
		
		Define
		\begin{align*}
			F^{*,K,\phi}(x;\theta) = \sum_{\ell=1}^{\MS}f_{S,\ell}^*(x) + \sum_{m=1}^{\MR}g_m^{(K,\phi)}(x;\theta)f_{R,m}^*(x),
		\end{align*}
		and let
		\begin{align*}
			\theta^\circ \in \argmin_{\theta\in\widetilde\Theta_{\MR}(r_{\MR})}\left\|f_0-F^{*,K,\phi}(\cdot;\theta)\right\|_{L_2(P_X)}^2.
		\end{align*}
		Choose $\theta^{(s^\circ)}\in\Theta(\eta)$ such that $\left\|\theta^{(s^\circ)}-\theta^\circ\right\|_2 \le \eta$. By Proposition~\ref{main-pro:continuity},
		\begin{align*}
			\left\|g^{(K,\phi)}(\cdot;\theta^{(s^\circ)})-g^{(K,\phi)}(\cdot;\theta^\circ)\right\|_{L_2(P_X)}^2 \le C_K\sqrt d\,\MR^{2+2l_\phi}\frac{\eta}{r_{\MR}}.
		\end{align*}
		For every $x$, the union of the supports of
		$g^{(K,\phi)}(x;\theta^{(s^\circ)})$ and
		$g^{(K,\phi)}(x;\theta^\circ)$ contains at most $2K$ indices. Hence,
		\begin{align*}
			\left|F^{*,K,\phi}(x;\theta^{(s^\circ)}) - F^{*,K,\phi}(x;\theta^\circ)\right|^2 \le 2KA^2\left\|g^{(K,\phi)}(x;\theta^{(s^\circ)}) - g^{(K,\phi)}(x;\theta^\circ)\right\|_2^2.
		\end{align*}
		Therefore,
		\begin{align*}
			\left\|F^{*,K,\phi}(\cdot;\theta^{(s^\circ)}) - F^{*,K,\phi}(\cdot;\theta^\circ)\right\|_{L_2(P_X)}^2 \le C_K\sqrt d\,\MR^{2+2l_\phi}\frac{\eta}{r_{\MR}}.
		\end{align*}
		It follows that
		\begin{align*}
			\left\|F^{*,K,\phi}(\cdot;\theta^{(s^\circ)})-f_0\right\|_{L_2(P_X)}^2 \le C_K\left\{\fA_{K,\phi}(\MS,\MR) + \sqrt d\,\MR^{2+2l_\phi}\frac{\eta}{r_{\MR}}\right\}.
		\end{align*}
		
		Applying Lemma~\ref{lem:supp-evolving-population-comparator} with $g=g^{(K,\phi)}$ gives
		\begin{align*}
			R_{\mathrm{ag}}^{K,\phi}(\theta^{(s^\circ)}) + R_{\mathrm{cal}}^{K,\phi}(\theta^{(s^\circ)}) \le C_K\left\{\fA_{K,\phi}(\MS,\MR) + \MR\left(\bar a_{n_2}^2+\bar a_{n_3}^2\right) + \sqrt d\,\MR^{2+2l_\phi}\frac{\eta}{r_{\MR}}\right\}.
		\end{align*}
		Substituting this comparator into \eqref{eq:supp-topk-finite-grid} yields
		\begin{align}
			\frac{1}{n_3}\sum_{t=n_1+n_2}^{n-1}\EE\left\|\widetilde f^{K,\phi,t}-f_0\right\|_{L_2(P_X)}^2
			&\le C_K\left\{\fA_{K,\phi}(\MS,\MR) + \MR\left(\bar a_{n_2}^2+\bar a_{n_3}^2\right)\right. \notag\\*
			&\qquad\left.{}+ \sqrt d\,\MR^{2+2l_\phi}\frac{\eta}{r_{\MR}} + \frac{1}{n_2} + \frac{\log S_\eta}{n_3}\right\}. \label{eq:supp-topk-before-covering}
		\end{align}
		We now choose the mesh size. Since $\Theta_{\MR}$ is contained in a box of radius $C_1\log\MR$ and has effective dimension $D=(\MR-1)(d+1)$, a Euclidean grid with mesh $\eta$ satisfies
		\begin{align*}
			\log S_\eta \le C\MR d\log\left(\frac{C\sqrt{\MR d}\log\MR}{\eta}\right).
		\end{align*}
		
		With $\eta \asymp \frac{\sqrt d\,r_{\MR}}{n_3\MR^{2+2l_\phi}}$, truncated at a sufficiently small absolute constant if necessary, the discretization term satisfies $\sqrt d\,\MR^{2+2l_\phi}\frac{\eta}{r_{\MR}}$ $\le C\frac{d}{n_3}$. Moreover,
		\begin{align*}
			\frac{\log S_\eta}{n_3} \le C\frac{\MR d}{n_3}\log\left(\frac{C\MR^{5/2+2l_\phi}n_3\log\MR}{r_{\MR}}\right).
		\end{align*}
		Since $l_\phi$ is fixed and $r_{\MR}\in(0,1)$,
		\begin{align*}
			\frac{\log S_\eta}{n_3} \le C\frac{\MR d}{n_3}\left\{\log(\MR d n_3) + \log\left(\frac{1}{r_{\MR}}\right)\right\}.
		\end{align*}
		Since $n_2\asymp n_3\asymp n$, both the discretization term and the calibration term $1/n_2$ are absorbed into
		\begin{align*}
			C\frac{\MR d}{n}\left\{\log(\MR d n) + \log\left(\frac{1}{r_{\MR}}\right)\right\}.
		\end{align*}
		Combining these bounds with the preceding online inequality proves \eqref{main-eqn:Topk_oracle}.
		
		For the online-to-batch average
		\begin{align*}
			\tilde f_0^{K,\phi}:=\frac{1}{n_3}\sum_{t=n_1+n_2}^{n-1}\tilde f^{K,\phi,t},
		\end{align*}
		Jensen's inequality gives
		\begin{align*}
			\EE\|\tilde f_0^{K,\phi}-f_0\|_{L_2(P_X)}^2 \le \frac{1}{n_3}\sum_{t=n_1+n_2}^{n-1}\EE\|\tilde f^{K,\phi,t}-f_0\|_{L_2(P_X)}^2,
		\end{align*}
		so the online-to-batch average satisfies the same bound. Finally, let
		\begin{align*}
			\bar f^{K,\phi,n_3}(\cdot;\theta^{(s)}) = \frac{1}{n_3}\sum_{t=n_1+n_2}^{n-1}\check f^{K,\phi,t}(\cdot;\theta^{(s)})
		\end{align*}
		be the time-averaged candidate used in the population projection step. By definition of the projection,
		\begin{align*}
			\hat f_0=\bar f^{K,\phi,n_3}(\cdot;\hat\theta), \qquad \hat\theta\in\argmin_{\theta^{(s)}\in\Theta(\eta)}\|\tilde f_0^{K,\phi}-\bar f^{K,\phi,n_3}(\cdot;\theta^{(s)})\|_{L_2(P_X)}^2.
		\end{align*}
		Thus, for every $\theta^{(s)}\in\Theta(\eta)$,
		\begin{align*}
			\|\tilde f_0^{K,\phi}-\hat f_0\|_{L_2(P_X)} \le \|\tilde f_0^{K,\phi}-\bar f^{K,\phi,n_3}(\cdot;\theta^{(s)})\|_{L_2(P_X)}.
		\end{align*}
		Using the triangle inequality and then $(a+b)^2\le2a^2+2b^2$, we obtain, for every $\theta^{(s)}$,
		\begin{align*}
			\|\hat f_0-f_0\|_{L_2(P_X)}^2
			&\le 2\|\tilde f_0^{K,\phi}-f_0\|_{L_2(P_X)}^2 + 2\|\tilde f_0^{K,\phi}-\hat f_0\|_{L_2(P_X)}^2\\
			&\le 6\|\tilde f_0^{K,\phi}-f_0\|_{L_2(P_X)}^2 + 4\|\bar f^{K,\phi,n_3}(\cdot;\theta^{(s)})-f_0\|_{L_2(P_X)}^2.
		\end{align*}
		Taking expectations, then infimizing over the grid, gives
		\begin{align*}
			\EE\|\hat f_0-f_0\|_{L_2(P_X)}^2 \le 6\EE\|\tilde f_0^{K,\phi}-f_0\|_{L_2(P_X)}^2 + 4\inf_{1\le s\le S_\eta}\EE\|\bar f^{K,\phi,n_3}(\cdot;\theta^{(s)})-f_0\|_{L_2(P_X)}^2.
		\end{align*}
		For the second term, Jensen's inequality over the aggregation block gives
		\begin{align*}
			\EE\|\bar f^{K,\phi,n_3}(\cdot;\theta^{(s)})-f_0\|_{L_2(P_X)}^2 \le \frac{1}{n_3}\sum_{t=n_1+n_2}^{n-1}\EE\|\check f^{K,\phi,t}(\cdot;\theta^{(s)})-f_0\|_{L_2(P_X)}^2.
		\end{align*}
		Taking $s=s^\circ$, where $\theta^{(s^\circ)}$ is the oracle grid comparator constructed above, and using the preceding population-comparison and evolving-expert bounds, we obtain
		\begin{align*}
			\inf_{1\le s\le S_\eta}\EE\left\|\bar f^{K,\phi,n_3}(\cdot;\theta^{(s)})-f_0\right\|_{L_2(P_X)}^2 \le C_K\left\{\fA_{K,\phi}(\MS,\MR) + \MR\bar a_{n_3}^2 + \sqrt d\,\MR^{2+2l_\phi}\frac{\eta}{r_{\MR}}\right\}.
		\end{align*}
		Together with the online-to-batch bound, this proves that the projected estimator satisfies the same oracle rate. This completes the proof.
	\end{proof}
	
	\subsection{Proof of Corollary \ref{main-cor:Top1-single}}
	\label{subsec:supp-proof-cor-4-5}
	
	\begin{proof}[Proof of Corollary \ref{main-cor:Top1-single}]
		We apply Theorem~\ref{main-thm:top-k} with $K=1$ and $\MS=0$. It remains to bound the Top-1 oracle approximation term
		\begin{align*}
			\inf_{\theta\in\widetilde{\Theta}_{\MR}(r_{\MR})}\left\|f_0-\sum_{m=1}^{\MR}g_m^{(1,\phi)}(\cdot;\theta)f_m^*\right\|_{L_2(P_X)}^2.
		\end{align*}
		For Top-1 routing, the selected expert has weight one, independently of $\phi$. We therefore only need to exhibit a separated linear score system that selects the region-wise specialist. By relabeling the routed experts, use experts $1,\ldots,\Mzero-1,\MR$ as the specialists for regions $1,\ldots,\Mzero$, respectively, with the normalization $\theta_{\MR}=0$. For a fixed small $\lambda>0$, set
		\begin{align*}
			\theta_r=(\beta_r^\top,\alpha_r)^\top=\lambda(v_r-v_{\Mzero}),\qquad r=1,\ldots,\Mzero-1.
		\end{align*}
		Choose $\lambda\le (1/2)\wedge(c_0/4)$. Since $\max_{r\le\Mzero}\|v_r\|_2=1$, this gives
		\begin{align*}
			\|\theta_r\|_2\le 2\lambda\le1,\qquad \|\theta_r\|_\infty\le 2\lambda\le c_0/2,\qquad r=1,\ldots,\Mzero-1.
		\end{align*}
		Thus the active parameters satisfy both the Euclidean normalization and the centered-box constraint. Moreover, their slopes are separated from each other and from the reference slope by at least $\lambda r_0$.
		
		It remains only to fill the unused routed experts without changing the Top-1 assignment. Let $B_X=1\vee\sup_{x\in\cX}\|x\|_2$, and define
		\begin{align*}
			b=\frac{1}{2}(c_0\wedge1),\qquad \delta=\frac{b}{2B_X},\qquad \mathcal B_{\rm act}=\{\beta_1,\ldots,\beta_{\Mzero-1},0\}.
		\end{align*}
		Set $a=c_1\delta\MR^{-1/d}$, where $c_1>0$ will be chosen sufficiently small. By the volumetric packing bound, there exists an $a$-separated set $\mathcal P\subset B_d(0,\delta)$ with
		\begin{align*}
			|\mathcal P|\ge c_d\left(\frac{\delta}{a}\right)^d = c_d c_1^{-d}\MR,
		\end{align*}
		where $c_d>0$ depends only on the dimension. Define
		\begin{align*}
			\mathcal P_0 = \left\{p\in\mathcal P:\min_{\beta\in\mathcal B_{\rm act}}\|p-\beta\|_2\ge a\right\}.
		\end{align*}
		For each fixed $\beta\in\mathcal B_{\rm act}$, the balls $\{B_d(p,a/2):p\in\mathcal P\cap B_d(\beta,a)\}$ are disjoint and are contained in $B_d(\beta,3a/2)$. Hence
		\begin{align*}
			|\mathcal P\cap B_d(\beta,a)|\le 3^d.
		\end{align*}
		Since $|\mathcal B_{\rm act}|=\Mzero\le\MR$, at most $3^d\MR$ points are removed from $\mathcal P$. Choose $c_1$ small enough so that $c_d c_1^{-d}-3^d\ge1$. Then $|\mathcal P_0|\ge\MR$, and we may choose distinct points $\beta_j^{\rm dum}\in\mathcal P_0$, $j=\Mzero,\ldots,\MR-1$. These dummy slopes satisfy
		\begin{align*}
			\|\beta_j^{\rm dum}\|_2\le\delta,\qquad \|\beta_j^{\rm dum}-\beta_k^{\rm dum}\|_2\ge a\quad(j\ne k),\qquad \min_{\beta\in\mathcal B_{\rm act}}\|\beta_j^{\rm dum}-\beta\|_2\ge a.
		\end{align*}
		Set
		\begin{align*}
			\theta_j=((\beta_j^{\rm dum})^\top,-b)^\top.
		\end{align*}
		Then
		\begin{align*}
			\|\theta_j\|_2^2 \le \delta^2+b^2 \le \frac{5}{4}b^2 \le 1,\qquad \|\theta_j\|_\infty\le c_0,
		\end{align*}
		so the dummy parameters also satisfy the required feasibility constraints. Moreover, for all $x\in\cX$,
		\begin{align*}
			s_j(x;\theta_j)\le B_X\delta-b=-b/2<0=s_{\MR}(x).
		\end{align*}
		Thus no dummy expert can beat the reference expert.
		
		For the active specialists, Assumption~\ref{main-ass:linearly-separable} implies that, up to a $P_X$-null boundary set, subtracting $\bar x^\top v_{\Mzero}$ and multiplying by $\lambda$ preserves all region-wise score comparisons. Hence the constructed parameter realizes
		\begin{align*}
			\cT_1(x;\theta) = \begin{cases}
				\{r\}, & x\in\cX_r,\quad r=1,\ldots,\Mzero-1,\\
				\{\MR\}, & x\in\cX_{\Mzero}.
			\end{cases}
		\end{align*}
		Moreover, the slope construction gives membership in $\widetilde{\Theta}_{\MR}(r_{\MR})$ for every $r_{\MR}\le c(r_0\wedge\MR^{-1/d})$, after reducing the constant $c>0$ if necessary.
		
		Consequently,
		\begin{align*}
			\inf_{\theta'\in\widetilde{\Theta}_{\MR}(r_{\MR})}\left\|f_0-\sum_{m=1}^{\MR}g_m^{(1,\phi)}(\cdot;\theta')f_m^*\right\|_{L_2(P_X)}^2
			&\le \left\|f_0-\sum_{m=1}^{\MR}g_m^{(1,\phi)}(\cdot;\theta)f_m^*\right\|_{L_2(P_X)}^2 \\
			&= \sum_{r=1}^{\Mzero}\|f_0-f_r^*\|_{\cX_r}^2.
		\end{align*}
		Applying Theorem~\ref{main-thm:top-k} gives
		\begin{align*}
			\EE\|\hat f_0-f_0\|_{L_2(P_X)}^2 \le C\left\{\sum_{r=1}^{\Mzero}\|f_0-f_r^*\|_{\cX_r}^2+\fL_{\sparse}(\MR,n,r_{\MR})\right\}.
		\end{align*}
		The same bound holds for the online average by Jensen's inequality applied to the online sequence bound in Theorem~\ref{main-thm:top-k}.
		This proves the result.
	\end{proof}
	
	\subsection{Proof of Proposition \ref{main-prop: Top-1 mixing}}
	\label{subsec:supp-proof-prop-4-6}
	
	\begin{proof}[Proof of Proposition \ref{main-prop: Top-1 mixing}]
		The oracle lower bound only uses the measurable partition fixed in the main text and the mixture advantage in Assumption~\ref{main-ass:two-expert-combination}.
		For any $\theta\in\widetilde{\Theta}_{\MR}(r_{\MR})$, the Top-1 gating mechanism assigns each $x\in\cX$ to exactly one expert. Denote by $\cX_m'(\theta)$ the region consisting of inputs assigned to $f_m^*$. Then $\{\cX_m'(\theta)\}_{m\in[\MR]}$ is a partition of $\cX$, and, because the active set has size one,
		\begin{align*}
			\sum_{m=1}^{\MR}g_m^{(1,\phi)}(x;\theta)f_m^*(x) = \sum_{m=1}^{\MR}\one_{\cX_m'(\theta)}(x)f_m^*(x).
		\end{align*}
		Therefore, for every $\theta\in\widetilde{\Theta}_{\MR}(r_{\MR})$,
		\begin{align*}
			\left\|f_0-\sum_{m=1}^{\MR}g_m^{(1,\phi)}(\cdot;\theta)f_m^*\right\|_{L_2(P_X)}^2
			&= \sum_{r=1}^{\Mzero}\int_{\cX_r}\left(f_0(x)-\sum_{m=1}^{\MR}\one_{\cX_m'(\theta)}(x)f_m^*(x)\right)^2dP_X(x) \\
			&\ge \sum_{r=1}^{\Mzero}\int_{\cX_r}\min_{k\in[\MR]}(f_0(x)-f_k^*(x))^2dP_X(x) \\
			&\ge \sum_{r=1}^{\Mzero}\int_{\cX_r}\left(f_0(x)-w_{r1}f_{r,1}^*(x)-w_{r2}f_{r,2}^*(x)\right)^2dP_X(x) + \sum_{r=1}^{\Mzero}\delta_r.
		\end{align*}
		The last inequality follows from Assumption \ref{main-ass:two-expert-combination}. Hence
		\begin{align*}
			\left\|f_0-\sum_{m=1}^{\MR}g_m^{(1,\phi)}(\cdot;\theta)f_m^*\right\|_{L_2(P_X)}^2 \ge \sum_{r=1}^{\Mzero}\|f_0-w_{r1}f_{r,1}^*-w_{r2}f_{r,2}^*\|_{\cX_r}^2 + \sum_{r=1}^{\Mzero}\delta_r.
		\end{align*}
		Taking the infimum over $\theta\in\widetilde{\Theta}_{\MR}(r_{\MR})$ gives
		\begin{align*}
			\fA_1(\MR) \ge \sum_{r=1}^{\Mzero}\|f_0-w_{r1}f_{r,1}^*-w_{r2}f_{r,2}^*\|_{\cX_r}^2 + \sum_{r=1}^{\Mzero}\delta_r.
		\end{align*}
		
		We next prove the second claim. By the projection step of Algorithm~\ref{main-alg:general}, the final Top-1 estimator has the form
		\begin{align*}
			\hat f_0(x)=\sum_{m=1}^{\MR}g_m^{(1,\phi)}(x;\hat\theta)\bar f_{R,m}^{n_3}(x),
		\end{align*}
		for a random grid parameter $\hat\theta\in\widetilde{\Theta}_{\MR}(r_{\MR})$, where the projected Top-1 class is the same separated class used in the definition of $\fA_1(\MR)$. Here $\bar f_{R,m}^{n_3}$ denotes the aggregation-block average of the $m$-th routed expert. Define the population-expert counterpart
		\begin{align*}
			F^*(x;\hat\theta)=\sum_{m=1}^{\MR}g_m^{(1,\phi)}(x;\hat\theta)f_m^*(x).
		\end{align*}
		For every realization of $\hat\theta$,
		\begin{align*}
			\|F^*(\cdot;\hat\theta)-f_0\|_{L_2(P_X)}^2\ge \fA_1(\MR).
		\end{align*}
		Moreover, for every realization of the data and every $x\in\cX$,
		\begin{align*}
			\hat f_0(x)-F^*(x;\hat\theta)=\sum_{m=1}^{\MR}g_m^{(1,\phi)}(x;\hat\theta)\{\bar f_{R,m}^{n_3}(x)-f_m^*(x)\}.
		\end{align*}
		Since $g^{(1,\phi)}(x;\hat\theta)\in\Delta^{\MR-1}$, Jensen's inequality with respect to the simplex weights gives
		\begin{align*}
			|\hat f_0(x)-F^*(x;\hat\theta)|^2 \le \sum_{m=1}^{\MR}g_m^{(1,\phi)}(x;\hat\theta)|\bar f_{R,m}^{n_3}(x)-f_m^*(x)|^2.
		\end{align*}
		Integrating with respect to $P_X$ and using $0\le g_m^{(1,\phi)}\le1$ yields
		\begin{align*}
			\EE\|\hat f_0-F^*(\cdot;\hat\theta)\|_{L_2(P_X)}^2
			&\le \sum_{m=1}^{\MR}\EE\|\bar f_{R,m}^{n_3}-f_m^*\|_{L_2(P_X)}^2\\
			&\le \MR\max_{m\le\MR}\EE\|\bar f_{R,m}^{n_3}-f_m^*\|_{L_2(P_X)}^2\\
			&\le C\MR\bar a_{n_3}^2.
		\end{align*}
		The last inequality follows from Jensen's inequality over the aggregation block and Assumption~\ref{main-ass:evolving-experts}:
		\begin{align*}
			\EE\|\bar f_{R,m}^{n_3}-f_m^*\|_{L_2(P_X)}^2 \le \frac{1}{n_3}\sum_{t=n_1+n_2}^{n-1}\EE\|\hat f_{R,m}^t-f_m^*\|_{L_2(P_X)}^2 \le C\bar a_{n_3}^2.
		\end{align*}
		The reverse triangle inequality in $L_2(\Omega\times P_X)$ gives
		\begin{align*}
			\left(\EE\|\hat f_0-f_0\|_{L_2(P_X)}^2\right)^{1/2} \ge \sqrt{\fA_1(\MR)}-C\sqrt{\MR}\,\bar a_{n_3}.
		\end{align*}
		Squaring the positive part proves the claimed lower bound.
		This proves the claim.
	\end{proof}

	\section{Proof of Section \ref{main-sec:adaptation}}
	\label{sec:supp-proof-section-5}
	
	\subsection{Proof of the oracle decomposition for gating approximation}
	\label{subsec:supp-proof-oracle-decomposition-gating-approximation}
	
	\begin{proof}
		Fix any $g\in\cG$. For $x\in\cX_r$, we have $g^*(x)=e_r$ and hence
		\begin{align*}
			f_0(x)-\sum_{m=1}^{\MR}g_m(x)f_m^*(x)
			=\bigl(f_0(x)-f_r^*(x)\bigr)
			+\sum_{m=1}^{\MR}\bigl(g_m^*(x)-g_m(x)\bigr)f_m^*(x).
		\end{align*}
		Because $g(x)\in\simplex{\MR}$,
		\begin{align*}
			\|g(x)-g^*(x)\|_1
			=2\{1-g_r(x)\}
			\le 2\|g(x)-g^*(x)\|_2.
		\end{align*}
		Therefore, Assumption~\ref{main-ass:bounded-experts} gives
		\begin{align*}
			\left|\sum_{m=1}^{\MR}\bigl(g_m^*(x)-g_m(x)\bigr)f_m^*(x)\right|
			\le A\|g(x)-g^*(x)\|_1
			\le 2A\|g(x)-g^*(x)\|_2.
		\end{align*}
		Applying $(a+b)^2\le 2a^2+2b^2$, integrating over each $\cX_r$, and using that $\{\cX_r\}_{r=1}^{\MR}$ partitions $\cX$, we obtain
		\begin{align*}
			\left\|f_0-\sum_{m=1}^{\MR}g_m f_m^*\right\|_{L_2(P_X)}^2
			&\le 2\sum_{r=1}^{\MR}\|f_0-f_r^*\|_{\cX_r}^2
			+8A^2\|g-g^*\|_{L_2(P_X)}^2 \\
			&\le C\left\{\sum_{r=1}^{\MR}\|f_0-f_r^*\|_{\cX_r}^2
			+\|g-g^*\|_{L_2(P_X)}^2\right\},
		\end{align*}
		where $C=\max\{2,8A^2\}$ is independent of $n$ and $\MR$. Taking the infimum over $g\in\cG$ proves the decomposition stated in the main text.
	\end{proof}
	
	\subsection{Proof of Example \ref{main-ex:L1-L2}}
	\label{subsec:supp-proof-ex-5-2}
	
	\begin{proof}[Proof of Example \ref{main-ex:L1-L2}]
		Recall that $\cX_1=[-1,0]$, $\cX_2=(0,1]$ and $X\sim\Unif[-1,1]$. The oracle assignment is $e_1$ on $\cX_1$ and $e_2$ on $\cX_2$. Thus for any gating rule $g(x)=(g_1(x),\dots,g_{\MR}(x))^\top$, the global and interior losses are
		\begin{align*}
			\cL^{(1)}(g) &= \frac{1}{2}\int_{-1}^0 \left((1-g_1(x))^2+\sum_{j\ne 1}g_j(x)^2\right) dx + \frac{1}{2}\int_0^1 \left((1-g_2(x))^2+\sum_{j\ne 2}g_j(x)^2\right) dx, \\
			\cL^{(2)}(g;\rho) &= \frac{1}{2}\int_{-1}^{-\rho} \left((1-g_1(x))^2+\sum_{j\ne 1}g_j(x)^2\right) dx + \frac{1}{2}\int_{\rho}^1 \left((1-g_2(x))^2+\sum_{j\ne 2}g_j(x)^2\right) dx.
		\end{align*}
		
		\textbf{Now we prove the lower bound for $\cL^{(1)}$ under softmax gating.} Let
		\begin{align*}
			s_r^{\lin}(x;\theta_r)=\beta_r x+\alpha_r, \qquad g_r(x)=\frac{\exp\{s_r^{\lin}(x;\theta_r)\}}{\sum_{j=1}^{\MR}\exp\{s_j^{\lin}(x;\theta_j)\}}.
		\end{align*}
		Since
		\begin{align*}
			1-g_1(x) \ge \frac{\exp\{s_2^{\lin}(x;\theta_2)\}}{\exp\{s_1^{\lin}(x;\theta_1)\}+\exp\{s_2^{\lin}(x;\theta_2)\}}, \qquad 1-g_2(x) \ge \frac{\exp\{s_1^{\lin}(x;\theta_1)\}}{\exp\{s_1^{\lin}(x;\theta_1)\}+\exp\{s_2^{\lin}(x;\theta_2)\}},
		\end{align*}
		we have
		\begin{align*}
			\cL^{(1)}(g) \ge \frac{1}{2}\int_{-1}^0 \left(\frac{1}{1+\exp\{s_1^{\lin}(x;\theta_1)-s_2^{\lin}(x;\theta_2)\}}\right)^2 dx + \frac{1}{2}\int_0^1 \left(\frac{1}{1+\exp\{s_2^{\lin}(x;\theta_2)-s_1^{\lin}(x;\theta_1)\}}\right)^2 dx.
		\end{align*}
		
		Set $\Delta(x) := s_2^{\lin}(x;\theta_2)-s_1^{\lin}(x;\theta_1) = (\beta_2-\beta_1)x+(\alpha_2-\alpha_1)$. By the parameter bound,
		\begin{align*}
			|\beta_2-\beta_1|\le 2C\log \MR, \qquad |\alpha_1-\alpha_2|\le 2C\log \MR.
		\end{align*}
		Let $L:=2C\log \MR$ and $k:=s_1^{\lin}(0;\theta_1)-s_2^{\lin}(0;\theta_2)$. We first consider the case $k \ge 0$. Then $0 \le k \le L$, and for $x\in[0,1]$,
		\begin{align*}
			\Delta(x)=-k+(\beta_2-\beta_1)x\le -k+Lx.
		\end{align*}
		Since $t\mapsto (1+\exp\{t\})^{-2}$ is decreasing,
		\begin{align*}
			\cL^{(1)}(g) \ge \frac{1}{2}\int_0^1 \left(\frac{1}{1+\exp\{\Delta(x)\}}\right)^2 dx \ge \frac{1}{2}\int_0^1 \left(\frac{1}{1+\exp\{-k+Lx\}}\right)^2 dx.
		\end{align*}
		Now split the integral at $x=k/L$, we obtain
		\begin{align*}
			\cL^{(1)}(g)
			&\ge \frac{1}{2}\int_{0}^{k/L}\left(\frac{1}{1+\exp\{-k+Lx\}}\right)^2 dx + \frac{1}{2}\int_{k/L}^{1}\left(\frac{1}{1+\exp\{-k+Lx\}}\right)^2 dx \\
			&\ge \frac{1}{2}\cdot\frac{k}{4L} + \frac{1}{2}\int_{k/L}^{1}\frac{1}{4\exp\{-2k+2Lx\}} dx \\
			&= \frac{k}{8L}+\frac{1}{16L}\left(1-\exp\{2k-2L\}\right) \\
			&= \frac{1}{16L}\left(2k+1-\exp\{2k-2L\}\right).
		\end{align*}
		Since $0 \le k \le L$, the last display implies $\cL^{(1)}(g) \ge \frac{1-\exp\{-2L\}}{16L}$. If $k < 0$, the same argument is applied on the interval $[-1,0]$, with the two sided reversed. This gives the same lower bound. Therefore, for every softmax gate $g$,
		\begin{align*}
			\cL^{(1)}(g) \ge \frac{1-\exp\{-2L\}}{16L}.
		\end{align*}
		Since $L=2C\log \MR$,
		\begin{align*}
			\cL^{(1)}(g) \ge \frac{1-\MR^{-4C}}{32C\log \MR} \gtrsim \frac{1}{\log \MR}.
		\end{align*}
		As $g\in\cG_{\soft}$ was arbitrary,
		\begin{align*}
			\cL^{(1)}(\cG_{\soft}) \gtrsim \frac{1}{\log \MR}.
		\end{align*}
		
		\textbf{Next we prove the upper bound for $\cL^{(1)}$ under softmax gating.} Choose
		\begin{align*}
			s_1^{\lin}(x;\theta_1)=B-ax, \qquad s_2^{\lin}(x;\theta_2)=B+ax, \qquad s_j^{\lin}(x;\theta_j)=-B,\qquad j\ge 3,
		\end{align*}
		with $a=B=C\log \MR$. Set $s:=(\MR-2)\exp\{-2B\}=(\MR-2)\MR^{-2C}$. Since $C>1/2$, we have $0\le s\le 1$ for all sufficiently large $\MR$; the finitely many smaller values of $\MR$ are absorbed into the constant. For $x\in[0,1]$,
		\begin{align*}
			1-g_2(x) = \frac{\exp\{-ax\}+s}{\exp\{-ax\}+\exp\{ax\}+s} \le \exp\{-2ax\}+s\exp\{-ax\},
		\end{align*}
		Similarly, for $x\in[-1,0]$,
		\begin{align*}
			1-g_1(x)\le \exp\{-2a|x|\}+s\exp\{-a|x|\}.
		\end{align*}
		Also,
		\begin{align*}
			(1-g_1(x))^2+\sum_{j\ne 1}g_j(x)^2\le 2(1-g_1(x))^2, \qquad (1-g_2(x))^2+\sum_{j\ne 2}g_j(x)^2\le 2(1-g_2(x))^2.
		\end{align*}
		Hence, by symmetry,
		\begin{align*}
			\cL^{(1)}(g) \le 2\int_0^1 \left(\exp\{-2ax\}+s\exp\{-ax\}\right)^2 dx.
		\end{align*}
		Using $(u+v)^2 \le 2(u^2+v^2)$ and $s\le 1$,
		\begin{align*}
			\cL^{(1)}(g) \le 4\int_0^1 \exp\{-4ax\} dx+4\int_0^1 \exp\{-2ax\} dx = \frac{1-\exp\{-4a\}}{a}+\frac{2(1-\exp\{-2a\})}{a} \le \frac{3}{a}.
		\end{align*}
		Therefore
		\begin{align*}
			\cL^{(1)}(g)\le \frac{3}{C\log \MR},
		\end{align*}
		and hence
		\begin{align*}
			\cL^{(1)}(\cG_{\soft}) \le \frac{3}{C\log \MR}.
		\end{align*}
		
		\textbf{Now we prove the upper bound for $\cL^{(2)}$ under softmax gating.} For the same choice of scores, by symmetry,
		\begin{align*}
			\cL^{(2)}(g;\rho) \le 2\int_{\rho}^1 \left(\exp\{-2ax\}+s\exp\{-ax\}\right)^2 dx.
		\end{align*}
		Using again $(u+v)^2\le 2(u^2+v^2)$ and $s\le 1$,
		\begin{align*}
			\cL^{(2)}(g;\rho) \le 4\int_{\rho}^1 \exp\{-4ax\} dx+4\int_{\rho}^1 \exp\{-2ax\} dx \le \frac{\exp\{-4a\rho\}}{a}+\frac{2\exp\{-2a\rho\}}{a} \le \frac{3\exp\{-2a\rho\}}{a}.
		\end{align*}
		Since $a=C\log \MR$,
		\begin{align*}
			\cL^{(2)}(g;\rho) \le \frac{3}{C\log \MR}\MR^{-2C\rho} \le \frac{3}{C\log 2}\MR^{-2C\rho}.
		\end{align*}
		Therefore
		\begin{align*}
			\cL^{(2)}(\cG_{\soft};\rho) \le \frac{3}{C\log 2}\MR^{-2C\rho}.
		\end{align*}
		
		\textbf{Finally we prove the results for Top-1 gating.} Choose scores
		\begin{align*}
			\left(s_1^{\lin}(x;\theta_1),\ldots,s_{\MR}^{\lin}(x;\theta_{\MR})\right)=(-x,x,-2,\dots,-2).
		\end{align*}
		Then the Top-1 rule assigns expert $1$ on $[-1,0]$ and expert $2$ on $(0,1]$, up to the boundary point $x=0$, which has $P_X$-measure zero. Therefore the oracle partition is realized exactly, and
		\begin{align*}
			\cL^{(1)}(\cG_{\mathrm{Top\text{-}1}})=0, \qquad \cL^{(2)}(\cG_{\mathrm{Top\text{-}1}};\rho)=0.
		\end{align*}
		This complete the proof.
	\end{proof}
	
	\subsection{Proof of Example \ref{main-exa:linear-quadratic-l1-l2}}
	\label{subsec:supp-proof-ex-5-3}
	
	\begin{proof}[Proof of Example \ref{main-exa:linear-quadratic-l1-l2}]
		This proof supplies the formal disk example underlying Example 5.3. Let $\cX=\{x\in\RR^2:\|x\|_2\le R_2\}$ with $X\sim\Unif(\cX)$, and let the oracle regions be
		\begin{align*}
			\cX_1=\{x\in\cX:\|x\|_2\le R_1\}, \qquad \cX_2=\{x\in\cX:R_1<\|x\|_2\le R_2\},
		\end{align*}
		where $0<R_1<R_2$. We prove that, for every fixed $0<\rho<\min\{R_1,R_2-R_1\}$, there exist constants $C_L,C_R,\alpha>0$ independent of $\MR$, such that
		\begin{align*}
			\cL^{(2)}(\cG_{\lin};\rho)\ge C_L, \qquad \cL^{(2)}(\cG_{\qdr};\rho)\le C_R\MR^{-\alpha}
		\end{align*}
		
		Define
		\begin{align*}
			\cX_1^{(-\rho)}:=\{x\in\RR^2:\|x\|_2\le R_1-\rho\}, \qquad \cX_2^{(-\rho)}:=\{x\in\RR^2:R_1+\rho\le \|x\|_2\le R_2\}.
		\end{align*}
		Since $X\sim\Unif(\cX)$, we have
		\begin{align*}
			\cL^{(2)}(g;\rho) = \frac{1}{\pi R_2^2}\int_{\cX_1^{(-\rho)}}\left((1-g_1)^2+\sum_{j\ne 1}g_j^2\right) dx + \frac{1}{\pi R_2^2}\int_{\cX_2^{(-\rho)}}\left((1-g_2)^2+\sum_{j\ne 2}g_j^2\right) dx.
		\end{align*}
		
		\textbf{We first prove the lower bound for $\cL^{(2)}$ under linear-softmax gating.} For $g\in\cG_{\lin}$, write $\Delta(x):=s_2^{\lin}(x;\theta_2)-s_1^{\lin}(x;\theta_1)=c+b^\top x$. First suppose $c\ge 0$. Since $\Delta(x)+\Delta(-x)=2c\ge 0$, symmetry implies that $\Delta(x) \ge 0$ on a subset of $\cX_1^{(-\rho)}$ with Lebesgue measure at least $\mathrm{Vol}(\cX_1^{(-\rho)})/2$. On this subset, $g_1(x) \le 1/2$. Thus
		\begin{align*}
			(1-g_1)^2+\sum_{j\ne 1}g_j^2\ge \frac{1}{4},
		\end{align*}
		on this subset, and therefore
		\begin{align*}
			\cL^{(2)}(g;\rho) \ge \frac{\mathrm{Vol}(\cX_1^{(-\rho)})}{8\pi R_2^2} = \frac{(R_1-\rho)^2}{8R_2^2}.
		\end{align*}
		If $c\le 0$, then $\Delta(x)+\Delta(-x)=2c \le 0$, so by the same symmetry argument $\Delta(x) \le 0$ on at least half of $\cX_2^{(-\rho)}$. On that subset, $g_2(x) \le 1/2$ and hence
		\begin{align*}
			\cL^{(2)}(g;\rho) \ge \frac{\mathrm{Vol}(\cX_2^{(-\rho)})}{8\pi R_2^2} = \frac{R_2^2-(R_1+\rho)^2}{8R_2^2}.
		\end{align*}
		Consequently,
		\begin{align*}
			\cL^{(2)}(\cG_{\lin};\rho) \ge C_L := \frac{1}{8R_2^2}\min\left\{(R_1-\rho)^2,R_2^2-(R_1+\rho)^2\right\} > 0.
		\end{align*}
		
		\textbf{We next prove the upper bound for $\cL^{(2)}$ under quadratic-softmax gating.} Since $C>1/2$, choose
		\begin{align*}
			0<c_0<\min\left\{\frac{2C-1}{R_1^2},\frac{C}{R_1^2},C\right\}.
		\end{align*}
		Then the interval $(\max\{0,1-C\},C-c_0R_1^2)$ is nonempty. Choose $b$ in this interval. Define
		\begin{align*}
			s_1^{\qdr}(x;\theta_1) &:= b\log \MR+aR_1^2-a\|x\|_2^2,\\
			s_2^{\qdr}(x;\theta_2) &:= b\log \MR-aR_1^2+a\|x\|_2^2,\\
			s_j^{\qdr}(x;\theta_j) &:= -B,\qquad j\ge 3,
		\end{align*}
		with $a:=c_0\log \MR$ and $B:=C\log \MR$. The choice of $c_0$ and $b$ ensures that all quadratic coefficients and intercepts in the first two scores are bounded by $C\log \MR$ in absolute value. The dummy scores also lie in the same box. Set
		\begin{align*}
			\eta_\rho:=R_1^2-(R_1-\rho)^2=2R_1\rho-\rho^2>0, \qquad s:=(\MR-2)\exp\{-(B+b\log \MR)\}.
		\end{align*}
		Since $C+b>1$, we have $s \le 1$ for al sufficiently large $\MR$; smaller values of $\MR$ can be absorbed into the constant. On $\cX_1^{(-\rho)}$,
		\begin{align*}
			s_1^{\qdr}(x;\theta_1)\ge b\log \MR+a\eta_\rho, \qquad s_2^{\qdr}(x;\theta_2)\le b\log \MR-a\eta_\rho.
		\end{align*}
		Therefore,
		\begin{align*}
			1-g_1 \le \frac{\exp\{s_2^{\qdr}(x;\theta_2)\}+(\MR-2)\exp\{-B\}}{\exp\{s_1^{\qdr}(x;\theta_1)\}} \le \exp\{-2a\eta_\rho\}+\exp\{-a\eta_\rho\} \le 2\exp\{-a\eta_\rho\}.
		\end{align*}
		Similarly, on $\cX_2^{(-\rho)}$, $1-g_2\le 2\exp\{-a\eta_\rho\}$. Thus on both interior regions,
		\begin{align*}
			(1-g_i)^2+\sum_{j\ne i}g_j^2\le 2(1-g_i)^2 \le 8\exp\{-2a\eta_\rho\}, \qquad i = 1, 2.
		\end{align*}
		It follows that
		\begin{align*}
			\cL^{(2)}(g;\rho) \le \frac{8\exp\{-2a\eta_\rho\}}{\pi R_2^2}\mathrm{Vol}(\cX) = 8\exp\{-2a\eta_\rho\} = 8\MR^{-2c_0\eta_\rho},
		\end{align*}
		where the last equality follows from $a = c_0 \log \MR$. Therefore
		\begin{align*}
			\cL^{(2)}(\cG_{\qdr};\rho) \le 8\MR^{-\alpha}, \qquad \alpha:=2c_0(2R_1\rho-\rho^2)>0.
		\end{align*}
		Combining the lower bound for linear-softmax gating and the upper bound for quadratic-softmax gating proves the claim.
	\end{proof}

	\subsection{Proof of Theorem \ref{main-thm:kernel-global-main}}
	\label{subsec:supp-proof-thm-5-5}
	
	We first recall the grid notation used in the kernel construction. For a mesh width $h>0$, define
	\begin{align*}
		\cC:=\{0,h,2h,\ldots,\lfloor h^{-1}\rfloor h\}^{d}\cap[0,1]^d
	\end{align*}
	as the set of grid centers. The Gaussian kernel bandwidth is set to $\sigma=\kappa h$ where $\kappa\in[1/2,1]$. The following lemma bound the total Gaussian tail mass contributed by grid centers in $\cC$ that lie at Euclidean distance at least $r$ from a fixed point $x$.
	
	\begin{lemma}[Gaussian tail sum over a uniform grid]\label{lem:supp-grid-tail}
		Let $\sigma=\kappa h$ for some fixed $\kappa\in[1/2,1]$. Then there exists a constant $A_{d,\kappa}>0$, depending only on $(d,\kappa)$, such that for every $x\in[0,1]^d$ and every $r\ge 2\sigma$,
		\begin{align*}
			\sum_{\substack{c\in\cC\\ \|x-c\|\ge r}} \exp\left\{-\frac{\|x-c\|^2}{2\sigma^2}\right\} \le A_{d,\kappa}\exp\left\{-\frac{r^2}{4\sigma^2}\right\}.
		\end{align*}
	\end{lemma}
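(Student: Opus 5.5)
The plan is to split each Gaussian factor so that half of the exponent gives the advertised $\exp\{-r^2/(4\sigma^2)\}$ and the other half is summed over the whole lattice. For any center $c$ with $\|x-c\|\ge r$,
\begin{align*}
\exp\left\{-\frac{\|x-c\|^2}{2\sigma^2}\right\}
=\exp\left\{-\frac{\|x-c\|^2}{4\sigma^2}\right\}\exp\left\{-\frac{\|x-c\|^2}{4\sigma^2}\right\}
\le \exp\left\{-\frac{r^2}{4\sigma^2}\right\}\exp\left\{-\frac{\|x-c\|^2}{4\sigma^2}\right\}.
\end{align*}
It therefore suffices to show that
\begin{align*}
S(x):=\sum_{c\in h\mathbb{Z}^d}\exp\left\{-\frac{\|x-c\|^2}{4\sigma^2}\right\}\le A_{d,\kappa}
\end{align*}
uniformly in $x$ and $h$. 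Since $\cC\subset h\mathbb{Z}^d$, enlarging the sum to the full lattice only helps, and we may drop the restriction $\|x-c\|\ge r$ entirely. The hypothesis $r\ge2\sigma$ is then not really needed, though it is harmless.

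To bound $S(x)$, rescale by $h$. Writing $c=hk$ with $k\in\mathbb{Z}^d$ and $u=x/h$, we have $\|x-c\|^2/(4\sigma^2)=\|u-k\|^2/(4\kappa^2)$. The Gaussian is a product over coordinates, so
\begin{align*}
S(x)=\prod_{i=1}^d\sum_{k_i\in\mathbb{Z}}\exp\left\{-\frac{(u_i-k_i)^2}{4\kappa^2}\right\}.
\end{align*}
Each one-dimensional sum is bounded uniformly in $u_i$. For the lattice points with $|u_i-k_i|<1$ there are at most two terms, each at most $1$. The remaining terms pair off on either side: for $j\ge1$ there are at most two points with $j\le|u_i-k_i|<j+1$, each contributing at most $e^{-j^2/(4\kappa^2)}$. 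This gives
\begin{align*}
\sum_{k_i\in\mathbb{Z}}\exp\left\{-\frac{(u_i-k_i)^2}{4\kappa^2}\right\}\le 2+2\sum_{j\ge1}e^{-j^2/(4\kappa^2)}\le 2+2\sqrt{\pi}\,\kappa.
\end{align*}
Because $\kappa\le1$, this is at most $2+2\sqrt{\pi}$, so $A_{d,\kappa}=(2+2\sqrt{\pi}\,\kappa)^d$ works, and in fact $A_d=(2+2\sqrt\pi)^d$ does for all $\kappa\in[1/2,1]$.

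The argument has no real obstacle. The only care needed is to make the lattice-sum bound independent of the offset $u=x/h$, which the product structure together with counting at most two points per unit shell handles directly. The resulting constant depends only on $d$ and $\kappa$, as required.
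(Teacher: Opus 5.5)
Your proof is correct and takes a different, shorter route than the paper. The paper also enlarges the sum to $h\mathbb{Z}^d$, but it then works directly with the full exponent $\|x-c\|^2/(2\sigma^2)$. It bounds the lattice counting function by the volumetric estimate $N(R)\le(1+2R/h)^d$ and splits the far points into dyadic shells $2^m r\le\|x-\lambda\|<2^{m+1}r$. It absorbs the polynomial shell counts through $y^d\le C_d'e^{y^2/8}$ and then sums a geometric-type series. The hypothesis $r\ge 2\sigma$, i.e.\ $t=r/\sigma\ge 2$, is genuinely used there, both to replace $(1+2^{m+1}\kappa t)^d$ by $C(2^m t)^d$ and to obtain the factor $e/(e-1)$.

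Your half-exponent split $e^{-\|x-c\|^2/(2\sigma^2)}\le e^{-r^2/(4\sigma^2)}e^{-\|x-c\|^2/(4\sigma^2)}$ removes the need for any shell decomposition. What remains is a full lattice sum at a wider bandwidth. After rescaling by $h$ it no longer depends on $h$, and the product structure of the Gaussian over $\mathbb{Z}^d$ bounds it by $(2+2\sqrt{\pi}\kappa)^d$, uniformly in the offset. This gives an explicit constant and a bound valid for every $r$ and every $x\in\mathbb{R}^d$, in a few lines.

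The paper's counting argument, for its part, never uses the tensor-product structure of the grid. It only needs the centers to be $h$-separated, so it would carry over unchanged to quasi-uniform or irregular center sets. You could recover that generality as well by keeping your half-split and replacing the coordinatewise product bound with the same packing count.
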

	
	\begin{proof}[Proof of Lemma \ref{lem:supp-grid-tail}]
		Let $\Lambda:=h\mathbb{Z}^d$. Since $\cC\subset\Lambda$ and all terms are nonnegative,
		\begin{align*}
			\sum_{\substack{c\in\cC\\ \|x-c\|\ge r}} \exp\left\{-\frac{\|x-c\|^2}{2\sigma^2}\right\} \le \sum_{\substack{\lambda\in\Lambda\\ \|x-\lambda\|\ge r}} \exp\left\{-\frac{\|x-\lambda\|^2}{2\sigma^2}\right\} =: S.
		\end{align*}
		It suffices to bound $S$. For $R\ge 0$, let $N(R):=\#\{\lambda\in\Lambda:\|x-\lambda\|\le R\}$. Since the balls $B_d(\lambda,h/2)$, $\lambda\in\Lambda$, are pairwise disjoint up to boundaries of measure zero,
		\begin{align*}
			N(R)\mathrm{Vol}(B_d(0,h/2))\le \mathrm{Vol}(B_d(0,R+h/2)),
		\end{align*}
		hence
		\begin{align*}
			N(R)\le \left(1+\frac{2R}{h}\right)^d \le C_d\left(1+\frac{R}{h}\right)^d.
		\end{align*}
		
		Now decompose the lattice points into dyadic shells
		\begin{align*}
			A_m:=\{\lambda\in\Lambda:2^m r\le \|x-\lambda\|<2^{m+1}r\}, \qquad m\ge 0.
		\end{align*}
		Then
		\begin{align*}
			S = \sum_{m=0}^\infty \sum_{\lambda\in A_m} \exp\left\{-\frac{\|x-\lambda\|^2}{2\sigma^2}\right\} \le \sum_{m=0}^\infty \#A_m \exp\left\{-\frac{(2^m r)^2}{2\sigma^2}\right\}.
		\end{align*}
		Using $\#A_m\le N(2^{m+1}r)$, we get
		\begin{align*}
			S\le C_d\sum_{m=0}^\infty \left(1+\frac{2^{m+1}r}{h}\right)^d \exp\left\{-\frac{2^{2m}r^2}{2\sigma^2}\right\}.
		\end{align*}
		Set $t:=r/\sigma$. Since $r\ge 2\sigma$, $t=\frac{r}{\sigma} \ge 2$. Also $r/h=\kappa t$. Therefore
		\begin{align*}
			S\le C_d\sum_{m=0}^\infty (1+2^{m+1}\kappa t)^d \exp\{-2^{2m-1}t^2\}.
		\end{align*}
		Because $t\ge 2$ and $\kappa\ge 1/2$, we have $2^{m+1}\kappa t\ge 2$, hence
		\begin{align*}
			(1+2^{m+1}\kappa t)^d \le C_{d,\kappa}(2^m t)^d.
		\end{align*}
		Thus
		\begin{align*}
			S\le C_{d,\kappa}\sum_{m=0}^\infty (2^m t)^d \exp\{-2^{2m-1}t^2\}.
		\end{align*}
		Now use the fact that $y\mapsto y^d \exp\{-y^2/8\}$ is bounded on $[0,\infty)$. Hence there exists $C_d'>0$ such that
		\begin{align*}
			y^d\le C_d' \exp\{y^2/8\}, \qquad y\ge 0.
		\end{align*}
		Applying this with $y=2^m t$, we obtain
		\begin{align*}
			(2^m t)^d \exp\{-2^{2m-1}t^2\} \le C_d' \exp\{-2^{2m-1}t^2+2^{2m}t^2/8\} = C_d' \exp\{-(3/8)2^{2m}t^2\} \le C_d' \exp\{-2^{2m-2}t^2\}.
		\end{align*}
		Therefore
		\begin{align*}
			S\le C_{d,\kappa} C_d'\sum_{m=0}^\infty \exp\{-2^{2m-2}t^2\}.
		\end{align*}
		Since $4^m\ge m+1$,
		\begin{align*}
			\sum_{m=0}^\infty \exp\{-2^{2m-2}t^2\} = \sum_{m=0}^\infty \exp\{-(t^2/4)4^m\} \le \sum_{m=0}^\infty \exp\{-(t^2/4)(m+1)\}.
		\end{align*}
		For $t\ge2$, $\sum_{m=0}^\infty \exp\{-2^{2m-2}t^2\} \le \frac{e}{e-1}\exp\{-t^2/4\}$. Combining the preceding displays, and defining $A_{d,\kappa}=eC_{d,\kappa}C_d'/(e-1)$, we obtain
		\begin{align*}
			S \le A_{d,\kappa}\exp\{-t^2/4\} = A_{d,\kappa}\exp\left\{-\frac{r^2}{4\sigma^2}\right\}.
		\end{align*}
		This proves the lemma.
	\end{proof}
	
	\paragraph{Extension to general weights.}
	We use the general region-wise target-weight notation introduced in Section \ref{main-sec:adaptation} of the main text. Thus each cell $\cX_i$ is associated with a target vector $w_i^*=(w_{i1}^*,\dots,w_{i\MR}^*)\in\simplex{\MR}$. For the uniform grid $\cC\subset[0,1]^d$, assign each grid center $c \in \cC$ to a unique region by a fixed deterministic tie-breaking rule and let
	\begin{align*}
		C_i:=\{c \in \cC: c \text{ is assigned to } \cX_i\}, \qquad i=1,\dots,\MR.
	\end{align*}
	Then $\{C_i\}_{i=1}^{\MR}$ forms a disjoint partition of $\cC$. We consider the choice
	\begin{align*}
		w_{r,c}:=\sum_{i=1}^{\MR} w_{ir}^*\one_{\{c\in C_i\}}, \qquad 1\le r\le \MR,\quad c\in\cC.
	\end{align*}
	Thus, whenever $c\in C_i$, one has $w_{r,c}=w_{ir}^*$. Define
	\begin{align*}
		k_r(x):=\sum_{i=1}^{\MR} w_{ir}^*\sum_{c\in C_i}\varphi(x,c;\sigma), \qquad g_r(x):=\frac{k_r(x)}{\sum_{s=1}^{\MR}k_s(x)} = \frac{k_r(x)}{\sum_{c\in\cC}\varphi(x,c;\sigma)}.
	\end{align*}
	The last equality follows because the vectors $w_i^*$ lie in the simplex and the sets $C_i$ form a partition of $\cC$. The loss function is now defined as
	\begin{align*}
		\cL^{(1)}(g) := \sum_{i=1}^{\MR} \left\|\left(w_i^*-g(x)\right)\one_{\cX_i}(x)\right\|_{L_2(P_X)}^2, \qquad \cL^{(1)}(\cG_{\mathrm{ker}}) := \inf_{g\in\cG_{\mathrm{ker}}}\cL^{(1)}(g).
	\end{align*}
	
	\begin{proof}[Proof of Theorem \ref{main-thm:kernel-global-main}]
		Set
		\begin{align*}
			\rho_h:=2\kappa h\sqrt{\left(\frac{d}{2}+1\right)\log(1/h)}.
		\end{align*}
		The boundary-layer regularity condition is only assumed for radii at most $\rho_0$. Since $\rho_h\to0$ as $h\downarrow0$, there exists $h_0>0$, depending only on $(d,\kappa,\rho_0)$, such that $\rho_h\le\rho_0$ whenever $0<h\le h_0$. If $h>h_0$, then, because both $g(x)$ and every $w_i^*$ belong to $\simplex{\MR}$,
		\begin{align*}
			\cL^{(1)}(\cG_{\mathrm{ker}})\le 2
			\le 2h_0^{-(d+2)}\MR h^{d+2},
		\end{align*}
		which already has the asserted form. It therefore remains to consider $0<h\le h_0$, and in this case $\rho_h\le\rho_0$.
		Fix $i \in [\MR]$.
		We use the notation $\cX_i^{-\rho_h}$ and $B_i(\rho_h)$ from this definition. The assumed upper bound on $h$ implies $\rho_h\ge 2\sigma$, so Lemma~\ref{lem:supp-grid-tail} is applicable with $r=\rho_h$.
		
		We first estimate the error on $\cX_i^{-\rho_h}$. If $x\in\cX_i^{-\rho_h}$, then every $c\notin C_i$ satisfies $\|x-c\|\ge \rho_h$. Hence Lemma~\ref{lem:supp-grid-tail} gives
		\begin{align*}
			\sum_{c\notin C_i}\varphi(x,c;\sigma) \le A_{d,\kappa}\exp\left\{-\frac{\rho_h^2}{4\sigma^2}\right\}.
		\end{align*}
		On the other hand, since $\cC$ is a uniform grid with mesh width $h$, and taking $h^{-1} \in \mathbb{N}$ for notational simplicity, there exists a grid point $c_x\in\cC$ such that
		\begin{align*}
			\|x-c_x\|\le \frac{\sqrt{d}}{2}h.
		\end{align*}
		The case of a general mesh is identical after changing the constant $\sqrt{d}/2$ to a comparable mesh constant. The assumed upper bound on $h$ ensures $\frac{\sqrt{d}}{2}h<\rho_h$. Since $x\in\cX_i^{-\rho_h}$, this implies $c_x\in\cX_i$. Moreover, $c_x$ is strictly inside $\cX_i$, so the deterministic tie-breaking rule assigns $c_x$ to $C_i$. Therefore
		\begin{align*}
			\sum_{c\in C_i}\varphi(x,c;\sigma) \ge \varphi(x,c_x;\sigma) \ge \exp\left\{-\frac{d}{8\kappa^2}\right\}.
		\end{align*}
		Using the definition of $g$ and $k_r$, for every $r$,
		\begin{align*}
			|g_r(x)-w_{ir}^*| = \left|\frac{\sum_{k\ne i}(w_{kr}^*-w_{ir}^*)\sum_{c\in C_k}\varphi(x,c;\sigma)}{\sum_{k=1}^{\MR}\sum_{c\in C_k}\varphi(x,c;\sigma)}\right| \le \frac{\sum_{c\notin C_i}\varphi(x,c;\sigma)}{\sum_{c\in C_i}\varphi(x,c;\sigma)}.
		\end{align*}
		Hence
		\begin{align*}
			|g_r(x)-w_{ir}^*| \le A_{d,\kappa}\exp\left\{\frac{d}{8\kappa^2}\right\}\exp\left\{-\frac{\rho_h^2}{4\sigma^2}\right\}.
		\end{align*}
		Since $\frac{\rho_h^2}{4\sigma^2} = (d/2+1)\log(1/h)$, we have, for all $x\in\cX_i^{-\rho_h}$,
		\begin{align*}
			|g_r(x)-w_{ir}^*| \le C_{d,\kappa}h^{d/2+1}.
		\end{align*}
		Therefore,
		\begin{align*}
			\int_{\cX_i^{-\rho_h}} \sum_{r=1}^{\MR}(g_r(x)-w_{ir}^*)^2 dP_X(x) \le P_X(\cX_i)\MR C_{d,\kappa}h^{d+2}.
		\end{align*}
		We next control the boundary layer $B_i(\rho_h)$. Since $g(x)\in\simplex{\MR}$ and $w_i^*\in\simplex{\MR}$, $\sum_{r=1}^{\MR}(g_r(x)-w_{ir}^*)^2\le 2$. Thus
		\begin{align*}
			\int_{B_i(\rho_h)} \sum_{r=1}^{\MR}(g_r(x)-w_{ir}^*)^2 dP_X(x) \le 2P_X(B_i(\rho_h)).
		\end{align*}
		Because $p_X$ is continuous on the compact set $[0,1]^d$, we have $\|p_X\|_\infty<\infty$. By the boundary-layer regularity of the partition, $\mathrm{Vol}\left(B_i(\rho_h)\right) = \int_{B_i(\rho_h)} 1 dx \le \gamma_i\rho_h$, and therefore
		\begin{align*}
			P_X(B_i(\rho_h)) \le \|p_X\|_\infty\mathrm{Vol}\left(B_i(\rho_h)\right) \le \|p_X\|_\infty\gamma_i\rho_h.
		\end{align*}
		Hence
		\begin{align*}
			\int_{B_i(\rho_h)} \sum_{r=1}^{\MR}(g_r(x)-w_{ir}^*)^2 dP_X(x) \le 2\|p_X\|_\infty\gamma_i\rho_h.
		\end{align*}
		Summing over $i=1,\dots,\MR$, and using $\sum_i P_X(\cX_i) = 1$, we conclude that
		\begin{align*}
			\cL^{(1)}(\cG_{\mathrm{ker}}) \le C_{d,\kappa}\MR h^{d+2}+2\|p_X\|_\infty\Gamma_{\MR}\rho_h.
		\end{align*}
		Since $\rho_h=2\kappa h\sqrt{(d/2+1)\log(1/h)}$, the preceding display implies
		\begin{align*}
			\cL^{(1)}(\cG_{\mathrm{ker}}) \le C\left(\MR h^{d+2} + \Gamma_{\MR}h\sqrt{\log(1/h)}\right),
		\end{align*}
		for a constant $C$ independent of $h$, $\Gamma_{\MR}$, and $\MR$.
	\end{proof}
	
	\subsection{Proof of Corollary \ref{main-cor:kernel-interior-main}}
	\label{subsec:supp-proof-cor-5-7}
	
	\begin{proof}[Proof of Corollary \ref{main-cor:kernel-interior-main}]
		Fix $i \in [\MR]$. The condition $h<2\rho/\sqrt{d}$ gives $\frac{\sqrt{d}}{2}h<\rho$. Moreover, since $\sigma=\kappa h$, the condition $h<\rho/(2\kappa)$ implies $\rho\ge 2\sigma$. Thus Lemma~\ref{lem:supp-grid-tail} applies with $r = \rho$. The same interior-grid-point argument used in the proof of Theorem \ref{main-thm:kernel-global-main}, with $\rho$ in place of $\rho_h$, gives that for every $x\in\cX_i^{-\rho}$ and every $r=1,\dots,\MR$,
		\begin{align*}
			|g_r(x)-w_{ir}^*| \le C_{d,\kappa}\exp\left\{-\frac{\rho^2}{4\sigma^2}\right\}, \qquad \sigma=\kappa h.
		\end{align*}
		Therefore,
		\begin{align*}
			\|g(x)-w_i^*\|_2^2 = \sum_{r=1}^{\MR}(g_r(x)-w_{ir}^*)^2 \le \MR C_{d,\kappa}\exp\left\{-\frac{\rho^2}{2\sigma^2}\right\}.
		\end{align*}
		Integrating over $\cX_i^{-\rho}$ and summing over $i$, we obtain
		\begin{align*}
			\sum_{i=1}^{\MR}\left\|(g(x)-w_i^*)\one_{\cX_i^{-\rho}}\right\|_{L_2(P_X)}^2
			&= \sum_{i=1}^{\MR}\int_{\cX_i^{-\rho}}\|g(x)-w_i^*\|_2^2 dP_X(x) \\
			&\le \MR C_{d,\kappa}\exp\left\{-\frac{\rho^2}{2\sigma^2}\right\}\sum_{i=1}^{\MR}P_X(\cX_i^{-\rho}) \\
			&\le \MR C_{d,\kappa}\exp\left\{-\frac{\rho^2}{2\sigma^2}\right\},
		\end{align*}
		since the sets $\cX_i^{-\rho}$ are pairwise disjoint. Using $\sigma=\kappa h$, we get
		\begin{align*}
			\cL^{(2)}(\cG_{\mathrm{ker}};\rho) \le C\MR\exp\left\{-\frac{\rho^2}{2\kappa^2h^2}\right\},
		\end{align*}
		for a constant $C$ independent of $h$ and $\MR$. This completes the proof.
	\end{proof}
	
		\subsection{Proof of Theorem \ref{main-thm:selection(global)}}
	\label{subsec:supp-proof-thm-5-9}
	
	We prove the guarantee for the aggregation-projection procedure in Algorithm \ref{main-alg:selection} of the main text. Recall that each candidate estimator satisfies the class-specific oracle inequality \eqref{main-eq:2} in the main text.
	
	\begin{proof}[Proof of Theorem \ref{main-thm:selection(global)}]
		Define
		\begin{align*}
			R_\ell := \fA_\ell+\fL_\ell,\qquad \ell \in [L].
		\end{align*}
		Throughout the proof, $C$ denotes a positive constant that may change from line to line. It is independent of $n$, $L$, and $\ell$, but may depend on the fixed boundedness constants, the common envelope bound $A_0$, the error-distribution constants, and the uniform bound for the constants $C_\ell$ in \eqref{main-eq:2}.
		By the sample splitting in Algorithm~\ref{main-alg:selection}, after conditioning on the fitted candidate list $\{\check f_\ell\}_{\ell=1}^L$, we may apply Proposition~\ref{prop:risk_AFTER} to the calibration block $Z^{(2)}$ and the aggregation block $Z^{(3)}$. In the notation of Proposition~\ref{prop:risk_AFTER}, take
		\begin{align*}
			S=L,\qquad \mu_{\ell,t}(x)=\check f_\ell(x),\qquad \pi_\ell=1/L,\qquad \hat\sigma_{\ell,t}=\hat\sigma_\ell,
		\end{align*}
		for $t=n_1+n_2,\ldots,n-1$. Assumption~\ref{main-ass:bounded-experts} bounds $f_0$, and the common envelope bound on $\{\check f_\ell\}_{\ell=1}^L$ before \eqref{main-eq:2} gives Assumption~\ref{ass:bound_forecast}. The truncation of $\hat\sigma_\ell$ in Algorithm~\ref{main-alg:selection} gives Assumption~\ref{ass:bound_scale_est}. Hence
		\begin{align*}
			\frac{1}{n_3}\sum_{t=n_1+n_2}^{n-1}\EE_{\mathrm{sel}}\|f_0-\tilde f^{t}\|_{L_2(P_X)}^2
			\le C\inf_{\ell\in[L]}\left\{\|f_0-\check f_\ell\|_{L_2(P_X)}^2+\frac{\log L}{n_3}+\EE_{\mathrm{sel}}(\hat\sigma_\ell-\sigma)^2\right\}.
		\end{align*}
		Moreover, the same calculation as in Lemma~\ref{lemma:remove_scale_estimation}, now with the fixed residual $r_\ell(x)=f_0(x)-\check f_\ell(x)$ on the calibration block, gives
		\begin{align*}
			\EE_{\mathrm{sel}}(\hat\sigma_\ell-\sigma)^2 \le C\left\{\frac{1}{n_2}+\|f_0-\check f_\ell\|_{L_2(P_X)}^2\right\}.
		\end{align*}
		Finally, since $\tilde f_n=n_3^{-1}\sum_{t=n_1+n_2}^{n-1}\tilde f^{t}$, Jensen's inequality implies
		\begin{align*}
			\|f_0-\tilde f_n\|_{L_2(P_X)}^2 \le \frac{1}{n_3}\sum_{t=n_1+n_2}^{n-1}\|f_0-\tilde f^{t}\|_{L_2(P_X)}^2.
		\end{align*}
		Combining these displays yields
		\begin{align*}
			\EE_{\mathrm{sel}}\|f_0-\tilde{f}_n\|_{L_2(P_X)}^2 \le C\left(\inf_{\ell\in[L]}\|f_0-\check{f}_\ell\|_{L_2(P_X)}^2+\frac{\log L}{n_3}+\frac{1}{n_2}\right),
		\end{align*}
		where $\EE_{\mathrm{sel}}$ denotes conditional expectation over $Z^{(2)}$ and $Z^{(3)}$, given the fitted candidate list. Taking expectation over $Z^{(1)}$ and using \eqref{main-eq:2}, we obtain
		\begin{align}
			\EE\|f_0-\tilde{f}_n\|_{L_2(P_X)}^2
			&\le C\left(\inf_{\ell\in[L]}\EE\|f_0-\check{f}_\ell\|_{L_2(P_X)}^2+\frac{\log L}{n_3}+\frac{1}{n_2}\right) \nonumber\\
			&\le C\left(\inf_{\ell\in[L]}R_\ell+\frac{\log L}{n_3\wedge n_2}\right). \label{eq:5.9.2}
		\end{align}
		By the population $L_2(P_X)$-projection definition of $\ell^*$, for every sample realization,
		\begin{align*}
			\|\tilde{f}_n - \check{f}_{\ell^*}\|_{L_2(P_X)}^2 = \inf_{\ell \in [L]}\|\tilde{f}_n - \check{f}_{\ell}\|_{L_2(P_X)}^2.
		\end{align*}
		Therefore,
		\begin{align*}
			\EE\|\tilde{f}_n - \check{f}_{\ell^*}\|_{L_2(P_X)}^2 \le \inf_{\ell \in [L]}\EE\|\tilde{f}_n - \check{f}_{\ell}\|_{L_2(P_X)}^2.
		\end{align*}
		Using the inequality $\|a-b\|^2\le 2\|a-c\|^2+2\|c-b\|^2$, we have
		\begin{align*}
			\EE\|f_0-\check{f}_{\ell^*}\|_{L_2(P_X)}^2
			&\le 2\left(\EE\|f_0-\tilde{f}_n\|_{L_2(P_X)}^2 + \EE\|\tilde{f}_n-\check{f}_{\ell^*}\|_{L_2(P_X)}^2\right) \\
			&\le 2\left(\EE\|f_0-\tilde{f}_n\|_{L_2(P_X)}^2 + \inf_{\ell\in[L]}\EE\|\tilde{f}_n-\check{f}_\ell\|_{L_2(P_X)}^2\right) \\
			&\le 2\left(\EE\|f_0-\tilde{f}_n\|_{L_2(P_X)}^2 + 2\inf_{\ell\in[L]}\left(\EE\|f_0-\tilde{f}_n\|_{L_2(P_X)}^2 + \EE\|f_0-\check{f}_\ell\|_{L_2(P_X)}^2\right)\right) \\
			&\le C\left(\inf_{\ell\in[L]}R_\ell + \frac{\log L}{n_3\wedge n_2}\right),
		\end{align*}
		where the last inequality follows from \eqref{eq:5.9.2} and \eqref{main-eq:2}. Finally, if the split proportions are fixed so that $n_1\asymp n_2\asymp n_3\asymp n$, then the selection-stage cost is of order $\log L/n$. Hence
		\begin{align*}
			\EE\|f_0-\check{f}_{\ell^*}\|_{L_2(P_X)}^2 \le C\left(\inf_{\ell\in[L]}\left\{\fA_\ell+\fL_\ell\right\} + \frac{\log L}{n}\right),
		\end{align*}
		where $C$ does not depend on $n$ or $L$. This proves the theorem.
	\end{proof}
	
	\subsection{Proof of Corollary \ref{main-cor:selection(region-wise)}}
	\label{subsec:supp-proof-cor-5-10}
	
	\begin{proof}[Proof of Corollary \ref{main-cor:selection(region-wise)}]
		Throughout the proof, $C$ denotes a positive constant that may change from line to line and is independent of $n$ and $L$. By Theorem~\ref{main-thm:selection(global)},
		\begin{align*}
			\EE\|f_0-\check{f}_{\ell^*}\|_{L_2(P_X)}^2 \le C\left(\inf_{\ell\in[L]}\left(\fA_\ell+\fL_\ell\right) + \frac{\log L}{n}\right).
		\end{align*}
		It remains to upper bound the approximation term $\inf_{f\in\cF_\ell^*}\|f_0-f\|_{L_2(P_X)}^2$ for each fixed $\ell\in[L]$.
		
		Fix $\ell\in[L]$, and let $f\in\cF_\ell^*$ be induced by some gating function $g\in\cG_\ell$. By definition,
		\begin{align*}
			f(x)=\sum_{r=1}^{\MR}g_r(x)f_r^*(x).
		\end{align*}
		Since the oracle region assignment is $e_r$ on $\cX_r$, we may write
		\begin{align*}
			f_0(x)-f(x) = \sum_{r=1}^{\MR}(f_0(x)-f_r^*(x))\one_{\cX_r}(x) + \sum_{r=1}^{\MR}f_r^*(x)\left(\one_{\cX_r}(x)-g_r(x)\right).
		\end{align*}
		Therefore, using $(a+b)^2\le C(a^2+b^2)$,
		\begin{align*}
			\|f_0-f\|_{L_2(P_X)}^2 \le C(T_1+T_2),
		\end{align*}
		where
		\begin{align*}
			T_1 := \left\|\sum_{r=1}^{\MR}(f_0-f_r^*)\one_{\cX_r}\right\|_{L_2(P_X)}^2, \qquad T_2 := \left\|\sum_{r=1}^{\MR}f_r^*(\one_{\cX_r}-g_r)\right\|_{L_2(P_X)}^2.
		\end{align*}
		Since $\{\cX_r\}_{r=1}^{\MR}$ form a partition of $\cX$,
		\begin{align*}
			T_1=\sum_{r=1}^{\MR}\|f_0-f_r^*\|_{\cX_r}^2.
		\end{align*}
		Next, for $x\in\cX_r$, by $|f_j^*(x)| \le A$ in Assumption \ref{main-ass:bounded-experts} and $\sum_{j=1}^{\MR}g_j(x)=1$,
		\begin{align*}
			\left|\sum_{j=1}^{\MR}f_j^*(x)\left(\one_{\cX_j}(x)-g_j(x)\right)\right| \le A(1-g_r(x)) + A\sum_{j \ne r}g_j(x) = 2A(1-g_r(x)).
		\end{align*}
		Hence
		\begin{align*}
			T_2 = \sum_{r=1}^{\MR}\left\|\sum_{j=1}^{\MR}f_j^*(\one_{\cX_j}-g_j)\right\|_{\cX_r}^2 \le C\sum_{r=1}^{\MR}\|1-g_r\|_{\cX_r}^2 \le C\sum_{r=1}^{\MR}\|(e_r-g(x))\one_{\cX_r}(x)\|_{L_2(P_X)}^2.
		\end{align*}
		Combining the two bounds gives
		\begin{align*}
			\|f_0-f\|_{L_2(P_X)}^2 \le C\left\{\sum_{r=1}^{\MR}\|f_0-f_r^*\|_{\cX_r}^2 + \sum_{r=1}^{\MR}\|(e_r-g(x))\one_{\cX_r}(x)\|_{L_2(P_X)}^2\right\}.
		\end{align*}
		Taking the infimum over all $f\in\cF_\ell^*$, equivalently over the associated $g \in \cG_\ell$, yields
		\begin{align*}
			\inf_{f\in\cF_\ell^*}\|f_0-f\|_{L_2(P_X)}^2 \le C\left\{\sum_{r=1}^{\MR}\|f_0-f_r^*\|_{\cX_r}^2 + \inf_{g\in\cG_\ell}\sum_{r=1}^{\MR}\|(e_r-g(x))\one_{\cX_r}(x)\|_{L_2(P_X)}^2\right\}.
		\end{align*}
		Substituting this bound into Theorem \ref{main-thm:selection(global)}, and absorbing constants, gives
		\begin{align*}
			\EE\|f_0-\check{f}_{\ell^*}\|_{L_2(P_X)}^2 \le C\left[\sum_{r=1}^{\MR}\|f_0-f_r^*\|^2_{\cX_r} + \inf_{\ell \in [L]}\left\{\inf_{g\in\cG_\ell}\sum_{r=1}^{\MR}\|(e_r-g(x))\one_{\cX_r}(x)\|_{L_2(P_X)}^2 + \fL_{\ell}\right\} + \frac{\log L}{n}\right].
		\end{align*}
		This proves the corollary.
	\end{proof}
	
	\section{Proof of Section \ref{main-sec:shared}}
	\label{sec:supp-proof-section-6}
	
	\subsection{Proof of Proposition \ref{main-prop:procedure-shared}}
	\label{subsec:supp-proof-prop-6-1}
	
	\begin{proof}[Proof of Proposition \ref{main-prop:procedure-shared}]
		The upper bound follows directly from Assumption~\ref{main-ass:learner-separation} in the main text. The final comparison statement is also immediate once the lower bound for the pure-routed mechanism is established. We therefore focus on this lower bound.
		
		Let $N_j=\sum_{i=1}^n\one_{\{X_i\in\cX_j\}}$ be the number of observations in the $j$-th region. Conditional on $(N_1,N_2)=(m_1,m_2)$, the observations falling in $\cX_j$, after relabelling, are i.i.d. from the conditional distribution $P_X(\cdot\mid X\in\cX_j)$, with the same conditional noise distribution as in the definition of $\cR_j^{\cM}(h;m,\sigma)$. Thus, conditional on the regional counts, each regional learning rule is trained with a fixed sample size.
		
		For a given $f_0\in\cA_{\mathrm{sh}}$, write $f_0=f_S+\one_{\cX_1}f_{R,1}+\one_{\cX_2}f_{R,2}$. Recall that $H_j=f_S+f_{R,j}$ on $\cX_j$. By the definition of the pure-routed training rule and the disjointness of the two regions,
		\begin{align*}
			\left\|\hat{f}_{0,\mathrm{rt}}-f_0\right\|_{L_2(P_X)}^2 = \sum_{j=1}^2p_j\left\|\cM_j(\cD_{m_j,j}^{H_j})-H_j\right\|_j^2.
		\end{align*}
		Taking conditional expectation given $(N_1,N_2)=(m_1,m_2)$, we obtain
		\begin{align*}
			\EE_{f_0,\sigma}\left[\left\|\hat{f}_{0,\mathrm{rt}}-f_0\right\|_{L_2(P_X)}^2 \mid N_1=m_1,N_2=m_2\right]
			&= \sum_{j=1}^2p_j \EE_{f_0,\sigma}\left[\left\|\cM_j(\cD_{m_j,j}^{H_j})-H_j\right\|_j^2 \mid N_1=m_1,N_2=m_2\right] \\
			&= \sum_{j=1}^2p_j\cR_j^{\cM}(H_j;m_j,\sigma) \\
			&\ge \max_{j=1,2}p_j\cR_j^{\cM}(H_j;m_j,\sigma).
		\end{align*}
		Without loss of generality, suppose
		\begin{align*}
			p_1\tilde{\varrho}_1(np_1)=p_1\tilde{\varrho}_1(np_1)\vee p_2\tilde{\varrho}_2(np_2).
		\end{align*}
		Choose a branch function $H_1^\dagger \in \cB_1$ satisfying the branchwise lower bound $\cR_1^{\cM}(H_1^\dagger;m_1,\sigma) \ge c \tilde{\varrho}_1(m)$ for all $m$ in the relevant range $m \asymp np_1$. By our setting, $H_1^\dagger$ has the form $f_S^\dagger|_{\cX_1}+f_{R,1}^\dagger$. take $f_{R,2}^\dagger$ be an arbitrary function in $\cF_2$ and let $f_0^\dagger=f_S^\dagger+\one_{\cX_1}f_{R,1}^\dagger+\one_{\cX_2}f_{R,2}^\dagger$. Therefore, for every fixed $(m_1,m_2)$ with $m_1 \asymp np_1$,
		\begin{align}
			\EE_{f_0^\dagger,\sigma}\left[\left\|\hat{f}_{0,\mathrm{rt}}-f_0^\dagger\right\|_{L_2(P_X)}^2 \mid N_1=m_1,N_2=m_2\right] \ge p_1\cR_1^{\cM}(H_1^\dagger;m_1,\sigma) \ge cp_1\tilde{\varrho}_1(m_1). \label{eq:pure_lower_epsilon}
		\end{align}
		Let $A_n:=\{np_j/2\le N_j\le 2np_j,\ j=1,2\}$, and denote $p_{\min}:=\min\{p_1,p_2\}$. By a Chernoff bound, $\PP(A_n^c) \le 2\exp\{-c'np_{\min}\}$ for some constant $c'>0$. On $A_n$, the regularity condition in Assumption~\ref{main-ass:learner-separation} gives $\tilde{\varrho}_1(N_1)\asymp\tilde{\varrho}_1(np_1)$. Thus, for all sufficiently large $n$ such that $\PP(A_n)$ is bounded below by an absolute positive constant,
		\begin{align*}
			\sup_{f_0\in\cA_{\mathrm{sh}}}\EE_{f_0,\sigma}\left\|\hat{f}_{0,\mathrm{rt}}-f_0\right\|_{L_2(P_X)}^2 \ge \EE_{f_0^\dagger,\sigma}\left\|\hat{f}_{0,\mathrm{rt}}-f_0^\dagger\right\|_{L_2(P_X)}^2 \ge cp_1\EE\left[\tilde{\varrho}_1(N_1)\one_{A_n}\right] \ge cp_1\tilde{\varrho}_1(np_1).
		\end{align*}
		Since $p_1\tilde{\varrho}_1(np_1)$ is the larger of the two branch lower bounds by construction, this proves
		\begin{align*}
			\sup_{f_0 \in \cA_{\mathrm{sh}}}\EE_{f_0,\sigma}\left\|\hat{f}_{0,\mathrm{rt}}-f_0\right\|_{L_2(P_X)}^2 \ge c\{p_1\tilde\varrho_1(np_1) \vee p_2\tilde\varrho_2(np_2)\}.
		\end{align*}
		If the second branch gives the maximum, the same argument is applied with $j=2$. This completes the proof.
	\end{proof}
	
	\subsection{A Sieve Justification for the Shared Upper Bound}
	\label{subsec:supp-sieve-justification-shared-upper-bound}
	
	Recall that Assumption~\ref{main-ass:learner-separation} imposes the joint-estimation upper bound
	\begin{align*}
		\sup_{f_0\in\cA_{\mathrm{sh}}}\EE_{f_0,\sigma}\left\|\hat{f}_{0,\mathrm{sh}}-f_0\right\|^2_{L_2(P_X)} \le C\left[\varrho_{S}(n)+p_1\varrho_{1}(np_1)+p_2\varrho_2(np_2)\right].
	\end{align*}
	This subsection gives a sufficient condition under standard sieve least squares.
	
	Assume that, for the shared class $\cF_S$ and the residual classes $\cF_j$, there exist sieves $\cF_S(k_S)$ and $\cF_j(k_j)$ such that
	\begin{align*}
		A_S(k_S):=\sup_{s\in\cF_S}\inf_{u\in\cF_S(k_S)}\|s-u\|_{L_2(P_X)}^2,
	\end{align*}
	and
	\begin{align*}
		A_j(k_j):=\sup_{r_j\in\cF_j}\inf_{v_j\in\cF_j(k_j)}\|r_j-v_j\|_j^2,\qquad j=1,2.
	\end{align*}
	For the component classes, define the corresponding sieve oracle rates by
	\begin{align*}
		\varrho_S(m):=\inf_{k_S}\left\{A_S(k_S)+\frac{\sigma^2 k_S}{m}\right\},
	\end{align*}
	and
	\begin{align*}
		\varrho_j(m):=\inf_{k_j}\left\{A_j(k_j)+\frac{\sigma^2 k_j}{m}\right\},\qquad j=1,2.
	\end{align*}
	These are the usual approximation--estimation tradeoffs for sieve least-squares estimators: the first term is the approximation error of the sieve, and the second term is the variance cost of estimating $k$ coefficients from $m$ observations. Any additional logarithmic factors incurred by a particular sieve procedure can be absorbed into the definitions of $\varrho_S$ and $\varrho_j$.
	
	For the shared representation, we combine the three sieve learning rules through least squares over the additive sieve class
	\begin{align*}
		\cS_{\mathrm{sh}}(k_S,k_1,k_2) = \left\{s+\one_{\cX_1}r_1+\one_{\cX_2}r_2: s\in\cF_S(k_S),\ r_j\in\cF_j(k_j)\right\}.
	\end{align*}
	We impose the following regularity condition on least squares over this additive sieve.
	
	\begin{assumption}[Compatible additive complexity]\label{ass:supp-compatible-complexity}
		Assume that the least-squares estimator $\hat{h}_{k_S,k_1,k_2}$ over $\cS_{\mathrm{sh}}(k_S,k_1,k_2)$ satisfies the oracle inequality
		\begin{align*}
			\EE_{f_0,\sigma}\|\hat{h}_{k_S,k_1,k_2}-f_0\|_{L_2(P_X)}^2 \le C\left[\inf_{h\in\cS_{\mathrm{sh}}(k_S,k_1,k_2)}\|h-f_0\|_{L_2(P_X)}^2 + \frac{\sigma^2(k_S+k_1+k_2)}{n}\right].
		\end{align*}
	\end{assumption}
	The following proposition shows that this standard sieve oracle inequality is sufficient for the shared upper bound used in Assumption \ref{main-ass:learner-separation}.
	
	\begin{proposition}[Joint shared upper bound]\label{prop:supp-joint-shared-upper}
		Assume the known-partition model
		\begin{align*}
			f_0(x)=f_S(x)+\one_{\cX_1}(x)f_{R,1}(x)+\one_{\cX_2}(x)f_{R,2}(x),
		\end{align*}
		where $f_S\in\cF_S$ and $f_{R,j}\in\cF_j$. Suppose Assumption~\ref{ass:supp-compatible-complexity} holds. Then there exists a joint shared estimator
		\begin{align*}
			\hat{f}_{0,\mathrm{sh}}(x)=\hat{f}_S(x)+\one_{\cX_1}(x)\hat{f}_{R,1}(x)+\one_{\cX_2}(x)\hat{f}_{R,2}(x)
		\end{align*}
		obtained by least squares over a suitable additive sieve, such that
		\begin{align*}
			\sup_{f_0\in\cA_{\mathrm{sh}}}\EE_{f_0,\sigma}\|\hat{f}_{0,\mathrm{sh}}-f_0\|_{L_2(P_X)}^2 \le C\left\{\varrho_S(n)+p_1\varrho_1(np_1)+p_2\varrho_2(np_2)\right\}.
		\end{align*}
	\end{proposition}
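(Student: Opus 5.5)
We plan to prove Proposition~\ref{prop:supp-joint-shared-upper} by applying the sieve oracle inequality of Assumption~\ref{ass:supp-compatible-complexity} with sieve dimensions tuned separately for each component, and then bounding the approximation term componentwise. Fix $f_0 = f_S + \one_{\cX_1} f_{R,1} + \one_{\cX_2} f_{R,2} \in \cA_{\mathrm{sh}}$ and arbitrary dimensions $(k_S,k_1,k_2)$. Choose the approximants
\begin{align*}
u \in \cF_S(k_S), \qquad v_j \in \cF_j(k_j),
\end{align*}
attaining (up to a factor $2$) the infima in $A_S(k_S)$ and $A_j(k_j)$. Then $h = u + \one_{\cX_1} v_1 + \one_{\cX_2} v_2 \in \cS_{\mathrm{sh}}(k_S,k_1,k_2)$.

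By $(a+b)^2 \le 2a^2 + 2b^2$ and disjointness of the regions,
\begin{align*}
\|h-f_0\|_{L_2(P_X)}^2
&\le 2\|u-f_S\|_{L_2(P_X)}^2 + 2\sum_{j=1}^2 \int_{\cX_j} (v_j-f_{R,j})^2 \, dP_X \\
&= 2\|u-f_S\|_{L_2(P_X)}^2 + 2\sum_{j=1}^2 p_j \|v_j-f_{R,j}\|_j^2,
\end{align*}
since $\int_{\cX_j} g^2 \, dP_X = p_j \|g\|_j^2$. This is bounded by $C\{A_S(k_S) + p_1 A_1(k_1) + p_2 A_2(k_2)\}$ uniformly over $\cA_{\mathrm{sh}}$.

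Plugging this into Assumption~\ref{ass:supp-compatible-complexity}, the variance term is
\begin{align*}
\frac{\sigma^2 k_S}{n} + \sum_{j=1}^2 \frac{\sigma^2 k_j}{n}, \qquad \text{where } \frac{\sigma^2 k_j}{n} = p_j \, \frac{\sigma^2 k_j}{n p_j}.
\end{align*}
Grouping component by component, the risk of $\hat h_{k_S,k_1,k_2}$ is at most
\begin{align*}
C\Bigl[\Bigl\{A_S(k_S) + \frac{\sigma^2 k_S}{n}\Bigr\} + \sum_{j=1}^2 p_j \Bigl\{A_j(k_j) + \frac{\sigma^2 k_j}{n p_j}\Bigr\}\Bigr].
\end{align*}
This bound holds uniformly in $f_0$, and the right side no longer depends on $f_0$.

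We then choose $k_S$ to (near-)minimize the first brace, giving $\varrho_S(n)$ up to a factor $2$. Likewise we choose $k_j$ to minimize the $j$th brace, giving $\varrho_j(np_j)$. These choices are deterministic and depend only on $n$, $p_j$ and the classes, not on $f_0$. Setting $\hat f_{0,\mathrm{sh}} = \hat h_{k_S^*,k_1^*,k_2^*}$, we write it as $\hat f_S + \one_{\cX_1}\hat f_{R,1} + \one_{\cX_2}\hat f_{R,2}$ by reading off its additive components; any non-uniqueness of this decomposition is irrelevant to the risk. Taking the supremum over $f_0$ yields the claim.

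The main obstacle is conceptual rather than computational. The shared and regional sieves may overlap, for instance both containing constants, so the decomposition is non-identifiable. Correlation between components could in principle inflate the least-squares variance beyond $k_S + k_1 + k_2$, but this is precisely what Assumption~\ref{ass:supp-compatible-complexity} rules out. The only remaining care is that near-minimizers of the infima exist; if the infima are not attained we use $\epsilon$-approximate minimizers and let $\epsilon \to 0$, or absorb a factor $2$ into the constant.
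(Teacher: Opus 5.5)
Your proposal is correct and follows essentially the same route as the paper's proof. You approximate each component within its sieve, bound the approximation error by $C\{A_S(k_S)+p_1A_1(k_1)+p_2A_2(k_2)\}$, apply the oracle inequality of Assumption~\ref{ass:supp-compatible-complexity}, rewrite $k_j/n=p_j\,k_j/(np_j)$, and fix deterministic near-minimizing dimensions $(k_S^*,k_1^*,k_2^*)$. The differences are cosmetic: you use disjointness of $\cX_1,\cX_2$ with $(a+b)^2\le 2a^2+2b^2$ instead of $(a+b+c)^2\le 3(a^2+b^2+c^2)$, and you state explicitly that the chosen dimensions do not depend on $f_0$, which the paper leaves implicit.
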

	
	\begin{proof}[Proof of Proposition~\ref{prop:supp-joint-shared-upper}]
		Fix $k_S,k_1,k_2$. For any $f_S\in\cF_S$ and $f_{R,j}\in\cF_j$, choose $s_{k_S}\in\cF_S(k_S)$ and $r_{j,k_j}\in\cF_j(k_j)$ such that
		\begin{align*}
			\|s_{k_S}-f_S\|_{L_2(P_X)}^2\le 2A_S(k_S),
		\end{align*}
		and
		\begin{align*}
			\|r_{j,k_j}-f_{R,j}\|_j^2\le 2A_j(k_j).
		\end{align*}
		Define
		\begin{align*}
			h_k(x)=s_{k_S}(x)+\one_{\cX_1}(x)r_{1,k_1}(x)+\one_{\cX_2}(x)r_{2,k_2}(x).
		\end{align*}
		Then $h_k \in \cS_{\mathrm{sh}}(k_S,k_1,k_2)$. Using $(a+b+c)^2\le 3(a^2+b^2+c^2)$, we obtain
		\begin{align*}
			\|h_k-f_0\|_{L_2(P_X)}^2
			&\le C\|s_{k_S}-f_S\|_{L_2(P_X)}^2 + Cp_1\|r_{1,k_1}-f_{R,1}\|_1^2 + Cp_2\|r_{2,k_2}-f_{R,2}\|_2^2 \\
			&\le C\left\{A_S(k_S)+p_1A_1(k_1)+p_2A_2(k_2)\right\}.
		\end{align*}
		By the oracle inequality in Assumption~\ref{ass:supp-compatible-complexity},
		\begin{align*}
			\EE_{f_0,\sigma}\|\hat{h}_{k_S,k_1,k_2}-f_0\|_{L_2(P_X)}^2 \le C\left\{A_S(k_S)+p_1A_1(k_1)+p_2A_2(k_2) + \frac{\sigma^2(k_S+k_1+k_2)}{n}\right\}.
		\end{align*}
		Because $k_j/n=p_jk_j/(np_j)$, thus
		\begin{align*}
			\frac{k_S+k_1+k_2}{n} = \frac{k_S}{n} + p_1\frac{k_1}{np_1} + p_2\frac{k_2}{np_2}.
		\end{align*}
		Taking the infimum over $k_S,k_1,k_2$, we get
		\begin{align*}
			\EE_{f_0,\sigma}\|\hat{f}_{0,\mathrm{sh}}-f_0\|_{L_2(P_X)}^2 \le C\left[\inf_{k_S}\left\{A_S(k_S)+\frac{\sigma^2k_S}{n}\right\} + \sum_{j=1}^2p_j\inf_{k_j}\left\{A_j(k_j)+\frac{\sigma^2k_j}{np_j}\right\}\right].
		\end{align*}
		Choosing sieve dimension $(k_S^*,k_1^*,k_2^*)$ attaining the preceding infima, or within a fixed constant factor if the infima are not attained, and set
		\begin{align*}
			\hat{f}_{0,\mathrm{sh}} := \hat{h}_{k_S^*,k_1^*,k_2^*}.
		\end{align*}
		By the definition of $\varrho_S$ and $\varrho_j$, this gives
		\begin{align*}
			\EE_{f_0,\sigma}\|\hat{f}_{0,\mathrm{sh}}-f_0\|_{L_2(P_X)}^2 \le C\left\{\varrho_S(n)+p_1\varrho_1(np_1)+p_2\varrho_2(np_2)\right\}.
		\end{align*}
		Since the bound is uniform in $f_0 \in \cA_{\mathrm{sh}}$, taking the supremum over $f_0$ yields
		\begin{align*}
			\sup_{f_0\in\cA_{\mathrm{sh}}}\EE_{f_0,\sigma}\|\hat{f}_{0,\mathrm{sh}}-f_0\|_{L_2(P_X)}^2 \le C\left\{\varrho_S(n)+p_1\varrho_1(np_1)+p_2\varrho_2(np_2)\right\}.
		\end{align*}
		This proves the result.
	\end{proof}
	
	\subsection{Proof of Example \ref{main-ex:fourier-x-sin-cos}}
	\label{subsec:supp-proof-ex-5}
	
	\begin{lemma}[Risk lower bound for the Fourier-sieve learner]\label{lem:supp-fourier-sieve-lower}
		Let $X\sim\Unif[0,1]$, and suppose
		\begin{align*}
			Y=h(X)+\xi, \qquad \EE(\xi\mid X)=0, \qquad \operatorname{Var}(\xi\mid X)\ge c_\xi\sigma^2.
		\end{align*}
		Let $\cS_K=\operatorname{span}\{1,\sin(2\pi kt),\cos(2\pi kt):1\le k\le K\}$, and let $\hat{h}_K$ be the Fourier orthogonal-series estimator over $\cS_K$. Then
		\begin{align*}
			\EE\|\hat{h}_K-h\|_{L_2[0,1]}^2 \ge c\left\{\|h-\Pi_Kh\|_{L_2[0,1]}^2 + \frac{\sigma^2K}{m}\right\},
		\end{align*}
		where $\Pi_K$ is the $L_2[0,1]$-projection onto $\cS_K$.
	\end{lemma}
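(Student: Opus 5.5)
The natural route is a bias--variance decomposition of the orthogonal-series estimator, with two separate lower bounds added at the end. Write the data as $(X_i,Y_i)_{i=1}^m$ and let $\{\psi_0\equiv1,\ \psi_{2k-1}=\sqrt2\sin(2\pi k\cdot),\ \psi_{2k}=\sqrt2\cos(2\pi k\cdot)\}_{k\le K}$ be the orthonormal basis of $\cS_K$ in $L_2[0,1]$. The orthogonal-series estimator is $\hat h_K=\sum_{j=0}^{2K}\hat\beta_j\psi_j$ with $\hat\beta_j=m^{-1}\sum_{i=1}^m Y_i\psi_j(X_i)$. Since $X\sim\Unif[0,1]$, $\EE\hat\beta_j=\langle h,\psi_j\rangle=:\beta_j$, so $\EE\hat h_K=\Pi_Kh$.

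Because $\hat h_K-\Pi_Kh\in\cS_K$ is orthogonal to $h-\Pi_Kh$, Pythagoras gives
\begin{align*}
\EE\|\hat h_K-h\|_{L_2[0,1]}^2=\|h-\Pi_Kh\|_{L_2[0,1]}^2+\sum_{j=0}^{2K}\operatorname{Var}(\hat\beta_j).
\end{align*}
This identity is exact, with no cross term, so the bias part is immediate. It therefore suffices to show that $\sum_{j}\operatorname{Var}(\hat\beta_j)\ge c\sigma^2K/m$.

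For the variance, independence gives $\operatorname{Var}(\hat\beta_j)=m^{-1}\operatorname{Var}(Y\psi_j(X))$. By the law of total variance,
\begin{align*}
\operatorname{Var}(Y\psi_j(X))\ge \EE\{\operatorname{Var}(Y\psi_j(X)\mid X)\}=\EE\{\psi_j(X)^2\operatorname{Var}(\xi\mid X)\}\ge c_\xi\sigma^2\EE\psi_j(X)^2=c_\xi\sigma^2 .
\end{align*}
Summing over the $2K+1$ basis functions yields $\sum_j\operatorname{Var}(\hat\beta_j)\ge c_\xi\sigma^2(2K+1)/m\ge c_\xi\sigma^2K/m$. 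Combining with the decomposition gives the claim with $c=\min\{1,c_\xi\}$.

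The main obstacle is only to fix the estimator definition precisely. If "orthogonal-series estimator" meant least squares on the empirical design instead of the plug-in coefficients, the design would be random. In that case I would instead condition on $X_{1:m}$ and use that the conditional covariance of the OLS coefficient vector is at least $c_\xi\sigma^2(\Psi^\top\Psi)^{-1}$. The sum of the reciprocal eigenvalues of the empirical Gram matrix $\Psi^\top\Psi/m$, whose mean is $I$, is bounded below by Jensen/convexity, giving $\operatorname{tr}\ge(2K+1)^2/\operatorname{tr}(\Psi^\top\Psi/m)$ with $\EE\operatorname{tr}=2K+1$. The bias in that case is handled by noting that the conditional mean is the empirical projection, whose error dominates $\|h-\Pi_Kh\|^2$ in expectation up to constants. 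I will take the plug-in version stated above.
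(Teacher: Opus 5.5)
Your proposal is correct and follows the paper's proof essentially step for step. Both use the Pythagorean split of $\|\hat h_K-h\|^2$ into $\|\hat h_K-\Pi_Kh\|^2+\|h-\Pi_Kh\|^2$, the unbiasedness of the plug-in coefficients $m^{-1}\sum_i Y_i\psi_j(X_i)$, and the law of total variance giving $\operatorname{Var}(Y\psi_j(X))\ge c_\xi\sigma^2$ for each of the $2K+1$ orthonormal basis functions, then sum over $j$.
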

	
	\begin{proof}[Proof of Lemma~\ref{lem:supp-fourier-sieve-lower}]
		Write $\hat{h}_K-h=(\hat{h}_K-\Pi_Kh)+(\Pi_Kh-h)$. Since $\hat{h}_K-\Pi_Kh\in\cS_K$ and $\Pi_Kh-h\perp\cS_K$, the Pythagorean identity gives
		\begin{align*}
			\|\hat{h}_K-h\|_{L_2[0,1]}^2 = \|\hat{h}_K-\Pi_Kh\|_{L_2[0,1]}^2 + \|h-\Pi_Kh\|_{L_2[0,1]}^2.
		\end{align*}
		It remains to lower-bound the first term. Let $\{\phi_\ell\}_{\ell=1}^{d_K}$ be an orthonormal basis of $\cS_K$, where $d_K=2K+1$. Then
		\begin{align*}
			\Pi_K h(t)=\sum_{\ell=1}^{d_K}\theta_\ell\phi_\ell(t), \qquad \theta_\ell=\EE\{Y\phi_\ell(X)\},
		\end{align*}
		where the identity follows from $\EE(\xi \mid X) = 0$. The orthogonal-series estimator is
		\begin{align*}
			\hat{h}_K(t)=\sum_{\ell=1}^{d_K}\widehat{\theta}_\ell\phi_\ell(t), \qquad \widehat{\theta}_\ell=\frac{1}{m}\sum_{i=1}^mY_i\phi_\ell(X_i).
		\end{align*}
		Therefore,
		\begin{align*}
			\|\hat{h}_K-\Pi_Kh\|^2 = \sum_{\ell=1}^{d_K}(\widehat{\theta}_{\ell}-\theta_{\ell})^2.
		\end{align*}
		For each $\ell$,
		\begin{align*}
			\EE(\widehat{\theta}_\ell-\theta_\ell)^2 = \frac{1}{m}\operatorname{Var}\{Y\phi_\ell(X)\} \ge \frac{1}{m}\EE\{\operatorname{Var}(Y\phi_\ell(X)\mid X)\}.
		\end{align*}
		Since $Y=h(X)+\xi$ and $\operatorname{Var}(\xi\mid X)\ge c_\xi\sigma^2$,
		\begin{align*}
			\operatorname{Var}(Y\phi_\ell(X)\mid X) = \phi_\ell(X)^2\operatorname{Var}(\xi\mid X) \ge c_\xi\sigma^2\phi_\ell(X)^2.
		\end{align*}
		Thus, using $X \sim \Unif[0,1]$ and the orthonormality of $\phi_\ell$,
		\begin{align*}
			\EE(\widehat{\theta}_\ell-\theta_\ell)^2 \ge \frac{c_\xi\sigma^2}{m}\EE\phi_\ell(X)^2 = \frac{c_\xi\sigma^2}{m}.
		\end{align*}
		Summing over $\ell=1,\ldots,d_K$ gives
		\begin{align*}
			\EE\|\hat{h}_K-\Pi_Kh\|_{L_2[0,1]}^2 = \sum_{\ell=1}^{d_K}\EE(\widehat{\theta}_\ell-\theta_\ell)^2 \ge c\frac{\sigma^2K}{m},
		\end{align*}
		Because $d_K = 2K + 1$. Combining this variance lower bound with the approximation term proves the claim.
	\end{proof}
	
	\begin{proof}[Proof of Example \ref{main-ex:fourier-x-sin-cos}]
		We first verify the shared upper bound. Consider the finite-dimensional additive class
		\begin{align*}
			\cA_{\mathrm{sh},K_0} = \left\{\alpha x+\one_{\cX_1}(x)r_1(t_1(x))+\one_{\cX_2}(x)r_2(t_2(x)): \alpha\in\RR,\ r_j\in\cS_{K_0}\right\}.
		\end{align*}
		By construction, the true regression function belongs to $\cA_{\mathrm{sh},K_0}$. The class $\cA_{\mathrm{sh},K_0}$ is finite-dimensional with fixed dimension $d_{\mathrm{sh}}$. Moreover, the basis functions defining $\cA_{\mathrm{sh},K_0}$ are linearly independent under the uniform design. Indeed, if
		\begin{align*}
			\alpha x+\one_{\cX_1}(x)r_1(t_1(x))+\one_{\cX_2}(x)r_2(t_2(x))=0 \quad \text{a.e.},
		\end{align*}
		with $r_j\in\cS_{K_0}$, then on each region $r_j$ must coincide with an affine function of the local coordinate $t_j$. A finite trigonometric polynomial on $[0,1]$ is periodic at the endpoints, and therefore cannot coincide with a nonconstant affine function. Hence $\alpha=0$, and then $r_1=r_2=0$. Thus the population Gram matrix is positive definite. Since its dimension is fixed, its smallest eigenvalue is bounded away from zero by a constant. Consequently, ordinary least squares over $\cA_{\mathrm{sh},K_0}$ satisfies
		\begin{align}
			\EE\|\hat{f}_{0,\mathrm{sh}}-f_0\|_{L_2(P_X)}^2 \lesssim \frac{\sigma^2 d_{\mathrm{sh}}}{n} \lesssim \frac{\sigma^2}{n}. \label{eq:6.5.1}
		\end{align}
		We next derive the Fourier-sieve risk for the pure-routed branch. It is enough to work on one region in the local coordinate $t\in[0,1]$. Since the noise is Gaussian, the variance condition in Lemma~\ref{lem:supp-fourier-sieve-lower} holds with $c_{\xi}=1$. Thus, by Lemma~\ref{lem:supp-fourier-sieve-lower}, it remains to control the approximation error term.
		
		In the local coordinate, the branch target has the form
		\begin{align*}
			H_j(t) = \gamma\left(c_j+\frac{t}{2}\right) + \sum_{k=1}^{K_0}\left(a_{j,k}\sin(2k\pi t)+b_{j,k}\cos(2k\pi t)\right),
		\end{align*}
		where $c_1=0$ and $c_2=1/2$. For $K\ge K_0$, the constant term and the first sine-cosine component belong to $\cS_K$. Hence the approximation error is determined by the ramp component $t$.
		
		The Fourier expansion of $t$ on $[0,1]$ is
		\begin{align*}
			t=\frac{1}{2}-\frac{1}{\pi}\sum_{k=1}^{\infty}\frac{\sin(2\pi kt)}{k} \quad \text{in } L_2[0,1].
		\end{align*}
		Using the orthonormal sine basis $\{\sqrt{2}\sin(2\pi kt)\}_{k\ge 1}$, the corresponding Fourier coefficient is
		\begin{align*}
			\left\langle t,\sqrt{2}\sin(2\pi kt)\right\rangle = -\frac{1}{\sqrt{2}\pi k}.
		\end{align*}
		Therefore,
		\begin{align*}
			\inf_{g\in\cS_K}\|t-g\|_{L_2[0,1]}^2 = \frac{1}{2\pi^2}\sum_{k>K}\frac{1}{k^2} \asymp K^{-1}.
		\end{align*}
		Since $|\gamma|\ge\gamma_0$, the ramp contribution in $H_j$ yields, for all $K \ge K_0$,
		\begin{align*}
			\inf_{g\in\cS_K}\|H_j-g\|_{L_2[0,1]}^2 \ge c\gamma_0^2K^{-1}.
		\end{align*}
		For the finitely many choices $K < K_0$, the same lower bound is harmless after changing the constant: the approximation error is bounded below by a positive constant depending only on $K_0$ and $\gamma_0$, which dominates $K^{-1}$ over this finite range. Hence, up to a universal constant depending on $K_0$,
		\begin{align*}
			\inf_{g\in\cS_K}\|H_j-g\|_{L_2[0,1]}^2 \ge c\gamma_0^2K^{-1}, \qquad K \ge 1.
		\end{align*}
		Thus, by Lemma~\ref{lem:supp-fourier-sieve-lower}, the oracle risk of the Fourier-orthogonal-series learner on $H_j$ is bounded below by
		\begin{align*}
			\inf_{K\ge 1}\left\{c\gamma_0^2K^{-1}+c'\frac{\sigma^2K}{m}\right\}.
		\end{align*}
		The minimum is achieved at $K \asymp \gamma_0\sqrt{m}/\sigma$, and the corresponding value is of order $\gamma_0\sigma/\sqrt{m}$. Therefore,
		\begin{align*}
			\cR_j^{\cM}(H_j;m,\sigma) \gtrsim \frac{\gamma_0\sigma}{\sqrt{m}}.
		\end{align*}
		Finally, the pure-routed estimator has disjoint regional loss decomposition:
		\begin{align*}
			\|\hat{f}_{0,\mathrm{rt}}-f_0\|_{L_2(P_X)}^2 = p_1\|\hat{H}_1-H_1\|_1^2+p_2\|\hat{H}_2-H_2\|_2^2.
		\end{align*}
		Let $N_j=\sum_{i=1}^n\one\{X_i\in\cX_j\}$ be the number of training observations in region $\cX_j$. Taking expectations first conditional on $(N_1,N_2)=(m_1,m_2)$ and using the same concentration-event argument as in the proof of Proposition \ref{main-prop:procedure-shared}, with $p_1=p_2=1/2$, gives
		\begin{align*}
			\EE\|\hat{f}_{0,\mathrm{rt}}-f_0\|_{L_2(P_X)}^2 \gtrsim \sum_{j=1}^2p_j\frac{\gamma_0\sigma}{\sqrt{np_j}} \asymp \frac{\gamma_0\sigma}{\sqrt{n}}.
		\end{align*}
		Combining this lower bound with the shared upper bound \eqref{eq:6.5.1} proves the stated procedure-specific rate gap.
	\end{proof}
	
	\section{Additional Materials}
	\label{sec:supp-additional-materials}
	
	\subsection{Order of Empirical Risk Minimization under Sub-Gaussian Noise}
	\label{subsec:supp-erm-subgaussian-noise}
	
	We now examine what the traditional empirical risk minimization (ERM) theory would yield for the gating-estimation problem considered in the main text. The purpose of this discussion is not to advocate ERM computationally, since the corresponding optimization problem is generally non-convex. Rather, it is to show that even if a global empirical minimizer were available, the standard uniform-concentration argument leads to a square-root complexity term.
	
	Recall that in Algorithm \ref{main-alg:general}, we use the first sample $Z^{(1)}$ to burn-in, the second sample $Z^{(2)}$ to calibrate the scale, and the last sample to do online aggregation and estimation for $\theta$. For $t=n_1+n_2,\ldots,n-1$, the one-step-ahead dense-softmax predictor is denoted as
	\begin{align}
		\check f^{\,t}(x;\theta)
		=
		\sum_{\ell=1}^{\MS}\hat f^{\,t}_{S,\ell}(x)
		+
		\sum_{m=1}^{\MR}g_m^{\soft}(x;\theta)\hat f^{\,t}_{R,m}(x),
		\qquad \theta\in\Theta_{\MR}. \label{eq:online-check-f}
	\end{align}
	Consequently, for $\MR\ge 2$, define the fixed-expert dense softmax class
	\begin{align*}
		\cF_{\MR}^{\mathrm{on}}:=\left\{\{\check f^{\,t}(\cdot;\theta)\}_{t=n_1+n_2}^{n-1}:\theta\in\Theta_{\MR}\right\}.
	\end{align*}
	For comparison with the oracle terms in Theorem \ref{main-thm:softmax} of the main text, also define the corresponding limit-expert predictor
	\begin{align*}
		f^*(x;\theta) = \sum_{\ell=1}^{\MS}f_{S,\ell}^*(x) + \sum_{m=1}^{\MR}g_m^{\soft}(x;\theta)f_{R,m}^*(x).
	\end{align*}
	
	The direct online ERM benchmark minimizes the one-step-ahead squared loss over the aggregation block:
	\begin{align}
		R_{n_3}(\theta)
		:=
		\frac{1}{n_3}\sum_{t=n_1+n_2}^{n-1}
		\{Y_{t+1}-\check f^{\,t}(X_{t+1};\theta)\}^2. \label{eq:online-erm-empirical-risk}
	\end{align}
	Let
	\begin{align*}
		\hat\theta_{\rm ERM}
		\in
		\argmin_{\theta\in\Theta_{\MR}}R_{n_3}(\theta).
	\end{align*}
	This is a theoretical oracle benchmark, because solving this requires global optimization of a non-convex softmax objective. The resulting online sequence and its time-averaged version are
	\begin{align*}
		\check f^{\,t}_{\rm ERM}(x)=\check f^{\,t}(x;\hat\theta_{\rm ERM}), \qquad
		\bar f_{\rm ERM}(x)=\frac{1}{n_3}\sum_{t=n_1+n_2}^{n-1}\check f^{\,t}(x;\hat\theta_{\rm ERM}).
	\end{align*}
	Equivalently,
	\begin{align*}
		\bar f_{\rm ERM}(x)
		=
		\sum_{\ell=1}^{\MS}\bar f^{n_3}_{S,\ell}(x)
		+
		\sum_{m=1}^{\MR}g_m^{\soft}(x;\hat\theta_{\rm ERM})\bar f^{n_3}_{R,m}(x),
	\end{align*}
	where
	\begin{align*}
		\bar f^{n_3}_{S,\ell}=\frac{1}{n_3}\sum_{t=n_1+n_2}^{n-1}\hat f^{\,t}_{S,\ell}, \qquad
		\bar f^{n_3}_{R,m}=\frac{1}{n_3}\sum_{t=n_1+n_2}^{n-1}\hat f^{\,t}_{R,m}.
	\end{align*}
	Moreover, define
	\begin{align}
		\mathcal R_{n_3}(\theta)
		:=
		\frac{1}{n_3}\sum_{t=n_1+n_2}^{n-1}
		\left\|\check f^{\,t}(\cdot;\theta)-f_0\right\|_{L_2(P_X)}^2. \label{eq:online-erm-prequential-risk}
	\end{align}
	This is the random prequential prediction risk of the fixed gate $\theta$ along the online expert trajectory. The next proposition gives the ERM benchmark.
	
	\begin{proposition}[ERM under sub-Gaussian noise]\label{prop:supp-erm-subgaussian}
		Suppose that $Y_i=f_0(X_i)+\varepsilon_i$ and $\EE(\varepsilon_i\mid X_i)=0$, where $\varepsilon_i$ is conditionally sub-Gaussian with parameter $\sigma_{\varepsilon}$, and the observations are independent. Assume Assumptions~\ref{main-ass:compact-theta} and \ref{main-ass:bounded-experts} in the main text hold. Then there exist constants $C,c>0$, depending only on the constants in Assumptions~\ref{main-ass:compact-theta} and \ref{main-ass:bounded-experts}, $\sigma_{\varepsilon}$, and the fixed number of shared experts $\MS$, such that, conditional on the pretrained experts, with probability at least $1-n_3^{-c}$,
		\begin{align}
			\cR_{n_3}(\hat\theta_{\rm ERM})
			-
			\inf_{\theta\in\Theta_{\MR}}\cR_{n_3}(\theta)
			\le
			C\sqrt{\frac{\MR d \log(\MR d n_3)}{n_3}}. \label{eq:online-erm-excess}
		\end{align}
		Moreover, if Assumption~\ref{main-ass:evolving-experts} in the main text also holds, then
		\begin{align}
			\EE\cR_{n_3}(\hat\theta_{\rm ERM})
			\le
			C\left\{
			\fA_{\soft}(\MS,\MR)
			+\MR\bar a_{n_3}^2
			+\sqrt{\frac{\MR d \log(\MR d n_3)}{n_3}}
			\right\}, \label{eq:online-erm-pop-oracle}
		\end{align}
		where we recall that
		\begin{align*}
			\fA_{\soft}(\MS,\MR)
			:=
			\inf_{\theta\in\Theta_{\MR}}
			\left\|f_0-f^*(\cdot;\theta)\right\|_{L_2(P_X)}^2,\qquad
			\bar{a}_{n_3}^2=\frac{1}{n_3}\sum_{t=n_1+n_2}^{n-1}a_t^2.
		\end{align*}
		Moreover, the same bound holds for the time-averaged output $\bar{f}_{\rm ERM}$ as
		\begin{align}
			\EE\|\bar f_{\rm ERM}-f_0\|_{L_2(P_X)}^2
			\le
			C\left\{
			\fA_{\soft}(\MS,\MR)
			+\MR\bar a_{n_3}^2
			+\sqrt{\frac{\MR d \log(\MR d n_3)}{n_3}}
			\right\}. \label{eq:online-erm-avg-output}
		\end{align}
	\end{proposition}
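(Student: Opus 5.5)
The plan is a uniform-deviation argument for the prequential empirical process, conditional on the burn-in and calibration blocks, followed by comparator arguments. Write $\ell_t(\theta):=\{Y_{t+1}-\check f^{\,t}(X_{t+1};\theta)\}^2$ and
\begin{align*}
\ell_t(\theta)-\varepsilon_{t+1}^2
=\Delta_t(X_{t+1};\theta)^2-2\varepsilon_{t+1}\Delta_t(X_{t+1};\theta),
\qquad \Delta_t:=\check f^{\,t}-f_0 .
\end{align*}
Since $\check f^{\,t}(\cdot;\theta)$ is $\sigma(Z_1,\dots,Z_t,\mathcal U)$-measurable, $\EE[\ell_t(\theta)-\varepsilon_{t+1}^2\mid\text{past}]=\|\Delta_t(\cdot;\theta)\|_{L_2(P_X)}^2$. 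Hence
\begin{align*}
R_{n_3}(\theta)-\frac1{n_3}\sum_t\varepsilon_{t+1}^2=\cR_{n_3}(\theta)+M_{n_3}(\theta),
\end{align*}
where $M_{n_3}(\theta)$ is an average of martingale differences. The standard ERM decomposition then gives
\begin{align*}
\cR_{n_3}(\hat\theta_{\rm ERM})-\cR_{n_3}(\theta^\star)\le 2\sup_{\theta}|M_{n_3}(\theta)|
\end{align*}
for any comparator $\theta^\star$ (taken nearly optimal). So \eqref{eq:online-erm-excess} reduces to a uniform bound $\sup_\theta|M_{n_3}(\theta)|\lesssim\sqrt{\MR d\log(\MR dn_3)/n_3}$ with high probability.

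To get that bound, discretize $\Theta_{\MR}$ with a Euclidean $\eta$-net, as in the proof of Theorem~\ref{main-thm:softmax}. Under Assumption~\ref{main-ass:compact-theta}, $\log S_\eta\le C\MR d\{\log(\MR d)+\log(1/\eta)\}$. For a fixed net point, $M_{n_3}(\theta)$ has two parts. The first is the centered $\Delta_t^2$ term, whose increments are bounded by $C(\MS+1)^2A^2$ by Assumption~\ref{main-ass:bounded-experts}; Azuma--Hoeffding applies to it. The second is $\varepsilon_{t+1}\Delta_t$, a conditionally sub-Gaussian martingale with bounded multipliers; its exponential supermartingale gives a sub-Gaussian tail. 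A union bound over the net at level $\log S_\eta+c\log n_3$ yields deviation $C\sqrt{(\log S_\eta+\log n_3)/n_3}$ with probability $1-n_3^{-c}$.

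To pass from the net to all of $\Theta_{\MR}$, use the softmax Lipschitz bound $\|g^{\soft}(z)-g^{\soft}(z')\|_1\le\|z-z'\|_\infty$ established in the proof of Theorem~\ref{main-thm:softmax}. It gives $|\check f^{\,t}(x;\theta)-\check f^{\,t}(x;\theta')|\le CA\sqrt d\,\|\theta-\theta'\|_2$ uniformly in $x$ and $t$. The discretization error is therefore at most $C\sqrt d\,\eta\,(1+n_3^{-1}\sum_t|\varepsilon_{t+1}|)$, and the average of $|\varepsilon_{t+1}|$ is $O(1)$ with high probability by sub-Gaussianity. Choosing $\eta=n_3^{-1}$ makes this negligible and gives $\log S_\eta\lesssim \MR d\log(\MR dn_3)$.

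For \eqref{eq:online-erm-pop-oracle}, take expectations. On the exceptional event, use the crude bound $\cR_{n_3}\le C$; this contributes $O(n_3^{-c})$, which is absorbed once $c\ge1/2$. Then bound $\EE\inf_\theta\cR_{n_3}(\theta)\le\EE\cR_{n_3}(\theta^\circ)$, where $\theta^\circ$ attains $\fA_{\soft}$. Lemma~\ref{lem:supp-evolving-population-comparator}, averaged over the aggregation block, gives $\EE\cR_{n_3}(\theta^\circ)\le 2\fA_{\soft}+C\MR\bar a_{n_3}^2$, with fixed $\MS$ absorbed. Finally, \eqref{eq:online-erm-avg-output} follows from Jensen's inequality, since $\bar f_{\rm ERM}-f_0=n_3^{-1}\sum_t\{\check f^{\,t}(\cdot;\hat\theta_{\rm ERM})-f_0\}$ and therefore $\|\bar f_{\rm ERM}-f_0\|^2\le\cR_{n_3}(\hat\theta_{\rm ERM})$.

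The main obstacle is the uniform martingale deviation step. Because $\hat\theta_{\rm ERM}$ depends on the whole block, the bound must hold simultaneously over $\theta$ for a process whose predictable functions evolve with $t$. Symmetrization for i.i.d.\ samples is not directly available, so I would rely on the fixed-$\theta$ martingale concentration combined with the union bound and the Lipschitz chaining described above. The remaining care is in handling the unbounded sub-Gaussian noise in the discretization step.
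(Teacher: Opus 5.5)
Your proposal is correct and follows essentially the same route as the paper: fixed-$\theta$ martingale concentration, a union bound over an $\eta$-net of $\Theta_{\MR}$, a Lipschitz transfer to all $\theta$ with the noise average $n_3^{-1}\sum_t|\varepsilon_{t+1}|$ controlled with high probability, the basic ERM inequality, the evolving-to-population comparator lemma for the oracle term, and Jensen's inequality for $\bar f_{\rm ERM}$. The differences are minor. Subtracting the $\theta$-free term $\varepsilon_{t+1}^2$ leaves only bounded and linear-in-noise increments, so you need only Azuma--Hoeffding and a sub-Gaussian supermartingale rather than the paper's sub-exponential concentration lemma for squared losses. Your Lipschitz constant $C\sqrt d$ is also sharper than the paper's $C\MR^{3/2}\sqrt d$, although this only enters inside the logarithm.
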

	
	\begin{proof}[Proof of Proposition~\ref{prop:supp-erm-subgaussian}]
		Let $\mathcal F_t$ be the natural online filtration generated by $Z_1,\ldots,Z_t$ and by the algorithmic randomness used up to time $t$. By the online update order, $\check f^{\,t}(\cdot;\theta)$ is $\mathcal F_t$-measurable for each $\theta$. Moreover, by Assumption~\ref{main-ass:bounded-experts} and the identity $\sum_{m=1}^{\MR}g_m^{\soft}(x;\theta)=1$, there exists a constant $B_0 > 0$, depending on the uniform bounds and on the fixed number $\MS$, such that
		\begin{align}
			|f_0(x)| \vee |\check{f}^t(x;\theta)| \le B_0, \qquad x\in\cX, \quad \theta\in\Theta_{\MR}\quad t\geq n_1+n_2. \label{eqn:uniform_bound}
		\end{align}
		For fixed $\theta$, define
		\begin{align*}
			V_{t+1}(\theta)
			:=
			Y_{t+1}-\check f^{\,t}(X_{t+1};\theta)
			=
			\varepsilon_{t+1}+f_0(X_{t+1})-\check f^{\,t}(X_{t+1};\theta).
		\end{align*}
		By \eqref{eqn:uniform_bound}, the conditional mean component $f_0(X_{t+1})-\check f^{\,t}(X_{t+1};\theta)$ is uniformly bounded. Therefore, conditional on $\mathcal F_t$, the centered variable
		\begin{align*}
			V_{t+1}(\theta)-\EE[V_{t+1}(\theta)\mid\mathcal F_t]
		\end{align*}
		is sub-Gaussian with a parameter depending only on $A$, $\MS$, and $\sigma_\varepsilon$. By Lemma~\ref{lem:supp-concentration}, for every fixed $\theta$ and every $0<u\le1$,
		\begin{align}
			\PP\left(\left|R_{n_3}(\theta)-\bar R_{n_3}(\theta)\right|\ge u\right) \le 2\exp\{-c_1n_3u^2\}, \label{eqn:fixed-theta-loss-conc}
		\end{align}
		where
		\begin{align*}
			\bar R_{n_3}(\theta)
			:=
			\frac{1}{n_3}\sum_{t=n_1+n_2}^{n-1}
			\EE\left[\{Y_{t+1}-\check f^{\,t}(X_{t+1};\theta)\}^2\mid\mathcal F_t\right].
		\end{align*}
		Since $\EE(\varepsilon_{t+1}\mid X_{t+1})=0$ and $\varepsilon_{t+1}$ is independent of $\mathcal F_t$, we have
		\begin{align}
			\bar R_{n_3}(\theta) = \cR_{n_3}(\theta)+\EE\varepsilon_1^2. \label{eqn:noise-cancel-online-erm}
		\end{align}
		
		We next pass from a fixed $\theta$ to the full parameter space. Let $\Theta(\eta)=\{\theta^{(1)},\ldots,\theta^{(S_\eta)}\}$ be an $\eta$-net of $\Theta_{\MR}$. By Assumption~\ref{main-ass:compact-theta}, the parameter space $\Theta_{\MR}$ is contained in a Euclidean box of dimension at most $\MR(d+1)$ and side length of order $\log \MR$. Hence a standard grid construction gives, for $0 < \eta \le 1$,
		\begin{align*}
			\log S_\eta \le C\MR d \log\left(\frac{C\MR d\log \MR}{\eta}\right).
		\end{align*}
		Therefore, by the union bound applying to \eqref{eqn:fixed-theta-loss-conc}, we have
		\begin{align}
			\PP\left(\max_{1\le s\le S_\eta}|R_{n_3}(\theta^{(s)})-\bar R_{n_3}(\theta^{(s)})|\ge u\right)
			\le
			2\exp\left\{C\MR d\log\left(\frac{C\MR d\log\MR}{\eta}\right)-c_1n_3u^2\right\}. \label{eqn:net-union-bound-online-erm}
		\end{align}
		
		It remains to control the discretization error.
		Using the Lipschitz property of the softmax map and \eqref{eqn:uniform_bound}, we have
		\begin{align*}
			|\check f^{\,t}(x;\theta)-\check f^{\,t}(x;\theta')|
			\le
			L_{\MR}\|\theta-\theta'\|_2,
			\qquad
			L_{\MR}\le C\MR^{3/2}\sqrt d,
		\end{align*}
		for all $t$, $x$, and $\theta,\theta'\in\Theta_{\MR}$. For each $\theta\in\Theta_{\MR}$, choose $\pi(\theta)\in\Theta(\eta)$ such that $\|\theta-\pi(\theta)\|_2 \le \eta$. Then by boundness, \eqref{eqn:noise-cancel-online-erm} and the fact that $a^2-b^2\leq (a-b)(a+b)$, we have
		\begin{align*}
			|\bar R_{n_3}(\theta)-\bar R_{n_3}(\pi(\theta))|
			\le
			CL_{\MR}\eta.
		\end{align*}
		Similarly,
		\begin{align*}
			|R_{n_3}(\theta)-R_{n_3}(\pi(\theta))|
			\le
			L_{\MR}\eta\cdot
			\frac{1}{n_3}\sum_{t=n_1+n_2}^{n-1}
			\{4B_0+2|\varepsilon_{t+1}|\}.
		\end{align*}
		Since $\varepsilon_i$ is sub-Gaussian, $|\varepsilon_i|$ is sub-exponential. Hence Bernstein's inequality implies that there exist constants $c_\varepsilon,C_\varepsilon>0$ such that
		\begin{align*}
			\frac{1}{n_3}\sum_{t=n_1+n_2}^{n-1}|\varepsilon_{t+1}| \le C_\varepsilon
		\end{align*}
		with probability at least $1-\exp\{-c_\varepsilon n_3\}$. On this event,
		\begin{align*}
			\sup_{\theta\in\Theta_{\MR}}|R_{n_3}(\theta)-\bar{R}_{n_3}(\theta)| \le \max_{1\le s\le S_\eta}|R_{n_3}(\theta^{(s)})-\bar{R}_{n_3}(\theta^{(s)})| + CL_{\MR}\eta.
		\end{align*}
		Set
		\begin{align*}
			u = K_0\sqrt{\frac{\MR d \log(\MR d n_3)}{n_3}},\qquad \eta = \min\left\{1,\frac{u}{L_{\MR}}\right\},
		\end{align*}
		into \eqref{eqn:net-union-bound-online-erm}, where $K_0>0$ is sufficiently large. If $u>1$, the desired bound is trivial after enlarging the constant, since both $f_0$ and $\check{f}^{t}(\cdot;\theta)$ are uniformly bounded. Thus assume $u\le 1$. Since $L_{\MR}$ is polynomial in $\MR$ up to logarithmic factors,
		\begin{align*}
			\log\left(\frac{C\MR d\log \MR}{\eta}\right) \lesssim \log(\MR d n_3).
		\end{align*}
		Choosing $K_0$ sufficiently large and combining the preceding bounds gives
		\begin{align*}
			\sup_{\theta\in\Theta_{\MR}}|R_{n_3}(\theta)-\bar{R}_{n_3}(\theta)| \le Cu
		\end{align*}
		with probability at least $1-n_3^{-c}$.
		
		Finally, let $\theta_{n_3}^*\in\argmin_{\theta\in\Theta_{\MR}}\bar R_{n_3}(\theta)$. By empirical optimality, $R_{n_3}(\hat{\theta}_{\mathrm{ERM}}) \le R_{n_3}(\theta_{n_3}^*)$. Therefore,
		\begin{align*}
			\bar R_{n_3}(\hat\theta_{\rm ERM})-\bar R_{n_3}(\theta_{n_3}^*)
			\le
			2\sup_{\theta\in\Theta_{\MR}}|R_{n_3}(\theta)-\bar R_{n_3}(\theta)|
			\le
			C\sqrt{\frac{\MR d \log(\MR d n_3)}{n_3}}.
		\end{align*}
		By \eqref{eqn:noise-cancel-online-erm},
		this is exactly \eqref{eq:online-erm-empirical-risk}.
		
		We now translate the random evolving-expert oracle into the population-expert oracle. Fix any $\theta\in\Theta_{\MR}$. By Assumption~\ref{main-ass:evolving-experts} and the same triangle-inequality calculation as in the proof of Theorem \ref{main-thm:softmax},
		\begin{align*}
			\EE\sup_{\theta\in\Theta_{\MR}}\left\|\check{f}^t(\cdot;\theta)-f^*(\cdot;\theta)\right\|_{L_2(P_X)}^2 \le C\MR a_t^2.
		\end{align*}
		Averaging over $t$ gives
		\begin{align*}
			\EE\sup_{\theta\in\Theta_{\MR}}\frac{1}{n_3}\sum_{t=n_1+n_2}^{n-1}\|\check f^{\,t}(\cdot;\theta)-f^*(\cdot;\theta)\|_{L_2(P_X)}^2 \le C\MR\bar a_{n_3}^2.
		\end{align*}
		Consequently,
		\begin{align*}
			\EE\inf_{\theta\in\Theta_{\MR}}\cR_{n_3}(\theta) \le C\left\{\fA_{\soft}(\MS,\MR)+\MR\bar a_{n_3}^2\right\}.
		\end{align*}
		Combining this bound with \eqref{eq:online-erm-excess}, and integrating the high-probability statement over the failure event using the uniform boundedness of the risks, yields \eqref{eq:online-erm-pop-oracle}.
		
		Finally, by convexity of the squared norm,
		\begin{align*}
			\left\|\bar f_{\rm ERM}-f_0\right\|_{L_2(P_X)}^2
			&=
			\left\|\frac{1}{n_3}\sum_{t=n_1+n_2}^{n-1}(\check f^{\,t}(\cdot;\hat\theta_{\rm ERM})-f_0)\right\|_{L_2(P_X)}^2 \\
			&\le
			\frac{1}{n_3}\sum_{t=n_1+n_2}^{n-1}
			\left\|\check f^{\,t}(\cdot;\hat\theta_{\rm ERM})-f_0\right\|_{L_2(P_X)}^2.
		\end{align*}
		Taking expectation and applying \eqref{eq:online-erm-pop-oracle} proves \eqref{eq:online-erm-avg-output}.
	\end{proof}
	
	\begin{lemma}[Concentration for predictable squared sub-Gaussian losses]\label{lem:supp-concentration}
		Let $\{\mathcal G_i\}_{i=0}^{T}$ be a filtration. Suppose $V_i$, $i=1,\ldots,T$, is adapted to $\mathcal G_i$ and, conditional on $\mathcal G_{i-1}$, the centered variable $V_i-\EE(V_i\mid\mathcal G_{i-1})$ is sub-Gaussian with parameter $\sigma_V$, while $|\EE(V_i\mid\mathcal G_{i-1})|\le m_0$ almost surely. Then there exists $c>0$, depending only on $\sigma_V$ and $m_0$, such that for every $u>0$,
		\begin{align*}
			\PP\left(\left|\frac{1}{T}\sum_{i=1}^{T}\{V_i^2-\EE(V_i^2\mid\mathcal G_{i-1})\}\right|\ge u\right) \le 2\exp\{-cT\min(u^2,u)\}.
		\end{align*}
		In particular, for $0<u\le1$, the right-hand side is bounded by $2\exp\{-cTu^2\}$.
	\end{lemma}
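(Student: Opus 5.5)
The statement is Lemma~\ref{lem:supp-concentration}, a Bernstein-type inequality for martingale differences of squared sub-Gaussian variables. I will write $\mu_i:=\EE(V_i\mid\mathcal G_{i-1})$ and $W_i:=V_i-\mu_i$. Then
\begin{align*}
V_i^2-\EE(V_i^2\mid\mathcal G_{i-1}) = \{W_i^2-\EE(W_i^2\mid\mathcal G_{i-1})\} + 2\mu_i W_i,
\end{align*}
because $\mu_i$ is $\mathcal G_{i-1}$-measurable and $\EE(W_i\mid\mathcal G_{i-1})=0$. Each bracketed term, and each $2\mu_iW_i$, is a martingale difference with respect to $\{\mathcal G_i\}$. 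I will bound the two averages separately at level $u/2$ and then take a union bound, which is where the factor $2$ in the statement comes from.

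For the linear term, $|\mu_i|\le m_0$ makes $2\mu_iW_i$ conditionally sub-Gaussian with parameter $2m_0\sigma_V$. Iterating the tower property through the conditional moment generating function gives
\begin{align*}
\EE\exp\Bigl\{\lambda\sum_i 2\mu_iW_i\Bigr\}\le \exp\{2\lambda^2 m_0^2\sigma_V^2T\}.
\end{align*}
A Chernoff bound then yields the tail $2\exp\{-cTu^2\}$, which is at most $2\exp\{-cT\min(u^2,u)\}$.

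For the quadratic term, conditional sub-Gaussianity of $W_i$ with parameter $\sigma_V$ makes $W_i^2$ conditionally sub-exponential with $\psi_1$-norm at most $C\sigma_V^2$. Centering preserves this up to a constant, so $D_i:=W_i^2-\EE(W_i^2\mid\mathcal G_{i-1})$ satisfies
\begin{align*}
\EE\bigl(e^{\lambda D_i}\mid\mathcal G_{i-1}\bigr)\le \exp\{C\lambda^2\sigma_V^4\}\qquad\text{for } |\lambda|\le c/\sigma_V^2.
\end{align*}
Iterating the tower property over $i$ controls the moment generating function of $\sum_i D_i$ on this range of $\lambda$. Optimizing the Chernoff bound with $\lambda=\min\{cu/\sigma_V^4,\,c/\sigma_V^2\}$ gives the Bernstein tail $2\exp\{-cT\min(u^2/\sigma_V^4,\,u/\sigma_V^2)\}$. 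Combining this with the linear term and absorbing the constants depending on $\sigma_V$ and $m_0$ into $c$ gives the stated bound. For $u\le1$, $\min(u^2,u)=u^2$, which is the last claim.

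The main obstacle is making the conditional sub-exponential moment generating function bound precise: its constants must be uniform in $i$ and depend only on $\sigma_V$, so that the iterated-expectation argument goes through. I will establish this with the standard moment-series argument, using $\EE(|W_i|^{2k}\mid\mathcal G_{i-1})\le (Ck\sigma_V^2)^k$, summing the exponential series, and using the centering to remove the first-order term.
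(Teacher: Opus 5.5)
Your proposal is correct and follows the paper's proof: the same decomposition $V_i^2-\EE(V_i^2\mid\mathcal G_{i-1})=\{W_i^2-\EE(W_i^2\mid\mathcal G_{i-1})\}+2\mu_iW_i$, a martingale Bernstein bound for the conditionally sub-exponential quadratic part, an Azuma--Hoeffding bound for the linear part, and a union bound (the paper takes level $u/4$ for $\mu_iW_i$, which is your level $u/2$ for $2\mu_iW_i$). One bookkeeping correction: the union bound does not produce the factor $2$, since combining two tails that each carry prefactor $2$ gives prefactor $4$; this is fixed by replacing $c$ with $c/2$, because $4e^{-cx}=2e^{-cx/2}\cdot 2e^{-cx/2}<2e^{-cx/2}$ whenever $2e^{-cx/2}<1$, and the bound is trivial otherwise (the paper covers this with ``after decreasing $c$ if necessary'').
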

	
	\begin{proof}[Proof of Lemma~\ref{lem:supp-concentration}]
		Write $\mu_i:=\EE(V_i\mid\mathcal G_{i-1})$ and $W_i:=V_i-\mu_i$. Then $\{W_i\}$ is a martingale-difference sequence and $W_i$ is conditionally sub-Gaussian with parameter $\sigma_V$. Conditional sub-Gaussianity implies that $W_i^2-\EE(W_i^2\mid\mathcal G_{i-1})$ is conditionally sub-exponential: there exist constants $\nu,\alpha>0$, depending only on $\sigma_V$, such that
		\begin{align*}
			\EE\left[\exp\{\lambda(W_i^2-\EE(W_i^2\mid\mathcal G_{i-1}))\}\mid\mathcal G_{i-1}\right] \le \exp\left\{\frac{\nu^2\lambda^2}{2}\right\}, \qquad |\lambda|\le\frac{1}{\alpha}.
		\end{align*}
		By the Bernstein inequality for martingale differences with conditional sub-exponential increments,
		\begin{align}
			\PP\left(\left|\frac{1}{n_3}\sum_{i=1}^{T}\{W_i^2-\EE(W_i^2\mid\mathcal G_{i-1})\}\right|\ge s\right) \le 2\exp\{-c_1T\min(s^2,s)\}. \label{eq:subexp-md-conc}
		\end{align}
		Next,
		\begin{align*}
			V_i^2-\EE(V_i^2\mid\mathcal G_{i-1}) = W_i^2-\EE(W_i^2\mid\mathcal G_{i-1})+2\mu_iW_i.
		\end{align*}
		Since $|\mu_i|\le m_0$ and $W_i$ is a conditionally sub-Gaussian martingale difference, Azuma--Hoeffding's inequality for sub-Gaussian martingale differences gives
		\begin{align}
			\PP\left(\left|\frac{1}{n_3}\sum_{i=1}^{T}\mu_iW_i\right|\ge r\right) \le 2\exp\{-c_2Tr^2\}, \label{eq:linear-md-conc}
		\end{align}
		where $c_2$ depends only on $\sigma_V$ and $m_0$. Combining \eqref{eq:subexp-md-conc} and \eqref{eq:linear-md-conc} with $s=u/2$ and $r=u/4$ gives
		\begin{align*}
			\PP\left(\left|\frac{1}{n_3}\sum_{i=1}^{T}\{V_i^2-\EE(V_i^2\mid\mathcal G_{i-1})\}\right|\ge u\right) \le 2\exp\{-cT\min(u^2,u)\}
		\end{align*}
		after decreasing $c$ if necessary. The final statement follows because $\min(u^2,u)=u^2$ for $0<u\le1$.
	\end{proof}

	\subsection{Details for Empirical Parameter Estimation Using Cross-Entropy}
	\label{subsec:supp-cross-entropy-fixed-mr}
	
	This subsection gives the details for the fixed-$\MR$ implementation described in the main text. Throughout this subsection, $\MR$ and $\MS$ are treated as fixed, and constants denoted by $C_{\MR}$ may depend on $\MR$, $\MS$, the fixed-dimensional softmax class, the compact parameter set, and the uniform boundedness constants, but not on the sample sizes.
	
	The population projection in Algorithm \ref{main-alg:general} selects a representative element from a time-averaged discretized MoE class. We now replace this population $L_2(P_X)$ projection by an empirical cross-entropy minimization. Notice that the online aggregate generally cannot be written as a MoE predictor with one single averaged gate, because the aggregation weights and the experts both vary over time. We therefore fit a single softmax gate to the whole sequence of online aggregated gates.
	
	Specifically, we choose the same $\eta$-net $\Theta(\eta)$ of $\Theta_{\MR}$ as in step (a) of Algorithm \ref{main-alg:general}. Then we randomly split the data into four subsamples, $Z^{(1)} = \{(X_i,Y_i)\}_{i=1}^{n_1}$, $Z^{(2)} = \{(X_i,Y_i)\}_{i=n_1+1}^{n_1+n_2}$, $Z^{(3)} = \{(X_i,Y_i)\}_{i=n_1+n_2+1}^{n_1+n_2+n_3}$ and $Z^{(4)} = \{(X_i,Y_i)\}_{i=n_1+n_2+n_3+1}^{n}$, and we denote $n_4=n-n_1-n_2-n_3$. For notational simplicity, denote $n_{\rm on}=n_1+n_2+n_3$. We use steps (c), (d) and (e) in Algorithm \ref{main-alg:general} on $Z^{(1)}$, $Z^{(2)}$ and $Z^{(3)}$ respectively, to obtain weights $\{\hat w_{s,t}\}_{s=1}^{S_\eta}$ over a discretized softmax net $\Theta(\eta)=\{\theta^{(1)},\ldots,\theta^{(S_\eta)}\}$ for $n_1+n_2\leq t\leq n_{\rm on}-1$, and online aggregated estimators $\{\tilde{f}^t(x)\}_{t=n_1+n_2}^{n_{\rm on}-1}$. Define the time-dependent aggregated gate
	\begin{align*}
		\tilde g_{m,t}(x)
		:=\sum_{s=1}^{S_\eta}\hat w_{s,t}g_m^{\soft}(x;\theta^{(s)}),
		\qquad m\in[\MR],\quad n_1+n_2\leq t\leq n_{\rm on}-1.
	\end{align*}
	For any $\theta\in\Theta_{\MR}$, recall the corresponding one-step-ahead candidate predictor
	\begin{align*}
		\check f^{\,t}(x;\theta)
		=\sum_{\ell=1}^{\MS}\hat f_{S,\ell}^{\,t}(x)
		+\sum_{m=1}^{\MR}g_m^{\soft}(x;\theta)\hat f_{R,m}^{\,t}(x).
	\end{align*}
	The time-averaged online aggregate and the time-averaged candidate predictor are
	\begin{align}
		\tilde f_0(x):=\frac{1}{n_3}\sum_{t=n_1+n_2}^{n_{\rm on}-1}\tilde f^{\,t}(x),
		\qquad
		\bar f^{n_3}(x;\theta):=\frac{1}{n_3}\sum_{t=n_1+n_2}^{n_{\rm on}-1}\check f^{\,t}(x;\theta). \label{eq:online-time-averages}
	\end{align}
	Equivalently,
	\begin{align}
		\bar f^{n_3}(x;\theta)
		=\sum_{\ell=1}^{\MS}\bar f^{n_3}_{S,\ell}(x)
		+\sum_{m=1}^{\MR}g_m^{\soft}(x;\theta)\bar f^{n_3}_{R,m}(x), \label{eq:bar-f-theta-averaged-experts}
	\end{align}
	where
	\begin{align*}
		\bar f^{n_3}_{S,\ell}:=\frac{1}{n_3}\sum_{t=n_1+n_2}^{n_{\rm on}-1}\hat f^{\,t}_{S,\ell},
		\qquad
		\bar f^{n_3}_{R,m}:=\frac{1}{n_3}\sum_{t=n_1+n_2}^{n_{\rm on}-1}\hat f^{\,t}_{R,m}.
	\end{align*}
	The online aggregate $\tilde f_0$, however, does not generally admit the same representation with a single averaged gate, because
	\begin{align*}
		\frac{1}{n_3}\sum_{t=n_1+n_2}^{n_{\rm on}-1}\tilde g_{m,t}(x)\hat f^{\,t}_{R,m}(x)
		\ne
		\left\{\frac{1}{n_3}\sum_{t=n_1+n_2}^{n_{\rm on}-1}\tilde g_{m,t}(x)\right\}
		\left\{\frac{1}{n_3}\sum_{t=n_1+n_2}^{n_{\rm on}-1}\hat f^{\,t}_{R,m}(x)\right\}
	\end{align*}
	in general.
	
	Define the empirical cross-entropy objective
	\begin{align}
		\cL_{\rm soft}^{\rm on}(\theta;Z^{(4)})
		:=-\frac{1}{n_4n_3}\sum_{i=n_{\rm on}+1}^{n}\sum_{t=n_1+n_2}^{n_{\rm on}-1}\sum_{m=1}^{\MR}
		\tilde g_{m,t}(X_i)\log g_m^{\soft}(X_i;\theta). \label{eq:time-expanded-ce}
	\end{align}
	Thus cross-entropy implementation estimates the softmax weights by solving
	\begin{align}
		\hat\theta_{\rm ce}\in\argmin_{\theta\in\Theta_{\MR}}\cL_{\rm soft}^{\rm on}(\theta;Z^{(4)}). \label{eq:theta-ce-online}
	\end{align}
	One can verify that with softmax gating, the above objective is convex in $\theta$, thus this optimization problem is numerically feasible. The final estimator is
	\begin{align}
		\hat f_{0,\rm ce}(x):=\bar f^{n_3}(x;\hat\theta_{\rm ce})
		=\frac{1}{n_3}\sum_{t=n_1+n_2}^{n_{\rm on}-1}\check f^{\,t}(x;\hat\theta_{\rm ce}). \label{eq:final-ce-online}
	\end{align}
	The full algorithm is summarized in Algorithm \ref{alg:cross-entropy}.
	
	\begin{algorithm}[htbp]
		\caption{Discritized Aggregation with Empirical Parameter Estimation using Cross-Entropy}\label{alg:cross-entropy}
		\textbf{Input:} Pre-trained shared experts $\{\hat{f}_{S,\ell}^N\}_{\ell=1}^{\MS}$ and routed experts $\{\hat{f}_{R,m}^N\}_{m=1}^{\MR}$; softmax-router family $g^{\soft}(\cdot;\theta)$ with $\theta \in \Theta_{\MR}$; data $\{(X_i,Y_i)\}_{i=1}^n$.\\
		\textbf{Output:} Estimated gating parameter $\hat{\theta}$ and final estimator $\hat{f}_0(x)$.
		\vspace{5pt}
		
		\begin{enumerate}[label=(\alph*)]
			\item Choose an $\eta$-net of the normalized bounded gating parameter space $\Theta_{\MR}$, denoted by $\Theta(\eta)$, similarly as in Algorithm \ref{main-alg:general}. Write $\Theta(\eta) = \{\theta^{(1)}, \ldots, \theta^{(S_\eta)}\}$ where $S_\eta = |\Theta(\eta)|$, which induce the discretized routers $g(\cdot;\theta^{(1)}), \ldots, g(\cdot;\theta^{(S_\eta)})$.
			\item Randomly partition the data into three subsamples, $Z^{(1)} = \{(X_i,Y_i)\}_{i=1}^{n_1}$, $Z^{(2)} = \{(X_i,Y_i)\}_{i=n_1+1}^{n_1+n_2}$, $Z^{(3)} = \{(X_i,Y_i)\}_{i=n_1+n_2+1}^{n_{\rm on}}$ and $Z^{(4)} = \{(X_i,Y_i)\}_{i=n_{\rm on}+1}^{n_{\rm on}+n_4}$ where $n = n_{\rm on}+n_4$. The first three subsamples are used for burn-in, calibration and aggregation, and the last subsample is used for empirical parameter estimation.
			\item
			Use steps (c), (d) and (e) in Algorithm \ref{main-alg:general} on $Z^{(1)}$, $Z^{(2)}$ and $Z^{(3)}$ respectively, to obtain weights $\{\hat{w}_{s,t}\}_{s=1}^{S_{\eta}}$ for $n_1+n_2\leq t\leq n_{\rm on}-1$. Denote
			\begin{align*}
				\tilde g_{m,t}(x)
				:=\sum_{s=1}^{S_\eta}\hat w_{s,t}g_m^{\soft}(x;\theta^{(s)}),
				\qquad m\in[\MR],\quad n_1+n_2\leq t\leq n_{\rm on}-1.
			\end{align*}
			\item Solve the convex optimization $\hat{\theta}_{\mathrm{ce}}= \argmin_{\theta\in\Theta_{\MR}}\cL^{\rm on}_{\soft}(\theta;Z^{(4)})$, where
			\begin{align*}
				\cL_{\rm soft}^{\rm on}(\theta;Z^{(4)})
				:=-\frac{1}{n_4n_3}\sum_{i=n_{\rm on}+1}^{n}\sum_{t=n_1+n_2}^{n_{\rm on}-1}\sum_{m=1}^{\MR}
				\tilde g_{m,t}(X_i)\log g_m^{\soft}(X_i;\theta)..
			\end{align*}
			Obtain the resulting the final output
			$\hat{f}_{0,\rm ce}(x)=\bar f^{n_3}(x;\hat\theta_{\rm ce})$ as in \eqref{eq:final-ce-online}.
		\end{enumerate}
	\end{algorithm}
	
	For the cross-entropy objective, its population counterpart is
	\begin{align*}
		\cL_{\rm soft}^{\rm on}(\theta)
		:=-\frac{1}{n_3}\sum_{t=n_1+n_2}^{n_{\rm on}-1}\int_{\cX}\sum_{m=1}^{\MR}
		\tilde g_{m,t}(x)\log g_m^{\soft}(x;\theta)dP_X(x).
	\end{align*}
	The compactness, boundedness, and error-distribution assumptions used in the main online oracle inequality are kept in force here and are not repeated. We impose one additional fixed-$\MR$ compatibility condition to convert time-averaged weight approximation into time-averaged function approximation.
	
	\begin{assumption}[Additional fixed-$\MR$ cross-entropy regularity]\label{ass:supp-ce-fixed-mr}
		Let
		\begin{align*}
			\cG_{\MR}^{\soft} := \left\{x\mapsto g^{\soft}(x;\theta) = \left(g_1^{\soft}(x;\theta),\ldots,g_{\MR}^{\soft}(x;\theta)\right)^\top: \theta\in\Theta_{\MR}\right\}
		\end{align*}
		be the softmax gating class. Define the online softmax-difference class $\Delta_{\MR}^{\rm on}$ as the collection of all time-indexed vector-valued functions $\delta=(\delta_t)_{t=n_1+n_2}^{n_{\rm on}-1}$, where
		\begin{align*}
			\delta_t(x)=\sum_{s=1}^{S_\eta}p_{s,t}g^{\soft}(x;\theta^{(s)})-g^{\soft}(x;\theta),
			\qquad p_{s,t}\in[0,1/2],
			\qquad \sum_{s=1}^{S_\eta}p_{s,t}=1,
			\qquad \theta\in\Theta_{\MR}.
		\end{align*}
		The routed experts satisfy the online compatibility condition that, for some $c_{\MR}>0$ and all $\delta\in\Delta_{\MR}^{\rm on}$,
		\begin{align}
			\frac{1}{n_3}\sum_{t=n_1+n_2}^{n_{\rm on}-1}
			\left\|
			\sum_{m=1}^{\MR}\delta_{m,t}\hat f^{\,t}_{R,m}
			\right\|_{L_2(P_X)}^2
			\ge
			c_{\MR}
			\frac{1}{n_3}\sum_{t=n_1+n_2}^{n_{\rm on}-1}
			\sum_{m=1}^{\MR}\|\delta_{m,t}\|_{L_2(P_X)}^2. \label{eq:online-compat}
		\end{align}
	\end{assumption}
	
	\begin{remark}
		Assumption~\ref{ass:supp-ce-fixed-mr} can be viewed as a positive minimum singular-value condition for a sequence of bounded linear operator. For each time $t$, let
		\begin{align*}
			T_{t}:\{L_2(P_X)\}^{\MR}\to L_2(P_X), \qquad
			T_t(\delta_t)(x):=\sum_{m=1}^{\MR}\delta_{m,t}(x)\hat f^{\,t}_{R,m}(x).
		\end{align*}
		Since the routed experts are uniformly bounded, $T_{t}$ is bounded. The condition can be viewed as requiring the average operator $(T_t)_{t=n_1+n_2}^{n_{\rm on}-1}$ to have a positive minimum ``singular value'' on the class of softmax differences generated by the online aggregated gates and a single candidate softmax gate. It prevents different gate discrepancies from being invisible after multiplication by the evolving routed experts.
		
		As a concrete intuition, suppose $\hat{f}_{R,m}^t(x)=\hat{f}_{R,m}(x):=\max\{a_m^\top x+b_m,0\}$ are nondegenerate ReLU experts and does not vary with respect to $t$. Then all the above defined operators $T_t$ are identical to a specific $T_{\hat f}$. The kink hyperplanes of these ReLU functions make $T_{\hat f}$ injective on analytic coefficient functions under suitable nondegeneracy conditions. If one restricts to a truncated class $\Delta_{\MR,N}^{\rm on}$ with $L\le N$, then compactness of the parameter space and continuity of the softmax map imply a positive constant $c_{\MR,N}$, provided that the normalized class is separated from the kernel of $T_{\hat{f}}$. More generally, the same conclusion holds whenever the normalized softmax-difference class is contained in a compact subset of analytic functions that does not intersect the kernel.
	\end{remark}
	
	We first record the elementary comparison between cross-entropy and squared weight distance that will be used in both empirical and population forms.
	
	\begin{lemma}[KL--$L_2$ comparison for softmax weights]\label{lem:supp-kl-l2-weight-comparison}
		Let $p=(p_1,\ldots,p_{\MR})$ and $q=(q_1,\ldots,q_{\MR})$ be probability vectors. If $q_m \ge \underline{s}_{\MR} > 0$ for all $m \in [\MR]$, then
		\begin{align*}
			\frac{1}{2}\|p-q\|_2^2 \le \sum_{m=1}^{\MR}p_m\log\frac{p_m}{q_m} \le \frac{1}{\underline{s}_{\MR}}\|p-q\|_2^2.
		\end{align*}
		Consequently, the same inequalities hold after averaging over any empirical distribution or integrating with respect to $P_X$.
	\end{lemma}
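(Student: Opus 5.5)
The plan is to compare the KL divergence with the squared Euclidean distance one coordinate at a time, using the function $\psi(u) = u\log u - u + 1 \ge 0$. Because $\sum_m p_m = \sum_m q_m = 1$, we can write
\begin{align*}
\sum_{m=1}^{\MR} p_m\log\frac{p_m}{q_m} = \sum_{m=1}^{\MR} q_m\,\psi\!\left(\frac{p_m}{q_m}\right),
\end{align*}
since the added terms $-p_m + q_m$ sum to zero. Both inequalities then reduce to two-sided bounds on $\psi$.

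\textbf{Upper bound.} Use $\log u \le u - 1$, which gives $\psi(u) \le u(u-1) - u + 1 = (u-1)^2$. Hence
\begin{align*}
\sum_{m} p_m\log\frac{p_m}{q_m} \le \sum_{m} \frac{(p_m-q_m)^2}{q_m} \le \frac{1}{\underline{s}_{\MR}}\|p-q\|_2^2 .
\end{align*}
This is the usual chi-square upper bound on KL, together with $q_m \ge \underline{s}_{\MR}$. Coordinates with $p_m = 0$ cause no trouble, because $0\log 0 = 0$ and $\psi(0) = 1$ is consistent with the bound.

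\textbf{Lower bound.} Apply Pinsker's inequality, $\sum_m p_m\log(p_m/q_m) \ge \tfrac12\|p-q\|_1^2$, and then use $\|p-q\|_1 \ge \|p-q\|_2$. This gives $\tfrac12\|p-q\|_2^2$ without any positivity assumption on $q$. The only care needed is matching the constant convention in Pinsker's inequality: the form $D(p\|q) \ge \tfrac12\|p-q\|_1^2$ is the standard one, and it is what we need here. This constant bookkeeping is the only delicate point; everything else is elementary.

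\textbf{Averaged versions.} Both inequalities hold pointwise for each pair of probability vectors. Applying them at each $x$ with $p = \tilde g_{\cdot,t}(x)$ and $q = g^{\soft}(x;\theta)$ (or with any other pair of gates), and then averaging over $t$ and over an empirical measure, or integrating against $P_X$, preserves them by linearity and monotonicity of the integral. The lower bound $\underline{s}_{\MR}$ for softmax gates with fixed $\MR$ comes from the compact parameter box, which keeps the scores bounded and hence the weights bounded away from zero. In the averaged form the inequality reads: cross-entropy minus entropy lies between $\tfrac12\|\cdot\|_{L_2}^2$ and $\underline{s}_{\MR}^{-1}\|\cdot\|_{L_2}^2$.
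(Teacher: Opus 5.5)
Your proposal is correct and follows the same route as the paper. The lower bound uses Pinsker's inequality followed by $\|p-q\|_1\ge\|p-q\|_2$. The upper bound uses the chi-square bound $\sum_m p_m\log(p_m/q_m)\le\sum_m (p_m-q_m)^2/q_m$ together with $q_m\ge\underline{s}_{\MR}$, and the averaged versions follow by applying the bounds pointwise. Your derivation via $\psi(u)\le(u-1)^2$ supplies the step the paper simply cites as a standard inequality.
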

	
	\begin{proof}[Proof of Lemma~\ref{lem:supp-kl-l2-weight-comparison}]
		The lower bound follows from Pinsker's inequality:
		\begin{align*}
			\sum_{m=1}^{\MR}p_m\log\frac{p_m}{q_m} \ge \frac{1}{2}\|p-q\|_1^2 \ge \frac{1}{2}\|p-q\|_2^2.
		\end{align*}
		For the upper bound, use the standard inequality
		\begin{align*}
			\sum_{m=1}^{\MR}p_m\log\frac{p_m}{q_m} \le \sum_{m=1}^{\MR}\frac{(p_m-q_m)^2}{q_m} \le \frac{1}{\underline{s}_{\MR}}\|p-q\|_2^2.
		\end{align*}
		Averaging or integrating these pointwise inequalities gives the empirical and population versions.
	\end{proof}
	
	Define the empirical and population time-expanded squared weight errors by
	\begin{align*}
		R_{n_4,w}^{\rm on}(\theta)
		&:=\frac{1}{n_4n_3}\sum_{i=n_{\rm on}+1}^{n}\sum_{t=n_1+n_2}^{n_{\rm on}-1}\sum_{m=1}^{\MR}
		\left\{\tilde g_{m,t}(X_i)-g_m^{\soft}(X_i;\theta)\right\}^2, \\
		R_w^{\rm on}(\theta)
		&:=\frac{1}{n_3}\sum_{t=n_1+n_2}^{n_{\rm on}-1}\sum_{m=1}^{\MR}
		\|\tilde g_{m,t}-g_m^{\soft}(\cdot;\theta)\|_{L_2(P_X)}^2.
	\end{align*}
	
	\begin{lemma}[Time-averaged weight estimation via cross-entropy]\label{lem:supp-ce-weight-estimator}
		Suppose the assumptions of the main softmax oracle inequality hold and assume $n_1 \asymp n_2 \asymp n_3\asymp n_4 \asymp n$. Let $\mathfrak{F}_{\rm on}$ denote the $\sigma$-field generated by by the data and algorithmic randomness used before the cross-entropy step. Then, conditional on $\mathfrak{F}_{\rm on}$, for fixed $\MR$,
		\begin{align}
			\EE\left[R_w^{\rm on}(\hat\theta_{\rm ce})\mid\mathfrak{F}_{\rm on}\right]
			\le
			C_{\MR}
			\left\{
			\inf_{\theta\in\Theta_{\MR}}R_w^{\rm on}(\theta)
			+\frac{\log n}{n}
			\right\}. \label{eq:online-weight-ce-bound}
		\end{align}
	\end{lemma}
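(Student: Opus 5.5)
The proof is a standard M-estimation argument for the convex time-expanded cross-entropy risk, carried out conditional on $\mathfrak{F}_{\rm on}$. Given $\mathfrak{F}_{\rm on}$, the soft labels $\tilde g_{m,t}$ are fixed functions, and the validation covariates $X_{n_{\rm on}+1},\dots,X_n$ are i.i.d. from $P_X$ and independent of them. The empirical objective can be written as an average of $n_4$ i.i.d. terms,
\begin{align*}
\ell_\theta(x):=-\frac{1}{n_3}\sum_{t=n_1+n_2}^{n_{\rm on}-1}\sum_{m=1}^{\MR}\tilde g_{m,t}(x)\log g_m^{\soft}(x;\theta),
\end{align*}
whose mean is $\cL_{\rm soft}^{\rm on}(\theta)$. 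Adding the $\theta$-free entropy term $\frac{1}{n_3}\sum_t\sum_m\int \tilde g_{m,t}\log\tilde g_{m,t}\,dP_X$ turns $\cL_{\rm soft}^{\rm on}(\theta)$ into a time-averaged KL divergence $K(\theta)$ between $\tilde g_{\cdot,t}(x)$ and $g^{\soft}(x;\theta)$. This shift changes neither the minimizer nor excess risks.

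Because $\MR$ is fixed and $\Theta_{\MR}$ is compact, $\|\bar x\|\le C\sqrt d$ keeps the scores bounded. Hence $g_m^{\soft}(x;\theta)\ge\underline s_{\MR}>0$ uniformly in $x$ and $\theta$. Applying Lemma~\ref{lem:supp-kl-l2-weight-comparison} pointwise in $(x,t)$ and then integrating gives
\begin{align*}
\tfrac12 R_w^{\rm on}(\theta)\le K(\theta)\le \underline s_{\MR}^{-1}R_w^{\rm on}(\theta).
\end{align*}
It therefore suffices to show
\begin{align*}
\EE[K(\hat\theta_{\rm ce})\mid\mathfrak{F}_{\rm on}]\le C_{\MR}\{\inf_\theta K(\theta)+\log n/n\}.
\end{align*}
The lower comparison then transfers this to $R_w^{\rm on}(\hat\theta_{\rm ce})$, and the upper comparison transfers $\inf_\theta K$ to $\inf_\theta R_w^{\rm on}$. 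The constant factor in front of the infimum is what lets this oracle (rather than exact) inequality go through.

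For the stochastic term I use the basic inequality $K(\hat\theta_{\rm ce})-K(\theta^\star)\le (P-P_{n_4})(\ell_{\hat\theta_{\rm ce}}-\ell_{\theta^\star})$, with $\theta^\star$ a population minimizer (it exists by compactness and continuity). The losses $\ell_\theta$ are uniformly bounded by $\log(1/\underline s_{\MR})$. They are also Lipschitz in $\theta$ uniformly in $x$, since $\log g^{\soft}$ has gradient bounded by $C\sqrt d$ in the scores. The class $\{\ell_\theta\}$ therefore has sup-norm covering numbers $(C/\epsilon)^{(\MR-1)(d+1)}$, which is the same parametric net argument used in Theorem~\ref{main-thm:softmax}. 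A crude uniform Hoeffding bound over this net would give only a $\sqrt{\log n/n}$ rate, as in Proposition~\ref{prop:supp-erm-subgaussian}.

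To reach $\log n/n$ I need a variance--excess-risk relation,
\begin{align*}
\mathrm{Var}(\ell_\theta-\ell_{\theta^\star})\le C_{\MR}\{K(\theta)-K(\theta^\star)\}+C_{\MR}K(\theta^\star).
\end{align*}
This follows because $|\ell_\theta-\ell_{\theta^\star}|\le C\sum_t\sum_m|\log g_m^{\soft}(\theta)-\log g_m^{\soft}(\theta^\star)|/n_3$, and with weights bounded below this is controlled by $\|g^{\soft}(\theta)-g^{\soft}(\theta^\star)\|_{L_2}^2$. The triangle inequality then bounds it by $R_w^{\rm on}(\theta)+R_w^{\rm on}(\theta^\star)\lesssim K(\theta)+K(\theta^\star)$. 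Bernstein's inequality on the finite net, a union bound, and a peeling/fixed-point step then give, with probability $1-n^{-c}$,
\begin{align*}
K(\hat\theta_{\rm ce})\le C_{\MR}K(\theta^\star)+C_{\MR}\log n/n4.
\end{align*}
The net discretization error is of order $\eta\asymp 1/n$ by the Lipschitz property, and on the failure event boundedness contributes $O(n^{-c})$. The main obstacle is this localization step: it only works in oracle form, since the model is misspecified and $K(\theta^\star)\neq 0$. I will handle it by absorbing the $K(\theta^\star)$ part of the variance into the approximation term through $2ab\le a^2+b^2$, at the cost of the constant $C_{\MR}$.
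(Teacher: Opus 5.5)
Your proposal is correct, but it takes a different route from the paper.

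\textbf{The paper's route.} The paper applies the KL--$L_2$ comparison (Lemma~\ref{lem:supp-kl-l2-weight-comparison}) at the empirical level first. Optimality of $\hat\theta_{\rm ce}$ for the empirical KL then immediately gives $R_{n_4,w}^{\rm on}(\hat\theta_{\rm ce})\le C_{\MR}\inf_\theta R_{n_4,w}^{\rm on}(\theta)$. The only remaining task is to compare empirical and population versions of the squared-weight loss $\ell_\theta(x)=n_3^{-1}\sum_t\sum_m\{\tilde g_{m,t}(x)-g_m^{\soft}(x;\theta)\}^2$. This loss is nonnegative and bounded, so its variance is at most a constant times its mean. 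A per-point Bernstein bound of the relative form $|R_{n_4,w}^{\rm on}(\theta)-R_w^{\rm on}(\theta)|\le \tfrac12 R_w^{\rm on}(\theta)+u$, together with a net and Lipschitz discretization, then yields uniform two-sided comparisons with an additive $C_{\MR}\log n/n$. Chaining these proves the lemma. No population minimizer, no loss differences, and no Lipschitz control of $\log g^{\soft}$ are needed.

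\textbf{Your route.} You keep the cross-entropy loss and center at a population KL minimizer $\theta^\star$. This requires a Bernstein-type condition for the excess log-loss, which you correctly obtain from $g^{\soft}\ge\underline{s}_{\MR}$ and a triangle inequality through $\tilde g_t$. The misspecification term $K(\theta^\star)$ then enters the variance and must be absorbed into the oracle term.

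\textbf{Comparison.} Both proofs use $\underline{s}_{\MR}$ in the same essential way and give the same rate. Yours is the standard fast-rate argument for misspecified log-loss and does not rely on sandwiching the empirical objective. The paper's is shorter because the sandwich turns the problem into one about a self-bounding nonnegative loss.

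\textbf{A simplification.} Peeling is unnecessary in your argument. Your variance bound is linear in $\Delta(\theta)=K(\theta)-K(\theta^\star)$, so Bernstein at each net point combined with $\sqrt{ab}\le\epsilon a+b/(4\epsilon)$ directly gives $\Delta(\hat\theta_{\rm ce})\le\tfrac12\Delta(\hat\theta_{\rm ce})+K(\theta^\star)+C_{\MR}\log n/n$ after the $O(1/n)$ discretization step.
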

	
	\begin{proof}[Proof of Lemma~\ref{lem:supp-ce-weight-estimator}]
		Throughout the proof, condition on $\mathfrak{F}_{\rm on}$. Under this conditioning, $\tilde g_{m,t}$, $\hat f^{\,t}_{S,\ell}$, and $\hat f^{\,t}_{R,m}$ are fixed functions, while the covariates in the third split sample remain i.i.d. from $P_X$.
		Constants denoted by $C_{\MR}$ may change from line to line and may absorb additional fixed constants depending only on the fixed-$\MR$ softmax class, the compact parameter set, and the sample-splitting proportions.
		
		Since the compactness assumption is in force and $\MR$ is fixed, the softmax probabilities are uniformly bounded away from zero on $\Theta_{\MR}$: there exists $\underline{s}_{\MR}>0$ such that
		\begin{align*}
			g_m^{\soft}(x;\theta)\ge \underline{s}_{\MR}, \qquad x\in\cX, \qquad \theta\in\Theta_{\MR}, \qquad m\in[\MR].
		\end{align*}
		Define the empirical time-expanded KL loss
		\begin{align*}
			D_{n_4}^{\rm on}(\tilde g,g_\theta)
			:=\frac{1}{n_4n_3}\sum_{i=n_{\rm on}+1}^{n}\sum_{t=n_1+n_2}^{n_{\rm on}-1}\sum_{m=1}^{\MR}
			\tilde g_{m,t}(X_i)
			\log\frac{\tilde g_{m,t}(X_i)}{g_m^{\soft}(X_i;\theta)},
		\end{align*}
		with the convention $0 \log 0 = 0$. The empirical cross-entropy objective in \eqref{eq:time-expanded-ce} differs from $D_{n_4}^{\rm on}(\tilde g,g_\theta)$ by a term independent of $\theta$. Therefore, by the definition of $\hat{\theta}_{\mathrm{ce}}$,
		\begin{align*}
			D_{n_4}^{\rm on}(\tilde g,g_{\hat\theta_{\rm ce}})
			\le
			D_{n_4}^{\rm on}(\tilde g,g_\theta).
		\end{align*}
		Applying Lemma~\ref{lem:supp-kl-l2-weight-comparison} to both sides gives the empirical $L_2$ oracle inequality
		\begin{align*}
			R_{n_4,w}^{\rm on}(\hat\theta_{\rm ce})
			\le
			C_{\MR}R_{n_4,w}^{\rm on}(\theta),
			\qquad \theta\in\Theta_{\MR}.
		\end{align*}
		Taking the infimum over $\theta$ yields
		\begin{align*}
			R_{n_4,w}^{\rm on}(\hat\theta_{\rm ce})
			\le
			C_{\MR}\inf_{\theta\in\Theta_{\MR}}R_{n_4,w}^{\rm on}(\theta).
		\end{align*}
		It remains to transfer this empirical inequality to its population version. For $\theta\in\Theta_{\MR}$, write
		\begin{align*}
			\ell_\theta(x):=\frac{1}{n_3}\sum_{t=n_1+n_2}^{n_{\rm on}-1}\sum_{m=1}^{\MR}
			\left(\tilde g_{m,t}(x)-g_m^{\soft}(x;\theta)\right)^2.
		\end{align*}
		Then
		\begin{align*}
			R_w^{\rm on}(\theta)=\EE_{P_X}\ell_\theta(X),
			\qquad
			R_{n_4,w}^{\rm on}(\theta)=\frac{1}{n_4}\sum_{i=n_{\rm on}+1}^{n}\ell_\theta(X_i).
		\end{align*}
		Since both $\tilde g_t(x)$ and $g^{\soft}(x;\theta)$ are probability vectors, $0\le \ell_\theta(x)\le 4$, and hence
		\begin{align*}
			\mathrm{Var}\{\ell_\theta(X)\}
			\le
			\EE\ell_\theta^2(X)
			\le
			4R_w^{\rm on}(\theta).
		\end{align*}
		Bernstein's inequality then gives, for every fixed $\theta$ and every $t>0$,
		\begin{align*}
			\PP\left(|R_{n_4,w}^{\rm on}(\theta)-R_w^{\rm on}(\theta)| \ge \frac{1}{2}R_w^{\rm on}(\theta)+u \;\middle|\; \mathfrak{F}_{\rm on}\right) \le 2\exp\{-c n_4 u\},
		\end{align*}
		where $c>0$ is an absolute constant.
		
		Next, since $\MR$ is fixed, $\Theta_{\MR}$ is finite-dimensional and compact. Moreover, the map $\theta\mapsto \ell_\theta(x)$ is uniformly Lipschitz on $\Theta_{\MR}$; that is, for some constant $L_{\MR} < \infty$,
		\begin{align*}
			\sup_{x\in\cX}|\ell_\theta(x)-\ell_{\theta'}(x)| \le L_{\MR}\|\theta-\theta'\|_2.
		\end{align*}
		Let $\Theta(\eta)=\{\theta^{(1)},\ldots,\theta^{(S_\eta)}\}$ be an $\eta$-net of $\Theta_{\MR}$ satisfying
		\begin{align*}
			\log S_\eta \le C_{\MR}\log\frac{C_{\MR}}{\eta}.
		\end{align*}
		Therefore, by the union bound,
		\begin{align*}
			\PP\left(\max_{1\le j\le S_\eta}\left\{|R_{n_4,w}^{\rm on}(\theta^{(j)})-R_w^{\rm on}(\theta^{(j)})|-\frac{1}{2}R_w^{\rm on}(\theta^{(j)})\right\} \ge u \;\middle|\; \mathfrak{F}_{\rm on}\right)
			\le
			2\exp\left\{C_{\MR}\log\frac{C_{\MR}}{\eta}-c n_4u\right\}.
		\end{align*}
		Take $u=C_{\MR}\log n/n$ with $C_{\MR}$ sufficiently large, and set $\eta=u/(4L_{\MR})$. Then
		\begin{align*}
			\log\frac{C_{\MR}}{\eta} = \log\frac{4C_{\MR}L_{\MR}}{u} \le C_{\MR}\log n.
		\end{align*}
		Since $n_4 \asymp n$, the preceding probability is bounded by $C_{\MR}n^{-2}$ after enlarging $C_{\MR}$ if necessary. Hence, with conditional probability at least $1-C_{\MR}n^{-2}$,
		\begin{align*}
			|R_{n_4,w}^{\rm on}(\theta^{(j)})-R_w^{\rm on}(\theta^{(j)})| \le \frac{1}{2}R_w^{\rm on}(\theta^{(j)})+C_{\MR}\frac{\log n}{n}, \qquad 1\le j\le S_\eta.
		\end{align*}
		For an arbitrary $\theta\in\Theta_{\MR}$, choose $\theta^{(j)}\in\Theta(\eta)$ such that $\|\theta-\theta^{(j)}\|_2\le \eta$. The Lipschitz bound implies
		\begin{align*}
			|R_w^{\rm on}(\theta)-R_w^{\rm on}(\theta^{(j)})| + |R_{n_4,w}^{\rm on}(\theta)-R_{n_4,w}^{\rm on}(\theta^{(j)})| \le 2L_{\MR}\eta \le \frac{1}{2}C_{\MR}\frac{\log n}{n}.
		\end{align*}
		Also $R_w^{\rm on}(\theta^{(j)})\le R_w^{\rm on}(\theta)+L_{\MR}\eta$. Combining these bounds gives the uniform comparison
		\begin{align*}
			|R_{n_4,w}^{\rm on}(\theta)-R_w^{\rm on}(\theta)| \le \frac{1}{2}R_w^{\rm on}(\theta)+C_{\MR}\frac{\log n}{n}, \qquad \forall\theta\in\Theta_{\MR}.
		\end{align*}
		Consequently, on this event,
		\begin{align*}
			R_w^{\rm on}(\theta) \le 2R_{n_4,w}^{\rm on}(\theta)+C_{\MR}\frac{\log n}{n}, \qquad R_{n_4,w}^{\rm on}(\theta) \le \frac{3}{2}R_w^{\rm on}(\theta)+C_{\MR}\frac{\log n}{n},
		\end{align*}
		uniformly over $\theta\in\Theta_{\MR}$. Using these two inequalities together with the empirical oracle inequality, we obtain
		\begin{align*}
			R_w^{\rm on}(\hat\theta_{\rm ce})
			&\le
			2R_{n_4,w}^{\rm on}(\hat\theta_{\rm ce})+C_{\MR}\frac{\log n}{n}\\
			&\le
			C_{\MR}\inf_{\theta\in\Theta_{\MR}}R_{n_4,w}^{\rm on}(\theta)
			+C_{\MR}\frac{\log n}{n}\\
			&\le
			C_{\MR}\inf_{\theta\in\Theta_{\MR}}R_w^{\rm on}(\theta)
			+C_{\MR}\frac{\log n}{n}.
		\end{align*}
		Since both $R_w^{\rm on}$ and $R_{n_4,w}^{\rm on}$ are uniformly bounded, the complement of the high-probability event contributes only $O(n_4^{-3})$ to the conditional expectation. Therefore,
		\begin{align*}
			\EE\left[R_w^{\rm on}(\hat\theta_{\rm ce})\mid\mathfrak{F}_{\rm on}\right]
			\le
			C_{\MR}\left\{
			\inf_{\theta\in\Theta_{\MR}}R_w^{\rm on}(\theta)
			+\frac{\log n}{n}
			\right\}.
		\end{align*}
		This proves the lemma.
	\end{proof}
	
	\begin{proposition}[Function estimation via time-averaged cross-entropy weights]\label{prop:supp-fixed-mr-ce}
		Suppose the assumptions of the main softmax oracle inequality and Assumption~\ref{ass:supp-ce-fixed-mr} hold. Assume that the sample is split so that $n_1\asymp n_2\asymp n_3\asymp n_4\asymp n$. Let $\hat\theta_{\rm ce}$ be obtained from the time-expanded cross-entropy criterion \eqref{eq:theta-ce-online}, and define $\hat f_{0,\rm ce}(x)=\bar f^{n_3}(x;\hat\theta_{\rm ce})$ as in \eqref{eq:final-ce-online}. Then, for fixed $\MR$, with expectation taken over all randomness, including data and algorithmic randomness, we have
		\begin{align}
			\EE \left\|\hat f_{0,\mathrm{ce}}-f_0\right\|_{L_2(P_X)}^2 \le C_{\MR} \left\{ \fA_{\soft}(\MS,\MR) + \bar a_{n_2}^2+\bar a_{n_3}^2 + \frac{\log n}{n} \right\}. \label{eqn:softmax_practical_oracle}
		\end{align}
	\end{proposition}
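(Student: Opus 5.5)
The plan is to compare $\hat f_{0,\rm ce}=\bar f^{n_3}(\cdot;\hat\theta_{\rm ce})$ with the time-averaged online aggregate $\tilde f_0$, whose risk is already controlled by the proof of Theorem~\ref{main-thm:softmax}. We then control the discrepancy $\bar f^{n_3}(\cdot;\hat\theta_{\rm ce})-\tilde f_0$ through the time-expanded weight error $R_w^{\rm on}(\hat\theta_{\rm ce})$, and bound that error with Lemma~\ref{lem:supp-ce-weight-estimator}. First write
\begin{align*}
\|\hat f_{0,\rm ce}-f_0\|_{L_2(P_X)}^2\le 2\|\tilde f_0-f_0\|_{L_2(P_X)}^2+2\|\bar f^{n_3}(\cdot;\hat\theta_{\rm ce})-\tilde f_0\|_{L_2(P_X)}^2 .
\end{align*}
By Jensen and \eqref{main-eqn:softmax_oracle}, with $\MR$ fixed, the first term is at most $C_{\MR}\{\fA_{\soft}+\bar a_{n_2}^2+\bar a_{n_3}^2+\log n/n\}$. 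For the second term, the shared parts cancel, so $\bar f^{n_3}(\cdot;\theta)-\tilde f_0=-n_3^{-1}\sum_t\sum_m\delta_{m,t}\hat f^t_{R,m}$ with $\delta_t=\tilde g_t-g^{\soft}(\cdot;\theta)$. Jensen over $t$ and the bound $|\hat f^t_{R,m}|\le A$ give
\begin{align*}
\|\bar f^{n_3}(\cdot;\theta)-\tilde f_0\|_{L_2(P_X)}^2\le A^2\MR\, R_w^{\rm on}(\theta).
\end{align*}
Lemma~\ref{lem:supp-ce-weight-estimator} then reduces everything to bounding $\EE\inf_\theta R_w^{\rm on}(\theta)$.

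For this reverse step we use the compatibility condition \eqref{eq:online-compat}. For any $\theta$, the sequence $\delta_t=\tilde g_t-g^{\soft}(\cdot;\theta)$ lies in $\Delta^{\rm on}_{\MR}$ (modulo the restriction $p_{s,t}\le 1/2$, discussed below). The condition gives
\begin{align*}
R_w^{\rm on}(\theta)\le c_{\MR}^{-1}\, n_3^{-1}\sum_t\|\check f^t(\cdot;\theta)-\tilde f^t\|_{L_2(P_X)}^2 .
\end{align*}
Applying the triangle inequality through $f_0$, this is at most
\begin{align*}
2c_{\MR}^{-1}\Big\{n_3^{-1}\sum_t\|\check f^t(\cdot;\theta)-f_0\|^2+n_3^{-1}\sum_t\|\tilde f^t-f_0\|^2\Big\}.
\end{align*}
The second average is bounded in expectation by the online oracle inequality of Theorem~\ref{main-thm:softmax}. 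For the first, take $\theta$ to be the grid comparator $\theta^{(s^\circ)}$ from that proof. Lemma~\ref{lem:supp-evolving-population-comparator} and the softmax Lipschitz bound give expectation at most $C\{\fA_{\soft}+\MR\bar a_{n_3}^2+d\eta^2\}$. Since $\inf_\theta R_w^{\rm on}\le R_w^{\rm on}(\theta^{(s^\circ)})$, taking the outer expectation in \eqref{eq:online-weight-ce-bound} yields
\begin{align*}
\EE R_w^{\rm on}(\hat\theta_{\rm ce})\le C_{\MR}\{\fA_{\soft}+\bar a_{n_2}^2+\bar a_{n_3}^2+\log n/n\}.
\end{align*}
Combining this with the first display proves \eqref{eqn:softmax_practical_oracle}, since $\MR d\log(\MR dn)/n\lesssim_{\MR}\log n/n$.

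The main obstacle is the compatibility step: \eqref{eq:online-compat} is stated only for mixtures with $p_{s,t}\le 1/2$. The AFTER weights $\hat w_{s,t}$ may concentrate on a single grid point, with weight above $1/2$. The first approach I would try is the following. When some $\hat w_{s,t}>1/2$, split $\tilde g_t$ into that dominant atom and the remaining mixture, and use convexity of the softmax class to rewrite $\delta_t$ as a combination of members of $\Delta^{\rm on}_{\MR}$; duplicating the grid point into two halves also works if the net is allowed repeats. Failing that, I would simply read the assumption as covering all simplex weights, as the remark's operator interpretation suggests. A secondary point is the measurability of the comparator: $\theta^{(s^\circ)}$ is deterministic and the conditional expectations are over $\mathfrak{F}_{\rm on}$, so the nested expectations combine by the tower property.
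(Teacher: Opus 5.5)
Your reduction matches the paper's proof: compare $\hat f_{0,\rm ce}$ with $\tilde f_0$, bound the discrepancy by $A^2\MR\,R_w^{\rm on}(\hat\theta_{\rm ce})$ via Jensen, invoke Lemma~\ref{lem:supp-ce-weight-estimator}, reverse with \eqref{eq:online-compat}, route through $f_0$, and close with the online oracle bound and the deterministic grid comparator $\theta^{(s^\circ)}$.

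The gap is in the step you flagged, the restriction $p_{s,t}\le 1/2$ in $\Delta^{\rm on}_{\MR}$. None of your proposed fixes closes it. The softmax class $\{g^{\soft}(\cdot;\theta)\}$ is not convex. More to the point, \eqref{eq:online-compat} is a \emph{lower} bound of restricted-eigenvalue type, and such a bound holding on a set need not hold on its convex hull: the images under $T_t$ can cancel while the combination itself does not vanish. Duplicating grid points, or reading the assumption as covering all simplex weights, strengthens the hypothesis rather than using it. Duplicates under a uniform prior leave $\tilde g_t$ unchanged, so allowing repeats is exactly the same as imposing \eqref{eq:online-compat} for arbitrary mixtures of the distinct grid points.

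The missing step, which is how the paper proceeds, is a local perturbation of the AFTER weights that uses the fineness of the net.
\begin{itemize}
\item At each $t$ at most one weight exceeds $1/2$, say $\hat w_{1,t}$. Keep mass $1/2$ on $\theta^{(1)}$ and move the excess to a neighbouring grid point $\theta^{(2)}$ with $\|\theta^{(2)}-\theta^{(1)}\|_2\lesssim\eta$.
\item The new weight on $\theta^{(2)}$ is $\hat w_{1,t}+\hat w_{2,t}-1/2\le 1/2$. Hence the adjusted gate $\tilde g^{\rm adj}_t$ satisfies $\tilde g^{\rm adj}_t-g^{\soft}(\cdot;\theta)\in\Delta^{\rm on}_{\MR}$ for every $\theta$.
\item The change is $(\hat w_{1,t}-1/2)\{g^{\soft}(\cdot;\theta^{(2)})-g^{\soft}(\cdot;\theta^{(1)})\}$. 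The softmax Lipschitz bound then gives $\|\tilde g^{\rm adj}_t-\tilde g_t\|_{L_2(P_X)}^2+\|\tilde f^{\,t}_{\rm adj}-\tilde f^{\,t}\|_{L_2(P_X)}^2\lesssim d\eta^2$, where $\tilde f^{\,t}_{\rm adj}$ is the aggregate built from $\tilde g^{\rm adj}_t$.
\item Apply \eqref{eq:online-compat} to the adjusted sequence, then transfer both sides back using $(a+b)^2\le 2a^2+2b^2$.
\end{itemize}
This yields
\begin{align*}
R_w^{\rm on}(\theta)\le C_{\MR}\,\frac{1}{n_3}\sum_t\|\tilde f^{\,t}-\check f^{\,t}(\cdot;\theta)\|_{L_2(P_X)}^2+C\,d\eta^2 .
\end{align*}
For the mesh used in Theorem~\ref{main-thm:softmax}, $d\eta^2\lesssim \MR d/n$, which is absorbed into $C_{\MR}\log n/n$ for fixed $\MR$. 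With this replacement for your compatibility step, the rest of your argument goes through as written.
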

	
	\begin{proof}[Proof of Proposition~\ref{prop:supp-fixed-mr-ce}]
		Recall the $\sigma$-field $\mathfrak{F}_{\rm on}$ defined in Lemma \ref{lem:supp-ce-weight-estimator}. We first work conditionally on $\mathfrak{F}_{\rm on}$. For each $n_1+n_2\leq t\leq n_{\rm on}-1$, the shared part cancels in the difference between $\tilde f^{\,t}$ and $\check f^{\,t}(\cdot;\hat\theta_{\rm ce})$. Thus
		\begin{align*}
			\tilde f^{\,t}(x)-\check f^{\,t}(x;\hat\theta_{\rm ce}) = \sum_{m=1}^{\MR} \left\{\tilde g_{m,t}(x)-g_m^{\soft}(x;\hat\theta_{\rm ce})\right\} \hat f^{\,t}_{R,m}(x).
		\end{align*}
		By uniform boundedness of the routed experts and Cauchy's inequality,
		\begin{align*}
			|\tilde f^{\,t}(x)-\check f^{\,t}(x;\hat\theta_{\rm ce})|^2 \le C_{\MR} \sum_{m=1}^{\MR} \left|\tilde g_{m,t}(x)-g_m^{\soft}(x;\hat\theta_{\rm ce})\right|^2.
		\end{align*}
		Integrating with respect to $P_X$, averaging over $n_1+n_2\leq t\leq n_{\rm on}-1$, and applying Lemma~\ref{lem:supp-ce-weight-estimator} gives
		\begin{align*}
			\EE\left[\frac{1}{n_3}\sum_{t=n_1+n_2}^{n_{\rm on}-1} \|\tilde f^{\,t}-\check f^{\,t}(\cdot;\hat\theta_{\rm ce})\|_{L_2(P_X)}^2 \middle|\mathfrak{F}_{\rm on}\right] \le C_{\MR}\left\{\inf_{\theta\in\Theta_{\MR}}R_w^{\rm on}(\theta)+\frac{\log n}{n}\right\}.
		\end{align*}
		It remains to relate the weight approximation error to the corresponding prediction approximation error. For any $\theta\in\Theta_{\MR}$, define
		\begin{align*}
			\delta^{\theta}_{m,t}(x):=\tilde g_{m,t}(x)-g_m^{\soft}(x;\theta), \qquad m\in[\MR],\,\, n_1+n_2\leq t\leq n_{\rm on}-1.
		\end{align*}
		Since both $\tilde g_{m,t}(x)$ and $g^{\soft}(x;\theta)$ are probability vectors, $\sum_{m=1}^{\MR}\delta^{\theta}_{m,t}(x)$. Moreover, the ARM construction writes $\tilde g_{m,t}(x)$ as a finite convex combination of softmax weight vectors. For a time $t$, if one aggregation coefficient is larger than $1/2$, we split it by keeping mass $1/2$ on the original softmax vector and assigning the excess mass to the softmax vector with the nearest parameter; by compactness of $\Theta_{\MR}$ and continuity of the softmax map, this replacement can be made arbitrarily close in $L_2(P_X)$ and is absorbed by the closure in the definition of $\Delta_{\MR}^{\rm on}$. Hence $\delta^\theta=(\delta_t^\theta)_{n_1+n_2\leq t\leq n_{\rm on}-1}$ belongs to the softmax-difference class in Assumption \ref{ass:supp-ce-fixed-mr}.
		Consequently,
		\begin{align*}
			c_{\MR}R_w^{\rm on}(\theta)
			&\le \frac{1}{n_3}\sum_{t=n_1+n_2}^{n_{\rm on}-1} \left\|\sum_{m=1}^{\MR} \left(\tilde g_{m,t}-g_m^{\soft}(\cdot;\theta)\right) \hat f^{\,t}_{R,m}\right\|_{L_2(P_X)}^2 \\
			&= \frac{1}{n_3}\sum_{t=n_1+n_2}^{n_{\rm on}-1} \|\tilde f^{\,t}-\check f^{\,t}(\cdot;\theta)\|_{L_2(P_X)}^2.
		\end{align*}
		Taking the infimum over $\theta \in \Theta_{\MR}$ and absorbing $c_{\MR}^{-1}$ into $C_{\MR}$, we obtain
		\begin{align*}
			\inf_{\theta\in\Theta_{\MR}}R_w^{\rm on}(\theta)\le C_{\MR}\inf_{\theta\in\Theta_{\MR}}\frac{1}{n_3}\sum_{t=n_1+n_2}^{n_{\rm on}-1} \|\tilde f^{\,t}-\check f^{\,t}(\cdot;\theta)\|_{L_2(P_X)}^2.
		\end{align*}
		By Jensen's inequality, we have
		\begin{align*}
			\|\tilde f_0-\hat f_{0,\rm ce}\|_{L_2(P_X)}^2
			&= \left\|\frac{1}{n_3}\sum_{t=n_1+n_2}^{n_{\rm on}-1} \left\{\tilde f^{\,t}-\check f^{\,t}(\cdot;\hat\theta_{\rm ce})\right\}\right\|_{L_2(P_X)}^2 \\
			&\le \frac{1}{n_3}\sum_{t=n_1+n_2}^{n_{\rm on}-1} \|\tilde f^{\,t}-\check f^{\,t}(\cdot;\hat\theta_{\rm ce})\|_{L_2(P_X)}^2.
		\end{align*}
		Taking expectations in the preceding conditional bound and using the above bounds, we obtain
		\begin{align*}
			\EE\|\hat{f}_{0,\mathrm{ce}}-\tilde{f}_0\|_{L_2(P_X)}^2 \le C_{\MR}\left( \EE\inf_{\theta\in\Theta_{\MR}}\frac{1}{n_3}\sum_{t=n_1+n_2}^{n_{\rm on}-1} \|\tilde f^{\,t}-\check f^{\,t}(\cdot;\theta)\|_{L_2(P_X)}^2 + \frac{\log n}{n} \right).
		\end{align*}
		To control the projection term on the right-hand side, notice that
		\begin{align*}
			\inf_{\theta\in\Theta_{\MR}} \frac{1}{n_3}\sum_{t=n_1+n_2}^{n_{\rm on}-1} \|\tilde f^{\,t}-\check f^{\,t}(\cdot;\theta)\|_{L_2(P_X)}^2
			&\le \frac{2}{n_3}\sum_{t=n_1+n_2}^{n_{\rm on}-1}\|\tilde f^{\,t}-f_0\|_{L_2(P_X)}^2 \\
			&\quad+\inf_{\theta\in\Theta_{\MR}} \frac{2}{n_3}\sum_{t=n_1+n_2}^{n_{\rm on}-1}\|\check f^{\,t}(\cdot;\theta)-f_0\|_{L_2(P_X)}^2.
		\end{align*}
		Taking expectations and applying the ARM oracle bound for the online aggregated sequence $\{\tilde{f}^{t}\}$, together with the expert approximation bounds used in the proof of Theorem \ref{main-thm:softmax}, we obtain
		\begin{align*}
			\EE\|\hat{f}_{0,\mathrm{ce}}-\tilde{f}_0\|_{L_2(P_X)}^2 \le C_{\MR}\left( \fA_{\soft}(\MS,\MR) + a_{n_2}^2+a_{n_3}^2+\frac{\log n}{n} \right).
		\end{align*}
		Finally, by the squared triangle inequality,
		\begin{align*}
			\|\hat{f}_{0,\mathrm{ce}}-f_0\|_{L_2(P_X)}^2 \le 2\|\hat{f}_{0,\mathrm{ce}}-\tilde{f}_0\|_{L_2(P_X)}^2 + 2\|\tilde{f}_0-f_0\|_{L_2(P_X)}^2,
		\end{align*}
		and by the ARM oracle bound for time-averaged $\tilde{f}_0$, we obtain the stated bound for $\hat{f}_{0,\mathrm{ce}}$.
		
		Here we still need to confirm that under the potential adjustment for $\hat{w}$, the ARM oracle bound for $\tilde{f}_0$ still holds. In fact, for a given time $t$, recall that $\tilde{g}_{m,t}(x)=\sum_{s=1}^{S_{\eta}}\hat{w}_{s,t}g_m^{\soft}(x;\theta^{(s)})$. Without loss of generalization, assume $\hat{w}_{1,t}>1/2$ and $\theta^{(2)}$ is the nearest point to $\theta^{(1)}$ in $\Theta(\eta)$. Then, $\|\theta^{(2)}-\theta^{(1)}\|_2\leq \eta$. We adjust the weight as $\hat{w}_{1,t}':=1/2$, $\hat{w}_{2,t}':=\hat{w}_{1,t}+\hat{w}_{2,t}-1/2$, $\hat{w}_{s,t}':=\hat{w}_{s,t}$ for $s\geq 3$,
		\begin{align*}
			\tilde{g}'_{m,t}(x):=\sum_{s=1}^{S_{\eta}}\hat{w}_{s,t}'g_m^{\soft}(x;\theta^{(s)}),
		\end{align*}
		and
		\begin{align*}
			\tilde{f}^{t'}(x):=\sum_{\ell=1}^{\MS}\hat{f}^{t}_{\ell,S}(x)+\sum_{m=1}^{\MR}\tilde{g}'_{m,t}(x)\hat{f}^{t}_{m,R}(x).
		\end{align*}
		Then, we have
		\begin{align*}
			\tilde{f}^{t'}(x)-\tilde{f}^t(x)=(\hat{w}_{1,t}-1/2)\left(\check{f}^t(x;\theta^{(1)})-\check{f}^t(x;\theta^{(2)})\right).
		\end{align*}
		By the proof of Theorem \ref{main-thm:softmax}, we know that under the same choice of $\eta$,
		\begin{align*}
			\left\|\check{f}^t(x;\theta^{(1)})-\check{f}^t(x;\theta^{(2)})\right\|_{L_2(P_X)}^2\leq \frac{C\MR d}{n}:=\frac{C_{\MR}}{n}.
		\end{align*}
		Thus this adjustment only brings an additional $1/n$ term, which can be absorbed into the original ARM oracle bound for the online aggregated sequence $\{\tilde{f}^t\}$.
	\end{proof}

	\subsection{Specific Upper Bounds for Dense Softmax Gating in Section \ref{main-subsec:regionwise}}
	\label{subsec:supp-dense-softmax-upper-bounds-section-4-2}
	
	\subsubsection{Dense Softmax Upper Bound after Corollary \ref{main-cor:Top1-single}}
	\label{subsubsec:supp-dense-softmax-upper-bound-after-cor-4-5}
	
	Recall that we have a partition of $\cX$ into $\cX=\cup_{r=1}^{\Mzero}\cX_r$, up to $P_X$-null boundary sets, suppose that there exist vectors $\{v_r\in\RR^{d+1}\}_{r=1}^{\Mzero}$, normalized so that $\max_{r\in[\Mzero]}\|v_r\|_2\le 1$, such that, for $\bar{x}=(x^\top,1)^\top$ and each $r\in[\Mzero]$,
	\begin{align*}
		\cX_r=\{x\in\cX:\bar{x}^\top v_r>\bar{x}^\top v_j,\ \forall j\ne r\}.
	\end{align*}
	For each region $\cX_r$, define the pointwise margin function
	\begin{align}
		m_r(x):=v_r^\top\bar{x}-\max_{j\ne r} v_j^\top\bar{x}\ge 0, \qquad x\in\cX_r.
	\end{align}
	We impose the following condition on the distribution of $X$.
	
	\begin{assumption}[Margin regularity for linearly separable regions]\label{ass:supp-margin-regularity}
		Let $P_X$ be the distribution of $X$ and, for each $r$, denote by $G_r$ the random variable $G_r:=m_r(X)$ conditional on $X\in\cX_r$. Suppose that each $G_r$ admits a density $f_{G_r}$ on $[0,\infty)$ that is uniformly bounded:
		\begin{align*}
			\sup_{r\le \Mzero}\sup_{u\ge 0} f_{G_r}(u)\le L<\infty.
		\end{align*}
		Moreover, assume that the normalized parameter space contains a logarithmic inner box: for some constant $C_{\rm in}>2$,
		\begin{align*}
			[-C_{\rm in}\log \MR,C_{\rm in}\log \MR]^{(\MR-1)(d+1)} \subseteq \Theta_{\MR}.
		\end{align*}
	\end{assumption}
	The density condition controls the probability mass near the interfaces between regions. In particular, it implies that the margin variable $m_r(X)$ has at most linear probability mass near zero. This condition is satisfied, for example, when $X$ has a bounded density on a bounded domain and the separated hyperplanes are non-degenerate.
	
	We next record a dense-softmax approximation bound for the hard Top-1 region assignment.
	
	\begin{proposition}\label{prop:supp-linear-no-margin}
		Under the linearly separable partition condition above and Assumption~\ref{ass:supp-margin-regularity}, for $\MR\ge \Mzero$,
		\begin{align*}
			\inf_{\theta\in\Theta_{\MR}} \sum_{r=1}^{\Mzero}\int_{\cX_r}\|g^{\soft}(x;\theta)-e_r\|_2^2 dP_X(x) = O\left(\frac{\Mzero}{\log \MR}\right).
		\end{align*}
		In particular, when $\Mzero$ is fixed, this bound is of order $O(1/\log \MR)$.
	\end{proposition}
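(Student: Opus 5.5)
The plan is to exhibit a single explicit parameter $\theta$ with scores $\lambda\,\bar x^\top v_r$ for the specialists and very negative constant scores for the unused experts. For this choice we bound the regionwise loss through the margin variable $m_r(X)$, following the one-dimensional upper-bound argument in the proof of Example~\ref{main-ex:L1-L2}.

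\textbf{Construction.} Relabel the experts so that experts $1,\dots,\Mzero$ are the region indices. For $r\le \Mzero$ set $s_r(x)=\lambda\,\bar x^\top v_r$. For $j>\Mzero$ set $s_j(x)=-\lambda$. Normalize against the reference expert by subtracting its score; this leaves the softmax unchanged.

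\textbf{Parameter budget.} Since $\max_r\|v_r\|_2\le 1$, every coordinate of the normalized parameter is at most $2\lambda$ in absolute value. Take $\lambda=c\log\MR$ with $2c\le C_{\rm in}$, so that the inner-box part of Assumption~\ref{ass:supp-margin-regularity} places $\theta$ in $\Theta_{\MR}$. Choosing $c$ slightly above $1$ is possible because $C_{\rm in}>2$, and this is exactly where that hypothesis is used.

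\textbf{Pointwise bound.} Fix $x\in\cX_r$ with margin $u=m_r(x)$. Since $g^{\soft}$ lies in the simplex,
\begin{align*}
\|g^{\soft}(x)-e_r\|_2^2\le 2\{1-g_r^{\soft}(x)\}^2 .
\end{align*}
Each competing specialist satisfies $s_j-s_r\le-\lambda u$. For a dummy expert, $\bar x^\top v_r\ge -\|\bar x\|_2\ge -C\sqrt d$, so $s_j-s_r\le -\lambda+\lambda C\sqrt d$. This bound is useless as written, so the dummy scores should instead be taken as $-\lambda B$ with $B$ a constant exceeding $1+\sup_x\|\bar x\|_2$, while keeping $\lambda B\le C_{\rm in}\log\MR$; this means shrinking $c$, or noting that the ratio of $C_{\rm in}$ to the needed exponent gives room. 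With $\lambda(B-C\sqrt d)\ge \log\MR$, the dummies contribute in total at most $\MR e^{-\lambda(B-C\sqrt d)}\lesssim \MR^{-\kappa}$ for some $\kappa>0$. Hence
\begin{align*}
1-g_r^{\soft}(x)\le \frac{\sum_{j\ne r}e^{s_j-s_r}}{1+\sum_{j\ne r}e^{s_j-s_r}}\le \min\bigl\{1,\;(\Mzero-1)e^{-\lambda u}\bigr\}+\MR^{-\kappa}.
\end{align*}

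\textbf{Integration.} Square the pointwise bound and integrate over $\cX_r$. Assumption~\ref{ass:supp-margin-regularity} gives $G_r$ a density bounded by $L$, so
\begin{align*}
\int_0^\infty \min\{1,(\Mzero-1)^2e^{-2\lambda u}\}\,L\,du\le L\Bigl\{\frac{\log(\Mzero-1)}{\lambda}+\frac{1}{2\lambda}\Bigr\}.
\end{align*}
Weighting by $P_X(\cX_r)\le 1$ and summing over the $\Mzero$ regions yields $O\bigl(\Mzero(1+\log\Mzero)/\log\MR\bigr)+O(\MR^{-2\kappa})$.

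\textbf{Main obstacle.} The $\log\Mzero$ factor is the difficulty. To remove it, I would replace the crude union bound with a pairwise split: on $\cX_r$, $1-g_r\le\sum_{j\ne r}\sigma\bigl(-\lambda(v_r-v_j)^\top\bar x\bigr)$, and the sum is controlled by the nearest competitor $j$ achieving the margin, with the remaining competitors handled by a separate argument. Alternatively, I would accept a $\log\Mzero$ factor and absorb it by using $\Mzero\le\MR$ and enlarging $\lambda$ by a constant.

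The other delicate point is reconciling the dummy suppression, which needs a large $B$, with the box constraint. The plan is to fix $c$ and $B$ so that $cB\le C_{\rm in}$ and $c(B-C\sqrt d)>1$. This is feasible when $C_{\rm in}$ exceeds roughly $1+C\sqrt d$. If it is not, I would take $\lambda$ smaller for the specialists and rely on $\kappa>0$ coming from $C_{\rm in}>2$ applied to the dummy intercepts alone.
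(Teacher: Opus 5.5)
Your construction and pointwise step are essentially the paper's: specialist scores proportional to $\bar x^\top v_r$ at scale of order $\log\MR$, dummy experts pushed down by an intercept, and the loss on $\cX_r$ controlled through $m_r(X)$ and the bounded density of $G_r$. The gap is in the final summation, and your proposal then misdiagnoses it.

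Integrating over $\cX_r$ gives $P_X(\cX_r)\int_0^\infty(\cdots)f_{G_r}(u)\,du$. You bound $P_X(\cX_r)\le 1$ and add $\Mzero$ copies, which creates a spurious factor $\Mzero$. The resulting $O(\Mzero(1+\log\Mzero)/\log\MR)$ does not give the claim when $\Mzero$ grows with $\MR$; the statement covers every $\Mzero\le\MR$ with a constant free of $\Mzero$.

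Neither of your remedies repairs this.
\begin{itemize}
\item The pairwise split is left unfinished. The margin-regularity assumption only controls the minimal margin $m_r$, so each competitor term is still only bounded by $e^{-\lambda m_r}$, and you recover the union bound.
\item Enlarging $\lambda$ by a constant cannot absorb $\log\Mzero$, because the inner box caps $\lambda$ at order $\log\MR$. For $\Mzero\asymp\MR$, your term $\Mzero\log\Mzero/\lambda$ stays of order $\Mzero$.
\end{itemize}

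The missing observation is that the regions partition $\cX$, so $\sum_{r}P_X(\cX_r)=1$; this is what the paper uses. With it, your squared-and-truncated bound sums to a constant multiple of $L(1+\log\Mzero)/\lambda+\MR^{-2\kappa}=O((1+\log\Mzero)/\log\MR)$. This lies within $O(\Mzero/\log\MR)$ because $1+\log\Mzero\le\Mzero$. It is in fact sharper than the paper's bound, which uses the cruder $\|g^{\soft}-e_r\|_2^2\le 2(1-g_r^{\soft})$ without truncation at $1$ and so ends with $2(\Mzero-1)L/t$.

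Your worry about the parameter budget is unnecessary. The normalized dummy coordinates have size $\lambda(B+1)$, not $2\lambda$, so the constraints are $c(B+1)\le C_{\rm in}$ and $c(B-\sup_x\|\bar x\|_2)>1$. Both hold for any $C_{\rm in}>1$ by taking $B$ large and $c=C_{\rm in}/(B+1)$, so no condition of the form $C_{\rm in}\gtrsim 1+C\sqrt d$ is needed.

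The paper realizes the same decoupling, which is essentially your fallback. It takes specialist scale $t=C_{\rm in}\log\MR/(4B_{\cX}')$ with $B_{\cX}'=1\vee\max_x\|\bar x\|_2$, and dummy intercept $-C_{\rm in}\log\MR$ relative to the reference. The specialist score range $2tB_{\cX}'=\tfrac12 C_{\rm in}\log\MR$ then leaves a total dummy contribution of at most $\MR^{-(C_{\rm in}-2)/2}$.
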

	
	\begin{proof}[Proof of Proposition~\ref{prop:supp-linear-no-margin}]
		Since the softmax weights are invariant under a simultaneous relabeling of the routed experts and the corresponding assignment vectors, we may assume, without loss of generality, that the specialist for $\cX_{\Mzero}$ is used as the reference expert. We therefore impose the reference normalization $\theta_{\Mzero}=0$ in this proof. Define
		\begin{align*}
			B_{\cX}'=1\vee\max_{x\in\cX}\|\bar{x}\|_2, \qquad \bar{x}=(x^\top,1)^\top,
		\end{align*}
		and choose
		\begin{align*}
			t:=\frac{C_{\rm in}\log \MR}{4B_{\cX}'}.
		\end{align*}
		For $r=1,\ldots,\Mzero-1$, define $\tilde{\theta}_r:=t(v_r-v_{\Mzero})$ and set $\tilde{\theta}_{\Mzero}=0$. For the dummy experts $j=\Mzero+1,\ldots,\MR$, define $\tilde{\theta}_j=(0,\ldots,0,-C_{\rm in}\log \MR)^\top$. Because $\|v_r\|_2 \le 1$ and $B_{\cX}' \ge 1$, the above choice of $t$ ensures that the constructed parameters lie in the logarithmic inner box. Hence $\tilde{\theta} \in \Theta_{\MR}$.
		
		Fix $r\in[\Mzero]$ and $x\in\cX_r$, and let $g^{\soft}(x;\tilde{\theta})$ denote the corresponding softmax vector. Write
		\begin{align*}
			a_i(x)=v_i^\top\bar{x}, \quad i\in[\Mzero], \qquad a_{(-r)}(x):=\max_{j\ne r} v_j^\top\bar{x}.
		\end{align*}
		Then $m_r(x)=a_r(x)-a_{(-r)}(x)\ge 0$. For the constructed logits $\ell_j(x):=\tilde{\theta}_j^\top\bar{x}$, we have
		\begin{align*}
			\ell_j(x)=t(a_j(x)-a_{\Mzero}(x)),\quad j<\Mzero, \quad \ell_{\Mzero}(x)=0, \quad \ell_j(x)=-C_{\rm in}\log \MR, \quad j>\Mzero.
		\end{align*}
		Therefore
		\begin{align*}
			g_r^{\soft}(x;\tilde{\theta}) \ge \frac{1}{1+(\Mzero-1)\exp\{-tm_r(x)\}+\MR^{-(C_{\rm in}-2)/2}}.
		\end{align*}
		Indeed,
		\begin{align*}
			1+\sum_{j\ne r}\exp\{\ell_j(x)-\ell_r(x)\} \le 1 + \sum_{j\in[\Mzero],j\ne r}\exp\{-tm_r(x)\} + \sum_{j>\Mzero}\exp\{-C_{\rm in}\log \MR+t(a_{\Mzero}(x)-a_r(x))\}.
		\end{align*}
		The first sum is bounded by $(\Mzero-1)\exp\{-tm_r(x)\}$. For the dummy experts, we use
		\begin{align*}
			|a_{\Mzero}(x) - a_r(x)| \le \|v_{\Mzero} - v_r\|_2 \|\bar{x}\|_2 \le 2 B_{\cX}',
		\end{align*}
		and hence
		\begin{align*}
			t(a_{\Mzero}(x) - a_r(x)) \le 2 t B_{\cX}' = \frac{C_{\rm in}}{2} \log \MR.
		\end{align*}
		Thus,
		\begin{align*}
			\sum_{j>\Mzero}\exp\{-C_{\rm in}\log \MR+t(a_{\Mzero}(x)-a_r(x))\} \le (\MR-\Mzero)\exp\{C_{\rm in}\log \MR/2\} \le \MR^{-(C_{\rm in}-2)/2}.
		\end{align*}
		Combining the preceding bounds gives
		\begin{align*}
			1+\sum_{j\ne r}\exp\{\ell_j(x)-\ell_r(x)\} \le 1 + (\Mzero-1)\exp\{-tm_r(x)\} + \MR^{-(C_{\rm in}-2)/2},
		\end{align*}
		and therefore the displayed lower for $g_r^{\soft}(x;\tilde{\theta})$ follows.
		
		It follows that
		\begin{align*}
			1-g_r^{\soft}(x;\tilde{\theta}) \le (\Mzero-1)\exp\{-tm_r(x)\}+\MR^{-(C_{\rm in}-2)/2}.
		\end{align*}
		Since $0 \le g_j^{\soft} \le 1$ and $\sum_j g_j^{\soft} = 1$,
		\begin{align*}
			\|g^{\soft}(x;\tilde{\theta})-e_r\|_2^2 = (1-g_r^{\soft}(x;\tilde{\theta}))^2+\sum_{j\ne r}\{g_j^{\soft}(x;\tilde{\theta})\}^2 \le 2(1-g_r^{\soft}(x;\tilde{\theta})).
		\end{align*}
		Consequently,
		\begin{align*}
			\|g^{\soft}(x;\tilde{\theta})-e_r\|_2^2 \le 2(\Mzero-1)\exp\{-tm_r(x)\}+2\MR^{-(C_{\rm in}-2)/2}.
		\end{align*}
		Integrating over $\cX_r$ and summing over $r$ yields
		\begin{align*}
			\sum_{r=1}^{\Mzero}\int_{\cX_r}\|g^{\soft}(x;\tilde{\theta})-e_r\|_2^2 dP_X(x) \le 2(\Mzero-1)\sum_{r=1}^{\Mzero}\int_{\cX_r} \exp\{-tm_r(x)\} dP_X(x) + 2\Mzero\MR^{-(C_{\rm in}-2)/2}.
		\end{align*}
		Let $p_r:=\PP(X\in\cX_r)$. By the definition of $G_r$,
		\begin{align*}
			\int_{\cX_r} \exp\{-tm_r(x)\} dP_X(x) = p_r\int_0^\infty \exp\{-tu\}f_{G_r}(u)du.
		\end{align*}
		Using the uniform density bound,
		\begin{align*}
			\int_{\cX_r} \exp\{-tm_r(x)\} dP_X(x) \le p_rL\int_{0}^{\infty} \exp\{-tu\}du = p_r\frac{L}{t}.
		\end{align*}
		Summing over $r$ and using $\sum_r p_r=1$, we obtain
		\begin{align*}
			\sum_{r=1}^{\Mzero}\int_{\cX_r}\|g^{\soft}(x;\tilde{\theta})-e_r\|_2^2 dP_X(x) \le 2(\Mzero-1)\frac{L}{t}+2\Mzero\MR^{-(C_{\rm in}-2)/2}.
		\end{align*}
		Since $t=C_{\rm in} \log \MR /(4 B_{\cX}')$, the first term is of order $\Mzero/\log \MR$. The second term is polynomially small in $\MR$ because $C_{\rm in} > 2$. Hence
		\begin{align*}
			\sum_{r=1}^{\Mzero}\int_{\cX_r}\|g^{\soft}(x;\tilde{\theta})-e_r\|_2^2 dP_X(x) = O\left(\frac{\Mzero}{\log \MR}\right).
		\end{align*}
		Taking the infimum over $\theta \in \Theta_{\MR}$ completes the proof.
	\end{proof}
	Proposition~\ref{prop:supp-linear-no-margin} yields the following risk upper bound for dense softmax gating in the setting of Corollary \ref{main-cor:Top1-single}.
	
	\begin{corollary}\label{cor:supp-example-softmax-upper1}
		Assume the dense-softmax oracle assumptions in the main text and Assumption~\ref{ass:supp-margin-regularity} hold. Let $\hat{f}_0$ be the estimator produced by Algorithm 1 in the main text. Then
		\begin{align*}
			\EE\|\hat{f}_0-f_0\|_{L_2(P_X)}^2 \le C\left( \sum_{r=1}^{\Mzero}\|f_0-f_r^*\|_{\cX_r}^2 + \MR (\bar a_{n_2}^2+\bar a_{n_3}^2) + \frac{\MR d}{n}\log(\MR d n) + \frac{\Mzero}{\log \MR} \right),
		\end{align*}
		where $C$ is a constant independent of $n$, $\Mzero$ and $\MR$.
	\end{corollary}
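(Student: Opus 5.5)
The plan is to combine the dense-softmax oracle inequality of Theorem~\ref{main-thm:softmax}, applied with $\MS=0$, with a bound on $\fA_{\soft}(0,\MR)$ that decomposes it into a regionwise expert error plus a gating-assignment error. Theorem~\ref{main-thm:softmax} already gives
\begin{align*}
\EE\|\hat f_0-f_0\|_{L_2(P_X)}^2\le C\Bigl\{\fA_{\soft}(0,\MR)+\MR(\bar a_{n_2}^2+\bar a_{n_3}^2)+\frac{\MR d}{n}\log(\MR dn)\Bigr\},
\end{align*}
with $C$ independent of $n$ and $\MR$. So the remaining task is to show
\begin{align*}
\fA_{\soft}(0,\MR)\le C\Bigl\{\sum_{r=1}^{\Mzero}\|f_0-f_r^*\|_{\cX_r}^2+\frac{\Mzero}{\log\MR}\Bigr\}.
\end{align*}

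For any $\theta\in\Theta_{\MR}$, I would repeat the argument from the oracle decomposition for gating approximation, now on the partition $\{\cX_r\}_{r=1}^{\Mzero}$. On $\cX_r$, write
\begin{align*}
f_0-\sum_m g_m^{\soft}f_m^*=(f_0-f_r^*)+\sum_m(e_{r,m}-g_m^{\soft})f_m^*.
\end{align*}
Since the experts are bounded by $A$ (Assumption~\ref{main-ass:bounded-experts}) and $g^{\soft}(x;\theta)$ lies in the simplex, the second term is at most $2A(1-g_r^{\soft})\le 2A\|g^{\soft}-e_r\|_2$ in absolute value. Squaring, integrating over each $\cX_r$ and summing over $r$ gives
\begin{align*}
\Bigl\|f_0-\sum_m g_m^{\soft}(\cdot;\theta)f_m^*\Bigr\|_{L_2(P_X)}^2\le 2\sum_r\|f_0-f_r^*\|_{\cX_r}^2+8A^2\sum_r\int_{\cX_r}\|g^{\soft}(x;\theta)-e_r\|_2^2\,dP_X(x).
\end{align*}
This requires $\MR\ge\Mzero$ and a relabeling so that the specialist $f_r^*$ occupies slot $r$. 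Any remaining routed experts act as dummies, and softmax is equivariant under relabeling.

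Next I take the infimum over $\theta$. The gating term is exactly the quantity controlled by Proposition~\ref{prop:supp-linear-no-margin}, which applies under the linear-separability representation and Assumption~\ref{ass:supp-margin-regularity}; it bounds the gating term by $O(\Mzero/\log\MR)$. The one point to check is compatibility of that construction with the parameter constraints of Theorem~\ref{main-thm:softmax}. The parameter $\tilde\theta$ built in that proof has entries of order $\log\MR$. It lies in the inner box of Assumption~\ref{ass:supp-margin-regularity}, so it belongs to $\Theta_{\MR}$; Assumption~\ref{main-ass:compact-theta} then just requires the outer radius $C_1\log\MR$ to dominate, which is consistent. I also need the reference normalization in that proof (expert $\Mzero$ as reference) to agree with the convention $\theta_{\MR}=0$. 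This follows by shift invariance: subtracting a common score changes neither the weights nor box membership, up to a constant factor in the radius, which I absorb by choosing $C_{\rm in}$ relative to $C_1$.

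Substituting the bound on $\fA_{\soft}(0,\MR)$ into the oracle inequality yields the corollary, with a constant depending only on $A$, $L$, $B_{\cX}'$, $C_{\rm in}$ and the constants of Theorem~\ref{main-thm:softmax}, hence independent of $n$, $\Mzero$ and $\MR$. The main obstacle is this parameter-space bookkeeping: relabeling the experts, handling the normalization, and keeping the inner box inside $\Theta_{\MR}$. The analytic content is already contained in the earlier results.
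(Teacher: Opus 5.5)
Your proposal is correct and follows the paper's proof essentially line by line. You apply the dense-softmax oracle inequality, split the approximation term regionwise using the pointwise bound $2A(1-g_r^{\soft})\le 2A\|g^{\soft}-e_r\|_2$, and then invoke the $O(\Mzero/\log\MR)$ dense-softmax approximation bound for the hard regionwise assignment. One bookkeeping remark: handle the reference normalization by relabeling so that the $\cX_{\Mzero}$ specialist sits in slot $\MR$, as the paper does, rather than by shifting scores. A shift can push intercepts beyond the inner-box radius $C_{\rm in}\log\MR$, and $C_{\rm in}$ is fixed by the margin-regularity assumption, so you cannot choose it.
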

	
	\begin{proof}[Proof of Corollary~\ref{cor:supp-example-softmax-upper1}]
		We first apply the dense-softmax oracle inequality in Theorem \ref{main-thm:softmax}. This gives
		\begin{align*}
			\EE\|\hat{f}_0-f_0\|_{L_2(P_X)}^2 \le C\left(\inf_{\theta\in\Theta_{\MR}} \left\| f_0- \sum_{m=1}^{\MR}g_m^{\soft}(x;\theta)f_{m}^* \right\|_{L_2(P_X)}^2 + \MR(\bar a_{n_2}^2+\bar a_{n_3}^2) + \frac{\MR d}{n}\log(\MR d n) \right).
		\end{align*}
		It remains to bound the approximation term. For any $\theta \in \Theta_{\MR}$,
		\begin{align*}
			\left\|f_0(x)-\sum_{m=1}^{\MR}g_m^{\soft}(x;\theta)f_{m}^*(x)\right\|_{L_2(P_X)}^2 = \sum_{r=1}^{\Mzero} \left\|\left(f_0(x)-\sum_{m=1}^{\MR}g_m^{\soft}(x;\theta)f_{m}^*(x)\right)\one_{\cX_r}\right\|_{L_2(P_X)}^2.
		\end{align*}
		Using the squared triangle inequality on each region, we obtain
		\begin{align*}
			\left\|f_0-\sum_{m=1}^{\MR}g_m^{\soft}(\cdot;\theta)f_{m}^*\right\|_{L_2(P_X)}^2 \le 2\sum_{r=1}^{\Mzero}\|(f_0-f_r^*)\one_{\cX_r}\|_{L_2(P_X)}^2 + 2\sum_{r=1}^{\Mzero} \left\|\left(f_r^*-\sum_{m=1}^{\MR}g_m^{\soft}(\cdot;\theta)f_{m}^*\right)\one_{\cX_r} \right\|_{L_2(P_X)}^2.
		\end{align*}
		Taking the infimum over $\theta \in \Theta_{\MR}$, we get
		\begin{align*}
			\inf_{\theta \in \Theta_{\MR}} \left\|f_0-\sum_{m=1}^{\MR}g_m^{\soft}(\cdot;\theta)f_{m}^*\right\|_{L_2(P_X)}^2
			&\le 2\sum_{r=1}^{\Mzero}\|(f_0-f_r^*)\one_{\cX_r}\|_{L_2(P_X)}^2 \\
			&\quad+ 2 \inf_{\theta \in \Theta_{\MR}} \sum_{r=1}^{\Mzero} \left\|\left(f_r^*-\sum_{m=1}^{\MR}g_m^{\soft}(\cdot;\theta)f_{m}^*\right)\one_{\cX_r} \right\|_{L_2(P_X)}^2.
		\end{align*}
		We now control the second term. Since the expert functions are uniformly bounded by $A$, for $x \in \cX_r$,
		\begin{align*}
			\left|f_r^*(x)-\sum_{m=1}^{\MR}g_m^{\soft}(x;\theta)f_{m}^*(x)\right|
			&= \left| (1-g_r^{\soft}(x;\theta))f_r^*(x) - \sum_{m\ne r}g_m^{\soft}(x;\theta)f_m^*(x) \right| \\
			&\le (1-g_r^{\soft}(x;\theta))|f_r^*(x)| + \sum_{m\ne r}g_m^{\soft}(x;\theta)|f_m^*(x)| \\
			&\le 2A(1-g_r^{\soft}(x;\theta)).
		\end{align*}
		Therefore,
		\begin{align*}
			\sum_{r=1}^{\Mzero} \left\|\left(f_r^*-\sum_{m=1}^{\MR}g_m^{\soft}(\cdot;\theta)f_{m}^*\right)\one_{\cX_r} \right\|_{L_2(P_X)}^2 \le 4A^2 \sum_{r=1}^{\Mzero} \|(e_r-g^{\soft}(\cdot;\theta))\one_{\cX_r}\|_{L_2(P_X)}^2,
		\end{align*}
		where we used $(1 - g_r^{\soft}(x;\theta))^2 \le \|e_r - g^{\soft}(x;\theta)\|_2^2$. Combining the preceding displays and applying Proposition \ref{prop:supp-linear-no-margin} gives
		\begin{align*}
			\inf_{\theta\in\Theta_{\MR}} \left\|f_0(x)-\sum_{m=1}^{\MR}g_m^{\soft}(x;\theta)f_{m}^*(x)\right\|_{L_2(P_X)}^2 \le C \left( \sum_{r=1}^{\Mzero}\|f_0-f_r^*\|_{\cX_r}^2 + \frac{\Mzero}{\log \MR} \right).
		\end{align*}
		Substituting this bound into the dense-softmax oracle inequality yields the desired result.
	\end{proof}

	\subsubsection{Dense Softmax Upper Bound after Proposition \ref{main-prop: Top-1 mixing}}
	\label{subsubsec:supp-dense-softmax-upper-bound-after-prop-4-6}
	
	After relabeling the routed experts, assume that for each $r\in[\Mzero]$, the best two-expert convex combination on region $\cX_r$ is $w_{r1}f_{2r-1}^*+w_{r2}f_{2r}^*$. This is only a notational simplification of the general two-expert pair in the main text. Denote $\mathbf{w}_r:=(0,\ldots,0,w_{r1},w_{r2},0,\ldots,0)$, where $w_{r1}$ is on the $(2r-1)$-th entry and $w_{r2}$ is on the $2r$-th entry. We next give the corresponding dense-softmax approximation bound analogue to Proposition~\ref{prop:supp-linear-no-margin}.
	
	\begin{proposition}\label{prop:supp-linear-top2-no-margin}
		Under the linearly separable partition condition above and Assumption~\ref{ass:supp-margin-regularity}, further assume that $w_{ri}$ are bounded away from $0$ in the sense that $\max_{r \in [\MR],i=1,2}|\log w_{ri}|<C_w$ for some constant $C_w<\infty$. Then, for $\MR\ge 2\Mzero$,
		\begin{align*}
			\inf_{\theta\in\Theta_{\MR}} \sum_{r=1}^{\Mzero}\int_{\cX_r}\|g^{\soft}(x;\theta)-\mathbf{w}_r\|_2^2 dP_X(x) = O\left(\frac{\Mzero}{\log \MR}\right).
		\end{align*}
		In particular, when $\Mzero$ is fixed, this bound is of order $O(1/\log \MR)$.
	\end{proposition}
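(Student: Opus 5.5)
The construction follows the proof of Proposition~\ref{prop:supp-linear-no-margin}, with each specialist replaced by a pair of experts that share the same region-selecting slope. The intercepts within each pair encode the target mixing weights.

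We impose a reference normalization on one expert of the pair for $\cX_{\Mzero}$. Take $B_{\cX}'$ as before and set $t:=c\log\MR$, with $c$ small enough that all constructed coordinates lie in the inner box $[-C_{\rm in}\log\MR,C_{\rm in}\log\MR]$. For $r\in[\Mzero]$ and $i=1,2$ we set
\begin{align*}
\tilde\theta_{2r-1}=t(v_r-v_{\Mzero})+(0,\ldots,0,\log w_{r1}-\log w_{\Mzero,1})^\top,\qquad
\tilde\theta_{2r}=t(v_r-v_{\Mzero})+(0,\ldots,0,\log w_{r2}-\log w_{\Mzero,1})^\top .
\end{align*}
The dummy experts $j>2\Mzero$ get intercept $-C_{\rm in}\log\MR$. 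Since $|\log w_{ri}|<C_w$, the intercept shifts are $O(1)$ and stay inside the box for large $\MR$. The remaining finitely many values of $\MR$ are absorbed into the constant.

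Fix $x\in\cX_r$. Up to a common additive constant, the logit of expert $(2r-i')$ equals $t\,a_r(x)+\log w_{ri}$, where $i'\in\{0,1\}$ matches $i$. So the two weights of the active pair are proportional to $w_{r1}$ and $w_{r2}$. Write $Z_r=e^{ta_r}(w_{r1}+w_{r2})=e^{ta_r}$ for the total mass of pair $r$. The mass of any other pair $j\le\Mzero$ is at most $e^{t a_j}\le e^{t a_r}e^{-tm_r(x)}$. The dummies contribute at most $\MR^{-(C_{\rm in}-2)/2}$ relative to $Z_r$, by the same computation as before.

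Hence the softmax restricted to the active pair equals exactly $(w_{r1},w_{r2})$ times $(1-\epsilon_r(x))$, with
\begin{align*}
\epsilon_r(x)\le (\Mzero-1)e^{-tm_r(x)}+\MR^{-(C_{\rm in}-2)/2},
\end{align*}
and the off-pair coordinates sum to $\epsilon_r(x)$. It follows that $\|g^{\soft}(x;\tilde\theta)-\mathbf w_r\|_2^2\le\|\cdot\|_1^2\le 4\epsilon_r(x)^2\le 4\epsilon_r(x)$.

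We then integrate using the margin density bound. This gives $\int_{\cX_r}e^{-tm_r}\,dP_X\le p_rL/t$, and summing over $r$ yields the bound $O(\Mzero/\log\MR)+O(\Mzero\MR^{-(C_{\rm in}-2)/2})$. Taking the infimum over $\theta$ finishes the proof.

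The main obstacle is the bookkeeping that keeps every parameter inside the logarithmic inner box after the intercept shifts. This requires choosing $c<C_{\rm in}/(4B_{\cX}')$ and using $C_w=O(1)$. The dummy-mass bound must also be rechecked, since the shifts change $t(a_{\Mzero}-a_r)$ only by an additive $O(1)$. That is harmless after enlarging the constant, because $(\MR-2\Mzero)\MR^{-C_{\rm in}/2}e^{O(1)}$ is still polynomially small.
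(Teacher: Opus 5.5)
Your proposal is correct and follows essentially the same construction as the paper: each region gets a pair of experts sharing the slope $t(v_r-v_{\Mzero})$ with $t\asymp\log\MR$, intercept shifts encode the within-pair weights, and the dummy experts sit at intercept $-C_{\rm in}\log\MR$. The remaining steps also match: exact proportionality within the active pair gives $\|g^{\soft}-\mathbf w_r\|_2^2\lesssim\epsilon_r(x)$, and the margin-density integration gives $\int_{\cX_r}e^{-tm_r}\,dP_X\le p_rL/t$. The only difference is cosmetic: the paper places $c_r=\log(w_{r1}/w_{r2})$ on one expert of each pair, so its cross-pair ratio carries a $C_w$-dependent constant. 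By encoding both $\log w_{ri}$ and using $w_{r1}+w_{r2}=1$, you make every pair's mass $e^{ta_r}$ up to a common factor, which yields the slightly cleaner bound $(\Mzero-1)e^{-tm_r(x)}$.
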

	
	\begin{proof}[Proof of Proposition~\ref{prop:supp-linear-top2-no-margin}]
		As in the proof of Proposition \ref{prop:supp-linear-no-margin}, we may relabel the routed experts and use the specialist indexed by $2 \Mzero$ as the reference expert. Thus, in the proof we impose the reference normalization $\theta_{2\Mzero}=0$. Let
		\begin{align*}
			B_{\cX}'=1\vee\max_{x\in\cX}\|\bar{x}\|_2, \qquad \bar{x}=(x^\top,1)^\top,
		\end{align*}
		and choose
		\begin{align*}
			t=\frac{C_{\rm in}\log \MR}{4B_{\cX}'}.
		\end{align*}
		For $r = 1, \ldots, \Mzero$, define $c_r=\log w_{r1}-\log w_{r2}$. Then $|c_r| \le 2 C_w$. Define the candidate parameters by
		\begin{align*}
			\tilde{\theta}_{2r-1} = t(v_r-v_{\Mzero})+(0,\ldots,0,c_r)^\top, \qquad \tilde{\theta}_{2r} = t(v_r-v_{\Mzero}), \qquad r=1,\dots,\Mzero,
		\end{align*}
		and set $\tilde{\theta}_j=(0,\ldots,0,-C_{\rm in}\log \MR)^\top$ for $j>2\Mzero$. The boundedness of $c_r$, the normalization $\|v_r\| \le 1$, and the logarithmic inner-box condition imply that $\tilde{\theta} \in \Theta_{\MR}$ for all sufficiently large $\MR$. Finitely many smaller values of $\MR$ are absorbed into the constant in the final bound.
		
		Fix $r\in[\Mzero]$ and $x\in\cX_r$. Write
		\begin{align*}
			a_i(x)=v_i^\top\bar{x}, \quad i\in[\Mzero], \qquad a_{(-r)}(x)=\max_{j\ne r} v_j^\top\bar{x}.
		\end{align*}
		Then $m_r(x)=a_r(x)-a_{(-r)}(x)\ge 0$. Let $\ell_j(x):=\tilde{\theta}_j^\top\bar{x}$. For the two experts assigned to region $\cX_r$, $\ell_j(x)=-C_{\rm in}\log \MR, j>2\Mzero$. Define the relative contribution of all experts outside the pair $\{2r-1,2r\}$ by
		\begin{align*}
			D_r(x)=\sum_{j\notin\{2r-1,2r\}}\exp\{\ell_j(x)-\ell_{2r}(x)\}.
		\end{align*}
		Then
		\begin{align*}
			g_{2r-1}^{\soft}(x;\tilde{\theta}) = \frac{\exp\{c_r\}}{1+\exp\{c_r\}+D_r(x)}, \qquad g_{2r}^{\soft}(x;\tilde{\theta}) = \frac{1}{1+\exp\{c_r\}+D_r(x)}.
		\end{align*}
		We next bound $D_r(x)$. For $q \ne r$, the margin condition gives $a_q(x)-a_r(x) \le -m_r(x)$. Therefore, the contribution from the non-target region pairs satisfies
		\begin{align*}
			\sum_{q \ne r} \left\{\exp\{\ell_{2q-1}(x) - \ell_{2r}(x)\} + \exp\{\ell_{2q}(x) - \ell_{2r}(x)\}\right\} \le C(\Mzero-1)\exp\{-tm_r(x)\},
		\end{align*}
		where $C$ depends only on the weight lower bound through $C_w$. For the dummy experts, using $|a_{\Mzero}(x)-a_r(x)| \le \|v_{\Mzero} - v_r\|_2 \|\bar{x}\|_2 \le 2 B_{\cX}'$, we obtain
		\begin{align*}
			\sum_{j>2\Mzero}\exp\{-C_{\rm in}\log \MR+t(a_{\Mzero}(x)-a_r(x))\} \le \MR^{-(C_{\rm in}-2)/2}.
		\end{align*}
		Hence
		\begin{align*}
			D_r(x) \le C(\Mzero-1)\exp\{-tm_r(x)\}+\MR^{-(C_{\rm in}-2)/2}.
		\end{align*}
		Since $\exp\{c_r\}/(1+\exp\{c_r\})=w_{r1}$ and $1/(1+\exp\{c_r\})=w_{r2}$, the two target softmax weights have the same relative proportions as $(w_{r1}, w_{r2})$. Moreover, $g_{2r-1}^{\soft}(x;\tilde{\theta})+g_{2r}^{\soft}(x;\tilde{\theta}) = \frac{1+\exp\{c_r\}}{1 + \exp\{c_r\} + D_r(x)}$, so
		\begin{align*}
			1-g_{2r-1}^{\soft}(x;\tilde{\theta})-g_{2r}^{\soft}(x;\tilde{\theta}) = \frac{D_r(x)}{1+\exp\{c_r\}+D_r(x)} \le D_r(x).
		\end{align*}
		Consequently
		\begin{align*}
			\|g^{\soft}(x;\tilde{\theta})-\mathbf{w}_r\|_2^2 = (w_{r1}-g_{2r-1}^{\soft}(x;\tilde{\theta}))^2 + (w_{r2}-g_{2r}^{\soft}(x;\tilde{\theta}))^2 + \sum_{j\notin\{2r-1,2r\}} g_j^{\soft}(x;\tilde{\theta})^2.
		\end{align*}
		Because the first two softmax weights are proportional to $(w_{r1}, w_{r2})$, their deviation from $(w_{r1}, w_{r2})$ is controlled by the missing mass assigned outside the pair. Thus
		\begin{align*}
			\|g^{\soft}(x;\tilde{\theta})-\mathbf{w}_r\|_2^2 \le 2(1-g_{2r-1}^{\soft}(x;\tilde{\theta})-g_{2r}^{\soft}(x;\tilde{\theta})) \le 2D_r(x).
		\end{align*}
		Combining this with the preceding bound on $D_r(x)$, we obtain the pointwise inequality
		\begin{align*}
			\|g^{\soft}(x;\tilde{\theta})-\mathbf{w}_r\|_2^2 \le C(\Mzero-1)\exp\{-tm_r(x)\}+2\MR^{-(C_{\rm in}-2)/2}.
		\end{align*}
		Integrating over $\cX_r$ and summing over $r$ gives
		\begin{align*}
			\sum_{r=1}^{\Mzero}\int_{\cX_r}\|g^{\soft}(x;\tilde{\theta})-\mathbf{w}_r\|_2^2 dP_X(x) \le C(\Mzero-1)\sum_{r=1}^{\Mzero}\int_{\cX_r} \exp\{-tm_r(x)\} dP_X(x) + 2\Mzero\MR^{-(C_{\rm in}-2)/2}.
		\end{align*}
		The same margin-distribution calculation as in the proof of Proposition~\ref{prop:supp-linear-no-margin} gives
		\begin{align*}
			\sum_{r=1}^{\Mzero}\int_{\cX_r}\|g^{\soft}(x;\tilde{\theta})-\mathbf{w}_r\|_2^2 dP_X(x) \le C(\Mzero-1)\frac{L}{t}+2\Mzero\MR^{-(C_{\rm in}-2)/2}.
		\end{align*}
		Since $t \asymp \log \MR$ and $C_{\rm in} > 2$, the right-hand side is of order $O(\Mzero/\log \MR)$. Taking the infimum over $\theta \in \Theta_{\MR}$ completing the proof.
	\end{proof}
	
	This yields the following risk upper bound for dense softmax gating in the setting of Corollary \ref{main-cor:kernel-interior-main}.
	
	\begin{corollary}\label{cor:supp-example-softmax-upper2}
		Assume the dense-softmax oracle assumptions in the main text and Assumption~\ref{ass:supp-margin-regularity} hold. Assume further that $w_{r1} + w_{r2} = 1, w_{ri} > 0$, and the weights are uniformly bounded away from zero in the sense that $\max_{r,i}|\log w_{ri}|<C_w$ for some constant $C_w<\infty$. Let $\hat{f}_0$ denote the estimator produced by Algorithm 1 in the main text. Then
		\begin{align*}
			\EE\|\hat{f}_0-f_0\|_{L_2(P_X)}^2 \le C\left(\sum_{r=1}^{\Mzero} \|f_0-w_{r1}f_{r,1}^*-w_{r2}f_{r,2}^*\|_{\cX_r}^2 + \MR (\bar a_{n_2}^2+\bar a_{n_3}^2) + \frac{\MR d}{n}\log(\MR d n) + \frac{\Mzero}{\log \MR} \right),
		\end{align*}
		where $C$ is a constant independent of $n$, $\Mzero$ and $\MR$.
	\end{corollary}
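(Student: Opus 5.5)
The plan mirrors the proof of Corollary~\ref{cor:supp-example-softmax-upper1}, with Proposition~\ref{prop:supp-linear-top2-no-margin} taking the place of Proposition~\ref{prop:supp-linear-no-margin}. We first apply Theorem~\ref{main-thm:softmax} with no shared experts, so that $\fA_{\soft}(0,\MR)=\inf_{\theta\in\Theta_{\MR}}\|f_0-\sum_m g_m^{\soft}(\cdot;\theta)f_m^*\|_{L_2(P_X)}^2$. This bounds $\EE\|\hat f_0-f_0\|_{L_2(P_X)}^2$ by $C\{\fA_{\soft}(0,\MR)+\MR(\bar a_{n_2}^2+\bar a_{n_3}^2)+\frac{\MR d}{n}\log(\MR dn)\}$. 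The constant here is independent of $n$ and $\MR$, and it does not depend on $\Mzero$, since $\Mzero$ enters only through the approximation term.

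Next we decompose the approximation term region by region. After relabeling so that the optimal pair on $\cX_r$ is $(2r-1,2r)$, we write $\mathbf{w}_r^\top f^*(x)=w_{r1}f_{r,1}^*(x)+w_{r2}f_{r,2}^*(x)$. For any $\theta$ and any $x\in\cX_r$ we split
\[
f_0(x)-\sum_m g_m^{\soft}(x;\theta)f_m^*(x)=\{f_0(x)-\mathbf{w}_r^\top f^*(x)\}+\sum_m\{\mathbf{w}_{r,m}-g_m^{\soft}(x;\theta)\}f_m^*(x).
\]
We then apply $(a+b)^2\le 2a^2+2b^2$ and integrate over $\cX_r$.

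For the second piece, Assumption~\ref{main-ass:bounded-experts} together with the fact that both $\mathbf{w}_r$ and $g^{\soft}$ lie in the simplex gives the pointwise bound $A\|\mathbf{w}_r-g^{\soft}(x;\theta)\|_1$. This is the step that needs the most care. The bound from Proposition~\ref{prop:supp-linear-top2-no-margin} is in $L_2$, while we now have an $\ell_1$ norm, and the naive inequality $\|\cdot\|_1\le\sqrt{\MR}\|\cdot\|_2$ would lose a factor of $\MR$. To avoid this, we split the $\ell_1$ norm into the two in-pair coordinates and the out-of-pair coordinates. The out-of-pair mass equals $1-g_{2r-1}^{\soft}-g_{2r}^{\soft}$. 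The in-pair deviation is controlled by that same missing mass, as in the proof of Proposition~\ref{prop:supp-linear-top2-no-margin}. Together these give $|\cdot|\le 4A\,(1-g^{\soft}_{2r-1}-g^{\soft}_{2r})$ for the constructed $\tilde\theta$.

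Squaring, and using that the missing mass lies in $[0,1]$, we bound the square by a constant times the missing mass, which that proof showed is at most $D_r(x)\le C(\Mzero-1)e^{-tm_r(x)}+\MR^{-(C_{\rm in}-2)/2}$. Integrating with the margin density bound from Assumption~\ref{ass:supp-margin-regularity}, and reusing the calculation from that proposition, gives $O(\Mzero/\log\MR)$.

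Finally, we take $\theta=\tilde\theta$ from the proof of Proposition~\ref{prop:supp-linear-top2-no-margin}. This uses that $\tilde\theta\in\Theta_{\MR}$, which is where the bounded $\log w_{ri}$ assumption and the inner logarithmic box enter. Summing over $r$ then gives
\[
\fA_{\soft}(0,\MR)\le C\Big\{\sum_{r=1}^{\Mzero}\|f_0-w_{r1}f_{r,1}^*-w_{r2}f_{r,2}^*\|_{\cX_r}^2+\frac{\Mzero}{\log\MR}\Big\}.
\]
Substituting this into the oracle inequality from the first step yields the claimed bound. Small values of $\MR$ are absorbed into $C$.
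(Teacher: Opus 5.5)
Your argument is correct and has the same overall structure as the paper's proof: apply Theorem~\ref{main-thm:softmax} with $\MS=0$, split the approximation term region by region, and then use the $O(\Mzero/\log\MR)$ weight-approximation bound. The one real difference is in how you control the cross term $\sum_m(\mathbf{w}_{r,m}-g^{\soft}_m)f_m^*$.

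\textbf{Your route.} You bound the cross term by $A\|\mathbf{w}_r-g^{\soft}(x;\theta)\|_1$. To avoid a $\sqrt{\MR}$ loss when passing to $\ell_2$, you specialize to the constructed $\tilde\theta$ and reopen the proof of Proposition~\ref{prop:supp-linear-top2-no-margin}. There you use that the in-pair weights are proportional to $(w_{r1},w_{r2})$, which reduces everything to the missing mass $1-g^{\soft}_{2r-1}-g^{\soft}_{2r}\le D_r(x)$.

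\textbf{The paper's route.} It notes that, because $w_{r1}+w_{r2}=1$, the out-of-pair mass equals $(w_{r1}-g^{\soft}_{2r-1})+(w_{r2}-g^{\soft}_{2r})$. Hence, for \emph{every} $\theta$, the cross term is at most $2A\bigl(|w_{r1}-g^{\soft}_{2r-1}|+|w_{r2}-g^{\soft}_{2r}|\bigr)$, and its square is at most $8A^2\|\mathbf{w}_r-g^{\soft}(x;\theta)\|_2^2$. Only two coordinates ever enter, so the dimension factor you were worried about never appears. The proposition can then be applied as a black box to the infimum over $\theta$.

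\textbf{What each buys.} The paper's version is more modular: it would transfer verbatim to any gating class with an $L_2$ bound toward $\mathbf{w}_r$. Yours is tied to the specific construction $\tilde\theta$ and depends on internal details of the proposition's proof. Both give the same rate with constants of the same order.
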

	
	\begin{proof}[Proof of Corollary~\ref{cor:supp-example-softmax-upper2}]
		We first apply the dense-softmax oracle inequality in Theorem \ref{main-thm:softmax}. This gives
		\begin{align*}
			\EE\|\hat{f}_0-f_0\|_{L_2(P_X)}^2 \le C\left( \inf_{\theta\in\Theta_{\MR}} \left\| f_0- \sum_{m=1}^{\MR}g_m^{\soft}(x;\theta)f_m^* \right\|_{L_2(P_X)}^2 + \MR (\bar a_{n_2}^2+\bar a_{n_3}^2) + \frac{\MR d}{n}\log(\MR d n) \right).
		\end{align*}
		It remains to bound the approximation term. For any $\theta \in \Theta_{\MR}$,
		\begin{align*}
			\left\|f_0-\sum_{m=1}^{\MR}g_m^{\soft}(\cdot;\theta)f_m^*\right\|_{L_2(P_X)}^2 = \sum_{r=1}^{\Mzero}\left\|\left(f_0-\sum_{m=1}^{\MR}g_m^{\soft}(\cdot;\theta)f_m^*\right)\one_{\cX_r}\right\|_{L_2(P_X)}^2.
		\end{align*}
		Using the squared inequality on each region, we obtain
		\begin{align*}
			\left\|f_0-\sum_{m=1}^{\MR}g_m^{\soft}(\cdot;\theta)f_m^*\right\|_{L_2(P_X)}^2
			&\le C\sum_{r=1}^{\Mzero}\|(f_0-w_{r1}f_{r,1}^*-w_{r2}f_{r,2}^*)\one_{\cX_r}\|_{L_2(P_X)}^2 \\
			&\quad + C\sum_{r=1}^{\Mzero}\left\|\left(w_{r1}f_{r,1}^* + w_{r2}f_{r,2}^* - \sum_{m=1}^{\MR}g_m^{\soft}(x;\theta)f_m^*\right)\one_{\cX_r} \right\|_{L_2(P_X)}^2.
		\end{align*}
		Taking the infimum over $\theta \in \Theta_{\MR}$, it remains to control the second term. Since the expert functions are uniformly bounded by $A$, for $x \in \cX_r$,
		\begin{align*}
			\left| w_{r1}f_{r,1}^*(x) + w_{r2}f_{r,2}^*(x) - \sum_{m=1}^{\MR}g_m^{\soft}(x;\theta)f_m^*(x)\right|
			&= \left| (w_{r1}-g_{2r-1}^{\soft}(x;\theta))f_{r,1}^*(x) + (w_{r2}-g_{2r}^{\soft}(x;\theta))f_{r,2}^*(x) - \sum_{m\notin\{2r-1,2r\}}g_m^{\soft}(x;\theta)f_m^*(x) \right| \\
			&\le A\left( |w_{r1}-g_{2r-1}^{\soft}(x;\theta)| + |w_{r2}-g_{2r}^{\soft}(x;\theta)| + 1-g_{2r-1}^{\soft}(x;\theta)-g_{2r}^{\soft}(x;\theta) \right) \\
			&\le 2A\left( |w_{r1}-g_{2r-1}^{\soft}(x;\theta)| + |w_{r2}-g_{2r}^{\soft}(x;\theta)| \right),
		\end{align*}
		where the last inequality holds because $w_{r1} + w_{r2} = 1$. Therefore, using $(a+b)^2 \le 2(a^2 + b^2)$,
		\begin{align*}
			\sum_{r=1}^{\Mzero} \left\| \left(w_{r1}f_{r,1}^* + w_{r2}f_{r,2}^* - \sum_{m=1}^{\MR}g_m^{\soft}(\cdot;\theta)f_m^* \right)\one_{\cX_r} \right\|_{L_2(P_X)}^2 \le 8A^2\sum_{r=1}^{\Mzero} \|(\mathbf{w}_r-g^{\soft}(\cdot;\theta))\one_{\cX_r}\|_{L_2(P_X)}^2.
		\end{align*}
		Combining the preceding displays and apply Proposition \ref{prop:supp-linear-top2-no-margin} gives
		\begin{align*}
			\inf_{\theta\in\Theta_{\MR}} \left\|f_0(x)-\sum_{m=1}^{\MR}g_m^{\soft}(\cdot;\theta)f_{m}^*\right\|_{L_2(P_X)}^2 \le C \left( \sum_{r=1}^{\Mzero}\|f_0-w_{r1}f_{r,1}^* - w_{r2}f_{r,2}^*\|_{\cX_r}^2 + \frac{\Mzero}{\log \MR} \right).
		\end{align*}
		Substituting this bound into the dense-softmax oracle inequality yields the desired result.
	\end{proof}
	
	\section{Simulation Details}
	\label{sec:supp-simulation-details}
	
	\subsection{Comparison of different gating classes}
	\label{subsec:supp-comparison-different-gating-class}
	
	We conduct Monte Carlo simulations to illustrate how the performance of different gating classes depends on the geometry of the underlying routing regions. The goal is not to provide a comprehensive benchmark comparison, but rather to examine whether a gating class performs best when its geometric structure is aligned with that of the true partition.
	
	We consider three data-generating processes (DGPs) corresponding to region partitions with linear, quadratic and highly nonlinear boundaries. These designs form a hierarchy of geometric complexity and allow us to assess how different gating classes adapt to different partition structures. Throughout, the expert functions are fixed and treated as known at the estimation stage, so that differences across methods primarily reflect the ability of the gating mechanism to approximate the true routing structure.
	
	For each design, we compare three candidate gating specifications: linear gating, quadratic gating, and Gaussian-kernel-based gating, as defined in the main text. Observations are generated according to
	\begin{align}
		y_i = \sum_{m=1}^3 g_m^*(x_i) f_m(x_i) + \varepsilon_i, \qquad i=1,\dots,n, \label{eq:supp-linear-dgp}
	\end{align}
	where $\varepsilon_i\sim N(0,\sigma^2)$ and $x_i=(x_{i1},x_{i2})^\top$. In all designs, the covariates $x_{i1}$ and $x_{i2}$ are generated independently from the uniform distribution on $[-1,1]$. The gating function follows the softmax form
	\begin{align}
		g_m^*(x) = \frac{\exp\{\eta_m^*(x)\}}{\sum_{\ell=1}^3 \exp\{\eta_\ell^*(x)\}}, \qquad m=1,2,3, \label{eq:supp-true-quadratic-gating}
	\end{align}
	where the latent scores $\eta_m^*(x)$ vary across the three designs and induce expert-dominance partitions of differing geometric complexity.
	
	\textbf{Linear DGP.}
	We first consider a design in which the true routing regions are induced by linear scores. The latent scores are given by
	\begin{align*}
		\eta_1^*(x)=a^*x_1, \qquad \eta_2^*(x)=a^*x_2, \qquad \eta_3^*(x)=0,
	\end{align*}
	where $a^*=2$. Under this construction, the third expert serves as a baseline component, and the resulting expert-dominance regions are separated by linear decision boundaries. The expert functions are fixed as
	\begin{align*}
		f_1(x)=2+2x_1-x_2, \qquad f_2(x)=-1-x_1+2x_2, \qquad f_3(x)=1+\sin(\pi x_1)-1.5x_2^2.
	\end{align*}
	We set $n\in\{200,400\}$ and $\sigma=0.5$.
	
	\textbf{Quadratic DGP.}
	The second design considers routing regions generated by quadratic scores:
	\begin{align*}
		\eta_1^*(x)=2x_1-1.2x_1^2-0.8x_2^2, \qquad
		\eta_2^*(x)=2x_2-0.8x_1^2-1.2x_2^2, \qquad
		\eta_3^*(x)=0.
	\end{align*}
	The third expert again serves as the baseline component. Compared with the linear DGP, the quadratic terms generate curved decision boundaries, so that the resulting dominance regions are no longer linearly separable. This setting therefore allows us to examine the performance loss from using a geometrically misspecified linear gate.
	
	The expert functions are the same as those in the linear DGP:
	\begin{align*}
		f_1(x)=2+2x_1-x_2, \qquad f_2(x)=-1-x_1+2x_2, \qquad f_3(x)=1+\sin(\pi x_1)-1.5x_2^2.
	\end{align*}
	We again consider sample sizes $n\in\{200,400\}$ and set $\sigma=0.5$.
	
	\textbf{Nonlinear DGP.}
	The third design considers routing regions with highly nonlinear boundaries. Unlike the previous two settings, the true gating structure is generated through a nonlinear deformation of angular regions. This construction cannot be represented exactly by either linear or quadratic gating specifications and therefore provides a setting in which more flexible gates may be beneficial. Specifically, let $\theta=\operatorname{atan2}(x_2,x_1)$ and $r=(x_1^2+x_2^2)^{1/2}$ denote the polar angle and radial coordinate, and define the deformed angular coordinate $\widetilde{\theta}=\theta+0.85\sin(4.5r+1.8\theta)+0.35\sin(4\theta)$. We then set $\eta_m^*(x)=\tau\cos(\widetilde{\theta}-\phi_m)$, with $\tau=7$ and angular centers $\phi_1=\pi$, $\phi_2=\pi/3$, and $\phi_3=-\pi/3$. Under this construction, the three experts dominate different regions of the covariate space, while the boundaries between regions become highly irregular due to the angular deformation.
	
	The expert functions are specified as $f_1(x)=3+3x_1-2x_2$, $f_2(x)=-2-2x_1+3x_2$, and $f_3(x)=2.5\sin(\pi x_1x_2)-3x_1+2x_2$. We set $n\in\{500,1000\}$ and $\sigma=0.3$.
	
	\textbf{Implementation of gating estimators.}
	The implementation details differ slightly across the three designs. In the linear DGP, the linear gating parameters are searched over the common grid $\{0,0.5,1,\ldots,3\}$, and the quadratic gating parameters over $\{0,0.6,1.2,\ldots,3\}$. The Gaussian-kernel-based gating uses a regular $3\times3$ grid of kernel centers on $[-1,1]^2$ with bandwidth $\sigma_K=\kappa h$, where $\kappa=0.75$ and $h=1$ is the spacing between neighboring grid points.
	
	In the quadratic DGP, the linear and quadratic grids are centered around the true quadratic specification. For linear gating, the four coefficients are searched over $\{-2.5,-1.25,0,1.25,2.5\}$, $\{-1,0,1\}$, $\{-1,0,1\}$, and $\{-2.5,-1.25,0,1.25,2.5\}$, respectively. For quadratic gating, the linear coefficients are searched over the same corresponding grids, while each quadratic coefficient is searched over $\{-1.6,-1.2,-0.8,0\}$. The Gaussian-kernel-based gating again uses a $3\times3$ grid with $\kappa=0.75$.
	
	In the nonlinear DGP, the linear gating parameters are searched over $\{-3,-2,\ldots,3\}$, and the quadratic gating parameters over $\{-2,-1,0,1,2\}$. The Gaussian-kernel-based gating uses a denser $4\times4$ grid of kernel centers on $[-1,1]^2$ with $\kappa=0.5$.
	
	\textbf{Monte Carlo evaluation.}
	We conduct 100 Monte Carlo replications. For each replication, let $\hat g(x_i)=(\hat g_1(x_i),\ldots,\hat g_3(x_i))^\top$ and $g^*(x_i)=(g_1^*(x_i),\ldots,g_3^*(x_i))^\top$ denote the estimated and true gating-weight vectors at observation $i$, respectively. We evaluate gating-weight estimation using the mean $\ell_1$ error and mean squared $\ell_2$ error, defined as
	\begin{align*}
		\frac{1}{n}\sum_{i=1}^n\|\hat g(x_i)-g^*(x_i)\|_1 \qquad\text{and}\qquad \frac{1}{n}\sum_{i=1}^n\|\hat g(x_i)-g^*(x_i)\|_2^2,
	\end{align*}
	respectively. For each gating class, we report the Monte Carlo mean and standard deviation of both error measures.
	
	In addition to numerical summaries, Figure~\ref{fig:all-gating-results} compares the true and estimated expert-dominance regions, defined by $\arg\max_m g_m^*(x)$ and $\arg\max_m\hat g_m(x)$, respectively, using the first Monte Carlo replication. These visualizations illustrate how different gating classes adapt to the geometry of the underlying partition.
	
	\begin{figure}[!ht]
		\centering

		\begin{subfigure}{0.32\textwidth}
			\centering
			\includegraphics[width=\textwidth]{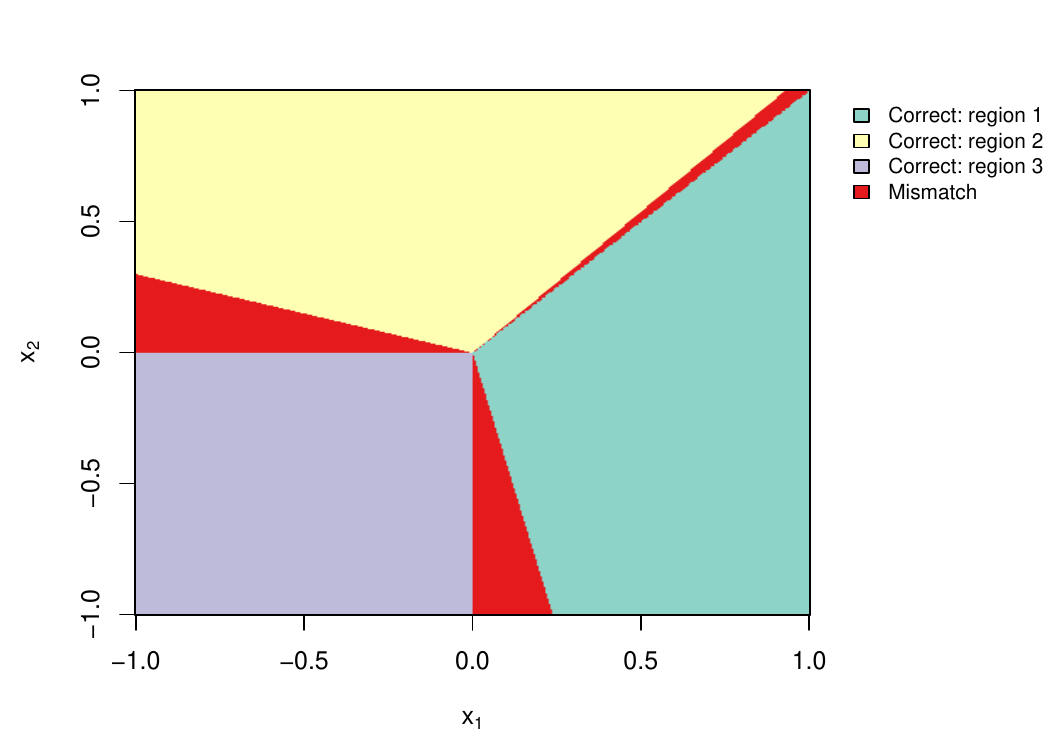}
			\caption{Linear gating}
		\end{subfigure}
		\hfill
		\begin{subfigure}{0.32\textwidth}
			\centering
			\includegraphics[width=\textwidth]{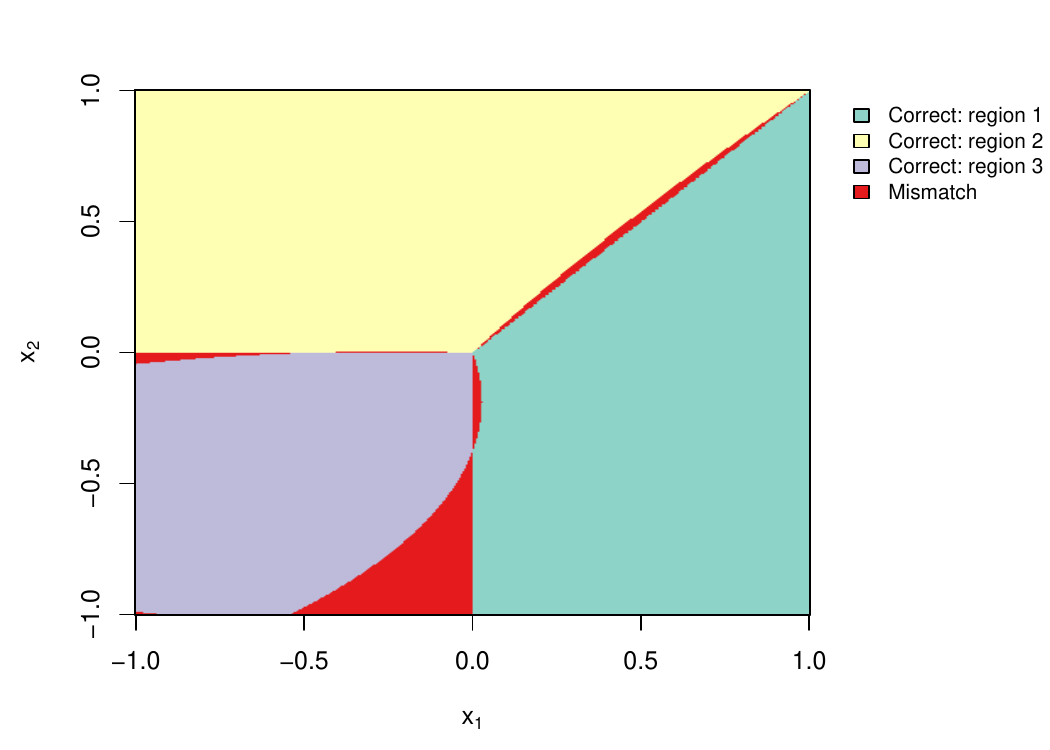}
			\caption{Quadratic gating}
		\end{subfigure}
		\hfill
		\begin{subfigure}{0.32\textwidth}
			\centering
			\includegraphics[width=\textwidth]{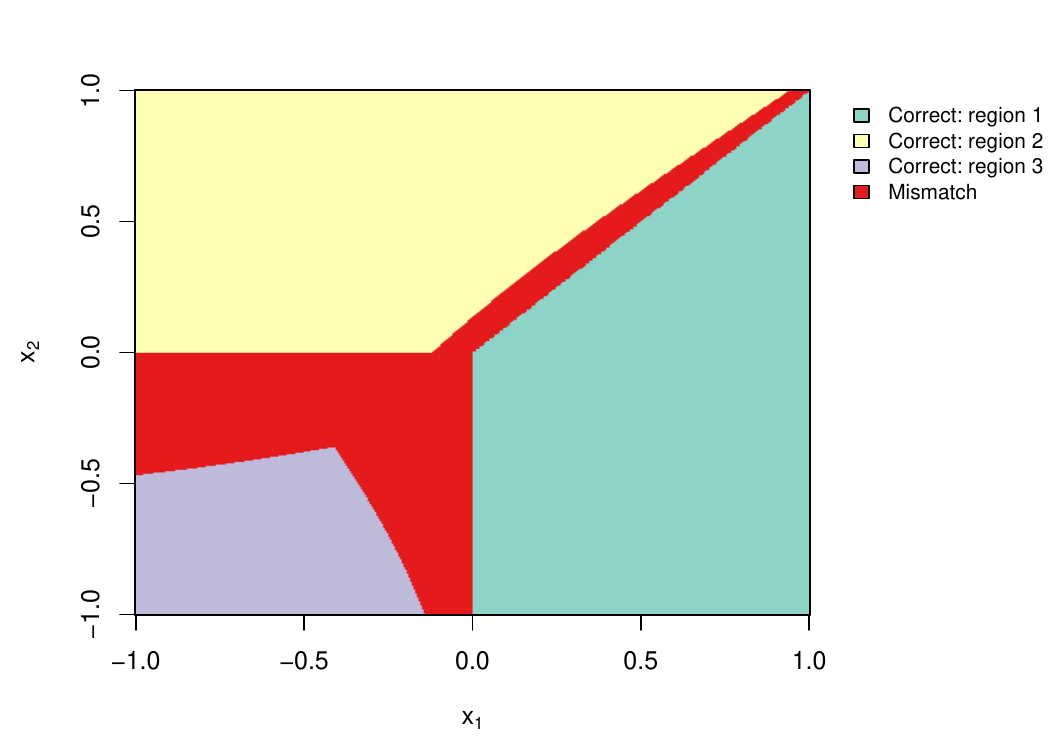}
			\caption{GK-based gating}
		\end{subfigure}
		
		\vspace{0.5cm}

		\begin{subfigure}{0.32\textwidth}
			\centering
			\includegraphics[width=\textwidth]{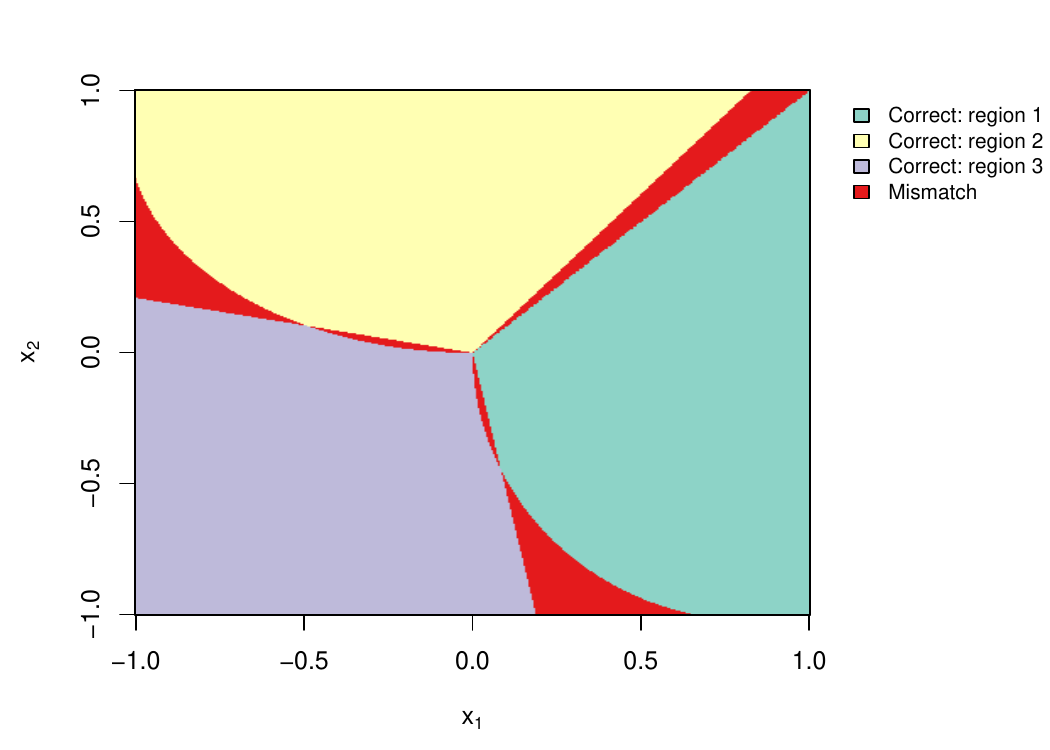}
			\caption{Linear gating}
		\end{subfigure}
		\hfill
		\begin{subfigure}{0.32\textwidth}
			\centering
			\includegraphics[width=\textwidth]{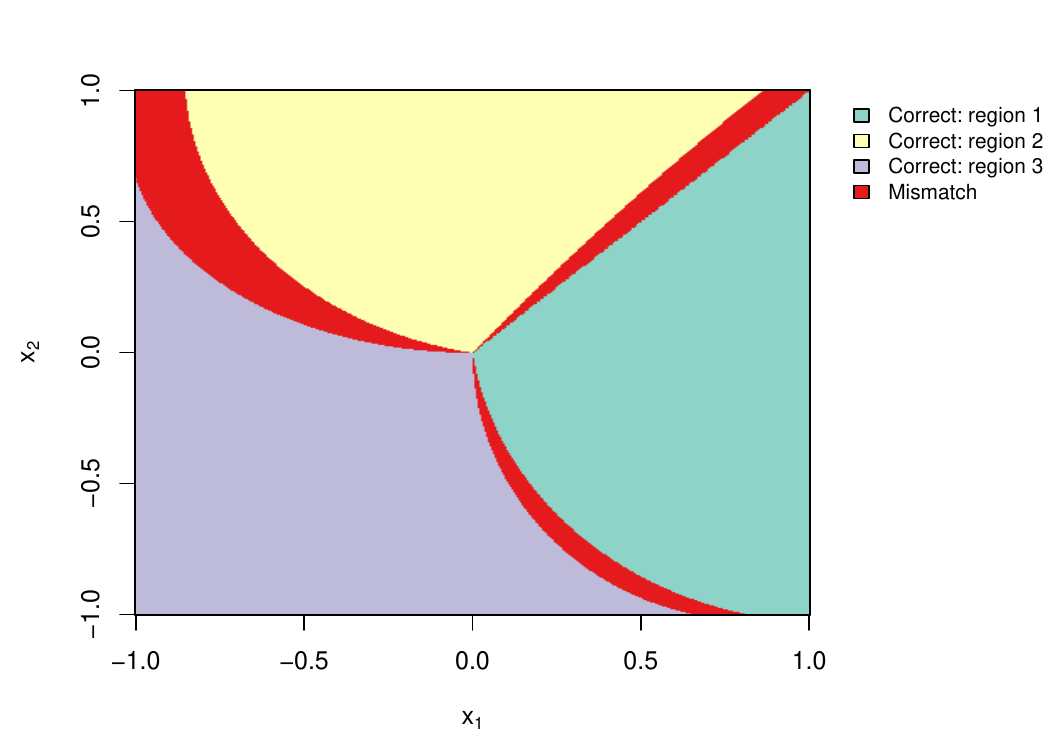}
			\caption{Quadratic gating}
		\end{subfigure}
		\hfill
		\begin{subfigure}{0.32\textwidth}
			\centering
			\includegraphics[width=\textwidth]{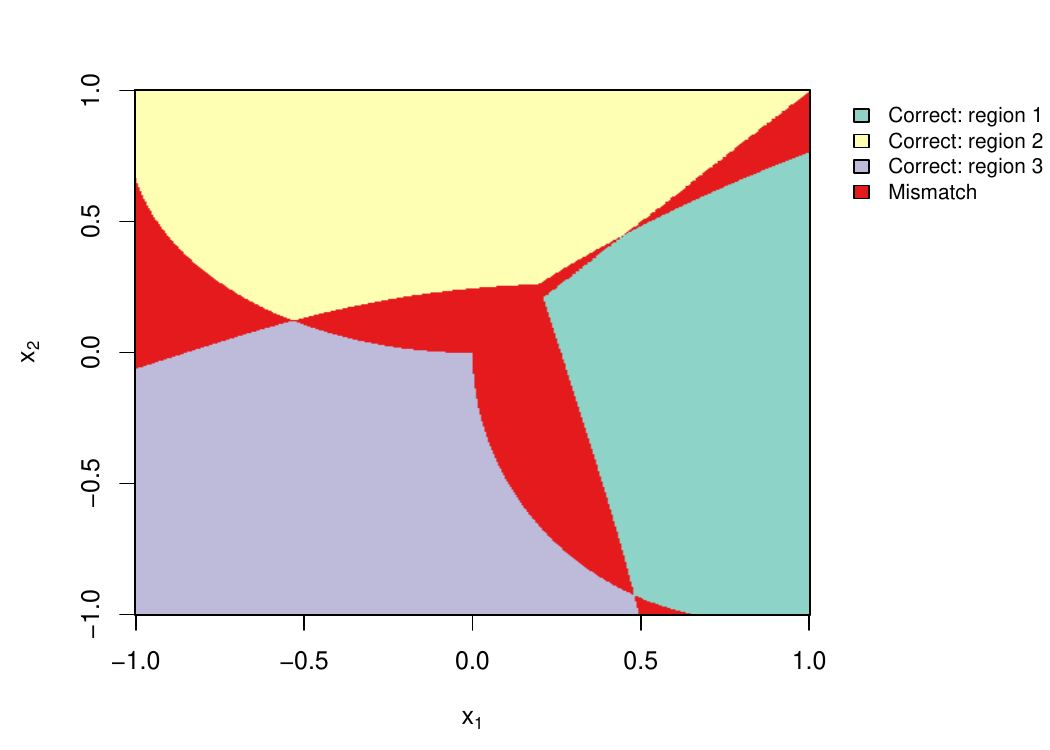}
			\caption{GK-based gating}
		\end{subfigure}
		
		\vspace{0.5cm}

		\begin{subfigure}{0.32\textwidth}
			\centering
			\includegraphics[width=\textwidth]{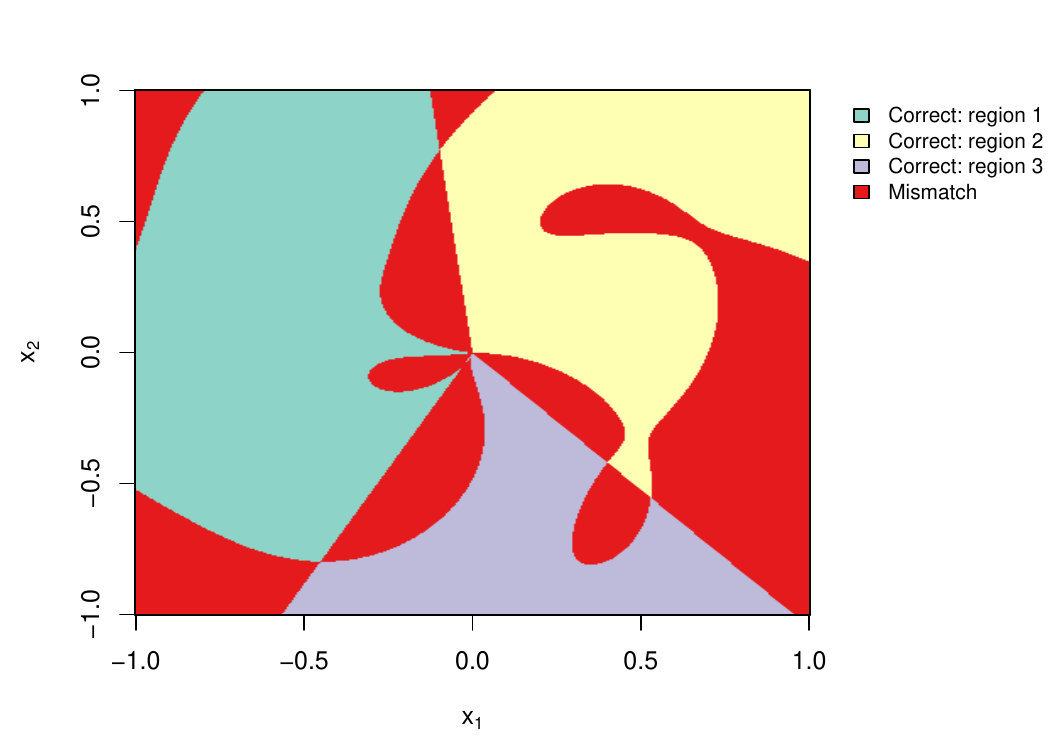}
			\caption{Linear gating}
		\end{subfigure}
		\hfill
		\begin{subfigure}{0.32\textwidth}
			\centering
			\includegraphics[width=\textwidth]{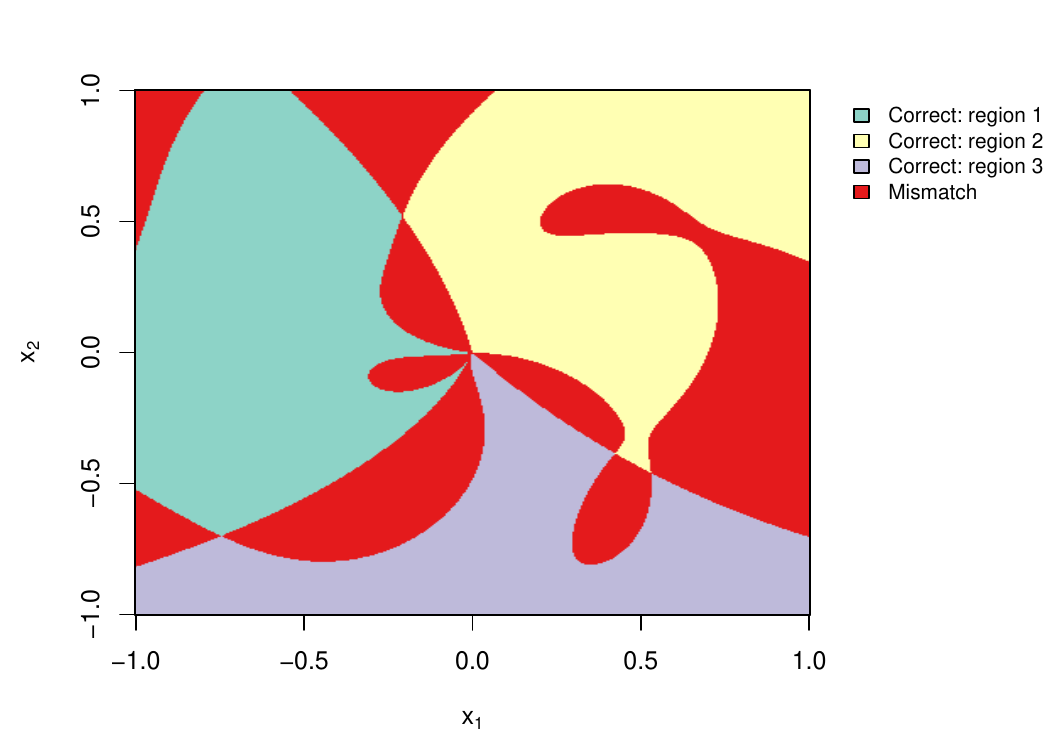}
			\caption{Quadratic gating}
		\end{subfigure}
		\hfill
		\begin{subfigure}{0.32\textwidth}
			\centering
			\includegraphics[width=\textwidth]{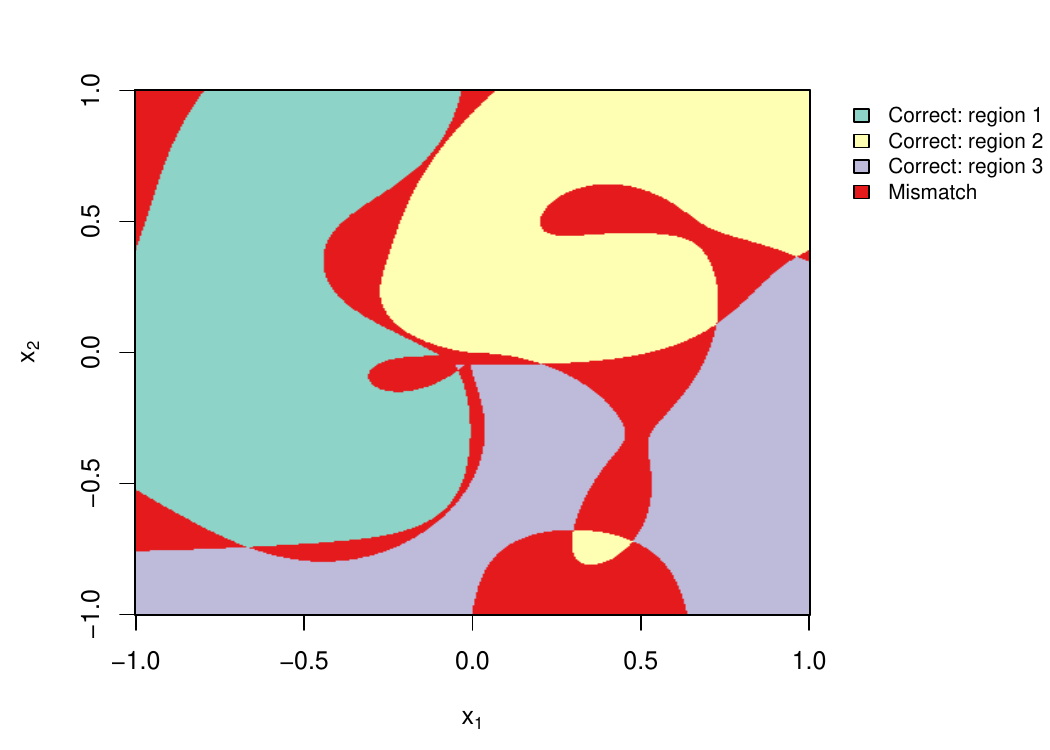}
			\caption{GK-based gating}
		\end{subfigure}
		\caption{
			Comparison of estimated routing regions under different expert-induced partition geometries and gating classes. Rows correspond to the linear, quadratic, and nonlinear DGPs, while columns correspond to the linear, quadratic, and Gaussian-kernel-based gating. The results illustrate the importance of matching the geometric flexibility of the gating class to the complexity of the expert-induced partition.
		}
		\label{fig:all-gating-results}
	\end{figure}
	
	\subsection{MSE Comparison With and Without a Shared Expert}
	\label{subsec:supp-mse-comparison-shared-expert}
	
	We conduct two simulation studies to examine whether a shared-routed representation improves prediction accuracy relative to a pure-routed representation when the routing threshold is unknown. Both studies are based on the Fourier-sieve illustration in Example~\ref{main-ex:fourier-x-sin-cos}, but emphasize two complementary regimes: a simple shared component with complex routed components, and a complex shared component with simple routed components.
	
	In both designs, we generate $X\sim\Unif[0,1]$ and $Y=f_0(X)+\varepsilon$, where $\varepsilon\sim N(0,\sigma^2)$, $\sigma=0.5$, and the true threshold is $c_0=1/2$.
	
	\textbf{Simple shared, complex routed.}
	The first design sets
	\begin{align*}
		f_0(x)=\theta x+\one\{x\le c_0\}f_{R,1}(2x)+\one\{x>c_0\}f_{R,2}(2x-1),
	\end{align*}
	with $\theta=1$. The routed components are Fourier series of order $K_0=3$ without intercepts,
	\begin{align*}
		f_{R,j}(t)=\sum_{k=1}^{3}\left\{a_{j,k}\sin(2\pi kt)+b_{j,k}\cos(2\pi kt)\right\}, \qquad j=1,2,
	\end{align*}
	where $a_1=(1.0,0.6,0.4)$, $b_1=(0.7,-0.5,0.3)$, $a_2=(-0.8,0.5,-0.3)$, and $b_2=(0.6,0.4,-0.5)$.
	
	\textbf{Complex shared, simple routed.}
	The second design sets
	\begin{align*}
		f_0(x)=f_S(x)+\one\{x\le c_0\}f_{R,1}(2x)+\one\{x>c_0\}f_{R,2}(2x-1),
	\end{align*}
	where $f_S(x)=A\sin(2\pi qx)$ with $A=1$ and $q=3$. The routed components are simple linear functions with opposite slopes, $f_{R,1}(t)=\delta(t-1/2)$ and $f_{R,2}(t)=-\delta(t-1/2)$ for $t\in[0,1]$, where $\delta=0.6$.
	
	For each design, we compare a shared-routed estimator with a pure-routed estimator. In the first design, the shared-routed estimator includes a linear shared component and routed Fourier sieves of order $K=3$ with intercepts; the pure-routed estimator uses the same routed Fourier sieves but omits the shared component. In the second design, the shared-routed estimator includes a Fourier shared component of order $Q=3$ and routed linear experts in $\operatorname{span}\{1,t\}$; the pure-routed estimator uses the same routed linear experts but omits the shared component.
	
	For both estimators and both designs, the threshold $c$ is unknown and is estimated by profile least squares over a 200-point grid on $[0.1,0.9]$. Conditional on each candidate threshold, all linear coefficients are estimated by ordinary least squares, and the threshold minimizing the residual sum of squares is selected.
	
	For each $n\in\{200,400,800,1600\}$, the experiment is repeated 200 times. Prediction accuracy is evaluated by the integrated squared error
	\begin{align*}
		\MSE=\int_0^1\{\widehat f(x)-f_0(x)\}^2 dx,
	\end{align*}
	which is approximated on a dense grid of 5000 equally spaced points in $[0,1]$.
	
\end{document}